\documentclass{article}

\usepackage{microtype}
\usepackage{graphicx}
\usepackage{subcaption}
\usepackage{booktabs} 

\usepackage{hyperref}

\usepackage{mathrsfs}

\usepackage{bbm}

\usepackage{enumitem}

\usepackage{natbib}

\usepackage{titletoc}

\usepackage[margin=1.25in]{geometry}
\usepackage{amsmath}
\usepackage{amssymb}
\usepackage{mathtools}
\usepackage{amsthm}

\usepackage{tocloft} 

\usepackage[capitalize,noabbrev]{cleveref}

\theoremstyle{plain}
\newtheorem{theorem}{Theorem}[section]
\newtheorem{proposition}[theorem]{Proposition}
\newtheorem{lemma}[theorem]{Lemma}
\newtheorem{corollary}[theorem]{Corollary}
\theoremstyle{definition}
\newtheorem{definition}[theorem]{Definition}

\theoremstyle{remark}
\newtheorem{remark}[theorem]{Remark}
\theoremstyle{remark}
\newtheorem{example}[theorem]{Example}

\usepackage[textsize=tiny]{todonotes}

\newcounter{RonCounter}

        \newcounter{RonACounter}

  \newcounter{OfekCounter}

          \newcounter{TomCounter}

\title{\vspace{-1cm}\rule{\linewidth}{1.5pt}\\[0.1cm]
A Graphop Analysis of Graph Neural Networks on Sparse Graphs: Generalization and Universal Approximation
\rule{\linewidth}{1.5pt}}

\author{Ofek Amran, Tom Gilat, Ron Levie
\vspace{0.2cm} \\ 
\textit{Faculty of Mathematics}\\
\textit{Technion -- Israel Institute of Technology}}

\date{ }
    
\begin{document}

\maketitle

\let\thefootnote\relax\footnotetext{\hspace{-6.5mm}
\texttt{ofek.amran@campus.technion.ac.il},\\ \texttt{tom.gilat@gmail.com}, \\\texttt{levieron@technion.ac.il}}

  \vskip 0.3in

\begin{abstract}
Generalization and approximation capabilities of message passing graph neural networks (MPNNs) are often studied by defining a compact metric on a space of input graphs under which MPNNs are equicontinuous. Such analyses are of two varieties: 1) when the metric space includes graphs of unbounded sizes, the theory is only appropriate for dense graphs, and, 2) when studying sparse graphs, the metric space only includes graphs of uniformly bounded size.  
In this work, we present a unified approach, defining a compact metric on the space of graphs of all sizes, both sparse and dense, under which MPNNs are equicontinuous. 
This leads to more powerful universal approximation theorems and generalization bounds than previous works.
 The theory is based on, and extends, a recent approach to graph limit theory called graphop analysis.  
\end{abstract}

\section{Introduction}

In recent years, Graph Neural Networks (GNNs) have emerged as a powerful class of models in machine learning, advancing the ability to represent and analyze relationship within data \cite{hamilton2017inductive}. Unlike traditional neural networks, that are designed for grid-structured data such as images \cite{ 726791,jmour2018convolutional}, GNNs capture complex patterns and dependencies in graph-structured data \cite{kipf2016semi}. Graphs are found in various domains, ranging from social networks and recommendation systems to molecular biology and transportation networks \cite{fan2019graph, wang2022molecular, gao2023survey}.
\emph{Message Passing Neural Networks} (MPNNs) are a special subclass of GNNs employing a message passing paradigm, where nodes aggregate and update information from their neighbors, allowing for the effective capture of local and global graph structures \cite{gilmer2017neural, chami2022machine}. 

Despite their successes, several theoretical challenges remain in the study of MPNNs. Some of these challenges revolve around generalization, expressivity, and the universal approximation capability of GNNs.  
Generalization refers to the ability of GNNs to perform well on unseen data  
 \cite{garg2020generalization, liao2020pac,maskey2022generalization,morris2023_WLmeetsVC,rauchwerger2024generalization,vasileiou2025covered,vasileiou2025survey}. 

Expressivity, on the other hand, deals with the capability of GNNs to distinguish graph structures \cite{morris2019weisfeiler, xu2018powerful, boker2024fine, rauchwerger2024generalization,vasileiou2025covered}. 
Lastly, universal approximation theorems for GNNs concern whether GNNs can approximate any continuous function on graphs with respect to some compact topology, given enough layers and parameters \cite{chen2019equivalence, keriven2019universal, azizian2020expressive,lu2021learning, ChenWang_WLmeetsGromov_ICML2022,boker2024fine, rauchwerger2024generalization}.

\paragraph{A basic aparoach to generalization and universality.}

In this paper we focus on the following approach for deriving universal approximation theorems and generalization bounds, based on endowing the space of graph $\mathcal{G}$ with a metric $d$. The two approaches can be summarized by the following two informal statements. Denote the set of MPNNs of interest by $\mathcal{M}$.
\vspace{-3mm}
\begin{itemize}[leftmargin=0.35cm]\itemsep-0.1em
    \item \textbf{Stone–Weierstrass theorem.}  Let $(\tilde{\mathcal{G}},\tilde{d})$ be the completion of the metric space $(\mathcal{G},d)$. If $(\tilde{\mathcal{G}},\tilde{d})$ is compact, every $\theta\in\mathcal{M}$ is continuous, and $\mathcal{M}$ is an algebra that separates points, then $\mathcal{M}$ has universal approximation: every continuous function over $(\tilde{\mathcal{G}},\tilde{d})$, restricted to $({\mathcal{G}},{d})$, can be uniformly approximated by some $\theta\in\mathcal{M}$.
    \item \textbf{Covering number generalization bound.} If $({\mathcal{G}},{d})$ has finite covering number and every $\theta\in\mathcal{M}$ is Lipschitz  continuous (or more generally uniformly equicontinuous), then $\mathcal{M}$ admits a generalization bound: the error between the training and test set's risks is guaranteed to decay to zero as the size of the training set goes to infinity.
\end{itemize}

For more details on covering number generalization bounds see Appendix \ref{Sec:Generalization} and for Stone-Weierstrass see  Theorem \ref{theorem:SW} or Theorem 7.32 in \cite{Rudin1976}.

\paragraph{Limitations in previous results.}

In order to base a universal approximation theorem on the Stone–Weierstrass theorem, one must endow the space of graphs with a compact pseudo‑metric under which MPNNs are both continuous and capable of separating points. These requirements naturally pull in opposite directions. On the one hand, compactness forces all graphs to lie “close” to one another. However, if graphs are too nearby each other, then  MPNNs must have discontinuities. Moreover, if in order to facilitate the continuity of MPNNs the metric is chosen too fine, MPNNs lose the capability to separate points. Similarly, the finite covering and Lipschitz continuity are competing requirements in covering number generalization bounds. This makes utilizing the above two theorems challenging for GNNs.

As a result of these challenges, existing analyses have the following limitations.
In order to include graphs of all sizes in $\mathcal{G}$, generalization and approximation capabilities of MPNNs are commonly studied through graph limit theory\footnote{Graph limit theory is the study of the limit objects in the completion of graph metric spaces, i.e., it is the characterization of limits of graph Cauchy sequences. Such limit objects are typically interpreted as graphs with infinitely many vertices.}. To be able to process graphs of all sizes, current approaches that fall under this umbrella use normalized sum aggregation as the message passing mechanism, which limits the analysis to dense graphs \cite{levie2024graphon,boker2024fine, rauchwerger2024generalization, rauchwerger2025notegraphonsignalanalysisgraph}.
On the other hand, approaches that can account for sparse graphs, and use sum aggregation, must limit the space  $\mathcal{G}$ to only include graphs up to some size \cite{chen2019equivalence,  azizian2020expressive, ChenWang_WLmeetsGromov_ICML2022, vasileiou2025covered}.

\paragraph{Our contribution.}

In this work, we present a unified approach, defining a compact metric on the space of all graphs, both sparse and dense, of all sizes, that fulfills the conditions of the above two theorems. This is done by replacing the graphon-based (dense) graph limit theory of previous approaches by a graph limit theory appropriate for sparse graphs. For this, we adopt and extend \emph{graphop-analysis} \cite{backhausz2022action}, a graph limit theory appropriate for sparse graphs which unifies many previously proposed theories. 

We extend graphop theory to include vertex attributes, and  introduce a subclass of graphops called \emph{bounded fiber operators (bofops)}. We show that bofops can describe a rich family of sparse connectivity patterns, and are the limit objects of both sparse and dense graphs under the standard metric used in graphop theory -- the action metric.  MPNNs are canonically extended to accept bofops as inputs, and we show that one can model both sum and normalized sum  aggregation naturally through the lens of bofop theory, as well as additional novel aggregations. Hence, in our setting, MPNNs can accepts both sparse and dense graphs of any size.

To describe the expressivity of MPNNs on bofops, we extend the 1-WL graph isomorphism test to bofops.
We moreover adopt a pseudo-metric on the space of bofops, called DIDM-mover's distance \cite{rauchwerger2024generalization}, which is coarser than the action metric. The DIDM-mover's metric describes similarity between bofops through their computational trees underlying the 1-WL algorithm. While the end goal is to use the DIDM-mover's distance in the analysis, as this is the metric that describes the separation power of MPNNs, the properties of this metric are only revealed once first analyzing MPNNs with respect to the action metric.

We then prove that bofops satisfy all of the requirements of the above approaches to generalization and universal approximation: 1) the space of bofops is a compact metric space both under the action and the DIDM-mover's metrics (and hance bofops also admit a finite covering), 2) the space of sparse and dense graphs of all sizes is dense in the space of bofops (and hence bofops are the completion of the space of all graphs),  3) MPNNs are Lipschitz continuous with respect to the action metric and uniformly equicontinuous with respect to the DIDM-mover's pseudo-metric, and 4) MPNNs separate points with respect to the DIDM-mover's pseudo metric. This leads to powerful universal approximation and generalization theorems.

\subsection{Related Work}

Appendices \ref{sec:ML} through \ref{sec:IDM}
 provide extensive background and related work on concepts relevant to this paper.
 In appendix \ref{sec:ML} we review background on machine learning,  MPNNs, and discuss generalization and universal approximation theorems. Appendix \ref{sec:graphlimit}
covers background in graph limit theory, including classical limit objects for dense graphs such as graphons, as well as exciting analyses of GNNs based on graphon theory. We also give several approaches to sparse graph limit theories extended by graphops  in Appendix \ref{sec:sparse}.
Finally, appendix \ref{sec:IDM} is devoted to iterated degree measures.

\section{Background}

\subsection{Basic Definitions} 
We begin with background and notation. For additional background on analysis we refer to Appendices \ref{sec:back} and \ref{sec:measure+topology}.

\paragraph{Basic Notations.}
The set of non-negative integers is denoted by $\mathbb{N}_0$,   and $[n]=\{1,\ldots,n\}$. For a subset $E\subset\mathcal{X}$, $E^c$ is its complement. For a function $f:\Omega\to\mathbb{R}$,  we write $f(x)$ or $f_x$ for the value of $f$ at a point $x\in\Omega$. We use $f(-)$ or $f_{-}$ to denote the function $x\mapsto f(x)$ or $f_x$, i.e., simply $f$.

\paragraph{Graphs.}
A graph is a triple $G=(V,E,\mathbf{A})$ where $V=[n]$ is the node set, and $|V|$ denotes the umber of nodes. The edge set is given by $E\subset V\times V$, where $(i,j)\in E$ if and only if $i$ and $j$ are connected by an edge, and $\mathbf{A}\in\mathbb{R}^{n\times n}$ is the adjacency matrix of $G$. 
 We restrict the analysis to undirected graphs, i.e., $\mathbf{A}=\mathbf{A}^{\top}$. For an unweighted graphs, the entries of $\mathbf{A}$ are given by $e_{i,j}=1$ if $(i,j)\in E$ and $e_{i,j}=0$ otherwise. An \emph{attributed graph}, also called a \emph{graph-signal}, is a graph $G$, where each $i\in V$ is equipped with a feature vector $\mathbf{f}(i)=\mathbf{f}_i\in\mathbb{R}^d$, and $d$ is called the feature dimension. 
 For a node $i\in V$ , let $\mathcal{N}(i) = \{j\in V\,|\,(i,j)\in E\}$ be the \emph{neighborhood} of $i$.

\paragraph{Analysis.} 

 For a Borel measurable space $\Omega$,  let $\mathscr{M}_{\leq1}(\Omega)$ denote the space of Borel measures $\mu$ on $\Omega$ with total mass at most 1, i.e., $\|\mu\|:=\mu(\Omega)\leq1$. Let $\mathcal{P}(\Omega)$ denote the space of Borel probability measures on $\Omega$.
 For a subset $E\subset\Omega$, $\mathbbm{1}_E$ denotes its indicator function. 
  We say that $f$ is an element of 
   $\mathcal{L}^p(\Omega)$, $1\leq p<\infty$, if $\|f\|_p^p:=\int_\Omega|f|^p\,d\mu$ is finite and $f\in\mathcal{L}^{\infty}(\Omega)$ if $\|f\|_{\infty} :=\inf\{C>0\,:\,|f(x)|<C\text{ for a.e. } x\in\Omega\}$ is finite. Here,  \emph{a.e.} stands for \emph{almost every}. Given a Borel measure space $(\Omega_1,\Sigma_1,\mu)$ and a measurable function $f:\Omega_1\to\Omega_2$, the \emph{pushforward} $f_*\mu$ of $\mu$ via $f$  is the Borel measure on $\Omega_2$ that satisfies $f_*\mu(E) = \mu(f^{-1}(E))$ for every  $E\subset\Omega_2$.

\subsection{MPNNs on Graphs}
\label{MPNNs on Graphs}

Message passing neural networks (MPNNs) \cite{gilmer2017neural}, are a class of neural networks designed for graph structured data. MPNNs takes graphs, or attributed graphs as an input, and iteratively update node features through message passing. 

We define the parameters of an MPNN as follows.
   Let $L \in \mathbb{N}_0$ and let $p, d_0, \ldots, d_L, d \in \mathbb{N}_0$.
An \emph{$L$-layer MPNN model} is a collection
$\varphi = (\varphi^{(l)})_{l=0}^L$ of Lipschitz continuous functions, where
\vspace{-1mm}
$$\varphi^{(0)} : \mathbb{R}^d \to \mathbb{R}^{d_0},
\qquad
\varphi^{(l)} : \mathbb{R}^{2d_{l-1}} \to \mathbb{R}^{d_l},
\quad 1 \le l \le L,$$
and where the functions $\varphi^{(l)}$ are called \emph{update functions}.
Given a Lipschitz continuous function
$\psi : \mathbb{R}^{d_L} \to \mathbb{R}^p$,
the tuple $(\varphi,\psi)$ is called an \emph{MPNN model with readout},
where $\psi$ is referred to as the readout function.
We call $L$ the \emph{depth} of the MPNN,
$d$ the \emph{input feature dimension},
$d_0,\ldots,d_L$ the \emph{hidden feature dimensions},
and $p$ the \emph{output feature dimension}. We denote the space of all MPNN models with Lipschitz constants of the update functions and $\psi$ bounded by $D$ by $\mathcal{MP}_D(d,d_0,\ldots,d_L,p)$.

Applying an MPNN model as a function on  graph-signals is defined as follows. 
    Let $(\varphi,\psi)$ be an L-layer MPNN model with readout, and $(G,\mathbf{A},\mathbf{f})$ be a weighted graph-signal with adjacency matrix $\mathbf{A}$, where $\mathbf{f}:V(G)\mapsto\mathbb{R}^d$. The application of the MPNN on $(G,\mathbf{f})$ is defined as follows: initialize $\mathfrak{g}_-^{(0)}:=\varphi^{(0)}(\mathbf{f}(-))$ and compute the \emph{hidden node representations} $\mathfrak{g}_-^{(l)}:V(G)\rightarrow\mathbb{R}^{d_l}$ at layer $l$ , with $1\leq l\leq L$, and the \emph{graph-level output} $\mathfrak{G}\in\mathbb{R}^p$, by
    \vspace{-1mm}
    \[
\mathfrak{g}_v^{(l)}:=\varphi^{(l)}\left(\mathfrak{g}_v^{(l-1)},(\mathbf{A}\mathfrak{g}_{-}^{(l-1)})_v\right),\]
\[\mathfrak{G}:=\psi\Big(\sum_{v\in V(G)}\mathfrak{g}_v^{(L)}/{|V|}\Big).
    \]
    The expression $(\mathbf{A}\mathfrak{g}_{-}^{(l-1)})_v$, i.e., coordinate $v$ of $\mathbf{A}\mathfrak{g}_{-}^{(l-1)}$,  is called the \emph{aggregated feature} of the node $v\in V(G)$.

Given an unweighted graph, one can normalize its adjacency matrix in various ways. For example, when $\mathbf{A}_{i,j}=e_{i,j}$, the aggregation is called \emph{sum aggregation}. When $\mathbf{A}_{i,j}=e_{i,j}/|V|$, the aggregation is called \emph{normalized sum aggregation}, and takes the form
$\sum_{u\in\mathcal{N}(v)}\mathfrak{g}_u^{(l-1)}/{|V|}$ for unweighted graphs. Here, we note that MPNN with normalized sum aggregation are only appropriate for dense graphs. Indeed, if the graph is sparse, then $|V|\gg |\mathcal{N}(v)|$, and the aggregation will always give a close-to-zero outcome. Hence, such an MPNN will always give roughly the same output for all sparse graphs. Moreover, sum aggregation is only asymptotically appropriate for sparse graphs, otherwise, the aggregation diverges to infinity as the size of the graph increases.

One can also consider novel aggregations. For example, the following matrix can be seen as symmetric average aggregation: $\mathbf{D}^{-1/2}\mathbf{A}\mathbf{D}^{-1/2}$, where $\mathbf{D}$ is the diagonal \emph{degree matrix}, with diagonal elements $d_{i}=\sum_ja_{i,j}$. Symmetric average aggregation can be appropriate both for sparse and dense graphs.   
 
The architecture defined above corresponds to the Graph Isomorphism Network (GIN)
\cite{xu2018powerful}. While there are more general formulations of MPNNs, which use message functions in addition to update functions, 
 we show in Appendix~\ref{sec:MPNNarchitectures}
that such MPNNs can be reduced to our model. 

\subsection{DIDM-Mover's Distance}
\label{sec:mainDIDM}

It is well-known that MPNNs have the same separation power as the 1-WL test \cite{morris2019weisfeiler,xu2018powerful}.  
The 1-WL algorithm is an iterative color refinement algorithm, where at each step each node is given a distinct color with respect to the distribution of colors of its neighbors in the previous step. At the last step, the histogram of colors over all nodes is extracted. These colors can be used to define a metric called DIDM-Mover's distance.

\paragraph{IDMs and DIDMs.}

By the above description, colors at step $j$ can be mathematically defined as histograms, or measures, of the colors of step $j-1$, which are themselves measures of colors from step $j-2$, and so on. This leads to the notion of \emph{iterated degree measures (IDMs)} for the node colors in the 1-WL algorithm, and \emph{distributions of iterated degree measures (DIDMs)} for the histogram of the colors of all nodes. 
Formally, the spaces of IDMs and DIDMs are defined as follows \cite{rauchwerger2024generalization}.
\begin{definition}[Spaces of IDMs and DIDMs]
    \label{def:mainDIDMspace}
 The \emph{space of IDMs} $\mathcal{H}^L$ of order $L\in\mathbb{N}$ is defined inductively as follows. First,  $\mathcal{H}^0=[-1,1]^d$, for some $d\in\mathbb{N}_0$. Then, for every $L>0$, define $\mathcal{H}^L = \prod_{j=0}^L \mathcal{M}^j$ and $\mathcal{M}^{L+1}= \mathscr{M}_{\leq1}(\mathcal{H}^L)$ with the product and weak* topologies respectively (See Appendices \ref{sec:prodtopology} and \ref{sec:weak}). Define the \emph{space of DIDMs} of order $L$ by $\mathcal{P}(\mathcal{H}^L)$.
\end{definition}

Given a graph-signal, its DIDM can be computed by a variant of the 1-WL algorithm. We present an extension of this algorithm in Subsection \ref{sec:main1wl}. It is important to note that the space of all DIDMs is larger than the space of DIDMs obtained by the 1-WL algorithm on graphs or graphons. In a sense, most DIDMs are formal objects that do not describe the structure of any graph (see Appendix \ref{sec:IDM} and Example \ref{example:compIDMsubset} for more details).

\paragraph{DIDM-Mover's Distance.}  Since DIDMs are distributions, one can define a Wasserstein distance between them. Moreover, since DIDM's are recursively measures of measures, the Wasserstein distance is also defined recursively.

\begin{definition}[The Unbalanced Wasserstein Distance]
   \label{def:mainmoversdistance}
    Let $(\mathcal{X},d)$ be a Polish space. The \emph{Unbalanced Wasserstein Distance} between two measures $\mu,\nu\in\mathscr{M}_{\leq 1}(\mathcal{X})$ is
    \[
        \textbf{OT}_d(\mu,\nu)=\inf_{\gamma\in(\mu,\nu)}\left(\int_{\mathcal{X}\times\mathcal{X}}d(x,y)d\gamma\right)+\left\vert\Vert\mu\Vert-\Vert\nu\Vert\right\vert
    \]
    where $\Gamma(\mu,\nu)$ is the set of all couplings of $\mu$ and $\nu$, and $\|\mu\|=\mu(\mathcal{X})$.  See Appendix \ref{sec:pushforward} for more details.
\end{definition}

The DIDM Mover's Distance is then defined as follows. First, define the \textit{IDM distance} of order-0 on $\mathcal{H}^0=[-1,1]^d$ by $d^0_{\textrm{IDM}}(x,y):=\Vert x-y\Vert_2 $ for $x,y\in[-1,1]^d$, and denote by $\textbf{OT}_{d^0_{\textrm{IDM}}}$ Wasserstein distance on $\mathcal{M}^1=\mathscr{M}_{\leq 1}(\mathcal{H}^0)$. The distance $d^L_{\textrm{IDM}}$ is defined recursively on $\mathcal{H}^L$ where the distance on $\mathcal{M}^j$ for $0<j<L$ is $\textbf{OT}_{d^{j-1}_{\textrm{IDM}}}$. Explicitly, $d^L_{\textrm{IDM}}(\mu,\nu) = \|\mu_0-\nu_0\|$ if $L=0$ and $d^L_{\textrm{IDM}}(\mu,\nu) = \Vert \mu_0-\nu_0\Vert_2 +\sum^L_{j=1}\textbf{OT}_{d^{j-1}_{\textrm{IDM}}}(\mu_j,\nu_j)$ if $0<L<\infty$,
where $\mu=(\mu_j)_{j=0}^L\in\mathcal{H}^L$ and $\nu = (\nu_k)_{j=0}^L\in\mathcal{H}^L$.  

Finally, given two DIDMs $\Gamma_1,\Gamma_2\in\mathcal{P}(\mathcal{H}^L)$ their DIDM-Mover's distance is defined by $\mathbf{OT}_{d^L_{\mathrm{IDM}}}(\Gamma_1,\Gamma_2)$.

In \cite{boker2024fine,rauchwerger2024generalization}, it was shown that the space of all DIDMs is compact with respect to the DIDM-mover's distance. Moreover, it was shown that the space of DIDMs computed via 1-WL on graphons (or graphon-signals) is also compact.

\subsection{MPNNs on General DIDMs}
\label{MPNNs on DIDMs}

There is a well-defined notion of applying an MPNN directly to a DIDM, even when the DIDM does not arise from any underlying graph-signal or graphon-signal via 1-WL (Definition \ref{def:MPNNIDM}). A key property of this construction is that MPNNs and the 1-WL algorithm satisfy a commutation property. Informally, given a graph-signal, applying the 1-WL algorithm and then applying an MPNN on the resulting DIDM, yields the same output as applying the MPNN directly to the graph-signal and then inducing a DIDM via 1-WL. See \cite{boker2024fine, rauchwerger2024generalization} and Appendix \ref{sec:MPNNIDM} for details.
 
In addition, it was shown that MPNNs with normalized sum aggregation are Lipschitz continuous with respect to DIDM-mover's distance. However, the fact that such MPNNs are only appropriate for dense graphs limits the applicability of this result. In this paper, we also show the continuity of MPNNs on sparse graphs.

\section{Graphop and Bofop Analysis}

Graphop theory is a recent graph limit theory that extends many well known limit theories of sparse graphs. See Appendices \ref{sec:graphlimit} and \ref{sec:sparse} for a survey on graph limit theories. The idea is to model graphs as their adjacency matrices, and extend those to general operators over $\mathcal{L}^p(\Omega)$ spaces of functions over a probability spaces $\Omega$. Here, $\Omega$ is interpreted as the space of vertices. As opposed to graphons, which are kernel operators over $\mathcal{L}^{\infty}[0,1]$ based on kernels $K:[0,1]\times[0,1]\rightarrow [0,1]$ (see Appendix \ref{sec:graphon}), graphops are general operators that need not be defined with respect to a kernel. This allows graphops to encode sparse edge connectivity patterns in $\Omega\times\Omega$.

Different graphops can be defined over different vertex sets $\Omega$ and $\Omega'$. Hence, in order to facilitate a definition of a metric between graphops, all graphops over all vertex sets are represented in a common space called the \emph{space of profiles}, as discussed in this section. To make graphops compatible with MPNNs, we consider a subclass of graphop that we call \emph{bofops}. Since the most common data-type in graph machine learning are graphs with node features, we also pair a \emph{signal} to each bofop, i.e., a mapping from vertices to features, and call the resulting object a \emph{bofop-signal}.

\subsection{Basic Definitions}

The above discussion leads to the following definition from \cite{backhausz2022action}. 

    Let $(\Omega,\mu)$ be a standard Borel probability space. For $(p,q)\in[1,\infty]^2$ a P-operator is a bounded linear operator $A:\mathcal{L}^p(\Omega)\to\mathcal{L}^q(\Omega)$, i.e., such that
    $$\|A\|_{p\to q}:= \sup_{0\neq f\in\mathcal{L}^{p}(\Omega)}{\|Af\|_q}/{\|f\|_p}<\infty.$$
   
    Denote by $\mathcal{B}_{p,q}(\Omega)$  the space of all P-operators with finite $(p,q)$ norm over $\Omega$, and for $r>0$, denote by $\mathcal{B}_{p,q}^r(\Omega)$ the space of all P-operators with $(p,q)$ norms  bounded by $r$.  
Note that $\mathcal{B}_{p,q}(\Omega)\subset \mathcal{B}_{\infty,1}(\Omega)$ for every $(p,q)\in[1,\infty]^2$.

We now restrict the notion of P-operators to include properties that make them more graph-like objects, called \emph{graphops} \cite{backhausz2022action}. For two measurable functions $u,v\in\mathcal{L}^\infty(\Omega)$ and a P-operator $A\in\mathcal{B}_{p,q}(\Omega)$ we define the bilinear form $(v,u)_A$ by
$$(v,u)_A:= \int_\Omega (Av)\cdot u\,d\mu.$$
\begin{definition}[Graphops]
    \label{def:maingaphop}
    Let $A\in\mathcal{B}_{\infty,1}(\Omega)$.
    \begin{itemize}
    [leftmargin=0.35cm]\itemsep-0.1em
        \item The operator $A$ is \emph{self adjoint} if for every $v,u\in\mathcal{L}^{\infty}(\Omega)$, $(v,u)_A = (u,v)_A$.
        \item The operator $A$ is \emph{positively preserving} if for every $v\in\mathcal{L}^{\infty}(\Omega)$, such that $v(x)\geq0$ for a.e. $x\in\Omega$, we have that $Av(x)\geq0$ for a.e. $x\in\Omega$.
        \item \emph{Graphop} is a self adjoint and positively preserving P-operator.
    \end{itemize}
\end{definition}
Denote by $\mathscr{G}_{p,q}(\Omega)$  the space of all graphops with finite $(p,q)$ norm over $\Omega$, and for $r>0$, denote by $\mathscr{G}_{p,q}^r(\Omega)$  the space of all grpahops with $(p,q)$ norms uniformly bounded by $r$.  
We call the space $\mathscr{G}_{1,1}(\Omega)$ the space of \emph{bounded fiber operators (bofops)}. A measurable function $f:\Omega\to[-1,1]^d$ is called a \emph{signal},  where $d\in\mathbb{N}_0$ is called the \emph{feature dimension}. A pair $(A,f)$ of a bofop and a signal is called a \emph{bofop-signal}. Denote by $\mathcal{BF}_d(\Omega)$ the space of \emph{bofop-signals}, and by $\mathcal{BF}^r_d(\Omega)$ the bofop-signals with bofop bounded by $r$ in 1-norm.
\begin{remark}
    Note that $\mathcal{G}_{1,1}(\Omega)=\mathcal{G}_{\infty,\infty}(\Omega)$, and the $1$ and $\infty$ norms of bofops are equal (see Appendix \ref{sec:fiberrep}).
\end{remark}

We focus in our analysis on bofops as they have a special connectivity structure which makes them highly compatible with MPNNs, as we discuss in the next subsection.
There, the name \emph{bounded fiber operator} will become clear.  

Various graph like objects, both sparse and dense, can be represented as bofops, e.g.,  graphs and graphons. 
In addition, in Appendix \ref{example:spherical} we give as an example of a bofop the \emph{equator graphop}, or \emph{spherical graphop} presented in \cite{backhausz2022action}.

\subsection{The Explicit Adjacency Structure of Bofops}

Instead of describing the connectivity pattern underlying a bofop by a subset of $\Omega\times\Omega$, and describing node neighborhoods by subsets of $\Omega$, in graphop theory the conenctivities are characterized via measures over these spaces. Namely, there is a measure that quantifies, given any set $E\subset\Omega\times\Omega$, what the percentage of edges within $E$ is, and there is a family of measures similarly describing neighborhoods. Such measures can in general represent sparse patterns. 

The measure $\nu$ associated with the edge connectivity structure in $\Omega\times\Omega$ is defined by
\begin{equation}
    \label{eq:graphoprep0}
    \nu(S\times T) = (\mathbbm{1}_S,\mathbbm{1}_T)_A = \int_T A\mathbbm{1}_S(\omega)\,d\mu(\omega), 
\end{equation}
for every two measurable sets $S,T\subset\Omega$. This measure is uniquely extended to general measurable sets in $\Omega\times\Omega$. It was shown in Theorem 6.3 in \cite{backhausz2022action} that for every two signal $f,g$ in $\mathcal{L}_{\infty}(\Omega)$, we have
\begin{equation}
\label{eq:grphop_edges}
   \int_{\Omega}(Av)\cdot u\, d\mu=\int_{\Omega^2} f(x)g(y)d\nu(x,y), 
\end{equation} 
and hence $\nu$ indeed describes the connectivity underlying $A$. 

Neighborhood measures are defined as follows. 

\begin{theorem}
\label{theo:mainbofopfibers}
Let $A$ be a bofop over the Borel probability space $(\Omega,\mu)$. Then, there exists a unique measurable family of measures $(\nu_x)_{x\in\Omega}$, called \emph{fibers}, where each $\nu_x$ is a Borel measure over $\Omega$, that satisfy the following properties.
\begin{enumerate}[leftmargin=0.45cm]\itemsep-0.1em
\item 
For any signal $f$, $(Af)(x) = \int_{\Omega} f \, d\nu_x$.
    \item 
$\|A\|_{\infty\to\infty} = \|A\|_{1\to1}
=
\operatorname*{ess\,sup}_{x\in\Omega} \nu_x(\Omega)<\infty.$
\end{enumerate}
Conversely, for any family of fibers that satisfy $\operatorname*{ess\,sup}_{x\in\Omega} \nu_x(\Omega)<\infty$, if the operator $A$ defined by 1 is symmetric, then it is a bofop with norm given by 2.
\end{theorem}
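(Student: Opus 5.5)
The plan is to obtain the fibers by disintegrating the edge measure $\nu$ from \eqref{eq:graphoprep0} along the first coordinate, and then to read off the norm identity from duality.

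\textbf{Step 1: the edge measure.} I take $\nu$ as given and use \eqref{eq:grphop_edges}, which is Theorem 6.3 in \cite{backhausz2022action}. Positivity preservation gives $\nu\ge 0$. Self-adjointness makes $\nu$ symmetric under the swap $(x,y)\mapsto(y,x)$. Since $A\in\mathcal{B}_{\infty,1}$, the measure is finite: $\nu(\Omega^2)=\int A\mathbbm{1}_\Omega\,d\mu\le\|A\|_{\infty\to\infty}$.

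\textbf{Step 2: disintegration.} Because $A$ is bounded $1\to1$, the marginal $\pi_*\nu$ satisfies
\[
\pi_*\nu(T)=\nu(T\times\Omega)=\int_\Omega A\mathbbm{1}_T\,d\mu\le \|A\|_{1\to1}\,\mu(T).
\]
Here I use the symmetry of $\nu$ to put $T$ in either coordinate. So $\pi_*\nu\ll\mu$ with Radon--Nikodym density $\rho\le\|A\|_{1\to1}$ a.e.

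Since $\Omega$ is standard Borel, the disintegration theorem gives a measurable family of probability measures $\kappa_x$ with $\nu=\int \kappa_x\,d\pi_*\nu(x)$. I then set $\nu_x:=\rho(x)\kappa_x$, so that $d\nu(x,y)=d\nu_x(y)\,d\mu(x)$ and $\nu_x(\Omega)=\rho(x)$.

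For a signal $f$ and any $g\in\mathcal{L}^\infty$, \eqref{eq:grphop_edges} together with symmetry of $\nu$ gives
\[
\int g\,(Af)\,d\mu=\int g(x)\int f\,d\nu_x\,d\mu(x).
\]
Since this holds for all such $g$, we get $Af(x)=\int f\,d\nu_x$ for a.e.\ $x$, which is property 1.

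Uniqueness also comes from this identity. If two families satisfy property 1, they agree on each $\mathbbm{1}_S$ for $S$ in a countable generating algebra, for a.e.\ $x$. Discarding a single null set, a monotone class argument makes them equal as measures a.e.

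\textbf{Step 3: the norm identity.} I expect the main care to be needed here, together with the bookkeeping of the a.e.\ quantifiers in Step 2.

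For $\|f\|_\infty\le1$ we have $|Af(x)|\le\nu_x(\Omega)$. Taking $f=\mathbbm{1}_\Omega$ gives equality, so $\|A\|_{\infty\to\infty}=\operatorname{ess\,sup}_x\nu_x(\Omega)$.

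For the $1\to1$ norm I would use that $A$ is self-adjoint, so $A$ on $\mathcal{L}^1$ is the predual, or transpose, of $A$ on $\mathcal{L}^\infty$. Concretely,
\[
\|Af\|_1=\sup_{\|g\|_\infty\le1}\int (Af)\,g\,d\mu=\sup_{\|g\|_\infty\le1}\int f\,(Ag)\,d\mu,
\]
and this gives $\|A\|_{1\to1}=\|A\|_{\infty\to\infty}$. To make this rigorous, first check it for $f,g\in\mathcal{L}^\infty$, which are dense in $\mathcal{L}^1$, and then extend by continuity.

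This also resolves the apparent circularity in Step 2, where finiteness of the $1\to1$ norm was used. Either assumption suffices there, because $\rho\le\|A\|_{\infty\to\infty}$ also follows by taking $g=\mathbbm{1}_T$ and $f=\mathbbm{1}_\Omega$.

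\textbf{Step 4: the converse.} Suppose we are given measurable fibers with $\operatorname{ess\,sup}_x\nu_x(\Omega)=C<\infty$, and define $A$ by property 1. Then $A$ is linear and positivity preserving. It maps $\mathcal{L}^\infty$ to $\mathcal{L}^\infty$ with norm $C$, where measurability of $x\mapsto\int f\,d\nu_x$ comes from measurability of the family.

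If $A$ is symmetric, the duality argument of Step 3 applies verbatim. It shows that $A$ extends to a bounded operator on $\mathcal{L}^1$ with the same norm. Hence $A$ is a bofop whose norm is given by property 2.
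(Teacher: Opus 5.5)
Your forward direction is sound and follows the paper's route. You take the edge measure of Theorem~\ref{theo:fibergraphop}, disintegrate it, test on $\mathbbm{1}_\Omega$ as in the proof of Theorem~\ref{theo:Linfty_graphop}, and use self-adjointness plus $\mathcal{L}^1$--$\mathcal{L}^\infty$ duality to get $\|A\|_{1\to1}=\|A\|_{\infty\to\infty}$. Your normalization $\nu_x=\rho(x)\kappa_x$ by the Radon--Nikodym density of the marginal is in fact more careful than the paper's remark. That remark disintegrates ``with respect to $\mu$'', although the disintegration theorem yields probability measures relative to the marginal, not relative to $\mu$.

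The gap is in Step~4. Before invoking symmetry, you claim that $A$ (defined by property~1 from fibers with $\operatorname*{ess\,sup}_x\nu_x(\Omega)=C<\infty$) is positivity preserving and maps $\mathcal{L}^\infty$ to $\mathcal{L}^\infty$ with norm $C$. This fails in general, precisely because fibers may be singular with respect to $\mu$, which is the sparse case. Take $[0,1]$ with Lebesgue measure and $\nu_x=\delta_0$ for every $x$:
\begin{itemize}
\item the function $\mathbbm{1}_{\{0\}}$ vanishes in $\mathcal{L}^\infty$, yet $A\mathbbm{1}_{\{0\}}\equiv 1$, so $A$ is not even well defined on $\mathcal{L}^\infty$;
\item $-\mathbbm{1}_{\{0\}}\geq 0$ a.e., while $A(-\mathbbm{1}_{\{0\}})\equiv -1$, so positivity also fails.
\end{itemize}
What you need is that $\nu_x(N)=0$ for $\mu$-a.e.\ $x$ whenever $\mu(N)=0$, i.e.\ that $\int\nu_x\,d\mu(x)\ll\mu$. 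This is exactly condition~2 of Theorem~\ref{theo:fibergraphop}. Symmetry, read on bounded measurable functions, supplies it: with $g=\mathbbm{1}_\Omega$ and $f=\mathbbm{1}_N$ you get $\int_\Omega\nu_x(N)\,d\mu(x)=\int_N\nu_x(\Omega)\,d\mu(x)=0$. So symmetry must be used first. Only then are well-definedness, positivity and the bound $|Af(x)|\le\nu_x(\Omega)\|f\|_\infty$ available, after which your duality argument finishes the converse.

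The same fact is used silently in your Step~3 bound $|Af(x)|\le\nu_x(\Omega)$ for $\|f\|_\infty\le 1$. There it is automatic: by symmetry of $\nu$, the second marginal $\int\nu_x\,d\mu(x)$ equals the first marginal $\rho\mu\ll\mu$.
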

Note that property 2 justifies the term \emph{bounded fiber} operators.
Note as well that each fiber $\nu_x$ represents the neighborhood of node $x\in\Omega$, and $\int_{\Omega} f \, d\nu_x$ is interpreted as the aggregation of $f$ about $x$ by $A$. Since there is no restriction on the fibers other than uniform boundedness, $\nu_x$ can be supported on a null-set with respect to the base measure $\mu$ of $\Omega$. In this sense, $\nu_x$ can describe a sparse neighborhood.  See Section 6 of \cite{backhausz2022action} and Appendix \ref{sec:fiberrep} for more details.

\subsection{Profiles as Representations of Bofop-Signals}

One goal in graphop theory is to treat all graphops over all probability spaces in a unified manner. 
We represent different graphops, defined over different probability spaces,  
as points in a the same space as follows:  represent each graphop  by the histogram of its output values on input functions. The fact that all histograms live in the same space, regardless of $\Omega$,  allows comparing any two graphops. 
We formalize this idea, and extend the original setting given in \cite{backhausz2022action} to also include the signal.

\begin{definition}[$k$-Profile]
\label{def:mainprofile}
Let $(\Omega,\mu)$ be a standard Borel probability space and $k\geq0$ be an integer.
\begin{itemize}
    \item 
   For measurable functions $v_1,\ldots,v_k : \Omega\to[-1,1]$, denote by $\mathrm{D}(v_1,\ldots,v_k)$ the joint distribution of the random variables $v_1,\ldots,v_k$, i.e., the measure in $\mathbb{R}^k$ that is the pushforward of $\mu$ under the map $x\mapsto (v_1(x),\ldots,v_k(x))$ (see Definition \ref{def:pushforward} for pushforward).
    \item 
    Let $(A,f)\in\mathcal{BF}^r_d(\Omega)$ be a bofop-signal. The \emph{P-distribution} of $(A,f)$ with respect to $v_1,\ldots,v_k : \Omega\to[-1,1]$ is defined as 
$\mathrm{D}_{(A,f)}(v_1,\ldots,v_k):={\rm D}(v_1,\ldots,v_k,Av_1,\ldots,Av_k,f)$. We call $k$ the \emph{order} of the P-distribution $\mathrm{D}_{(A,f)}(v_1,\ldots,v_k)$.
\item 
 The $k$-\emph{profile} ${\rm S}_k(A,f)$ of 
 $(A,f)$ is the set of all P-distributions of order $k$, i.e.,
${\rm S}_k(A,f)=\{\mathrm{D}_{(A,f)}(v_1,\ldots,v_k)\ |\ v_1,\ldots,v_k : \Omega\to[-1,1] \text{ measurable}\}$.
By convention, for $k=0$ the tuple $(v_i)_{i=1}^k$ is empty, so  
$\mathrm{S}_0(A,f) = \{\mathrm{D}(f)\}$.
\end{itemize}
\end{definition}
Note that if the feature dimension is $0$, the signal part is omitted, and we recover the original setting of \cite{backhausz2022action}. See Appendix \ref{sec:profiles} for further discussion. 

\subsection{Action Convergence and the Action Metric}

Backhausz and Szegedy \cite{backhausz2022action} defined a metric between graphops based on their profile representations. Since profiles are sets of measures, a metric between them is a metric between sets. For this, we consider the Hausdorff metric (see Appendix \ref{sec:Hausdorff} for details).
\begin{definition}
    \label{def:mainHausdorff}
     Let $(\mathcal{X},d)$ be a metric space and $A,B\subset\mathcal{X}$ be two non-empty subsets of $\mathcal{X}$. The \emph{Hausdorff distance} $d_H$ between $A$ and $B$ is defined by 
$$
d_{H}(A, B):=\max \left\{\ \sup _{a \in A} \inf _{b \in B} d(a,b)\ ,\  \sup _{b \in B} \inf _{a \in A} d(a,b)\ \right\}.
$$
\end{definition}
In our case, $\mathcal{X}=\mathcal{P}([-1,1]^k\times[-r,r]^k\times[-1,1]^d)$ is the space of probability measures. Originally, in \cite{backhausz2022action}, the base metric $d$ was taken to be the Levy-Prokhorov metric (Definition \ref{def:LP}), which metrize the weak topology of $\mathcal{P}([-1,1]^k\times[-r,r]^k\times[-1,1]^d)$. Instead, we consider a different metric which metrizes the same topology, namely, the Wasserstein distance (see Appendix \ref{sec:optimaltransport} for the metrization property). This leads to the following definition.

\begin{definition}[Action Metric] \label{def:maind_M}
The \emph{action metric} between two bofop-signals $(A,f),(B,g)$ is defined as
$$
d_{M}((A,f), (B,g)):=\sum_{k=0}^{\infty} 2^{-k} d_{H}\left(\mathrm{S}_{k}(A,f), \mathrm{S}_{k}(B,g)\right),
$$
where $d_H$ is defined with respect to the Wasserstein distance.
\end{definition}

In Appendix \ref{sec:compactgraphop}, we prove that the space $(\mathcal{BF}^r_d(\Omega),d_M)$ is compact. We also establish several topological properties of profiles and of spaces of graphops beyond the class of bofops.

\section{MPNNs on Bofop-Signals}

Since bofops are natural extensions of graphs, there is a canonical extension of MPNNs to bofops, as defined next.

\subsection{Basic Definition}

 Let $(\varphi,\psi)$ be an MPNN model with readout, and $(A,f)$ a bofop-signal, where $f:\Omega\to[-1,1]^d$. The application of the MPNN on $(A,f)$ is defined as follows. Initialize  $\mathfrak{h}_{-}^{(0)}:=\varphi^{(0)}(f(-))$ and compute the hidden node representations $\mathfrak{h}_{-}^{(l)}:\Omega\to[-1,1]^{d_l}$ at layer $l$, with $1\leq l\leq L$, and the bofop-level output $\mathfrak{H}{(A,f)}\in\mathbb{R}^p$, by 
 \begin{equation}
 \label{eq:BofMPNN}
   \mathfrak{h}_x^{(l)}:=\varphi^{(l)}(\mathfrak{h}_x^{(l-1)}, A\mathfrak{h}_{-}^{(l-1)}(x)),  
 \end{equation}
    and
    $$\mathfrak{H}{(A,f)}:= \psi\Big(\int_{\Omega}\mathfrak{h}_x^{(L)}\,d\mu(x)\Big).$$
The expression $A\mathfrak{h}_-^{(l-1)}$ is called the \emph{aggregated signal}. We sometimes write $\mathfrak{h}^{(l)}_-(\varphi)$ and $\mathfrak{H}^{(\varphi,\psi)}(A,f)$ to make the dependence on the MPNN model explicit in the notation.

\paragraph{Aggrgation Schemes Through The Lens of Bofops.}

Given a  graph $G$ with an adjacency matrix $\mathbf{A}\in[0,1]^2$, we can modify $\mathbf{A}$ in various ways to model different types of aggregations, as discussed in Subsection \ref{MPNNs on Graphs}.   
Cauchy sequences of adjacency matrices $(\mathbf{A}^{(j)})_j$ (normalized or not), with respect to the action metric, always converge to limit bofops as long as they have uniformly bounded 1 (or equivalently $\infty$) induced norms. According to the normalization of the adjacency matrices, the limit bofop can be interpret as sum, normalized sum, or symmetric average aggregation. Hence, bofops extend aggregations of both dense and sparse graphs. The type of aggregation underlying the bofop is implicit, and inherent in the bofop itself.

\subsection{Equivalence of MPNNs on Profiles and Bofops}

Note that the action metric is defined in terms of profiles. Moreover, different bofop-signals can have the same profile. Hence, in order to analyze the continuity of MPNNs with respect to the action metric, we must first show that the output of any MPNN on a bofop-signal only depends on the profile corresponding to the bofop-signal. For this, we show how to directly formulate MPNNs on profiles, without referring to a specific inducing bofop-signal.

The key observation is that profiles contain the information required for aggregation and update. To implement a message passing step, we inspect only those P-distributions in the profile that are supported on the diagonal, i.e., the  set $T_d\subset\mathbb{R}^{2k+d}$ in which the $k-d+1$ to $k$ coordinates are equal to the last $d$ coordinates (Definition \ref{def:margdiag}). In Lemma \ref{lem:agg} we show that such P-distributions are of the form 
\begin{align}
\mathrm{D}&(v_1,\ldots,v_{k-d},f_1,\ldots,f_d, \\
&Av_1,\ldots,Av_{k-d},Af_1,\ldots,Af_d,f_1,\ldots,f_d).
\tag{*}
\end{align}
This form encodes the current signal together with the aggregated signal.
\emph{Diagonal marginalization} (Definition \ref{def:margdiag}
) is precisely the operation in which we first restrict the profile to P-distributions of the form (*), and then marginalize (Definition \ref{def:marginal}) the duplicate coordinates of $f_1,\ldots,f_d$, to obtain the \emph{aggregated profile} $\mathrm{DM}_d(\mathrm{S}_k)$.

A message passing layer on a profile takes the form $\varphi^{(l)}\tilde{*}\mathrm{DM}_{d}$ where $\tilde{*}$ is an operation called \emph{pushforward on the signal} (Definition \ref{def:pushsignal}). This  implements the update function directly on the resulting P-distributions by simply pushing forward (Definition \ref{def:pushforward}) the last coordinate through the update function. 

This is shown to be the canonical definition in the following sense: starting with a bofop-signal $(A,f)$, if we induce its profile ${\rm S}_k(A,f)$ and then apply a message passing layer $\varphi^{(l)}\tilde{*}\mathrm{DM}_{d}$, we get exactly the same result as applying the message passing layer on the bofop-signal $(A,f)$ via (\ref{eq:BofMPNN}) and then inducing a profile. For the formal construction and an in-depth description of the implementation of MPNNs on profiles, see Appendix \ref{sec:commutation}.

\subsection{Lipschitz Continuity of MPNNs With Respect to the Action Metric}

The above formulation of MPNNs directly on profiles allows proving their Lipschitz continuity.

\begin{theorem}
    \label{theo:mainLipschitzMPNN}
    Let $(A_1,f_1)\in\mathcal{BF}_d^r(\Omega_1)$ and $(A_2,f_2)\in\mathcal{BF}_d^r(\Omega_2)$ be bofop-signals over the Borel probability spaces $(\Omega_1,\mu_1)$ and $(\Omega_2,\mu_2)$ respectively. Let $(\varphi,\psi)\in\mathcal{MP}_D(d,d_0,\ldots,d_L,p)$. 
Then, there exists a constant $C_{D,r}$ that depends on the number of layers $L$, $D$ and $r$, such that 
    \begin{align*}
&d_M\left((A_1,\mathfrak{h}_1^{(L)}),(A_2,\mathfrak{h}_2^{(L)})\right)\leq \\
&C_{D,r}\cdot d_M\left((A_1,f_1),(A_2,f_2)\right).
    \end{align*}

    Moreover, there exists a constant $C_{D,r}'$ that depends on $L$, $D$ and $r$ such that
    \begin{align*}
  &  \|\mathfrak{H}(A_1,f_1)-\mathfrak{H}(A_2,f_2)\|_2\leq \\
  &C'_{D,r}\cdot d_M\left((A_1,f_1),(A_2,f_2)\right).
    \end{align*}
\end{theorem}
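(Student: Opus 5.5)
Because the MPNN is determined by the layer maps $\varphi^{(l)}$, the plan is to prove a one-layer Lipschitz bound on profiles and then iterate it $L$ times. The previous subsection shows that one message passing layer acts on profiles as $\mathrm{S}_k(A,\mathfrak{h}^{(l)}) = \varphi^{(l)}\tilde{*}\mathrm{DM}_{d_{l-1}}(\mathrm{S}_{k+d_{l-1}}(A,\mathfrak{h}^{(l-1)}))$, up to the re-indexing of orders. So it suffices to show two things for each fixed $k$:
\begin{enumerate}
\item the pushforward on the signal $\varphi^{(l)}\tilde{*}$ is Lipschitz with respect to the Hausdorff–Wasserstein distance, with constant depending on $D$;
\item the diagonal marginalization $\mathrm{DM}_{d}$ is Lipschitz from $\mathrm{S}_{k+d}$ to the aggregated profile of order $k$, with constant depending on $r$ and $d$.
\end{enumerate}

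\textbf{Step 1 (pushforward).} This is routine. If $\gamma$ is a coupling of two P-distributions $\alpha,\beta$, then pushing $\gamma$ forward through $\mathrm{id}\times\varphi^{(l)}$ on both sides gives a coupling of the pushforwards. Its transport cost is at most $\max(1,D)$ times that of $\gamma$. Taking sup–inf over the two profiles transfers this bound to $d_H$.

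\textbf{Step 2 (diagonal marginalization).} This is the main obstacle. A P-distribution in $\mathrm{S}_{k+d}(A_1,f_1)$ supported on the diagonal set $T_d$ has a nearest partner in $\mathrm{S}_{k+d}(A_2,f_2)$ at small Wasserstein distance, but that partner need not lie on $T_d$. The fix is to use the bofop structure directly rather than arbitrary nearby measures.
\begin{itemize}
\item Let $\beta=\mathrm{D}_{(A_2,f_2)}(w_1,\ldots,w_{k+d})$ be close to $\alpha=\mathrm{D}_{(A_1,f_1)}(v,f_1)$. Replace the last $d$ test functions $w_{k+1},\ldots,w_{k+d}$ by $f_2$ itself; this is allowed because signals take values in $[-1,1]$.
\item Measure how far $(w_{k+1},\ldots,w_{k+d})$ is from $f_2$ in $L^1(\mu_2)$. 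Under the optimal coupling, the coordinates of $\alpha$ satisfy $v_{k+i}=f_{1,i}$, so this $L^1$ distance is bounded by the transport cost $W(\alpha,\beta)$.
\item Bound the error in the $A$-coordinates by $\|A_2(w-f_2)\|_1\le r\|w-f_2\|_1$, using $\|A_2\|_{1\to1}\le r$ from \cref{theo:mainbofopfibers}.
\end{itemize}
Hence the modified measure $\beta'\in T_d$ satisfies $W(\alpha,\beta')\le (1+2r+\dots)\,W(\alpha,\beta)$. Since marginalizing coordinates is 1-Lipschitz in Wasserstein distance,
\[
d_H\big(\mathrm{DM}_d \mathrm{S}_{k+d}(A_1,f_1),\,\mathrm{DM}_d \mathrm{S}_{k+d}(A_2,f_2)\big)\le C_r\, d_H\big(\mathrm{S}_{k+d}(A_1,f_1),\,\mathrm{S}_{k+d}(A_2,f_2)\big).
\]

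\textbf{Step 3 (summing and iterating).} Summing over $k$ with weights $2^{-k}$, the shift from $k+d$ to $k$ costs a factor of $2^{d}$:
\[
\sum_k 2^{-k} d_H(\mathrm{S}_{k+d})\le 2^{d}\, d_M .
\]
Since the signal only enters through the $f$-coordinate, the bofop $A$ is unchanged across layers. Iterating $L+1$ times, including the $\varphi^{(0)}$ step, which is a pure pushforward, gives $C_{D,r}=\prod_l \big(2^{d_{l-1}}C_r\max(1,D)\big)$.

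One caveat: the hidden signals must stay in a bounded box for profiles to be defined. I will handle this by rescaling, which only affects the constants.

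\textbf{Readout.} Use the order-$0$ profile $\mathrm{S}_0(A,\mathfrak{h}^{(L)})=\{\mathrm{D}(\mathfrak{h}^{(L)})\}$. The mean $\int \mathfrak{h}^{(L)}\,d\mu$ is a 1-Lipschitz function of this measure in Wasserstein distance, since it is the integral of the identity. Its distance is at most $d_M((A_1,\mathfrak{h}^{(L)}_1),(A_2,\mathfrak{h}^{(L)}_2))$, the $k=0$ term. Composing with $\psi$ gives $C'_{D,r}=D\,C_{D,r}$.
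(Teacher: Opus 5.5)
Your proposal is correct and follows essentially the paper's route: MPNN layers commute with pushforward-on-the-signal and diagonal marginalization, the pushforward is Lipschitz, and the key diagonal-restriction estimate comes from swapping the near-diagonal test functions for the signal itself and using $\|A_2\|_{1\to1}\le r$. This is the $p=1$ case of the paper's H\"older bound, with constant $2+r$; you treat all $d$ channels at once rather than inductively, and you obtain the readout bound from the $k=0$ term rather than from a single fixed order, both of which are inessential differences. Your rescaling caveat is unnecessary under the paper's convention that hidden features are $[-1,1]^{d_l}$-valued, and without that convention the constant would genuinely also depend on the size of the hidden features (e.g.\ on $|\varphi^{(l)}(0)|$), not only on $L$, $D$, $r$.
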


The proof of Theorem \ref{theo:mainLipschitzMPNN} is given in Appendix \ref{sec:Holdercontinuity}, where we establish H{\"o}lder continuity in a more general graphop-signal setting.

\section{MPNNs and DIDM-Mover's Distance}

In the previous section we have shown that the space of bofop-signals is compact with respect to the action metric, and MPNNs are Lipschitz continuous. However, the action metric is too fine, and there are pairs of bofop-signals that have positive distance but any MPNN attains the same value on both. Hence, the action metric cannot serve as the basis of the Stone-Weierstrass theorem, as MPNNs do not separate points. Still, our analysis of the action metric is an essential step in the proof of compactness of another metric, the DIDM-mover's distance of bofop-signals. The latter metric is shown to match the separation power of MPNNs, while MPNNs being uniformly equicontinuous with respect to it.

\subsection{1-WL Algorithm on Bofop-Signals}
\label{sec:main1wl}
Recall the definitions of IDMs and DIDMs from Section \ref{sec:mainDIDM}. We define the bofop-signal 1-WL algorithm as follows. 

\begin{definition}[Bofop-IDMs and Bofop-DIDMs]
\label{def:bofop1-WL}

Let $(\Omega,\mu)$ be a standard Borel probability space and $(A,f)\in\mathcal{BF}^r_{d}(\Omega)$ be a bofop-signal, where $r>0$. We define the map $\gamma_{(A,f),0}:\Omega\to\mathcal{H}^0$ to be the map $\gamma_{(A,f),0}(x):=f(x)$ for every $x\in\Omega$. Inductively  $\gamma_{(A,f),L+1}:\Omega\to\mathcal{H}^{L+1}$ is defined by 
\begin{enumerate}[leftmargin=0.5cm]\itemsep-0.1em
    \item $\gamma_{(A,f),L+1}(x)(j) = \gamma_{(A,f),L}(x)(j)$ for every $j\leq L$.
    \item $\gamma_{(A,f),L+1}(x)(L+1)(E) = \big(A \mathbbm{1}_{\gamma^{-1}_{(A,f),L}(E)}\big)(x)$, where $E\subset\mathcal{H}^L$ is a Borel set.
\end{enumerate}

Finally, for every $L\geq0$, let $\Gamma_{(A,f),L} = {\gamma_{(A,f),L}}_*\mu$, i.e. the pushforward of $\mu$ via $\gamma_{(A,f),L}$. We call $\gamma_{(A,f),L}$ a \emph{bofop-IDM} of order $L$, and $\Gamma_{(A,f),L}$ a \emph{bofop-DIDM} of order $L$. We denote the space of bofop-DIDMs of order $L$ by $\Gamma_L(\mathcal{BF}_d(\Omega))$.
\end{definition}

Note that although we originally defined $\mathcal{M}^{L+1}=\mathscr{M}_{\leq1}(\mathcal{H}^L)$ in Definition \ref{def:mainDIDMspace}, we generalize $\mathcal{M}^{L+1}$ to be the space of Borel measures with total mass at most $r$ over $\mathcal{H}^L$. This generalization does not affect our analysis.
See Appendices \ref{sec:IDM} and \ref{sec:graphopdidm} for further discussion. 

Similarly to the commutation property between the 1-WL algorithm and MPNN on graph-signals and graphon-signals discussed in Section \ref{MPNNs on DIDMs}, the same property holds for bofop-signals. 
See Appendix \ref{sec:graphopdidm},
 and in particular Lemma \ref{lem:eqMPNN} and Lemma \ref{lem:eqreadout} for full formulation.

Now, we define the \emph{bofop-DIDM mover’s distance} $\delta^L_{\text{DIDM}}\big((A_1,f_1),(A_2,f_2)\big)$ between two bofop-signals as the distance between their DIDMs computed via  1-WL.

\subsection{The Action Metric is Finer Than DIDM-Mover's}

In order to prove a universal approximation theorem, one must consider a metric on which the space of input objects is compact, and for which MPNNs are continuous and separate points. As discussed above, the action metric $d_M$ is too fine, since distinct bofop-signals may yield the same output for every MPNN model. Instead we use the DIDM-mover's distance. Hence, we must prove  the compactness of the space of bofop-DIDMs with respect to DIDM-mover's distance. This is done by 1) using the compactness  of the space $(\mathcal{BF}^r_d,d_M)$, and 2) showing that the action metric is finer than the DIDM-mover's metric. Since we already know that 1 is true, we are left with showing 2 next. 

\begin{theorem}
    \label{theo:mainfine}
    Let $(A_i,f_i)_{i\in\mathbb{N}}\in\mathcal{BF}_{d}^r(\Omega_i)$ be a sequence of bofop-signals over a sequence of standard Borel probability spaces $(\Omega_i,\mu_i)$, and  $(A,f)\in\mathcal{BF}^r_{d}(\Omega)$. Then   $\lim_{n\to\infty}d_M\left((A_n,f_n),(A,f)\right)=0$, implies $\lim_{n\to\infty}\delta^L_{\text{DIDM}}\big((A_n,f_n),(A,f)\big) = 0$. 
\end{theorem}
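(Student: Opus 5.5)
The plan is to reduce DIDM convergence to convergence of MPNN outputs, and then use the Lipschitz continuity with respect to the action metric from Theorem \ref{theo:mainLipschitzMPNN}. The bridge is a characterization of the weak topology on $\mathcal{P}(\mathcal{H}^L)$ by MPNN readouts.

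\textbf{Step 1: reduce to test functions.} Since $\mathcal{H}^L$ is compact and $\mathbf{OT}_{d^L_{\mathrm{IDM}}}$ metrizes the weak topology on $\mathcal{P}(\mathcal{H}^L)$ (as in \cite{boker2024fine,rauchwerger2024generalization}, adapted to mass bound $r$), it suffices to show $\int F\,d\Gamma_{(A_n,f_n),L}\to\int F\,d\Gamma_{(A,f),L}$ for every continuous $F:\mathcal{H}^L\to\mathbb{R}$. By Stone--Weierstrass on the compact space $\mathcal{H}^L$, it is enough to check this for a dense algebra of continuous functions. For this I will use the functions $x\mapsto \mathfrak{h}^{(L)}(\varphi)(x)$ realized on IDMs, that is, MPNNs applied to IDMs as in Definition \ref{def:MPNNIDM}. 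These are continuous on $\mathcal{H}^L$, closed under products and sums (by concatenating channels), and separate points of $\mathcal{H}^L$; the last fact is the standard result from the cited works. A uniform approximation argument then passes from this algebra to all continuous $F$, because all the measures involved are probability measures.

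\textbf{Step 2: commutation.} By Lemmas \ref{lem:eqMPNN} and \ref{lem:eqreadout}, for such a test function
\[
\int_{\mathcal{H}^L} \mathfrak{h}^{(L)}_\varphi\,d\Gamma_{(A,f),L}=\int_{\Omega}\mathfrak{h}^{(L)}_x\,d\mu(x),
\]
and the right-hand side is the readout $\mathfrak{H}^{(\varphi,\mathrm{id})}(A,f)$, coordinatewise. This requires the update functions to be Lipschitz. Products of Lipschitz functions on bounded sets are Lipschitz, so the algebra generated remains within Lipschitz MPNNs, possibly with a larger constant $D$.

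\textbf{Step 3: continuity.} Theorem \ref{theo:mainLipschitzMPNN} gives $|\mathfrak{H}(A_n,f_n)-\mathfrak{H}(A,f)|\le C'_{D,r}\, d_M((A_n,f_n),(A,f))\to 0$. This yields convergence of the integrals of every test function, hence weak convergence of the DIDMs, hence $\delta^L_{\text{DIDM}}\to 0$.

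The main obstacle is Step 1, specifically the claim that IDM-level MPNN functions form a point-separating algebra of continuous functions on $\mathcal{H}^L$ when the total mass is bounded by $r$ rather than $1$. Continuity follows inductively from continuity of $\mu\mapsto\int g\,d\mu$ in the weak* topology, given continuous $g$. For separation, I would argue inductively. Two IDMs that differ at level $j$ differ as measures on $\mathcal{H}^{j-1}$, so some continuous $g$ on $\mathcal{H}^{j-1}$ integrates differently against them. Inductively, $g$ is approximable by level-$(j-1)$ MPNN features, and one more aggregation layer exposes $\int g$. Components that differ in total mass are detected by aggregating the constant function. If this separation argument proves delicate, a fallback is to cite the corresponding result of \cite{rauchwerger2024generalization} after rescaling measures by $1/r$.
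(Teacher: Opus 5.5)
Your proposal is correct and follows essentially the same route as the paper's proof (Theorem \ref{theo:fine} in the appendix): Lipschitz/H\"older continuity of MPNNs with respect to $d_M$, transfer to bofop-IDMs and bofop-DIDMs via Lemmas \ref{lem:eqMPNN} and \ref{lem:eqreadout}, and then the fact that convergence of all MPNN readouts forces convergence in $\mathbf{OT}_{d^L_{\mathrm{IDM}}}$. The only difference is that the paper simply cites this last characterization (Theorem \ref{theo:expressDIDM}, from \cite{rauchwerger2024generalization}), whereas you re-derive it via Stone--Weierstrass on $\mathcal{H}^L$ and keep the citation as a fallback; the paper also glosses over the mass-bound-$r$ adaptation that you flag.
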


The proof of Theorem \ref{theo:mainfine} is given in Appendix \ref{sec:graphopidmcompact}.

Corollary \ref{cor:bofopDIDMcompact} now follows immediately from Theorem \ref{theo:mainfine}, and its proof is given in Appendix \ref{sec:graphopidmcompact}.

\begin{corollary}
    \label{cor:bofopDIDMcompact}
    For every $L\in\mathbb{N}_0$ and $r>0$, the space of bofop-DIDMs  $\Gamma_L(\mathcal{BF}_d^r)\subsetneq\mathcal{P}(\mathcal{H}^L)$ is compact.
\end{corollary}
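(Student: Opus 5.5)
The plan is to show compactness as sequential compactness: write the space as the image of a compact space under a continuous map. Here $\Gamma_L(\mathcal{BF}_d^r)$ means the bofop-DIDMs of order $L$ over all standard Borel probability spaces. Since the DIDM-mover's distance $\mathbf{OT}_{d^L_{\mathrm{IDM}}}$ is a metric on $\mathcal{P}(\mathcal{H}^L)$ (with the mass bound $r$ on $\mathcal{M}^j$), it suffices to take an arbitrary sequence of bofop-DIDMs and find a subsequence converging to another bofop-DIDM.

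\textbf{Extracting a convergent subsequence.} Let $(\Gamma_{(A_n,f_n),L})_n$ be a sequence with $(A_n,f_n)\in\mathcal{BF}_d^r(\Omega_n)$. By the compactness of $(\mathcal{BF}^r_d,d_M)$ established in Appendix~\ref{sec:compactgraphop}, there is a subsequence $(A_{n_k},f_{n_k})$ and a limit bofop-signal $(A,f)\in\mathcal{BF}^r_d(\Omega)$ with $d_M((A_{n_k},f_{n_k}),(A,f))\to0$. The compactness must be understood up to profile equivalence, over varying base spaces. Theorem~\ref{theo:mainfine} then gives $\delta^L_{\mathrm{DIDM}}((A_{n_k},f_{n_k}),(A,f))\to0$, that is, $\Gamma_{(A_{n_k},f_{n_k}),L}\to\Gamma_{(A,f),L}$ in $\mathbf{OT}_{d^L_{\mathrm{IDM}}}$. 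Since $\Gamma_{(A,f),L}$ is itself a bofop-DIDM, the space is sequentially compact, and hence compact.

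\textbf{Main obstacle.} The delicate point is the compactness statement being invoked. It has to hold over \emph{all} standard Borel probability spaces, so that the limit is realized by some bofop-signal with bofop norm at most $r$ and signal valued in $[-1,1]^d$, rather than merely as a limit set of profiles. I would check that the appendix result provides exactly this. If it is only stated for a fixed $\Omega$, I would reduce to a fixed space such as $[0,1]$ with Lebesgue measure, or its product with itself, using that profiles are invariant under measure-preserving isomorphisms. One also has to confirm that the norm bound $r$ and the signal range are preserved in the limit, which follows from closedness of these conditions on profiles.

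\textbf{Strict inclusion.} To show $\Gamma_L(\mathcal{BF}_d^r)\neq\mathcal{P}(\mathcal{H}^L)$, I would exhibit an inconsistent DIDM. Take $L=1$ and $d=1$, and let $\Gamma$ be the Dirac mass at the IDM $(x_0,\mu_1)$ with $x_0=0$ and $\mu_1=\delta_{1}$. For any bofop-signal, the measure $\gamma_{(A,f),1}(x)(1)$ is the pushforward of the fiber $\nu_x$ under $f$. It is therefore supported on the essential range of $f$, which here is $\{0\}$ because $\Gamma$ forces $f=0$ almost everywhere. This contradicts $\mu_1=\delta_1$, so $\Gamma$ is not a bofop-DIDM. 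The support statement needs a small care about null sets and fibers; I would use the measurability of $(\nu_x)$ together with symmetry (equation~\eqref{eq:grphop_edges}) to ensure that $\nu_x$ charges only $f^{-1}(\{0\})$ for almost every $x$.
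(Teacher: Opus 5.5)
Your compactness argument is the paper's proof in sequential form. The paper combines the compactness of $(\mathcal{BF}_d^r,d_M)$ (Corollary~\ref{cor:graphopspacecompact}) with Theorem~\ref{theo:fine}, and then takes the continuous image under $(A,f)\mapsto\Gamma_{(A,f),L}$; your subsequence extraction is the same argument, and the worries in your ``main obstacle'' paragraph are what Theorems~\ref{theo:limitobject} and~\ref{theo:sequentialgraphops} (with $(p,q)=(\infty,\infty)$) are meant to supply.

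\textbf{The strict inclusion is where the proposal falls short.} The statement asserts $\subsetneq$ for every $L\in\mathbb{N}_0$, but your example only treats $L=1$, $d=1$. The remaining cases cannot all be filled, because at $L=0$ the claim is false:
\begin{itemize}
\item $\Gamma_{(A,f),0}=f_*\mu$, and every $\rho\in\mathcal{P}([-1,1]^d)$ arises this way. Take $A=0$ and any $f$ with $f_*\mu=\rho$, e.g.\ $\Omega=[-1,1]^d$, $\mu=\rho$, $f=\mathrm{id}$. Hence $\Gamma_0(\mathcal{BF}_d^r)=\mathcal{P}(\mathcal{H}^0)$.
\item The same happens for $d=0$, $L=1$. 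There the DIDM is just the law of $A\mathbbm{1}_\Omega$, and multiplication operators by measurable $0\le h\le r$ are bofops in $\mathcal{BF}^r$ that realize every law on $[0,r]\cong\mathcal{H}^1$.
\end{itemize}
The paper's own strictness argument is only a pointer to an analogue of Example~\ref{example:compIDMsubset}, and it has the same defect. What you can prove, and should state, is strictness for $L\ge1$ when $d\ge1$, and for every $d$ when $L\ge2$.

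\textbf{How to extend your example to those cases.}
\begin{itemize}
\item Projection: since $p_{L,1}\circ\gamma_{(A,f),L}=\gamma_{(A,f),1}$, we get $(p_{L,1})_*\Gamma_{(A,f),L}=\Gamma_{(A,f),1}$. So any $\Gamma\in\mathcal{P}(\mathcal{H}^L)$ with $(p_{L,1})_*\Gamma=\delta_{(0,c\delta_y)}$, $y\neq0$, is excluded.
\item Mass bound: take $c=\min\{1,r\}$, so that $c\delta_y\in\mathscr{M}_{\le r}(\mathcal{H}^0)$ even when $r<1$. As written, $\delta_1$ is not an element of $\mathcal{M}^1$ for $r<1$.
\item Support step: no fibers are needed. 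The set $f^{-1}(\{y\})$ is $\mu$-null, so $\mathbbm{1}_{f^{-1}(\{y\})}=0$ in $\mathcal{L}^\infty(\Omega)$, and hence $\gamma_{(A,f),1}(x)(1)(\{y\})=(A\mathbbm{1}_{f^{-1}(\{y\})})(x)=0$ for a.e.\ $x$.
\item Case $L\ge2$: alternatively use the mass constraint $\|\gamma_{(A,f),L}(x)(j)\|=(A\mathbbm{1}_\Omega)(x)$ for all $1\le j\le L$. Any DIDM concentrated on an IDM with $\|\mu_1\|\neq\|\mu_2\|$ violates it, and this works even for $d=0$.
\end{itemize}
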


\subsection{Separation Power of MPNNs on Bofop-Signals}

The separation power of MPNNs on DIDMs has been established in several works, especially Theorem 14 from \cite{rauchwerger2024generalization}. See also Appendix \ref{sec:GNNexpress}.  Since DIDMs arising from bofop-signals are a  subclass of DIDMs, the same separation result applies in this setting as well. 
Namely, for any two bofop-signals of positive DIDM-mover's distance, there is an MPNN that attains a different value on each.

\subsection{Dense/Sparse Graphs are Dense in Bofop-DIDMs}

We next show that bofop-DIDMs is the completion of the space of finite graph-signals, sparse or dense, under the DIDM-Mover's distance. This makes bofop-DIDMs the natural object of study when analyzing MPNNs on general graph-signals, as MPNNs are continuous with respect to DIDM-Mover's distance. 
Moreover, this density result shows that with respect to $\delta^L_{\mathrm{DIDM}}$, every bofop-signal can be viewed as a limit object of finite graph-signals.

The idea is to approximate any bofop-signal by a sequence of  weighted graph-signals obtained from increasingly fine equipartitions of $(\Omega,\mu)$. Namely, given a bofop-signal $(A,f)\in\mathcal{BF}^r_d(\Omega)$, we partition $\Omega$ into $n$ sets of equal measure. Each set in the equipartition is interpreted as one node of a finite graph. The node features are obtained by averaging the signal $f$ over the corresponding sets. The bofop is discretized by projecting $A$ upon the partition. 
As the sequence of equipartitions becomes finer, these projected bofop-signals converge to the original bofop-signal in the DIDM-Mover's distance. Therefore, we get that  graph-signals are dense in the space of bofop-signals with respect to $\delta^L_{\mathrm{DIDM}}$. Note that since we consider bofops with uniformly bounded $\infty\to\infty$ norms (or equivalently $1\to1$), the corresponding graphs have the same uniform bound. Since graphs with uniform $\infty\to\infty$ norms can be either dense or sparse, we conclude that the completion of the space of sparse and dense graphs with the above uniform bound is dense in the space of bofop-signals. The full construction and proofs are given in Appendix \ref{sec:densitygraphs}.

\subsection{Hierarchy of DIDM Spaces}

We have proven the compactness of the space of bofop-DIDMs $\Gamma_L(\mathcal{BF}_d^r)$. Moreover, DIDMs obtained via 1-WL on graphon-signals, which we refer to as \emph{graphon-DIDMs}, are already known to form a compact subspace of the space of all DIDMs (Corollary 65 in \cite{rauchwerger2024generalization}). Therefore, the space of graphon-DIDMs is a strict compact subspace of the bofop-DIDM space, which is a strict subset of the space of all DIDMs, which is compact itself. This established a hierarchy of DIDM spaces, useful for theoretical results in graph machine learning, where dense graphs form a proper subspace of sparse ones, while finite graph-signals are dense in the corresponding bofop-DIDM space. This hierarchy is summarized in figure \ref{fig:didms-hier}. 

\begin{figure*}[t]
  \centering
    \hspace{-20.0mm}\includegraphics[width=\textwidth]{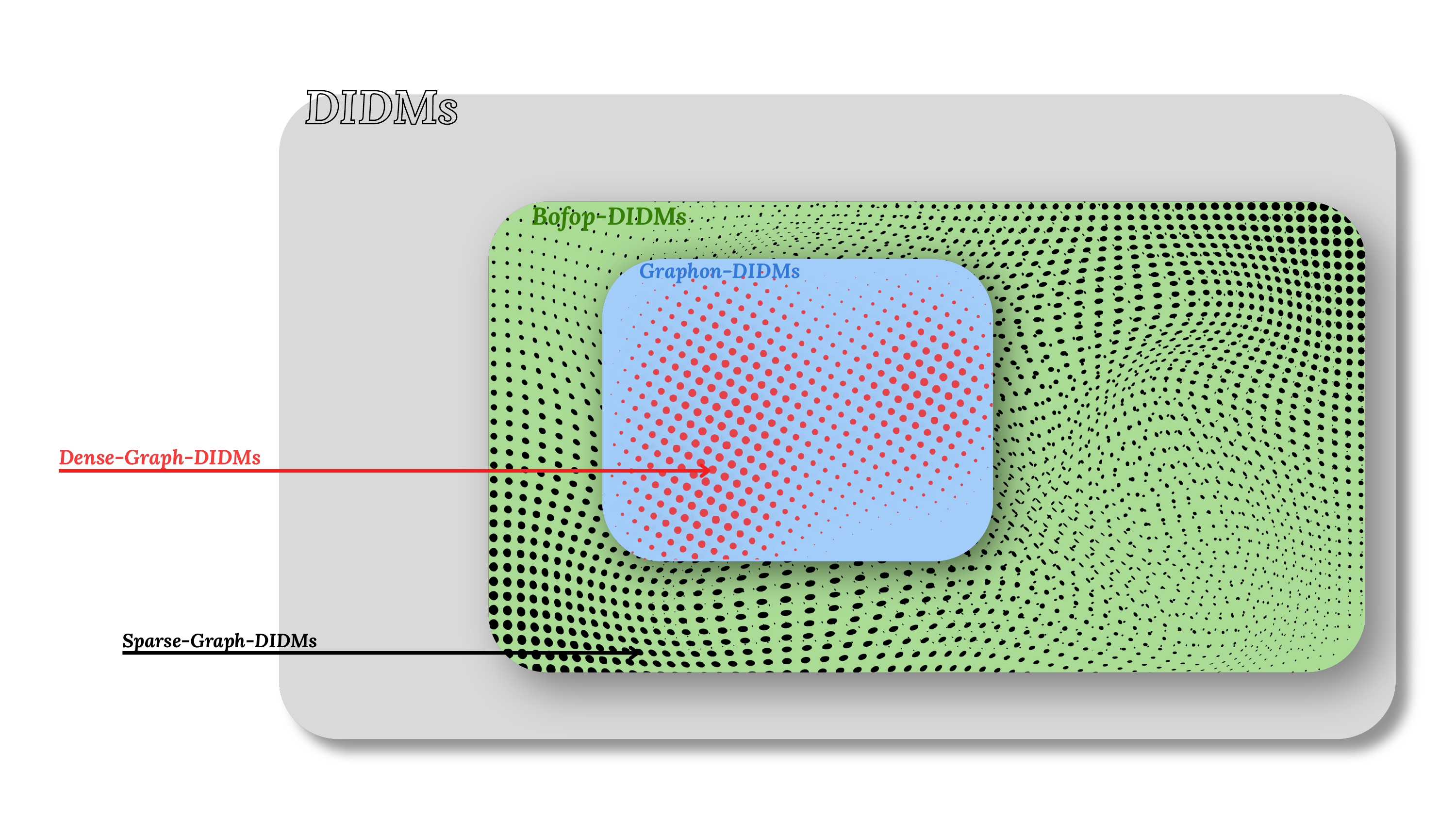}
    \caption{Hierarchy of DIDM spaces: the space of all DIDMs (gray); its strict subset of graphon-DIDMs, isomorphic to graphon-signals (blue); the space of dense-DIDMs, isomorphic to graph-signals, shown as a dense subset (red dots); the bigger set of bofop-DIDMs isomorphic to bofop-signals (green); the space of sparse graph-signals shown as a dense subset (black dots).}
    \label{fig:didms-hier}
    
\end{figure*}

\section{Theoretical Applications}

Next, we state the main theoretical applications of the compactness of the space of bofop-DIDMs, the uniform equicontinuity and separation power of MPNNs with respect to the DIDM mover's distance.

\subsection{Universal Approximation}

It was already known in \cite{rauchwerger2024generalization} that the space of \emph{all} DIDMs is compact, and MPNNs form a separating algebra of continuous functions with respect to the DIDM-mover's distance. This leads to a universal approximation: every continuous function over the domain of all DIDMs can be approximated by an MPNN. However, we show in Corollary \ref{cor:bofopDIDMcompact} that the space of all DIDMs is strictly larger than the space of bofop-DIDMs. Namely, ``most'' DIDMs are formal objects that do not describe the structure of any bofop-signal. Since we are only interested in bofop-DIDMs, the above universal approximation theorem is not satisfactory. A priori, without establishing the compactness of bofop-DIDMs, it is not clear if, given a continuous function on the space of bofop-DIDMs, it can be extended to a continuous function over all DIDMs. Luckily, our compactness result, Corollary \ref{cor:bofopDIDMcompact}, allows us to formulate a universal approximation theorem directly on bofop-DIDMs: any continuous function over the space of bofop-DIDMs (with respect to DIDM-mover's distance) can be uniformly approximated by an MPNN. Moreover, since graph-signals are dense in the space of bofop-signals with respect to the DIDM-Mover's distance, the same approximation result applies in particular to continuous functions from the space of graph-signals. For the formal result, see Appendix \ref{sec:theoreticapplication}.

\subsection{Generalization Bound for MPNNs} 

In Appendix \ref{sec:theoreticapplication}, we prove the existence of a generalization bound for MPNNs over the space of bofop-DIDMs. The result states that for MPNN models over the class $\mathcal{MP}_D(d,d_0,\ldots,d_L,p)$, the generalization error - that is, difference between the empirical risk and the statistical risk, converge to zero as the sample size tends to infinity. This follows from the uniform equicontinuity of MPNNs and the compactness of the space $(\mathcal{BF}_d^r(\Omega),\delta^L_{\text{DIDM}})$, which therefore admits a finite covering.  The rate of convergence is shown to depend on the covering number of the space $(\mathcal{BF}_d^r(\Omega),\delta^L_{\text{DIDM}})$. While the covering number is proven to be finite,  we do not have a concrete bound on it, so our generalization bound is implicit. Full details and a formal proof are  given in Appendix \ref{sec:Genbound}

\section{Conclusion}

In this work, we introduced a unified approach for analyzing MPNNs on both sparse and dense graphs. Based on the work of \cite{backhausz2022action}, we obtained the compact metric space of bofop-signals -- graph-like objects that is suited for  sparse graphs, while unifying several graph limit theories.  
Our main extension of graphop theory involves: 1) incorporating node features, leading to a graphop-signal analysis, 2) focusing on a subclass of graphop-signals, which have special properties enabling a canonical extension of MPNNs to profiles, and making MPNNs Lipschitz continuous with respect to action metric, and, 3) unifying graphop theory with the theory of DIDMs, showin that MPNNs are also uniformly equicontinuous with respect to the DIDM-mover's distance. This leads to powerful universal approximation and generalization theorems.

Regarding point 2,
we note that the restriction of the theory to bofops is critical, as our continuity analysis does not work for general graphops in $\mathcal{G}_{\infty,1}(\Omega)$. In fact, we conjecture that MPNNs are not uniformly equicontinuous with respect to general graphops. Point 3 is a crucial step in the integration of grphop theory into graph machine learning, as the action metic is too fine to describe the separation power of MPNNs. In contrast, DIDMs describe the computational tree structure of MPNNs, and the DIDM-mover's distance matches the separation power of MPNNs.
Along the way, we have introduced a new hierarchy of compact DIDM spaces, which describe the strict inclusion \emph{dense objects} $\subset$ \emph{sparse objects} $\subset$ \emph{formal computational structures}.

Future work may explore the space of DIDMs beyond the class of bofops, and study larger classes of graphops. Another important open question is whether one can obtain explicit bounds on the covering number of the space of bofops, either in the action metric or the DIDM-mover's distance. Such formulae exist for graphon-signals with respect to cut distance \cite{levie2024graphon}, and would lead to a more explicit generalization bound for bofop-signals.

\noindent
\textbf{Limitations.} Our construction of MPNNs on profiles is purely theoretical. Since profiles are defined as \emph{sets} of measures, with no additional structure, they are not directly amenable to computational mathematics. One possible future avenue is to endow additional structure on profiles, like a probability measure, that would allow using them as a data structure for computations. 

\section*{Impact Statement}
This paper presents work whose goal is to advance the field
of Machine Learning. There are many potential societal
consequences of our work, none which we feel must be
specifically highlighted here.

\section*{Acknowledgements}
This research was supported by a grant from the United States-Israel Binational Science Foundation (BSF), Jerusalem, Israel, and the United States National Science Foundation (NSF), (NSF-BSF, grant No. 2024660), and by the Israel Science Foundation (ISF grant No. 1937/23).

\bibliography{example_paper}
\bibliographystyle{plain}


\newpage

\appendix
\onecolumn

\begin{center}
    \LARGE  \textbf{Appendix}    
\end{center}


\addcontentsline{toc}{section}{Appendices}

\startcontents[appendices]
\printcontents[appendices]{l}{1}{\setcounter{tocdepth}{2}}

\addcontentsline{toc}{section}
{\textit{Background}}

\section*{\textit{Background}}

For the convenience of the reader, we outline the contents of the appendices. Appendices \ref{sec:back} to \ref{sec:IDM} serve as background on topics relevant to the paper. Appendices \ref{sec:MPNNarchitectures} through \ref{sec:theoreticapplication} contain our contributions, extending the results from main paper, where we present full formulations and detailed proofs.
 
In appendix \ref{sec:back}
 we provide brief background on fundamental mathematical concepts from topology and measure theory. Appendix \ref{sec:measure+topology} combine these two areas and introduces the notion of weak topology, namely a topology defined on spaces of measures, together with a metrization of this space via an appropriate metric. 
 
 In appendix \ref{sec:ML} we review background on machine learning, introduce the main network studied in this paper-MPNN, and discuss generalization theorem and universal approximation results. Appendix \ref{sec:graphlimit}
covers background in graph limit theory, including well known limit objects of graphs such as graphons. 
Finally, appendix \ref{sec:IDM} is devoted to iterated degree measures, one of the central topics of this paper. 

\section{Basic Background in Analysis}
\label{sec:back}

In the following sections, we provide the necessary background on mathematical topics from topology and measure theory that will be used throughout the paper.

\subsection{Basic Concepts From Topology}
\label{sec:topology}

Point-set topology is a foundational branch of topology, providing the basic setting on which other areas of topology are built. Point-set topology allows notions such as neighborhoods and continuity to be studied independently of distance. The basic objects of study are topological spaces, which are sets endowed with a family of subsets, called open sets, satisfying certain axiomatic properties. 

\subsubsection{Some Properties of Topological Spaces}
\label{sec:topologycompact_separable}

There are various important and useful properties of topological spaces. We will focus on separability and compactness.
\begin{definition}[Separable Space]
    \label{def:separable}
   
    A topological space is called separable if there exists a dense countable subset.
\end{definition}
\begin{definition}[Compact spaces]
    \label{def:compact}

    A topological space $(\mathcal{X},\tau)$ is called a compact space if for every open cover of $\mathcal{X}$ there is a finite subcover.
\end{definition}

\subsubsection{The Product Topology}
\label{sec:prodtopology}

The product topology is a topology naturally defined on the Cartesian product of some topological spaces. 

Let $\{(\mathcal{X}_i,\tau_i)\}_{i\in I}$ be a family of topological spaces, where $I$ is non-empty index set.. Their product space is denoted by
$\prod_{i\in I} \mathcal{X}_i$ and consists of all tuples $(x_i)_{i\in I}$ such that $x_i \in \mathcal{X}_i$ for each $i \in I$. The product topology on $\prod_{i\in I} \mathcal{X}_i$ is the topology generated by the basis
$\left\{ \prod_{i\in I}\mathcal{O}_i \;:\; \mathcal{O}_i \in \tau_i \right\}.$

For each $j \in I$, the projection map $p_j : \prod_{i\in I} \mathcal{X}_i \to \mathcal{X}_j$
is defined by $p_j((x_i)_{i\in I}) = x_j$. With respect to the product topology, all projection maps $p_j$ are continuous. Often, the product topology is defined as the coarsest topology (that is, the topology with the fewest open sets) for which each projection $p_j$ is continuous. Moreover, a sequence in the product space converges if and only if its projections onto each factor space converge.

One of the most important theorems in topology is the "Tychonoff Theorem", which states that any product of compact topological spaces (not necessarily finite or even countable) is compact in the product topology. More precisely:
\begin{theorem}[Thychonoff Theorem]
    \label{theo:Tychonoff}

    Let  $\{\mathcal{X}_j\}_{j\in I}$ be a family of compact topological spaces, where $I$ is an arbitrary set of indexes. Then their product $\prod_{j\in I}\mathcal{X}_j$ is a compact space. 
\end{theorem}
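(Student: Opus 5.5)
The statement is Tychonoff's theorem. I would prove it with the ultrafilter characterization of compactness. A topological space $\mathcal{X}$ is compact if and only if every ultrafilter on $\mathcal{X}$ converges to at least one point. So the plan has three steps: establish this characterization, transfer ultrafilters along the projections, and assemble a limit point coordinatewise.

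\emph{Step 1 (the characterization).} Suppose $\mathcal{X}$ is compact and $\mathcal{U}$ is an ultrafilter on it with no limit. Then every point $x$ has an open neighbourhood $O_x\notin\mathcal{U}$, and by maximality $O_x^c\in\mathcal{U}$. A finite subcover $O_{x_1},\ldots,O_{x_m}$ exists, which gives $\emptyset=\bigcap_i O_{x_i}^c\in\mathcal{U}$, a contradiction. Conversely, suppose some open cover $\mathcal{C}$ has no finite subcover. Then the complements of members of $\mathcal{C}$ have the finite intersection property, so they generate a filter. By Zorn's lemma (the ultrafilter lemma) this filter extends to an ultrafilter $\mathcal{U}$. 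If $\mathcal{U}\to x$, then $x$ lies in some $O\in\mathcal{C}$, so $O\in\mathcal{U}$; but $O^c\in\mathcal{U}$ as well, a contradiction.

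\emph{Step 2 (pushing forward).} Let $\mathcal{U}$ be an ultrafilter on $\prod_{i\in I}\mathcal{X}_i$. For each $j$, the pushforward $p_{j*}\mathcal{U}=\{E\subset\mathcal{X}_j: p_j^{-1}(E)\in\mathcal{U}\}$ is an ultrafilter on $\mathcal{X}_j$. This is immediate, since preimages commute with complements and intersections. Because $\mathcal{X}_j$ is compact, Step 1 gives a point $x_j$ with $p_{j*}\mathcal{U}\to x_j$. I choose one such $x_j$ for every $j$; this uses the axiom of choice, which is unavoidable here.

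\emph{Step 3 (assembling the limit).} Set $x=(x_j)_{j\in I}$. I would argue with the subbasis of the product topology, that is, the sets $p_j^{-1}(O_j)$ with $O_j$ open, rather than the box-type sets written in the definition above. A neighbourhood of $x$ contains a finite intersection $\bigcap_{k=1}^m p_{j_k}^{-1}(O_{j_k})$ with $x_{j_k}\in O_{j_k}$. Each $O_{j_k}$ lies in $p_{j_k*}\mathcal{U}$, so each $p_{j_k}^{-1}(O_{j_k})$ lies in $\mathcal{U}$. Filters are closed under finite intersections, hence the whole basic set lies in $\mathcal{U}$. Therefore $\mathcal{U}\to x$, and by Step 1 the product is compact.

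The main obstacle is the converse direction of Step 1, which is where the ultrafilter lemma (choice) is needed. A secondary point is that Step 3 depends on basic open sets constraining only finitely many coordinates. The basis as literally stated in the definition above should be read with $\mathcal{O}_i=\mathcal{X}_i$ for all but finitely many $i$; without that restriction the statement fails for infinite products, since the box topology need not be compact.
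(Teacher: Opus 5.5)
Your proof is correct. It is the standard ultrafilter proof, and every step checks out:
\begin{itemize}
\item the pushforward $p_{j*}\mathcal{U}$ is an ultrafilter;
\item the characterization in Step~1 is proved in both directions;
\item Step~3 works because basic neighbourhoods constrain only finitely many coordinates.
\end{itemize}

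The paper gives no proof of this statement. It quotes Tychonoff's theorem as classical background and uses it only once, in Corollary~\ref{cor:signalspacecompact}, for the countable product $\prod_{k\ge 0}\boldsymbol{S^r_{k,d,p,q}}$ of compact metric spaces. For that application a more elementary argument suffices:
\begin{itemize}
\item given a sequence in the product, extract a diagonal subsequence that converges in every coordinate;
\item note that the product topology on a countable product of metric spaces is metrized by a weighted sum such as $\sum_k 2^{-k}\min\{d_k,1\}$, which is essentially the shape of the action metric;
\item conclude that sequential compactness gives compactness.
\end{itemize}
Your route instead buys the full generality of the statement as written: arbitrary index sets and non-metrizable, non-Hausdorff factors. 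The price is the axiom of choice, which, as you say, cannot be avoided at that level of generality. For Hausdorff factors the ultrafilter lemma alone would suffice, since limits are then unique and no choice is needed to select the $x_j$.

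Your closing remark about the paper's definition is correct and worth flagging. The basis written there, $\{\prod_{i\in I}\mathcal{O}_i\}$ with arbitrary open $\mathcal{O}_i$ in every coordinate, is literally the box topology. Under the box topology the theorem fails for infinite $I$: for example, $\{0,1\}^{\mathbb{N}}$ becomes an infinite discrete space, which is not compact. The intended basis requires $\mathcal{O}_i=\mathcal{X}_i$ for all but finitely many $i$. This matches the paper's alternative description of the product topology as the coarsest topology making all projections continuous, and its claim that convergence is coordinatewise.
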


\subsubsection{Metric Spaces}
\label{sec:metric}

A metric space is a topological space on which a distance function is defined. This distance function is called a metric.
\begin{definition}[Pseudometric Spaces and Metric Space]
    \label{def:metric}

    A pseudometric space is a pair $(\mathcal{X},d)$, where $\mathcal{X}$ is a set and $d:\mathcal{X}\times\mathcal{X}\to\mathbb{R}$ is a function that satisfies:
    \begin{enumerate}
        \item $d(x_1,x_2)\geq0$ for every $x_1,x_2\in\mathcal{X}$.
        \item $d(x_1,x_2) = d(x_2,x_1)$ for every $x_1,x_2\in\mathcal{X}$.
        \item $d(x_1,x_3)\leq d(x_1,x_2) + d(x_2,x_3)$ for every $x_1,x_2,x_3\in\mathcal{X}$.
    \end{enumerate}

A metric, is a pseudometric with the additional following condition. For every $x_1,x_2\in\mathcal{X}$, $d(x_1,x_2) = 0 \iff x_1=x_2$.

The topology induced by a metric or a pseudometric, is the topology generated by the collection of open balls $\mathbb{B}_r(x_0) = \{x\in\mathcal{X}\,|\,d(x_0,x)< r\}$, for some $x_0\in\mathcal{X}$ and $r>0$. 
\end{definition}

\subsubsection{Lipschitz and H\"older Continuity}

Given two metric spaces $(\mathcal{X},d_{\mathcal{X}}),(\mathcal{Y},d_{\mathcal{Y}})$ a function $f:\mathcal{X}\to \mathcal{Y}$ is said to be Lipschitz continuous if there is $C\geq0$ such that for every $x_1,x_2\in \mathcal{X}$, 
\begin{equation}
    \label{eq:lip}
d_\mathcal{Y}(f(x_1),f(x_2))\leq C \cdot d_\mathcal{X}(x_1,x_2).
\end{equation}
The minimum $C$ satisfying (\ref{eq:lip}) is often called the \emph{Lipschitz constant} and often denoted by $L_f$.
Let $Lip(\mathcal{X},L)$ denote the space of all Lipschitz continuous functions $f:\mathcal{X}\to \mathcal{Y}$ with Lipschitz constant $L$.

Similarly, $f:\mathcal{X}\to \mathcal{Y}$ is H{\"o}lder continuous with exponent $\alpha>0$ if there is $C\geq0$ such that for every $x_1,x_2\in \mathcal{X}$,
\begin{equation} 
\label{eq:Holder}d_\mathcal{Y}(f(x_1),f(x_2))\leq C \cdot \big(d_\mathcal{X}(x_1,x_2)\big)^\alpha.
\end{equation}
The minimum $C$ satisfying (\ref{eq:Holder}) is often called \emph{H{\"o}lder constant} and will be denoted as $C_f$. Let $Hol^\alpha(\mathcal{X},C)$ be the space of all H{\"o}lder continuous functions $f:\mathcal{X}\to \mathcal{Y}$ with H{\"o}lder constant $C$ and exponent $\alpha$. 
Note that Lipschitz continuity is a particular case of H{\"o}lder continuity in which $\alpha=1$.

\subsubsection{Uniform Equicontinuity}

The notion of Lipschitz and H{\"o}lder continuity can be extended to a more general form of continuity.
\begin{definition}[Uniform Equicontinuity]
    \label{def:equicontinutiy}
    Let $(\mathcal{X},d_\mathcal{X})$ and $(\mathcal{Y},d_\mathcal{Y})$ be metric spaces. A family $\mathcal{F}$ of functions between $\mathcal{X}$ and $\mathcal{Y}$ is called \emph{uniformly equicontinuous} if for every $\epsilon>0$ there exists $\delta>0$ such that if $d_{\mathcal{X}}(x_1,x_2)<\delta$ and $f\in\mathcal{F}$, then $d_\mathcal{Y}(f(x_1),f(x_2))$.
\end{definition}

Note that the choice of $\delta$ depends only on $\epsilon$ and not any particular $f\in\mathcal{F}$. It is easy to see that both $Lip(\mathcal{X},C)$ and $Hol^\alpha(\mathcal{X},C)$ are equicontinuous.

Uniform equicontinuity can equivalently be characterized by the existence of a 
common modulus of continuity \cite{pugh2002real}.

\begin{proposition}
    \label{prop:equicontinuoity}
    Consider the above setting. The family $\mathcal{F}$ is uniformly equicontinuous, if there exists a strictly increasing function $\sigma:[0,\infty)\to[0,\infty)$ that satisfies $\sigma(0)=0$ and $\lim_{t\to0}\sigma(t)=0$ such that $d_\mathcal{Y}(f(x_1),f(x_2))\leq\sigma\left(d_\mathcal{X}(x_1,x_2)\right)$.
\end{proposition}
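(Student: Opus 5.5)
The plan is to take a common modulus $\sigma$ as given and verify the $\varepsilon$–$\delta$ condition of Definition~\ref{def:equicontinutiy} directly. (I read the definition's truncated conclusion as $d_\mathcal{Y}(f(x_1),f(x_2))<\varepsilon$.) So assume $\sigma:[0,\infty)\to[0,\infty)$ is strictly increasing, with $\sigma(0)=0$ and $\lim_{t\to0}\sigma(t)=0$. Assume further that
\[
d_\mathcal{Y}(f(x_1),f(x_2))\leq\sigma\big(d_\mathcal{X}(x_1,x_2)\big)
\]
for all $f\in\mathcal{F}$ and all $x_1,x_2\in\mathcal{X}$.

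Fix $\varepsilon>0$. Since $\lim_{t\to0}\sigma(t)=0$, there is $\delta>0$ with $\sigma(t)<\varepsilon$ for every $0\le t<\delta$. Here $t=0$ is covered by $\sigma(0)=0$.

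Now let $x_1,x_2\in\mathcal{X}$ satisfy $d_\mathcal{X}(x_1,x_2)<\delta$, and let $f\in\mathcal{F}$ be arbitrary. By the hypothesis,
\[
d_\mathcal{Y}(f(x_1),f(x_2))\le\sigma\big(d_\mathcal{X}(x_1,x_2)\big)<\varepsilon .
\]
The choice of $\delta$ used only $\sigma$ and $\varepsilon$, never $f$. This is exactly uniform equicontinuity.

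Strict monotonicity of $\sigma$ is not needed for this implication. Without it one would just pick $\delta$ from the limit condition. With it, any $\delta$ satisfying $\sigma(\delta)\le\varepsilon$ works, because $t<\delta$ then gives $\sigma(t)<\sigma(\delta)\le\varepsilon$.

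There is no real obstacle. The only point needing care is that $\delta$ is uniform over the family, and this holds because the bound is through the single function $\sigma$ shared by all $f\in\mathcal{F}$.

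I will not prove the converse, since the statement only asserts the "if" direction.
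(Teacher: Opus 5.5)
Your proof is correct. Fixing $\varepsilon$, you take $\delta$ from the one-sided limit $\lim_{t\to0^+}\sigma(t)=0$, use $\sigma(0)=0$ for the case $t=0$, and note that $\delta$ depends only on $\sigma$ and $\varepsilon$, which is exactly what uniform equicontinuity requires. The paper gives no proof of its own here: it states the proposition as background and cites \cite{pugh2002real}. Your $\varepsilon$--$\delta$ argument is the standard one behind it, and you are also right that strict monotonicity of $\sigma$ is not used in this direction.
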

Again, it is easy to see that this notion extends both Lipschitz and H{\"o}lder continuity by choosing $\sigma(t) = C\cdot t$ and $\sigma(t) = C\cdot t^\alpha$ respectively. 

\subsubsection{Sequential Spaces}
\label{sec:sequential}

An important property of metric spaces is that they are sequential spaces. A sequential space is a topological space whose topology is completely characterized by the convergence (or divergence) of sequences. 

In metric spaces, compactness is equivalent to sequential compactness, that is, every sequence has a convergent subsequence.
\begin{definition}[Sequential Compactness]
    \label{def:sequential}

    Let $\mathcal{X}$ be a topological space. We say that $\mathcal{X}$ is sequentially compact if every sequence $\{x_n\}_{n=1}^\infty$ has a convergent subsequence, and the limit is an element of $\mathcal{X}$.
\end{definition}

As mentioned above, in metric spaces compactness and sequential compactness are equivalent.
\begin{theorem}
    \label{theo:compact=s_compact}

    Let $(\mathcal{X},d)$ be a metric space. Then, $\mathcal{X}$ is compact if and only if $\mathcal{X}$ is sequentially compact.
\end{theorem}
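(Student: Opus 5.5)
The statement is Theorem~\ref{theo:compact=s_compact}: a metric space is compact if and only if it is sequentially compact. I will prove the two directions separately. For the second direction I will route through total boundedness and a Lebesgue number argument.

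\emph{Compact $\Rightarrow$ sequentially compact.} Let $(x_n)$ be a sequence in $\mathcal{X}$ and suppose, for contradiction, that no subsequence converges. Then every point $x\in\mathcal{X}$ has an open ball $\mathbb{B}_{r_x}(x)$ that contains $x_n$ for only finitely many indices $n$. Indeed, otherwise every ball around $x$ contains infinitely many terms, and choosing $n_k$ increasing with $x_{n_k}\in\mathbb{B}_{1/k}(x)$ gives a subsequence converging to $x$. The balls $\{\mathbb{B}_{r_x}(x)\}_{x\in\mathcal{X}}$ form an open cover, so by compactness finitely many of them cover $\mathcal{X}$. Each of these finitely many balls contains only finitely many indices, so together they contain only finitely many indices $n$. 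But they must contain every $x_n$, which is a contradiction.

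\emph{Sequentially compact $\Rightarrow$ compact.} First I show that $\mathcal{X}$ is totally bounded. Suppose that for some $\epsilon>0$ no finite collection of $\epsilon$-balls covers $\mathcal{X}$. Then I construct a sequence inductively: choose $x_{n+1}\notin\bigcup_{i\le n}\mathbb{B}_\epsilon(x_i)$. Any two terms satisfy $d(x_i,x_j)\ge\epsilon$, so no subsequence is Cauchy, and hence no subsequence converges. This contradicts sequential compactness.

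Next I prove the Lebesgue number lemma. Given an open cover $\{U_\alpha\}$, I claim there is $\delta>0$ such that every ball $\mathbb{B}_\delta(x)$ lies inside a single $U_\alpha$. If not, then for each $n$ pick $x_n$ such that $\mathbb{B}_{1/n}(x_n)$ lies in no $U_\alpha$. Extract a subsequence $x_{n_k}\to x$ and take $U_\alpha\ni x$ with $\mathbb{B}_{\rho}(x)\subset U_\alpha$. For $k$ large we have $d(x_{n_k},x)<\rho/2$ and $1/n_k<\rho/2$. Then $\mathbb{B}_{1/n_k}(x_{n_k})\subset\mathbb{B}_\rho(x)\subset U_\alpha$, which is a contradiction.

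Finally, cover $\mathcal{X}$ by finitely many $\delta$-balls, which is possible by total boundedness. Each of these balls sits inside some $U_\alpha$, and those finitely many $U_\alpha$ form a finite subcover.

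The main obstacle is the Lebesgue number step. The point to watch there is to use the triangle inequality with the two halves $\rho/2$, so that the small ball $\mathbb{B}_{1/n_k}(x_{n_k})$ lands inside a single cover element. Everything else is routine.
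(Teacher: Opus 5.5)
Your proof is correct and complete. Compact implies sequentially compact because of the balls that contain only finitely many indices together with a finite subcover. The converse goes through total boundedness and the Lebesgue number lemma, with the $\rho/2$ triangle-inequality step done correctly. Counting indices rather than points in the first direction correctly handles sequences with repeated values.

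The paper gives no proof of this statement. It cites it as a well-known background fact in the appendix on sequential spaces, so there is no alternative argument to compare against. Your proof is the standard textbook one and supplies exactly what the paper takes for granted.
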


\subsubsection{Quotient Space}
\label{sec:quotient}

For a pseudometric space $(\mathcal{X},d)$, we define an equivalence relation $\sim_d$ on $\mathcal{X}$ by
$$x \sim_d y \quad \Longleftrightarrow \quad d(x,y)=0 .$$

The associated quotient map $q:\mathcal{X}\to\mathcal{X}/\sim_d$
which defined by $q(x) = [x]$,
identifies points with zero distance, where $[x]$ is the equivalence class of $x$. The quotient space $\mathcal{X}/\sim_d$ is a metric space when endowed with the metric
\[
\tilde d([x],[y]) := d(x,y),
\]
and this metric is well defined.

 The topology on $\mathcal{X}/{\sim_d}$ is defined so that a set $\mathcal{O}\subset \mathcal{X}/{\sim_d}$ is open if and only if $q^{-1}(\mathcal{O})$ is open in $\mathcal{X}$. Equivalently, an open set in $\mathcal{X}$ that contains a point $x$ must contain the entire equivalence class $[x]$.

The following is one of the most important properties of sequential spaces and quotient spaces, proven by S. P. Franklin \cite{franklin1965spaces}.
\begin{theorem}
    \label{theo:quotient_sequential}
    Let $(\mathcal{X},\tau)$ be a topological space. Then $\mathcal{X}$ is sequential space if and only if $\mathcal{X}$ is a quotient space of some metric space.
\end{theorem}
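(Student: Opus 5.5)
The statement is Franklin's theorem: a topological space is sequential if and only if it is a quotient of a metric space. I will prove the two directions separately. The reverse direction should be short; the forward direction needs a concrete metric space built from the convergent sequences of $\mathcal{X}$.

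\emph{Quotients of metric spaces are sequential.} Let $q:(\mathcal{Y},d)\to\mathcal{X}$ be a quotient map with $\mathcal{Y}$ metric. Let $U\subset\mathcal{X}$ be sequentially open, meaning every sequence converging to a point of $U$ is eventually in $U$. I will show $q^{-1}(U)$ is open in $\mathcal{Y}$; then $U$ is open by the definition of the quotient topology.

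Suppose $q^{-1}(U)$ were not open. Then some $y\in q^{-1}(U)$ has a sequence $y_n\to y$ with every $y_n\notin q^{-1}(U)$. By continuity of $q$, $q(y_n)\to q(y)\in U$. But every $q(y_n)$ lies outside $U$, contradicting sequential openness. So every sequentially open set is open, and $\mathcal{X}$ is sequential.

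\emph{Sequential spaces are quotients of metric spaces.} Let $\mathcal{X}$ be sequential. Let $\mathcal{S}$ be the set of all convergent sequences $s=(x_n)$ in $\mathcal{X}$ together with a chosen limit $x_\infty$. For each $s\in\mathcal{S}$, take a copy $M_s=\{1/n:n\in\mathbb{N}\}\cup\{0\}\subset\mathbb{R}$ with the usual metric, truncated at $1$. Let $M=\bigsqcup_s M_s$, where points in different copies are at distance $1$; this is a metric space.

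Define $p:M\to\mathcal{X}$ by sending $1/n\in M_s$ to $x_n$ and $0\in M_s$ to $x_\infty$. Constant sequences lie in $\mathcal{S}$, so $p$ is surjective. The map $p$ is continuous because on each $M_s$ it maps the sequence $1/n\to0$ to a convergent sequence, and $M_s$ is sequential.

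It remains to show $p$ is a quotient map. Let $U\subset\mathcal{X}$ with $p^{-1}(U)$ open, and let $x_n\to x\in U$. Using the copy $M_s$ for $s=((x_n),x)$, the point $0$ lies in the open set $p^{-1}(U)\cap M_s$, so $1/n\in p^{-1}(U)$ eventually. Hence $x_n\in U$ eventually, i.e.\ $U$ is sequentially open, and therefore open since $\mathcal{X}$ is sequential.

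\emph{Main obstacles.} The hard step is choosing this construction; the checks are routine. One point needs care: $\mathcal{S}$ may be a proper class-sized-looking family. It is in fact a set, since it sits inside $\mathcal{X}^{\mathbb{N}}\times\mathcal{X}$, so the disjoint union $M$ is well defined. The conclusion then follows with $\mathcal{X}$ realized as a quotient of the metric space $M$.
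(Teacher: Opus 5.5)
Your proof is correct. The paper gives no proof of this statement: it is quoted as background and attributed to Franklin. Your construction is the standard proof of Franklin's theorem, namely the metric sum of copies of $\{1/n:n\in\mathbb{N}\}\cup\{0\}$, one per convergent sequence, mapped onto $\mathcal{X}$. So there is nothing to reconcile with the paper.

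The one thing to tighten is the definition of $\mathcal{S}$. It must consist of \emph{all} pairs $((x_n),x)\in\mathcal{X}^{\mathbb{N}}\times\mathcal{X}$ with $x_n\to x$, not one chosen limit per sequence. In a non-Hausdorff space a sequence may have several limits, and your quotient-map step needs the copy indexed by the particular limit $x\in U$. With one limit per sequence the argument can fail. For example, in the two-point indiscrete space $\{a,b\}$, choosing the limit $a$ for every sequence makes $p^{-1}(\{b\})$ a set of isolated points. Then $\{b\}$ is open in the quotient topology, so $p$ is not a quotient map.

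Your later notation $s=((x_n),x)$ and $\mathcal{S}\subset\mathcal{X}^{\mathbb{N}}\times\mathcal{X}$ shows this is what you intended, but the phrase ``a chosen limit'' should be fixed.
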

\subsubsection{Polish Spaces}
\label{sec:Polish}

Lastly, we define two properties of topological spaces: metrizability - the existence of metric that induce the topology, and completeness - convergence of Cauchy sequences.
\begin{definition}[Metrizable Spaces]
    \label{def:metrizable}

    A topological space $(\mathcal{X},\tau)$ is called metrizable if there exists a metric $d : \mathcal{X} \times \mathcal{X} \to \mathbb{R}$
such that the topology induced by $d$ coincides with the topology $\tau$ on $\mathcal{X}$.
\end{definition}
\begin{definition}[Complete Metric space]
\label{def:complete}

Let $(\mathcal{X},d)$ be a metric space. A sequence $\{x_n\}\subset \mathcal{X}$ is called a Cauchy sequence if for every
$\varepsilon>0$ there exists $N\in\mathbb{N}$ such that for all $m,n\ge N$,
\[
d(x_n,x_m)<\varepsilon .
\]

A metric space $(\mathcal{X},d)$ is called complete if every Cauchy sequence in $\mathcal{X}$ converges to a point in $\mathcal{X}$.
\end{definition}

We finish this section with the last important topological spaces - Polish spaces, which combine few topological properties we defined.
\begin{definition}
    \label{def:Polish}
    
    A topological space is called a Polish space if it is separable and completely metrizable, i.e., there exists a complete metric that generates its topology.
\end{definition}

\subsection{Basic Concepts From Probability and Measure Theory}
\label{sec:probability}

Measure theory is a branch of mathematics that formalizes and extends classical notions of area and volume to very general sets and functions. 
 Probability theory uses measure theory by modeling  events as measurable sets (i.e. sets with well-defined volume), probabilities of events as their measures, and random variables as measurable functions.

\begin{definition}[probability space]
    \label{def:probspace}

    The triple $(\Omega,\Sigma,\mu)$ where $\Omega$ is some set, $\Sigma$ is a $\sigma$-algebra on $\Omega$, and $\mu$ is a probability measure on $\Omega$ is called a probability space which often be denoted $(\Omega,\mu)$.
\end{definition}

Integration in measure theory extends the notion of summation to continuous spaces. Integrating a measurable function can be viewed as taking a weighted sum of its values, with weights determined by the measure. For example, in probability theory, the expectation of a random variable is exactly its integral with respect to the probability measure.

\subsubsection{Borel and Lebesgue $\sigma$-algebras}
\label{sec:Borel}

Let $(\mathcal{X},\tau)$ be a topological space. The Borel $\sigma$-algebra on $\mathcal{X}$ is the smallest $\sigma$-algebra that contains all open sets in $\tau$, i.e. it is generated by the topology. Sets in the Borel $\sigma$-algebra are called Borel sets and include, in particular, all open and closed sets, as well as sets obtained from them by countable unions, countable intersections, and complements. 

The Lebesgue $\sigma$-algebra on $\mathcal{X}$ is obtained by completing the Borel $\sigma$-algebra with respect to some measure, that is, by adding all subsets of Borel sets with measure zero. Integration with respect to the Lebesgue measure is the standard notion of integration used in functional analysis.

\subsubsection{Lebesgue Function Spaces}

For a Lebesgue measurable set $E\subset\mathbb{R}^n$ let $\lambda(E)$ denote the Lebesgue measure of $E$, and let $\mathbbm{1}_E$ be the indicator function of $E$. 
Unless stated otherwise, we denote by $(\Omega,\mu)$ a general Borel probability space.

\begin{definition}[$L^p$-Signals]
\label{def:signal}
A measurable function $f: \Omega \to \mathbb{R}^d$  is said to be an element of $\mathcal{ L}^p(\Omega)$, when $1\leq p< \infty$ if
$$\|f\|_p ^p:= \int_\Omega |f|^p \,d\mu < \infty.$$
For $p=\infty$, we say that $f$ is an element of ${\mathcal  L}^{\infty}(\Omega)$ if
$$\|f\|_{\infty}:=\inf\{\,C\in\mathbb{R}_+\,:\,|f(x)|<C \, \text{for a.e.}\, x\in\Omega\} < \infty,$$
where "a.e." stands for "almost every", that is up to a set with measure $0$. 
\end{definition}
As usual, we consider elements of $\mathcal{L}^p$ to be equivalence classes of functions which are equal almost everywhere.

\subsubsection{Product Measure Spaces}
\label{sec:prodmeasure}

Given two measurable spaces with measures defined on them, one can define their product measurable space together with a measure on it, called the product measure. Although there are many possible measures on the product space, the product measure is the most natural choice.

\begin{definition}[Product Measurable Spaces and Product Measures]
    \label{def:prodmeasure}

    Let $(\Omega_1,\Sigma_1,\mu_1)$ and $(\Omega_2,\Sigma_2,\mu_2)$ be measurable spaces.
    \begin{enumerate}
        \item We define the $\sigma$-algebra on the Cartesian product $\Omega_1\times\Omega_2$, and denote by $\Sigma_1 \otimes\Sigma_2$ the $\sigma$-algebra generated by sets of the form $E_1\times E_2$ where $E_1\in\Sigma_1$ and $E_2\in\Sigma_2$.
        \item We define the product measure $\mu_1\times\mu_2$ (often denoted by $\mu_1\otimes\mu_2$) as the measure on $\Omega_1\times\Omega_2$ that satisfies
        $$(\mu_1\times\mu_2)(E_1\times E_2) = \mu_1(E_1)\cdot\mu_2(E_2),$$
        where $E_1\in\Sigma_1$ and $E_2\in\Sigma_2$.
    \end{enumerate}
\end{definition}

 \subsubsection{Marginal Measures}
\label{sec:Marginal}

When a measure is defined on a product space, it often represents a joint distribution of a sequence of random variables. In many situations, one is interested in the distribution of a single component, ignoring the others. Marginal measures formalize this idea by extracting the measure induced on each factor space from the joint measure on the product space.

\begin{definition}[Marginal Measure]

 \label{def:marginal}
 Let $\Omega_1,\Omega_2$ be two measurable spaces, and $\Omega = \Omega_1\times \Omega_2$ the product measurable space
with measure $\mu$.
The marginal measure $\mu_{\Omega_1}$ of $\mu$  on $\Omega_1$
is the measure 
    $$\mu_{\Omega_1}(E_1) = \mu(E_1\times \Omega_2) $$
    for every measurable set $E_1\subset \Omega_1$.
Similarly, the marginal measure $\mu_{\Omega_2}$ of $\mu$ on $\Omega_2$ is the measure
$$\mu_{\Omega_2}(E_2) = \mu(\Omega_1\times E_2), $$
for every measurable set $E_2\subset \Omega_2$.

\end{definition}

Marginal measures are uniquely determined by the underlying measure on the product space. In particular, if two measures on the product space coincide, then their induced marginals also coincide.

\subsubsection{Pushforward Measures}
\label{sec:pushforward}

A pushforward measure in measure theory is a measure obtained by transferring a measure from one measurable space to another via a measurable function. In probability theory, pushforward measures describe the distributions of random variables as well as the joint distributions of collections of random variables.

\begin{definition}[Pushforward Measure]
\label{def:pushforward}
    Given two measurable spaces $(\Omega_1,\Sigma_1)$ and $(\Omega_2,\Sigma_2)$, a measurable function $f:\Omega_1\to \Omega_2$ and a measure $\nu:\Sigma_1\to[0,\infty]$, the pushforward of  $\nu$ via $f$ is defined as the measure $f_{*}(\nu):\Sigma_2\to[0,\infty]$ given by 
    $$f_*(\nu)(E) = \nu\big(f^{-1}(E)\big), \hspace{5mm} \forall E\in\Sigma_2.$$
\end{definition}

The following is a simple, yet useful result, showing that the pushforward of a measure under a composition of measurable functions equals the pushforward obtained by applying the two functions in sequence.
\begin{proposition}[Composition Of Pushforward]
    \label{prop:comppush}
    For two measurable functions $f:(\Omega_1,\Sigma_1)\to (\Omega_2,\Sigma_2)$ and $g:(\Omega_2,\Sigma_2)\to(\Omega_3,\Sigma_3)$ and a measure $\nu:\Sigma_1\to[0,\infty]$, we have that the pushforward of  $\nu$ via $g\circ f$ is equal to the pushforward of $f_*\nu$ via $g$. Namely
    $$(g\circ f)_*\nu = g_*(f_*\nu).$$
\end{proposition}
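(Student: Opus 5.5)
The plan is to verify the identity directly on measurable sets, since two measures on $(\Omega_3,\Sigma_3)$ coincide exactly when they agree on every $E\in\Sigma_3$.

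First I check that both sides are well-defined measures on $\Sigma_3$. The composition $g\circ f:\Omega_1\to\Omega_3$ is measurable: for $E\in\Sigma_3$ we have $g^{-1}(E)\in\Sigma_2$ by measurability of $g$, and then $(g\circ f)^{-1}(E)=f^{-1}(g^{-1}(E))\in\Sigma_1$ by measurability of $f$. Hence $(g\circ f)_*\nu$ is defined by Definition~\ref{def:pushforward}. Likewise $f_*\nu$ is a measure on $\Sigma_2$, and pushing it forward by the measurable map $g$ gives a measure on $\Sigma_3$.

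Next, fix $E\in\Sigma_3$ and compute, applying Definition~\ref{def:pushforward} at each step:
\[
g_*(f_*\nu)(E)=(f_*\nu)\big(g^{-1}(E)\big)=\nu\big(f^{-1}(g^{-1}(E))\big)=\nu\big((g\circ f)^{-1}(E)\big)=(g\circ f)_*\nu(E).
\]
The only nontrivial ingredient is the set identity $(g\circ f)^{-1}(E)=f^{-1}(g^{-1}(E))$. It holds because $x\in(g\circ f)^{-1}(E)$ iff $g(f(x))\in E$ iff $f(x)\in g^{-1}(E)$ iff $x\in f^{-1}(g^{-1}(E))$.

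Since $E$ was arbitrary, the two measures agree on all of $\Sigma_3$, which proves the claim. I expect no real obstacle. The only point needing care is making sure each preimage lies in the correct $\sigma$-algebra, so that every expression in the chain is defined; this is handled by the measurability check in the second paragraph.
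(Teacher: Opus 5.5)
Your argument is correct and complete: the measurability check for $g\circ f$ together with the preimage identity $(g\circ f)^{-1}(E)=f^{-1}(g^{-1}(E))$, verified on every $E\in\Sigma_3$, is exactly what is needed. The paper states this proposition as a standard fact and gives no proof; your set-by-set verification is the standard argument behind it.
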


The main property of pushforward measures is the following "change of variables" formula, demonstrating how integration against the measure is performed (see Theorem 3.6.1 in \cite{bogachev2007measure}).

\begin{theorem}[Change Of Variables]
\label{Theorem:changevariables}
Let $(\Omega_1,\Sigma_1,\nu)$ and $(\Omega_2,\Sigma_2)$ be measurable spaces. 
Let $f:\Omega_1 \to \Omega_2$ be a measurable function, and denote by $f_*\nu$ the pushforward measure on $\Omega_2$. 
Let $g:\Omega_2 \to \mathbb{R}$ be a measurable function. 
Then $g$ is integrable with respect to $f_*\nu$ if and only if $g\circ f$ is integrable with respect to $\nu$. 
In this case,
$$\int_{\Omega_2} g(y)\, d(f_*\nu)(y)
\;=\;
\int_{\Omega_1} g(f(x))\, d\nu(x).$$
\end{theorem}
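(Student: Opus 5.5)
The plan is to reduce the statement to the case of simple functions and then pass to general measurable $g$ by a monotone limit, which is the standard route for this change-of-variables formula.

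First, let $g=\mathbbm{1}_E$ with $E\in\Sigma_2$. Then $g\circ f=\mathbbm{1}_{f^{-1}(E)}$, and $f^{-1}(E)\in\Sigma_1$ because $f$ is measurable. By Definition \ref{def:pushforward},
\[
\int_{\Omega_2}\mathbbm{1}_E\,d(f_*\nu)=f_*\nu(E)=\nu(f^{-1}(E))=\int_{\Omega_1}\mathbbm{1}_E\circ f\,d\nu .
\]
Both integrals are linear in $g$, so the identity extends to nonnegative simple functions $g=\sum_i c_i\mathbbm{1}_{E_i}$.

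Second, let $g\ge 0$ be measurable. Choose nonnegative simple functions $s_n\uparrow g$ pointwise. Then $s_n\circ f\uparrow g\circ f$ pointwise, and $s_n\circ f$ is again simple and measurable. Applying the monotone convergence theorem on both sides, once for $f_*\nu$ and once for $\nu$, gives
\[
\int g\,d(f_*\nu)=\int g\circ f\,d\nu ,
\]
as an equality in $[0,\infty]$.

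Third, take a general measurable $g$. Applying the previous step to $|g|$, and noting $|g|\circ f=|g\circ f|$, we get $\int|g|\,d(f_*\nu)=\int|g\circ f|\,d\nu$. This equality proves the integrability equivalence claimed in the statement. When these quantities are finite, write $g=g^+-g^-$ and apply the nonnegative case to each part. Since $(g\circ f)^\pm=g^\pm\circ f$, subtracting the two identities gives the formula.

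No step here is a real obstacle. The only points that need care are measurability of $g\circ f$, the fact that $s_n\circ f$ is simple, and interpreting the nonnegative case in $[0,\infty]$ so that the integrability equivalence comes out before we split $g$ into positive and negative parts.
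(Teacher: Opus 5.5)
Your proof is correct: the route through indicators, nonnegative simple functions, monotone convergence, and then $|g|$ and $g^{\pm}$ is the standard argument. The paper does not prove this statement itself; it cites Theorem 3.6.1 of \cite{bogachev2007measure}, which is the same standard result, so there is nothing to add.
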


\begin{remark}[Equality of Marginal Measures and Pushforward Measures]
\label{rem:marginal=push}
The notion of marginal measures (Definition \ref{def:marginal}) can be expressed naturally in terns of pushforward measures under the coordinate projection maps.
In particular, if $p_{\Omega_1} : \Omega_1 \times \Omega_2 \to \Omega_1$ denotes the projection onto $\Omega_1$, then the marginal measure $\mu_{\Omega_1}$ is the pushforward of $\mu$ under $p_{\Omega_1}$, namely $\mu_{\Omega_1} = (p_{\Omega_1})_*\mu$.
Similarly, if $p_{\Omega_2} : \Omega_1 \times \Omega_2 \to \Omega_2$ is the projection onto $\Omega_2$, then $\mu_{\Omega_2} = (p_{\Omega_2})_*\mu$.
\end{remark}

Motivated by the equivalence between marginal measures and pushforwards via coordinate projections, one can consider the inverse problem of constructing a joint measure whose marginals are given. This leads to the notion of a coupling between probability measures.

\begin{definition}[Couplings]
    \label{def:coupling}
  
    Let $(\Omega_1,\Sigma_1)$ and $(\Omega_2,\Sigma_2)$ be measurable spaces, and let
$\nu_1$ and $\nu_2$ be probability measures on $\Omega_1$ and $\Omega_2$, respectively.
A \emph{coupling} of $\nu_1$ and $\nu_2$ is a probability measure $\gamma$ on the product measurable space
$(\Omega_1\times \Omega_2,\Sigma_1\otimes\Sigma_2)$ such that its marginal measures satisfy
\[
\gamma_{\Omega_1} = \nu_1
\qquad\text{and}\qquad
\gamma_{\Omega_2} = \nu_2 .
\]
The set of all couplings of $\nu_1$ and $\nu_2$ is denoted by $\Gamma(\nu_1,\nu_2)$.
\end{definition}

\subsubsection{Disintegration}
\label{sec:disintegration}

The disintegration theorem is a result in measure theory and probability theory that gives a rigorous meaning to the idea of restricting a measure to subsets that may have measure zero. In this sense, disintegration can be viewed as the inverse operation to the construction of a product measure.

\begin{theorem}[The Disintegration Theorem]
    \label{theo:disintehration}

    Let $(\Omega,\Sigma,\mu)$ be a probability space, let $(\mathcal{X},\mathcal \tau)$ be a Borel space, and let
$f:\Omega \to \mathcal{X}$ be a measurable function. Denote by $\nu = f_*\mu$ the pushforward measure on $\mathcal{X}$.

Then, there exists a family of probability measures $\{\mu_x\}_{x\in \mathcal{X}}$ on $(\Omega,\Sigma)$, called
\emph{fiber measures}, such that:

\begin{enumerate}
\item For $\nu$-almost every $x\in \mathcal{X}$,
$$
\mu_x\bigl(\Omega \setminus f^{-1}(\{x\})\bigr) = 0 .
$$

\item For every $E\in\Sigma$, the map
$$
x\mapsto \mu_x(E)
$$
is measurable.

\item For every integrable function $\varphi:\Omega \to \mathbb R$,
\begin{equation}
    \label{eq:disint}
    \int_\Omega \varphi(\omega)\, d\mu(\omega)
=
\int_{\mathcal{X}} \left( \int_\Omega \varphi(\omega)\, d\mu_x(\omega) \right)\, d\nu(x).
\end{equation}
\end{enumerate}

The family $\{\mu_x\}_{x\in \mathcal{X}}$ is unique up to $\nu$-almost everywhere equality and is called the
disintegration of $\mu$ with respect to $f$.
\end{theorem}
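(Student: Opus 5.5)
The statement immediately above is the Disintegration Theorem. I would prove it by first reducing to a concrete setting. Every standard Borel space is Borel isomorphic to a Borel subset of $[0,1]$ (or of $\mathbb{R}$). So I take $\Omega$ and $\mathcal{X}$ to be Polish, as is implicit in the paper's use of standard Borel spaces. Assuming the stated setting, fix a countable algebra $\mathcal{A}$ generating $\Sigma$; this is possible because $\Omega$ is countably generated.

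For each $E\in\mathcal{A}$, consider the finite measure $\nu_E(B):=\mu(E\cap f^{-1}(B))$ on $\mathcal{X}$. It is absolutely continuous with respect to $\nu=f_*\mu$. By Radon--Nikodym there is a measurable density $x\mapsto p_E(x)\in[0,1]$ with
\[
\mu(E\cap f^{-1}(B))=\int_B p_E\,d\nu \quad\text{for all Borel } B\subset\mathcal{X}.
\]
Since $\mathcal{A}$ is countable, I can discard a single $\nu$-null set $N$ off which $E\mapsto p_E(x)$ is finitely additive on $\mathcal{A}$, with $p_\Omega(x)=1$ and $p_\emptyset(x)=0$.

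\textbf{Main obstacle: countable additivity.} A finitely additive set function on $\mathcal{A}$ need not extend to a measure. The standard remedy is compact approximation. Because $\mu$ is a Borel probability measure on a Polish space, it is inner regular by compact sets. I enlarge $\mathcal{A}$ countably to include a sequence of compact sets $K_{E,n}\subset E$ (approximated inside the algebra) with $\mu(E\setminus K_{E,n})\to0$. Then I require, off a further null set, that $p_E(x)=\sup_n p_{K_{E,n}}(x)$. This holds $\nu$-a.e. by monotone convergence applied to the Radon--Nikodym identity. With this compact-class condition, the classical criterion of Marczewski (a finitely additive set function that is inner approximated by a compact class is countably additive) gives countable additivity on $\mathcal{A}$. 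Carathéodory's theorem then extends $p_\cdot(x)$ to a probability measure $\mu_x$ on $\Sigma$. For $x\in N$, I set $\mu_x:=\mu$.

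The remaining properties follow routinely.
\begin{enumerate}
\item \emph{Measurability (item 2).} The class of $E$ for which $x\mapsto\mu_x(E)$ is measurable and $\mu(E\cap f^{-1}(B))=\int_B\mu_x(E)\,d\nu$ contains $\mathcal{A}$ and is a monotone class. By the monotone class theorem it is all of $\Sigma$.
\item \emph{Integral formula (item 3).} Taking $B=\mathcal{X}$ gives \eqref{eq:disint} for indicator functions. It extends to simple functions by linearity, then to nonnegative functions by monotone convergence, and finally to integrable $\varphi$.
\item \emph{Concentration on fibers (item 1).} Take a countable base $\{U_k\}$ of $\mathcal{X}$. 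For each $k$, the identity gives $\mu_x(f^{-1}(U_k))=\mathbbm{1}_{U_k}(x)$ for $\nu$-a.e.\ $x$. Off the union of these countably many null sets, $\mu_x$ is supported on $\bigcap_{k:\,x\in U_k}f^{-1}(U_k)=f^{-1}(\{x\})$, using that points of $\mathcal{X}$ are closed.
\item \emph{Uniqueness.} Two disintegrations agree $\nu$-a.e.\ on each $E\in\mathcal{A}$, because both give Radon--Nikodym densities of $\nu_E$. Hence they agree on all of $\mathcal{A}$ off one null set, and therefore on $\Sigma$ by the $\pi$--$\lambda$ theorem.
\end{enumerate}
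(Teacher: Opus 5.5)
The paper gives no proof of this statement. It is quoted in Appendix~\ref{sec:back} as standard background, followed only by an informal illustration, so there is no argument of the paper's to compare with. Judged on its own, your proposal is correct and is the standard textbook proof: Radon--Nikodym densities $p_E$ on a countable generating algebra, Marczewski's compact-class criterion to get countable additivity for each fixed $x$, Carath\'eodory extension, then monotone-class arguments.

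Three comments:
\begin{enumerate}
\item Your restriction to a standard Borel (Polish) $\Omega$ is not a mere convenience, so you should state it as an explicit extra hypothesis rather than call it implicit. With $(\Omega,\Sigma,\mu)$ an arbitrary probability space, as literally written, the statement is false. Dieudonn\'e's example shows this:
\begin{itemize}
\item take $\Omega=[0,1]$, and let $\Sigma$ be generated by the Borel sets and a set $S$ of inner Lebesgue measure $0$ and outer measure $1$;
\item set $\mu\bigl((B_1\cap S)\cup(B_2\setminus S)\bigr)=\tfrac12\bigl(\lambda(B_1)+\lambda(B_2)\bigr)$, and let $f$ be the identity, so $\nu=\lambda$;
\item item 1 forces $\mu_x=\delta_x$ for $\lambda$-a.e.\ $x$;
\item then $x\mapsto\mu_x(S)$ agrees a.e.\ with $\mathbbm{1}_S$, which is not Lebesgue measurable, contradicting item 2.
\end{itemize}
\item In the uniqueness step, spell out why an arbitrary family satisfying items 1--3 yields Radon--Nikodym densities of $\nu_E$. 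Apply item 3 to $\mathbbm{1}_{E\cap f^{-1}(B)}$, and use item 1 to write $\mu_x(E\cap f^{-1}(B))=\mathbbm{1}_B(x)\,\mu_x(E)$ for $\nu$-a.e.\ $x$. Item 1 is essential here, since the constant family $\mu_x\equiv\mu$ satisfies items 2 and 3.
\item A minor point of order: enlarge $\mathcal{A}$ before choosing the densities and the null set $N$. Marczewski's argument uses finite additivity of $p_\cdot(x)$ on the added compact sets.
\end{enumerate}
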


 Intuitively (and informally), $\mu_x$ can be interpreted as a measure supported on the level set $\{ w\in\Omega\ |\ f(w)=x\}$. Under this interpretation, the integral on the left-hand side of (\ref{eq:disint}) is computed by a repeated integral on the right-hand side, where $\mathcal{X}$ parametrizes the set of level sets, and the inner integral integrates along each level set.
The space $\mathcal{X}$ is often called the \emph{carrier space}.
To illustrate this idea consider the following example. Let $\Omega=[0,1]^2$ with $\mu$ being the Lebesgue measure $\lambda^2$ on $\mathbb{R}^2$. Consider a one-dimensional subset of $\Omega$ such as $L_x:=\{x\}\times[0,1]$, for some $x\in[0,1]$. while $L_x$ has $\lambda^2$-measure zero, we would like that the measure $\lambda^2$ "restricted" to the one dimensional line $L_x$ will correspond to the Lebesgue measure over $[0,1]$. In this way, the measure of a measurable subset $E\subset[0,1]^2$ can be obtained by integration of the one dimensional slices $E\cap L_x$, over all slices. If $\mu_x$ denotes the one dimensional Lebesgue measure on $L_x$ then 
$$\mu(E) = \int_{[0,1]}\mu_x(E\cap L_x)\,dx.$$

\section{Special Topologies and Metrics for Sets and Measures}
\label{sec:measure+topology}

Understanding convergence of probability measures is a central problem in probability theory. Suitable topologies and metrics provide a way to describe such convergence.

\subsection{Weak and Weak* Topology of Spaces of Measures}
\label{sec:weak}

Let $(\mathcal{X},d)$ be a metric space equipped with its Borel $\sigma$-algebra. 
The weak topology on the space of probability (or finite) measures on $\mathcal{X}$ is the coarsest topology for which the maps $\mu \mapsto \int f\, d\mu$ are continuous for all bounded  continuous functions $f:\mathcal{X}\to\mathbb{R}$. 
Equivalently, a sequence of measures converges in the weak topology if and only if the integrals of any fixed bounded continuous function on the sequence of measures converge \cite{folland1999real}. 

There are additional equivalent definitions of weak convergence of sequences of measures. The equivalence of these conditions is often known as the Portmanteau Theorem. 
A nice property of the weak topology is that if $\mathcal{X}$ is separable, compact or polish, then so as the space of probability measures over $\mathcal{X}$, induced with the weak topology.

In addition to the weak topology, there is a topology often used on spaces of measures, known as the weak* topology. Roughly speaking, the weak* topology is defined similarly to the weak topology, but with test functions that are continuous and vanish at infinity. When $(\mathcal{X},d)$ is compact metric space, the weak and weak* topologies coincide on $\mathcal{P}(\mathcal{X})$.

\subsection{Levy-Prokhorov Metric}
\label{sec:LP}

Let $(\mathcal{X},d)$ be a metric space equipped with the Borel $\sigma$-algebra, and let $\mathcal{P}(\mathcal{X})$ be the space of all Borel probability measures over $\mathcal{X}$. The \emph{Levy-Prokhorov} metric $d_{\mathrm{LP}}$ is a metric over $\mathcal{P}(\mathcal{X})$ that metrize the weak topology.
\begin{definition}[Levy-Prokhorov metric]
    \label{def:LP}

    The Levy-Prokhorov metric $d_{\mathrm{LP}}$ on $\mathcal{P}(\mathcal{X})$ is defined by
    $$d_{\mathrm{LP}}(\mu,\nu) = \inf\big\{\,\epsilon>0\,|\,\mu(E)\leq\nu(E^\epsilon)+\epsilon \text{ and } \nu(E)\leq\mu(E^\epsilon)+\epsilon \text{ for every measurable }E\subset\mathcal{X}\,\big\},$$

where $E^\epsilon$ is the $\epsilon$-neighborhood of $E$ is defined by
$$E^\epsilon:=\big\{\,x\in\mathcal{X}\,|\,\exists e\in E, d(x,e)<\epsilon\,\big\}.$$
\end{definition}

\subsection{Hausdorff Distance}
\label{sec:Hausdorff}

The Hausdorff distance, or Hausdorff metric, measures how far two subsets of a metric space are from each other. The Hausdorff distance was first introduced by Felix Hausdorff in his book \cite{hausdorff1978grundzuge}. Informally, the Hausdorff distance measures the largest distance from a point in one set to the closest point in the other, taken symmetrically between the two sets. 

\begin{definition}[Hausdorff Distance]
 \label{def:Hausdorff}
 Let $(\mathcal{X},d)$ be a metric space and $A,B\subset\mathcal{X}$ be two non-empty subsets of $\mathcal{X}$. The \emph{Hausdorff distance} $d_H$ between $A$ and $B$ is defined by 
$$
d_{H}^{d}(A, B):=\max \left\{\ \sup _{a \in A} \inf _{b \in B} d(a,b)\ ,\  \sup _{b \in B} \inf _{a \in A} d(a,b)\ \right\}.
$$
\end{definition}

Where the context is clear, we will often omit the base metric from the notation, and write $d_H$ instead of $d^d_H$.
Note that $d_{H}(A, B)=0$ if and only if $\operatorname{cl}(A)=\operatorname{cl}(B)$, where $\operatorname{cl}$ is the closure in $d$. It follows that $d_{H}$ is a pseudometric for all subsets in $\mathcal{X}$, and it is a metric for closed sets.

Let $\mathscr{C}(\mathcal{X})$ denote the space of all closed subsets of $\mathcal{X}$. An important property of the Hausdorff metric is that the space $(\mathscr{C}(\mathcal{X}),d_H)$ "inherits" some topological properties of $\mathcal{X}$ (See Theorems 7.3.7, 7.3.8 in \cite{burago2001course}).
\begin{theorem}
\label{theo:Hausdorffcompact} 
 Let $(\mathcal{X},d)$ be a metric space. Then:
    \begin{enumerate}
        \item If $\mathcal{X}$ is complete, then $(\mathscr{C}(\mathcal{X}),d_H)$ is also complete.
        \item If $\mathcal{X}$ is compact, then $(\mathscr{C}(\mathcal{X}),d_H)$ is also compact.
     \end{enumerate}
\end{theorem}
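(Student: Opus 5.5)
We treat the two items separately, working throughout with nonempty closed subsets, since $d_H$ is only finite-valued and meaningful there. Item 2 will be reduced to item 1 through the characterization of compact metric spaces as those that are complete and totally bounded.

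\textbf{Item 1 (completeness).} Let $(A_n)$ be a $d_H$-Cauchy sequence in $\mathscr{C}(\mathcal{X})$. A Cauchy sequence with a convergent subsequence converges, so we may pass to a subsequence with $d_H(A_n,A_{n+1})<2^{-n}$. As the candidate limit we take the upper limit
\[
A:=\bigcap_{n\geq 1}\operatorname{cl}\Big(\bigcup_{m\geq n}A_m\Big).
\]
This set is closed, being an intersection of closed sets. The argument then has two directions.

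First, fix $a\in A_n$. We build a chain $x_n=a$ and, for each $m\geq n$, choose $x_{m+1}\in A_{m+1}$ with $d(x_m,x_{m+1})<2^{-m}$; this is possible because $d_H(A_m,A_{m+1})<2^{-m}$. The chain is Cauchy in $\mathcal{X}$, so by completeness it converges to some $x$. By construction $x$ lies in every $\operatorname{cl}(\bigcup_{m\geq k}A_m)$, hence $x\in A$, and $d(a,x)\leq 2^{1-n}$. This shows at once that $A$ is nonempty and that $\sup_{a\in A_n}\inf_{b\in A}d(a,b)\leq 2^{1-n}$.

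Second, fix $x\in A$ and $\varepsilon>0$. By the definition of $A$ there are $m\geq n$ and $y\in A_m$ with $d(x,y)<\varepsilon$. Walking back from $A_m$ to $A_n$ through the Hausdorff bounds, $y$ is within $\sum_{k=n}^{m-1}2^{-k}<2^{1-n}$ of some point of $A_n$. Hence $\inf_{b\in A_n}d(x,b)\leq 2^{1-n}+\varepsilon$.

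Together the two directions give $d_H(A_n,A)\leq 2^{1-n}\to0$. The main care in this item is the first direction: completeness of $\mathcal{X}$ is used exactly there, to produce the limit point $x$ of the chain.

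\textbf{Item 2 (compactness).} A compact metric space is complete, so item 1 shows $\mathscr{C}(\mathcal{X})$ is complete. It therefore remains to show that $\mathscr{C}(\mathcal{X})$ is totally bounded.

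Given $\varepsilon>0$, take a finite $\varepsilon$-net $S\subset\mathcal{X}$. For a nonempty closed set $C$, let $T_C:=\{s\in S:\ d(s,C)<\varepsilon\}$. This set is nonempty, because any point of $C$ lies within $\varepsilon$ of some $s\in S$. It also satisfies $d_H(C,T_C)\leq\varepsilon$. Indeed, every point of $C$ is within $\varepsilon$ of some $s\in S$, and that $s$ necessarily belongs to $T_C$. Conversely, every point of $T_C$ is within $\varepsilon$ of $C$ by definition.

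Since $S$ has only finitely many nonempty subsets, these subsets form a finite $\varepsilon$-net of $\mathscr{C}(\mathcal{X})$. So $\mathscr{C}(\mathcal{X})$ is totally bounded, and together with completeness this makes it compact.
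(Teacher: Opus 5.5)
Your proof is correct. The paper gives no proof of its own and instead cites Theorems 7.3.7--7.3.8 of Burago--Burago--Ivanov, and your argument (the upper-limit set with a Cauchy chain for completeness, then completeness plus the finite family of subsets of an $\varepsilon$-net for compactness) is essentially the standard proof given there; the one slip, that $d_H$ is finite on nonempty closed sets, fails for unbounded $\mathcal{X}$ but is harmless, since your completeness argument never uses finiteness.
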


\section{Background in Machine Learning and GNNs}
\label{sec:ML}

In this section, we review the basic concepts from machine learning and graph neural networks (GNNs) that are required for the analysis developed in this paper.

\subsection{Attributed graphs}
\label{sec:graphsignal}
A graph is a pair $G=(V,E)$, where $V$ is the node set, and the edge set is given by $E\subset V\times V$, where $(v,u)\in E$ if and only if $v$ and $u$ are connected by an edge. We denote by $|V|$  the number of nodes in the graph, and by $|E|$ as the number of edges. For $v\in V$, let $\mathcal{N}(v) = \{u\in V | (v,u)\in E\}$ be the \emph{neighborhood} of $v$, and $\mathrm{deg}(v) := |\mathcal{N}(v)|$ as the node's degree.

An attributed graph is a graph $G=(V,E)$, where each $v\in V$ is equipped with a feature vector $\mathrm{f}_v\in\mathbb{R}^d$, and $d$ is called the feature dimension. The pair $(G,\mathrm{\mathbf{f}})$ is often referred to as \emph{graph-signal}, where the mapping $\mathrm{\mathbf{f}}:V\to\mathbb{R}^d$, assigns a feature value $\mathrm{\mathbf{f}}(v):=\mathrm{f}_v$ to every node.

\subsection{Message Passing Graph Neural Networks}
\label{sec:MPNN}

Message passing neural networks (MPNNs) \cite{gilmer2017neural}, are a class of neural networks designed for graph structured data. MPNNs takes graphs, or attributed graphs as an input (in case of graphs without features, the signal is taken to be the constant function 1), and iteratively update node features through message passing, in which features from the neighbors are aggregated. Common aggregated schemes include summation, averaging and coordinate-wise maximum. Our focus will be on normalized sum aggregation, which has been shown to achieve performance comparable to sum aggregation \cite{levie2024graphon}. 
 In the context of performing graph regression or classification, the output of the MPNN is given by global pooling, which is an aggregation of the set of all nodes features.

We define an MPNN model as follows,
\begin{definition}[MPNN model]
    \label{def:MPNN}
   Let $L \in \mathbb{N}_0$ and let $p, d_0, \ldots, d_L, d \in \mathbb{N}_0$.
An \emph{$L$-layer MPNN model} is a collection
$\varphi = (\varphi^{(l)})_{l=0}^L$ of Lipschitz continuous functions, where
$$\varphi^{(0)} : \mathbb{R}^d \to \mathbb{R}^{d_0},
\qquad
\varphi^{(l)} : \mathbb{R}^{2d_{l-1}} \to \mathbb{R}^{d_l},
\quad 1 \le l \le L,$$

where the functions $\varphi^{(l)}$ are called \emph{update functions}.
Given a Lipschitz continuous function
$\psi : \mathbb{R}^{d_L} \to \mathbb{R}^p$,
the tuple $(\varphi,\psi)$ is called an \emph{MPNN model with readout},
where $\psi$ is referred to as the readout function.
We call $L$ the depth of the MPNN,
$d$ the input feature dimension,
$d_0,\ldots,d_L$ the hidden feature dimensions,
and $p$ the output feature dimension.
\end{definition}

An MPNN which processes graph-signals is defined as follows.

\begin{definition}[MPNN on Graph-signals]
\label{def:MPNNgraphs}
    
 Let $(\varphi,\psi)$ be an L-layer MPNN model with readout, and $(G,\mathbf{A},\mathbf{f})$ be a weighted graph-signal with adjacency matrix $\mathbf{A}$, where $\mathbf{f}:V(G)\mapsto\mathbb{R}^d$. The application of the MPNN on $(G,\mathbf{f})$ is defined as follows: initialize $\mathfrak{g}_-^{(0)}:=\varphi^{(0)}(\mathbf{f}(-))$ and compute the hidden node representations $\mathfrak{g}_-^{(l)}:V(G)\rightarrow\mathbb{R}^{d_l}$ at layer $l$ , with $1\leq l\leq L$ and the graph-level output $\mathfrak{G}\in\mathbb{R}^p$ by
    \[
\mathfrak{g}_v^{(l)}:=\varphi^{(l)}\left(\mathfrak{g}_v^{(l-1)},\mathbf{A}\mathfrak{g}_{-}^{(l-1)}\right),\]
\[\mathfrak{G}:=\psi\left(\frac{1}{|V|}\sum_{v\in V(G)}\mathfrak{g}_v^{(L)}\right).
    \]
    The expression $\mathbf{A}\mathfrak{g}_{-}^{(l-1)}$ is called the \emph{aggregated feature} of the node $v\in V(G)$.
\end{definition}

Given an unweighted graph, one can normalize its adjacency matrix in various ways. For example, one can $\mathbf{A}_{i,j}=e_{i,j}/|V|$, where $e_{i,j}=1$ if there is an edge between nodes $i$ and $j$, and 0 otherwise. In this case, the aggregation is called \emph{normalized sum aggregation}, and takes the form
$\frac{1}{\vert V\vert}\sum_{u\in\mathcal{N}(v)}\mathfrak{g}_u^{(l-1)}$.

The architecture defined above corresponds to the Graph Isomorphism Network (GIN)
\cite{xu2018powerful}. While more general formulations of MPNNs allow explicit
message functions \cite{levie2024graphon}, we show in Appendix~\ref{sec:MPNNarchitectures}
that any such MPNN can be implemented with our model using twice the number of layers.

\subsection{General Setting of Statistical Learning}
\label{sec:learning}

In the statistical learning context, we assume that the dataset consists of independent and identically distributed (i.i.d.) random samples from a probability space that contains all possible data $\mathcal{X}$. Each input $x\in\mathcal{X}$ is associated with a ground truth value $y_x\in\mathcal{Y}$, where $\mathcal{Y}$ is a measurable output space, such as finite set of label in classification or a subset of $\mathbb{R}$ in regression. A learning model is represented by a measurable function $\Theta:\mathcal{X}\to\mathcal{Y}$ that is called a network. To quantify the accuracy of the model, one introduce a loss function $\mathcal{L}:\mathcal{Y}^2\to\mathbb{R}_+$ that measures the similarity between the output of the networks $\Theta(x)$ and the ground truth value $y_x$. The accuracy of the network $\Theta$ across all inputs is called the \emph{statistical risk}, and is defined as $R_{\rm stat} = \mathbb{E}_{x\sim\mathcal{X}}\Big[\mathcal{L}\big(\Theta(x),y_x\big)\Big]$. The primary learning goal is to identify a network $\Theta$ that minimizes the statistical risk. In practice, however, the underlying data distribution is unknown, making the statistical risk intractable to compute directly. Instead, we rely on the availability of a dataset $X = \{x_i\}_{i=1}^N\subset\mathcal{X}$, consisting of $N\in\mathbb{N}$  random and independent samples, each associated with corresponding values $\{y_{x_i}\}_{i=1}^N\subset\mathcal{Y}$. We estimate the statistical risk via a ``Monte Carlo approximation'' \cite{caflisch1998monte} called the \emph{empirical risk} 
$R_{\rm emp}(\Theta) = \frac{1}{N}\sum_{j=1}^N \mathcal{L}\big(\Theta(x_j),y_{x_j}\big)$. A key subtlety is that the learned model itself depends on the sampled dataset through the training procedure, so the empirical risk is not a classical Monte Carlo estimate of the statistical risk for a fixed function. The practical selection of the network $\Theta$ involves optimizing the empirical risk. the goal in generalization analysis is to prove that low empirical risk implies low statistical risk. One proof technique is to bound the generalization error $E = \sup_{\Theta}\big|R_{\rm emp}(\Theta)-R_{\rm stat}(\Theta)\big|$. Situations in which a model achieves small empirical risk but large statistical risk are commonly described as overfitting.
This is the philosophy behind approaches such as VC-dimension, Rademacher dimension etc. \cite{shalev2014understanding}.

\subsection{Generalization Analysis}
 \label{Sec:Generalization}

The goal in generalization analysis is to derive conditions that guarantee that the performance of a trained network on the training set is close to the performance of the network on the test set.
This analysis is important because the ability of the network to perform well on unseen data is a requirement for machine learning to be applicable to real world problems.

\subsection{Generalization Analysis of GNNs}

Past works on MPNN generalization bounds, such as \cite{maskey2022generalization, liao2020pac, scarselli2018vapnik, garg2020generalization}, often rely on specific assumptions about data distributions and practical architectures.
Recent contributions include a survey of generalization theory for GNNs that unifies complexity and stability-based approaches \cite{vasileiou2025survey}.
Unlike prior graphon-based studies that impose data generative model assumptions \cite{maskey2022generalization, maskey2024generalization}, Levie \cite{levie2024graphon}
derives uniform generalization bounds for any regular MPNNs over broad classes of various graph-signal distributions without an assumption on the data as generative model. Rauchwerger and Levie \cite{rauchwerger2025notegraphonsignalanalysisgraph} further extend these results and obtained improved asymptotic rates in the resulting generalization bounds.

\subsection{Universal Approximation Theorems and Expressivity  }
\label{sec:UAT+express}

The goal in expressivity analysis is to study what types of functions can be represented or approximated by neural networks, or, more generally, to study the ability of neural networks to distinguish between data points. Universal approximation theorems state that neural networks, under certain conditions, can approximate any continuous function on a compact domain with sufficient accuracy, given enough parameters such as the number of layers. These theorems provide a theoretical foundation for the expressivity of neural networks, suggesting that, in principle, neural networks have the capacity to represent a wide variety of functions.

The first universal approximation theorems asserted that over a compact space, every continuous function can be approximated by some multi-layer-perceptron \cite{cybenko1989approximation, funahashi1989approximate, leshno1993multilayer}.
The proof relies on the Stone-Weierstrass theorem \cite{Rudin1976}[Theorem 7.32], which states that an algebra of continuous functions over some compact domain that separates points can approximate any continuous function. 

\begin{definition}
    \label{def:algebra}
    Let $K$ be a compact space. Consider the space of continuous functions from $K$ to $\mathbb{R}$, $C(K)$. Let $A\subset C(K)$,
    \begin{enumerate}
        \item $A$ is unital subalgebra if $1\in A$, and if $f,g\in A$, $\alpha,\beta\in \mathbb{R}$ implies that $\alpha f+\beta g\in A$, and $fg\in A$.
        \item $A$ separates points of $K$ if for $s,t\in K$ with $s\neq t$, there exists $f\in A$ such that $f(s)\neq f(t)$.
    \end{enumerate}
\end{definition}
\begin{theorem}[Stone-Weierstrass Theorem]
    \label{theorem:SW}
    Let $K$ be a metric compact space, and $A\subset C(K)$ be a unital subalgebra which separates points of $K$. Then $A$ is dense in $C(K)$.
\end{theorem}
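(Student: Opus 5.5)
The statement is the Stone--Weierstrass theorem, which the paper quotes from \cite{Rudin1976}. I would follow the classical lattice argument on a compact metric space $K$.

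\emph{Step 1: the closure is again an algebra.} Let $\bar A$ be the uniform closure of $A$. Sums, scalar multiples and products of uniform limits are uniform limits of the corresponding combinations, since all functions involved are bounded on $K$. Hence $\bar A$ is a closed unital subalgebra that separates points, and it suffices to prove $\bar A = C(K)$.

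\emph{Step 2: $\bar A$ is a lattice.} I would show that $f\in\bar A$ implies $|f|\in\bar A$. Let $M=\|f\|_\infty$. By the Weierstrass approximation theorem on $[-M,M]$ there are polynomials $p_n$ with $p_n\to|t|$ uniformly; subtracting $p_n(0)$ we may take $p_n(0)=0$. Then $p_n\circ f\in\bar A$, using closure under products and the constant $1$, and $p_n\circ f\to|f|$ uniformly. If one prefers not to assume the Weierstrass theorem, the Taylor series of $\sqrt{1-s}$, which converges uniformly on $[0,1]$, gives the same polynomial approximation. From $|f|$ we obtain $\max(f,g)=\tfrac12(f+g+|f-g|)$ and $\min(f,g)=\tfrac12(f+g-|f-g|)$, so $\bar A$ is closed under finite maxima and minima.

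\emph{Step 3: two-point interpolation.} Given $s\neq t$ and reals $a,b$, separation gives $h\in A$ with $h(s)\neq h(t)$. Then
\[
g=a+(b-a)\,\frac{h-h(s)}{h(t)-h(s)}
\]
lies in $A$, because $A$ is unital, and satisfies $g(s)=a$, $g(t)=b$. For $s=t$ the constant function $a$ works.

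\emph{Step 4: compactness.} Fix $F\in C(K)$ and $\epsilon>0$.
\begin{itemize}
\item Fix $s\in K$. For each $t\in K$, Step 3 gives $g_t\in\bar A$ with $g_t(s)=F(s)$ and $g_t(t)=F(t)$. The open sets $\{x : g_t(x)>F(x)-\epsilon\}$ cover $K$, so finitely many $t_1,\dots,t_m$ suffice. Set $G_s=\max_i g_{t_i}\in\bar A$. Then $G_s>F-\epsilon$ on $K$ and $G_s(s)=F(s)$.
\item The open sets $\{x : G_s(x)<F(x)+\epsilon\}$ cover $K$ as $s$ varies, so finitely many $s_1,\dots,s_k$ suffice. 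Set $H=\min_j G_{s_j}\in\bar A$. Then $|H-F|<\epsilon$ everywhere on $K$.
\end{itemize}
Since $\bar A$ is closed and $\epsilon$ was arbitrary, $F\in\bar A$.

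The main obstacle is Step 2, closure of $\bar A$ under $|\cdot|$. It is the only place where an analytic input, polynomial approximation of $|t|$, is needed. I would handle it with the $\sqrt{1-s}$ series applied to $1-f^2/M^2$, which gives $|f|/M$ as a uniform limit of polynomials in $f^2$. Metrizability of $K$ is not essential; compactness is what the argument uses.
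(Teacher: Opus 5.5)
Your proposal is correct: closure of $\bar A$ under $|\cdot|$ via polynomial approximation of $|t|$, then $\max/\min$, two-point interpolation, and a double compactness argument is the same classical lattice proof as Theorem 7.32 in \cite{Rudin1976}, which the paper cites instead of giving its own proof. Your remark that only compactness of $K$, not metrizability, is used is also right.
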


\subsection{GNN Expressivity via Graph Isomorphism Tests}
\label{sec:GNNexpress}

The study of the ability of GNNs to separate points is of interest on its own.
 Two graphs are separated by a GNN architecture if there is GNN that assigns different output values to these graphs. The study of the separation power of neural networks is often called \emph{expressivity analysis} 
 \cite{geerts2022expressiveness, boker2024fine, azizian2020expressive, xu2018powerful, morris2019weisfeiler, rauchwerger2024generalization}. A GNN with greater expressive power can identify more nuanced distinctions between graphs, thus improving its capability in tasks like graph classification or regression. 

\subsection{GNN Universal Approximation Theorems}
\label{sec:GNNUAT}

Universal approximation theorems can also be derived for GNNs. For that, one needs to show that GNNs separate points, endow the input space of graphs with a suitable compact topology, and consider an algebra of functions that can be approximated by the network.

In \cite{xu2018powerful} it was shown that GNNs possess expressivity comparable to the 1-WL isomorphism test, which distinguishes non-isomorphic graphs through iterative node label refinement. If the test fails, It can not determine whether the graphs are isomorphic or not. Based on this characterization, the authors established a universal approximation result for GNNs over graph isomorphism classes, showing that sufficiently expressive GNNs can approximate any function that is invariant under graph isomorphism and distinguishable by 1-WL.

In \cite{bruel2020universal} a universal approximation theorem was proved for GNNs based on injective graph representations. The results show that injectivity of the aggregation and readout mechanisms is sufficient to guarantee universal approximation on bounded graphs, and enable approximation on unbounded graphs when restricted to compact subsets of the input space.

The work of \cite{geerts2022expressiveness} introduces a tensor-based algebraic framework used for analyzing GNNs, capturing operations such as summations and index manipulations. This framework is used to study the separation power of GNN architectures, leading to universality results for a broad class of models, including higher-order message passing neural networks.

\section{Background on Graph Limit Theory and GNNs}
\label{sec:graphlimit}

To analyze GNNs in a setting that is independent of graph size and node labeling, we consider limit objects of sequences of graphs. In graph limit theory, large graphs can be represented as elements of a compact metric space, allowing tools from topology and analysis to be applied.

\subsection{Graphon Analysis}
\label{sec:graphon}

Often, when analyzing the space of graphs, in order to include graph of all sizes, it is beneficial to extend the space to also include graphs with ``continuous node sets'' called graphons.
A graphon is a symmetric measurable function $W:[0,1]^2\to[0,1]$. Graphons arise both as a natural limit of sequences of dense graphs, and as the fundamental defining elements of exchangeable random graph models \cite{borgs2008convergent,lovasz2012large}. A graphon can be seen as a probabilistic model from which finite graphs are sampled, by drawing i.i.d nodes $\{x_n\}$ from $[0,1]$, and connecting each two nodes $x_i,x_j$ with probability $W(x_i,x_j)$. This concept that a graphon can be interpreted as a probabilistic model, and this interpretation is utilized to study the transferability of graph filters on different graphs sampled from the same graphon \cite{maskey2023transferability}. 

Graphons can also be seen as graphs with continuous node set, and the value $W(x,y)$ represents the probability of the edge $(x,y)$ to be in the graph, or the edge weight between $x$ and $y$.  Every finite graph $G=(V,E)$ with $n$ nodes and adjacency matrix $\{a_{i,j}\}_{i,j\in[n]}$ can be seen as a graphon. Let $\mathcal{P}_n = \{\,I_1,\ldots,I_n\,\}$ be a equipartition of $[0,1]$ into $n$ disjoint intervals of equal length, then $G$ induces the picewise constant graphon $W_G$, defined by $W_G(x,y) = a_{\lceil xn \rceil, \lceil yn \rceil}$\footnote{By convention $\lceil 0 \rceil =1$}. 

Let $\mathcal{W}_0$ denote the space of all graphons. 

A kernel is a measurable function $K:[0,1]^2\to\mathbb{R}$.
\begin{definition}[Cut -Norm]
    \label{def:cutnorm}
Let $K$ be a kernel. 
    $$\|K\|_{\square} = \sup_{S,T} \Big|\int_S\int_T K(x,y)\,dx\,dy\Big|,$$
    where the supremum is taken over all measurable subsets of $[0,1]$.
\end{definition}

To define a measure of similarity between two graphons, we introduce the cut metric and the cut distance. The cut distance is a pseudometric on the space of graphons $\mathcal{W}_0$, considering all possible permutations of the graphons. When considering finite graphs for example, the cut distance reflects the fact that different labeling of the graph's nodes should not affect the measurement of similarity between the graphs. 

\begin{definition}[Cut-Distance]
\label{def:cutdis}
The cut metric between two graphons $W_1,W_2\in\mathcal{W}_0$ is defined by 
$$d_{\square}(W_1,W_2)=\|W_1-W_2\|_{\square}.$$
We define the cut distance between two graphons $W_1,W_2$ by
$$\delta_{\square}(W_1,W_2) = \inf_{\phi} d_{\square}(W_1^{\phi},W_2),$$
where the infimum is taken over all measure preserving bijections $\phi:[0,1]\to[0,1]$, and $W^{\phi}_1(x,y): = W_1(\phi(x),\phi(y)$ for a.e $x,y\in[0,1]$.
\end{definition}

The cut distance is a pseudometric on the space of graphons $\mathcal{W}_0$. Two graphons $W_1,W_2\in\mathcal{W}_0$ are considered equivalent, and denoted by $W_1\sim W_2$, if 
 $\delta_{\square}(W_1,W_2)=0$. By abuse of terminology, going forward, a graphon is referred to as an equivalence class of graphons up to $\sim$. When considering equivalence classes of graphons, the cut distance becomes a metric.

When endowed with the cut distance, the resulting metric space  $(\mathcal{W}_0,\delta_\square)$ of graphons is compact, and the space of graphs associated with their induced graphons is a dense subset of this space \cite{lovasz2007szemeredi}.

\subsection{Graphon Analysis of GNNs}
\label{sec:GraphonAnalysis}

The study of GNNs via graphon analysis offers insights into the networks' expressivity and generalization across complex graph structures. Related graphon-based approaches have also been used to study transferability properties of GNNs. \cite{ruiz2020graphon, ruiz2023transferability}. The analysis of GNNs extends their understanding by introducing metrics on spaces of objects that generalize graphs, such as the graphon space and cut distance as described in the previous section. 

\cite{levie2024graphon} extends traditional graphon theory to graphon-signals, pairing graphs with signals on their nodes to form graphon-signals. A graph signal is a pair $(G,\mathrm{\mathbf{ f}})$, where $G$ is a graph with node set $[n]$ and signal $\mathrm{\mathbf{f}}\in\mathbb{R}^{n\times d}$ that assigns the value $\mathrm{f_i}\in\mathbb{R}^d$ to every node $i\in[n]$, that represents the node's features. See Appendix \ref{sec:graphsignal}. 
Graph-signals can be extended to graphon-signal, where the nodes set is represented by $[0,1]$, and the signal is a multi-dimensional bounded measurable function on $[0,1]$.

\begin{definition}[Graphon-signal]\label{def:graphon_signal}

  The pair $(W,f)$ where $W\in\mathcal{W}_0$ is a graphon and $f:[0,1]\to[-r,r]^d$ for some $r>0$ is called a \emph{graphon-signal}. Denote by $\mathcal{WL}_r^d$ the space of graphon-signals.
\end{definition}

Similarly to graphs, graph-signals induce graphon-signals in the following way.
\begin{definition}[Induced Graphon-signal]
\label{def:induced_graphon_signal}
Let $(G,\mathbf{f})$ be a graph-signal with node set $[n]$ and adjacency matrix $\mathrm{\mathbf{A}}=\{a_{i,j}\}_{i,j\in [n]}$. Let $\{I_k\}_{k=1}^n$ with $I_k = [(k-1)/n,k/n)$ be the equipartition of $[0,1]$ into $n$ intervals. The graphon-signal $(W,f)_{(G,\mathbf{f})}=(W_G,f_\mathbf{f})$ induced by $(G,\mathbf{f})$ is defined by
\[
    W_G(x,y)=\sum_{i,j\in[n]}a_{ij}\mathbbm{1}_{I_i}(x)\mathbbm{1}_{I_j}(y), \quad \text{and}\quad f_{\mathbf{f}}(z)=\sum_{i=1}^n \mathbf{f}_i\mathbbm{1}_{I_i}(z). 
\]
\end{definition}

We use an extension of MPNN to process graphon-signals in such a way that application of message passing layer on an induced graphon-signal is equivalent to first applying an MPNN on graph-signal and then inducing it \cite{levie2024graphon}.
We first recall the definition of an MPNN on graph-signals (Definition \ref{def:MPNNgraphs}), and consider the specific choice of normalized sum aggregation as the aggregation function.

\begin{definition}[MPNNs on Graphon-signals.]
\label{def:MPNN_graphon_signal}
    Let $(\varphi,\psi)$ be an L-layer MPNN with readout, and $(W,f)$ be a graphon-signal where $f:[0,1]\mapsto[-1,1]^d$. The application of the MPNN on $(W,f)$ is defined as follows: initialize $\mathfrak{w}_-^{(0)}:=\varphi^{(0)}(f(-))$, and compute the hidden node representation $\mathfrak{w}_-^{(t)}:[0,1]\rightarrow\mathbb{R}^{d_t}$ at layer $t$, with $1\leq t\leq L$ and the graphon-level output $\mathfrak{W}\in\mathbb{R}^d$ by
    \[
        \mathfrak{w}^{(t)}_x:=\varphi^{(t)}\left(\mathfrak{w}^{(t-1)}_x,\int_{[0,1]}W(x,y)\mathfrak{w}_y^{(t-1)}d\lambda(y)\right )\quad and \quad \mathfrak{W}:=\psi\left(\int_{[0,1]}\mathfrak{w}^{(L)}_x d\lambda(x)\right)
    \]
\end{definition}

Also denote $\mathfrak{w}(\varphi)_v^{(t)}$ or $\mathfrak{w}(\varphi,W,f)_v^{(t)}$, and $\mathfrak{W}(\varphi,\psi)$ or $\mathfrak{W}(\varphi,\psi,W,f)$ in order to stress the dependence of the $\mathfrak{w}$ and $\mathfrak{W}$ on $\varphi$, $\psi$ and $(W,f)$.

Indeed, Lemma E.1 in \cite{levie2024graphon} justifies this definition by showing that applying an MPNN on an induced graphon-signal is equivalent to the application of an MPNN on graph signal and then inducing it. Namely, 

\begin{equation}
 \label{eq:MPNN_graph=MPNN_graphon}
    \mathfrak{W}(\varphi,\psi,W_G,f_\mathbf{f})=\mathfrak{G}(\varphi,\psi,G,\mathbf{f}).
\end{equation}
Here, we note that MPNN with normalized sum aggregation are only appropriate for dense graphs. Indeed, if the graph is sparse, then $|V | \gg |\mathcal{N}(v)|$, and the aggregation will always give a close-to-zero outcome. Hence, such an MPNN will always give roughly the same output for all sparse graphs. Moreover, sum aggregation is only asymptotically appropriate for sparse graphs, otherwise, the aggregation diverges to infinity as the size of the graph increases.

By defining a new metric, the \emph{graphon-signal cut distance}, which extends the cut distance, the space of graph signals is shown to be dense in the space of graphon signal, which itself is shown to be a compact metric space. 
The analysis reveals that MPNNs are Lipschitz continuous functions over the graphon-signal metric space, leading to a generalization bound for MPNNs, indicating that a well performing MPNN on training graph-signals is likely to perform similarly on test graph-signals. 

Rauchwerger and Levie \cite{rauchwerger2025notegraphonsignalanalysisgraph} extend these existing results to multidimensional signals and obtain improved generalization bound. In addition, they remove the symmetry assumption on graphons, thereby enabling the analysis for both directed and undirected graphs.

The analysis of the expressive power of MPNNs is mostly studied using the 1-WL test, which addresses the graph isomorphism problem. However, the graph isomorphism problem is binary, focusing on the question whether two graphs are isomorphic or not, not giving insight to the degree of similarity between the graphs. \cite{boker2024fine} addresses this issue, extending both MPNNs and the 1-WL test to graphons, and shows that the continuous variant of the 1-WL test provides a topological characterization of the expressive power of MPNNs on graphons. The paper establishes the connection between the similarity of graphs and their output via MPNNs. In addition, a universal approximation theorem is proved, stating that functions on graphons that are invariant with respect to the 1-WL test, can be approximated by MPNNs. 

A recent extension was presented in
\cite{rauchwerger2024generalization}, which generalizes the analysis of graphs with no node features presented in \cite{boker2024fine}, to graph-signals, as suggested in section 5 of \cite{boker2024fine}. Rauchwerger et al. introduce a new metric on graph-signals, called the DIDMs-mover's distance, and establish a universal approximation theorem for MPNN defined on this space. For a more extensive background, we refer the reader to Appendix \ref{sec:IDM}
.

Recent work has also explored higher-order graphon neural networks and their relation to approximation theory and cut-distance continuity \cite{herbst2025higherorder}.
\subsection{Sparse Graph Limit Theories}
\label{sec:sparselimittheory}

While graphons have been proven useful in the analysis of graph limit theory, they suffer from a crucial limitation when sparse graph limits are considered. Due to the relation between the cut-norm and the $\mathcal{L}^1$ norm, graph sequences with a subquadratic number of edges converge to the zero graphon in the cut-distance. Several approaches have been proposed to overcome this limitation. One such approach is based on  stretched graphons, together with an appropriately rescaled metric known as the stretched cut-distance \cite{borgs2018sparse, jian2023generalized, ji2023sparse}. By rescaling the underlying domain, stretched graphons essentially "stretch" the graph, yielding nontrivial limit objects for sparse graph sequences. Another similar approach is to relax the boundedness assumption on graphons by considering $\mathcal{L}^p$ graphons, which generalize standard graphons from bounded measurable function to measurable function with finite $\mathcal{L}^p$ norm.

Another approach on bounded-degree graph sequences and their corresponding limit objects, known as graphings \cite{lovasz2012large}.

A recent approach introduced in \cite{backhausz2022action} unifies and generalizes both dense and sparse graph limits within a single framework by representing graph limits as linear operators, referred to as graphops. Backhausz and Szegedy introduce a metric under which the space of graphops is compact. Recent work has also used graphop analysis to study approximation and transferability properties of GNNs on sparse graphs \cite{le2023limits}. Graphops theory forms the main foundation of this paper. For an extensive background and further extensions, we refer the reader to appendix \ref{sec:graphopanalysis}.

\section{Analysis of GNNs via Iterated Degree Measures}
\label{sec:IDM}

Iterated Degree Measures, IDMs, are used as the building blocks of a measure-theoretic analogue of the classic 1-Weisfeiler-Leman (1-WL) algorithm. The 1-Weisfeiler-Leman algorithm was originally introduced as a fingerprinting technique for chemical molecules: for the history and general background regarding 1-WL, see \cite{GroheTutorial} and references therein. In \cite{grebik2022fractional} the authors define IDMs to generalize the 1-WL and its properties to graphons. In their generalization IDMs represent the colors in this extension of 1-WL. 
For each graphon, the 1-WL algorithm gives a distribution over the set of IDMs. The space of all such distributions, namely \emph{computation DIDM} (where DIDM is short for Distribution of Iterated Degree Measures), is strictly smaller than the ambient space of \emph{DIDMs} which is defined more generally.

The background setting for the formulation of IDMs in the context of graphop-signal, which is the object we consider in this work, is the work \cite{rauchwerger2024generalization} where the authors expand the definitions of IDMs and computation IDMs from unattributed graphons \cite{grebik2022fractional,boker2024fine} to attributed graphons. This idea is also in line with an idea outlined in Section 5 of \cite{boker2024fine}. 
In \cite{rauchwerger2024generalization},  the authors develop a version of the 1-WL test for graphon-signals, extending the existing formulation for graphons.

\subsection{Iterative Degree Measures}

We follow the construction described in \cite{rauchwerger2024generalization} with slight modifications that are more appropriate for our setting. These changes do not affect the analysis. The IDMs' base space is taken to be the space of node attributes $[-1,1]^d\subset\mathbb{R}^d$. We define the \textit{space of iterated degree measures (IDMs)} of order-L, $\mathcal{H}^L$, inductively by first defining $\mathcal{H}^0=\mathcal{M}^0:=[-1,1]^d$. Then, for every $L>0$, we let $\mathcal{H}^L=\prod_{i\leq L}\mathcal{M}^i$ and $\mathcal{M}^{L+1}=\mathscr{M}_{\leq r} (\mathcal{H}^L)$, where $\mathscr{M}_{\leq r}(\cdot)$ denotes the space of all nonnegative Borel measures with total mass at most $r$, and the topologies of $\mathcal{H}^L$ and $\mathcal{M}^{L+1}$ are the product and weak* topologies respectively; see Appendices  \ref{sec:prodtopology}, \ref{sec:prodmeasure}, and  \ref{sec:weak} for details regarding these topologies. It can be shown that these spaces are standard Borel spaces, and that the weak* topology is metrizable, and, in fact, compactly metrizable. We elaborate on this later in this section. In this construction, each color at any layer is given by the histogram of colors from the previous layer, concatenated with the colors from all previous layers. Let $\mathcal{P}(\mathcal{H}^L)$ be \textit{the space of distributions of iterated degree measures (DIDMs)} of order-L.  We consider probability distributions of IDMs, as the final step of 1-WL involves considering the histogram of colors in the nodes of the graph. The 1-WL test distinguishes non-isomorphic graphs by comparing their color distributions, where different distributions imply non-isomorphic graphs.

\subsection{1-WL Test for Graphon-Signals}
\label{sec:1WL_garphon_signals}

 Denote by $p_{L,j}:\mathcal{H}^L\mapsto\mathcal{H}^j$ and $p_L:\mathcal{H}^L\rightarrow\mathcal{M}^L$ the projections where $j\leq L<\infty$. Namely, for $u\in\mathcal{H}^L$, we have that $p_{L,j}(u)$ is a vector consisting of the first $j+1$ coordinates of $u$, and $p_L(u)$ is the last coordinate of $u$. We denote by $\alpha(x)(j)$ the j-th entry of the vector $\alpha(x)$.  The paper \cite{rauchwerger2024generalization} defines a graphon-signal 1-WL analog as follows (see Def. \ref{def:graphon_signal} of a graphon-signal).
 
\begin{definition}
    \label{def:1WLgraph+graphon}
    Let $[0,1]$ be the interval with the standard Borel $\sigma$-algebra $\mathcal{B}$ and let $(W,f)$ be a graphon-signal. Define $\gamma_{(W,f),0}:[0,1]\rightarrow\mathcal{H}^0$ to be the map $\gamma_{(W,f),0}(x):=f(x)$ for every $x\in [0,1]$. Inductively, define $\gamma_{(W,f),L+1}:[0,1]\rightarrow\mathcal{H}^{L+1}$ by
    \begin{enumerate}
        \item $\gamma_{(W,f),L+1}(x)(j)=\gamma_{(W,f),L}(x)(j)$, for every $j\leq L$ and
        \item $\gamma_{(W,f),L+1}(x)(L+1)(E)=\int_{\gamma^-1_{(W,f),L}(E)}W(x,-)d\lambda$, whenever $E\subset\mathcal{H}^L$ is a Borel set. 
    \end{enumerate}
        For every $L\in \mathbb{N}_0$, let $\Gamma_{(W,f),L}$ be the pushforward of $\lambda$ via $\gamma_{(W,f),L}$. Here, $\gamma_{(W,f),L}$  is called a \emph{computation iterated degree measure (computation IDM)} of order-L and $\Gamma_{(W,f),L}$ a \emph{distribution of computation iterated degree measure (computation DIDM)} of order-L.
    \end{definition}

Note that by the boundedness of graphons, each measure that is defined by 2, is actually in $\mathscr{M}_{\leq1}$.

The pushforward of $\lambda$ via $\gamma_{(W,f),L}$ can be viewed as the color histogram across the whole graphon (see Appendix \ref{sec:pushforward} for more details on the definition of pushforward). The extension of 1-WL isomorphism test to graphons is defined as the comparison of the two graphon-DIDMs of two given graphon-signals. If their graphon-DIDMs are not identical, the two graphon-signals are deemed non-isomorphic, otherwise it is undetermined.

 In this work, we extend Definition \ref{def:1WLgraph+graphon} to a wider class of graph limit objects, called bofops (see Section \ref{sec:main1wl} and Appendix \ref{sec:graphopdidm}). Hence, to distinguish between the above definition and our more general one, we call computation DIDM as defined in Definition \ref{def:1WLgraph+graphon} \emph{graphon-DIDMs}. Moreover, we refer to graphon-DIDMs that were obtained from induced graphon-signals as \emph{dense graph-DIDMs}. Recall Definition \ref{def:induced_graphon_signal} for induced graphon-signal. 
Informally, note that for sparse graphs, the number of vertices in a typical neighborhood is much smaller than the number of vertices of the graph, and hence, in the context of IDMs, the measure of a neighborhood is "small". Therefore, the claim that graphons are appropriate primarily for dense graphs is true for DIDMs as well, just as it does in aggregation-based considerations (see Section \ref{MPNNs on Graphs}). This observation justifies the terminology dense graph-DIDM. 

We adapt topology on the space of graphon-signals, metrized by a distance between their associated DIDMs \cite{rauchwerger2024generalization}. This topology is shown to be coarser than the cut distance (Theorem 63 in \cite{rauchwerger2024generalization}), which is well known to be suited mainly for dense graphs, since sparse graphs are considered similar to the zero-graphon.

 Let us summarize this for later referencing. \begin{definition}
     \label{def:computation-DIDM0}
     A computation-DIDM computed from a graphon-signal is called a \emph{graphon-DIDM}. A computation-DIDM computed from a graphon-signal induced by a graph-signal is called a \emph{dense graph-DIDM}. In a similar manner, we call \emph{graph-IDM}, an IDM that is computed for a graphon induced by a graph-signal, and call \emph{graphon-IDM} an IDM that is computed for a general graphon. 
    \end{definition}

\subsection{DIDM Mover's Distance}
\label{sec:DIDM_MoversDistance}
In the construction made in \cite{rauchwerger2024generalization} the space of DIDMs is given a metric called the \textit{DIDM Mover's Distance}. We follow this construction here. This metric explicitly metrizes the topology of $\mathcal{H}^L$ which is defined above. Before we define this metric, we define \textit{The Unbalanced Earth Mover's Distance}  \cite{sejourne2023unbalanced} which will be used in the definition of DIDM Mover's Distance. The definition uses the notion of couplings for unbalanced measures (the notion was defined for probability measure in \ref{def:coupling}): a \emph{coupling} $\gamma$ between two measures $\mu,\nu$ on measure spaces $\mathcal{X, Y}$ respectively is a joint measure on $\mathcal{X\times Y}$ such that $(p_\mathcal{X})_*\gamma=\mu, (p_\mathcal{Y})_*\gamma\leq\nu$, given that $\Vert\mu\Vert\leq\Vert\nu\Vert$, where $\Vert\mu\Vert=\mu(\mathcal{X})$ and $\Vert\nu\Vert=\nu(\mathcal{Y})$. Here $p_\mathcal{X}, p_\mathcal{Y}$ are projections from $\mathcal{X\times Y}$ to $\mathcal{X}$ and $\mathcal{Y}$ respectively (see \ref{sec:pushforward} for the definition of pushforward measures).

\begin{definition}[The Unbalanced Earth Mover's Distance]
   \label{def:moversdistance}
    Let $(\mathcal{X},d)$ be a Polish space. The \emph{Unbalanced Earth Mover's Distance} between two measures $\mu,\nu\in\mathscr{M}_{\leq r}(\mathcal{X})$ is
    \[
        \textbf{OT}_d(\mu,\nu)=\inf_{\gamma\in(\mu,\nu)}\left(\int_{\mathcal{X}\times\mathcal{X}}d(x,y)d\gamma(x,y)\right)+\left\vert\Vert\mu\Vert-\Vert\nu\Vert\right\vert
    \]
    where $\Gamma(\mu,\nu)$ is the set of all couplings of $\mu$ and $\nu$, and $\|\mu\|,\|\nu\|$ denote the "mass" of $\mu$ and $\nu$ respectively. Namely, $\|\mu\| = \mu(\mathcal{X})$ and $\|\nu\| = \nu(\mathcal{X})$.
\end{definition}

The DIDM Mover's Distance is then defined as follows. First, define the \textit{IDM distance} of order-0 on $\mathcal{M}^0=\mathcal{H}^0=[-1,1]^d$  for $d\in\mathbb{N}_0$ by $d^0_{\mathrm{IDM}}(x,y):=\Vert x-y\Vert_2 $ for $x,y\in[-1,1]^d$, and denote by $\textbf{OT}_{d^0_{\mathrm{IDM}}}$ the optimal transport distance on $\mathcal{M}^1=\mathscr{M}_{\leq r}(\mathcal{H}^0)$. The distance $d^L_{\mathrm{IDM}}$ is defined recursively on $\mathcal{H}^L=\prod_{j\leq L}\mathcal{M}^j$ where the distance on $\mathcal{M}^j=\mathscr{M}_{\leq r}(\mathcal{H}^{j-1})$ for $0<j<L-1$ is $\textbf{OT}_{d^{j-1}_{\mathrm{IDM}}}$. Explicitly,
\[    d_{\mathrm{IDM}}^L(\mu,\nu):=\begin{cases} \Vert \mu_0-\nu_0\Vert_2 +\sum^L_{j=1}\textbf{OT}_{d^{j-1}_{\mathrm{IDM}}}(\mu_j,\nu_j)\quad &,\quad \text{if\ }0<L<\infty \\
    \Vert \mu_0-\nu_0\Vert_2 & ,\quad \text{if\ }L=0\end{cases}
\]

for $\mu=(\mu_j)_{j=0}^L\in\mathcal{H}^L$ and $\nu = (\nu_k)_{j=0}^L\in\mathcal{H}^L$. Note that $\mu_0,\nu_0\in[-1,1]^d$ are points in a Euclidean space and not measures.
Finally, for two DIDMs $\mu^{(L)},\nu^{(L)}\in\mathcal{P}(\mathcal{H}^L)$ we consider the distance $\textbf{OT}_{d_{\mathrm{IDM}}^L}$.

We can now define the DIDM Mover's Distance. This is a distance between graphon-signals, when represented as DIDMs via 1-WL. It is the optimal transport cost between the DIDMs of the two graphon-signals (two measures on the space $\mathcal{H}^L$) with the ground metric $d^L_{\mathrm{IDM}}$.

\begin{definition}[DIDM Mover's Distance]
    \label{def:DIDMsmovers}
    Given two graphon-signals $(W_1,f_1), (W_2,f_2)$ and $L\geq1$, the \emph{DIDM Mover's Distance} between $(W_1,f_1)$ and $(W_2,f_2)$ is defined as
    \[
        \delta_{\textrm{DIDM}}^L((W_1,f_1),(W_2,f_2)):= \textbf{OT}_{d_{\textrm{IDM}}^L}(\Gamma_{(W_1,f_1)},\Gamma_{(W_2,f_2)}).
    \]
\end{definition}

Recall that $\Gamma_{(W,f)}$ was defined as the pushforward of the Lebesgue measure $\lambda$ on $[0,1]$ via the map $\gamma_{(W,f),L}$. Therefore, intuitively, $\delta^L_{\mathrm{IDM}}(\cdot,\cdot)$ is the minimum cost to transport node-wise IDMs from one graphon to another.

See \cite{rauchwerger2024generalization}, Theorem. 4, for a proof of the validity of the definitions of the two metrics.

\subsection{Compactness of the space of DIDMs and Computation DIDMs}

Both spaces $\mathcal{H}^L$ (the space of IDMs), and $\mathscr{P}(\mathcal{H}^L)$ (the space of DIDMs) are compact. The topologies of the spaces are the product topology and the weak* topology respectively. These topologies are also the topologies induced by the metrics $d^L_\mathrm{IDM}$ and $\delta_{\text{DIDM}}^L$ respectively (see \cite{rauchwerger2024generalization}, Theorem 4). 

\begin{theorem}
    \label{theo:compactDIDM}
    The spaces $\mathcal{H}^L$ and $\mathscr{P}(\mathcal{H}^L)$ are compact spaces for any $L\in\mathbb{N}_0$.
\end{theorem}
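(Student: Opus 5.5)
The plan is to prove compactness of $\mathcal{H}^L$ and $\mathscr{P}(\mathcal{H}^L)$ together by induction on $L$. The two tools are Tychonoff's theorem (Theorem \ref{theo:Tychonoff}) and the fact that measures of bounded mass on a compact metrizable space form a weak* compact set (Banach--Alaoglu combined with the Riesz representation theorem).

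\textbf{Base case.} $\mathcal{H}^0=[-1,1]^d$ is a closed, bounded subset of $\mathbb{R}^d$, so it is compact by Heine--Borel.

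\textbf{Inductive step.} Assume $\mathcal{H}^{L}$ is a compact metrizable space.
\begin{enumerate}
\item By Riesz, finite Borel measures on $\mathcal{H}^L$ are the positive functionals on $C(\mathcal{H}^L)$, with total mass equal to the functional norm. Hence $\mathcal{M}^{L+1}=\mathscr{M}_{\leq r}(\mathcal{H}^L)$ is the intersection of the closed ball of radius $r$ in $C(\mathcal{H}^L)^*$ with the positive cone.
\item The ball is weak* compact by Banach--Alaoglu. The positive cone is weak* closed, since positivity is tested pointwise against nonnegative $g\in C(\mathcal{H}^L)$. So $\mathcal{M}^{L+1}$ is weak* compact.
\item $C(\mathcal{H}^L)$ is separable because $\mathcal{H}^L$ is compact metrizable. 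On bounded sets the weak* topology is therefore metrizable, so $\mathcal{M}^{L+1}$ is compact metrizable.
\item $\mathcal{H}^{L+1}=\mathcal{H}^L\times\mathcal{M}^{L+1}$ is a finite product of compact metrizable spaces. It is compact by Tychonoff and metrizable, which closes the induction.
\end{enumerate}

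\textbf{The space of DIDMs.} $\mathscr{P}(\mathcal{H}^L)$ is the subset of $\mathscr{M}_{\leq 1}(\mathcal{H}^L)$ on which $\mu\mapsto\mu(\mathcal{H}^L)=\int 1\,d\mu$ equals $1$. This map is weak* continuous because the constant function $1$ is continuous on a compact space. So $\mathscr{P}(\mathcal{H}^L)$ is a closed subset of a compact set, hence compact. On a compact space the weak and weak* topologies coincide, so this is the topology the paper refers to.

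\textbf{Main obstacle.} The only delicate point is keeping metrizability (equivalently, second countability) through the induction. It is needed for two reasons. First, it makes $C(\mathcal{H}^L)$ separable, so the weak* topology on bounded sets is metrizable. Second, it ensures the Borel measures are Radon, so Riesz gives exactly $\mathscr{M}_{\leq r}$. This is why step 3 is included explicitly.

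If one also wants the stated topologies to match the metrics $d^L_{\mathrm{IDM}}$ and $\textbf{OT}_{d^L_{\mathrm{IDM}}}$, I would invoke Theorem 4 of \cite{rauchwerger2024generalization}, which states that these metrics induce the product and weak* topologies. Compactness is a topological property, so it then transfers to the metric spaces.
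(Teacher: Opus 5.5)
Your proof is correct and complete. Heine--Borel gives the base case. Riesz together with Banach--Alaoglu, using that the positive cone is weak* closed, gives compactness of $\mathscr{M}_{\leq r}(\mathcal{H}^L)$. Separability of $C(\mathcal{H}^L)$ preserves metrizability, and Tychonoff handles the finite product. Finally, $\mathscr{P}(\mathcal{H}^L)$ is the closed level set $\{\mu:\mu(\mathcal{H}^L)=1\}$ inside the compact $\mathscr{M}_{\leq 1}(\mathcal{H}^L)$.

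The paper gives no proof of this statement. It quotes it as background and defers the identification of these topologies with those of $d^L_{\mathrm{IDM}}$ and $\mathbf{OT}_{d^L_{\mathrm{IDM}}}$ to Theorem 4 of \cite{rauchwerger2024generalization}. Your self-contained induction is the standard argument behind that citation; Prokhorov's theorem could replace Banach--Alaoglu in step 2 with no other change.

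One thing your version gains is that it is written for an arbitrary mass bound $r$. That covers the paper's appendix convention $\mathcal{M}^{L+1}=\mathscr{M}_{\leq r}(\mathcal{H}^L)$ for bofops directly. The cited setting, like the paper's main-text definition, uses mass at most $1$. For the same reason, your closing appeal to the cited metrization result is a mild extension when $r>1$. It is not needed for compactness in the product and weak* topologies.
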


 It is shown in Theorem 64 in \cite{rauchwerger2024generalization} that the space of graphon-signal $\mathcal{WL}_r^d$ equipped with the pseudometric $\delta^L_{\text{DIDM}}$ is compact. Similarly to our discussion in Appendix \ref{sec:graphon} with the cut-distance, by passing to a quotient space $\mathcal{WL}^d_r/_{\delta^L_\text{DIDM}}$ with respect to the relation that identifies graphon-signals with zero distance, the space $(\mathcal{WL}^d_r/_{\delta^L_\text{DIDM}},\delta^L_\text{DIDM})$ becomes a metric space. Recall Appendix \ref{sec:quotient} for quotient spaces.

\begin{theorem}
\label{thm:DIDM-move_comp0}
 The pseudometric space $(\mathcal{WL}^d_r,\delta^L_\text{DIDM})$ and the metric space $(\mathcal{WL}^d_r/_{\delta^L_\text{DIDM}},\delta^L_\text{DIDM})$ are compact.
\end{theorem}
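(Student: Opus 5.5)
The statement to prove is Theorem \ref{thm:DIDM-move_comp0}. It concerns the pseudometric space $(\mathcal{WL}^d_r,\delta^L_{\text{DIDM}})$ and its metric quotient. The plan is to realize this space as the preimage of a compact subset of $\mathcal{P}(\mathcal{H}^L)$ under the map $(W,f)\mapsto \Gamma_{(W,f),L}$. By Definition \ref{def:DIDMsmovers}, $\delta^L_{\text{DIDM}}$ is the pullback of the metric $\mathbf{OT}_{d^L_{\mathrm{IDM}}}$ under this map, which makes the map an isometry.

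Consequently, the quotient $(\mathcal{WL}^d_r/_{\delta^L_\text{DIDM}},\delta^L_\text{DIDM})$ is isometric to the image set $\{\Gamma_{(W,f),L}\}$ inside $(\mathcal{P}(\mathcal{H}^L),\mathbf{OT}_{d^L_{\mathrm{IDM}}})$. Compactness of the pseudometric space is equivalent to compactness of its quotient, since open sets are unions of equivalence classes (Appendix \ref{sec:quotient}). It therefore suffices to show that the set of graphon-DIDMs is closed in $\mathcal{P}(\mathcal{H}^L)$, which is compact by Theorem \ref{theo:compactDIDM}. A closed subset of a compact metric space is compact.

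To show closedness, and following the strategy the paper uses for Theorem \ref{theo:mainfine}, I would use a finer compact topology on graphon-signals. This is the graphon-signal cut distance, under which $\mathcal{WL}^d_r$ is compact \cite{levie2024graphon}. I would then show that $(W,f)\mapsto\Gamma_{(W,f),L}$ is continuous from the cut distance to $\mathbf{OT}_{d^L_{\mathrm{IDM}}}$. The image of a compact space under a continuous map is compact, hence closed, and this gives the result.

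Continuity would be proved by induction on $L$, by testing against Lipschitz functions on $\mathcal{H}^L$. Such functions can be approximated, by Stone--Weierstrass on the compact space $\mathcal{H}^L$, by MPNN-type compositions $\psi\circ$(layers). Using the commutation between MPNNs and 1-WL, $\int F\,d\Gamma_{(W,f),L}$ then becomes an MPNN readout $\mathfrak{W}(\varphi,\psi,W,f)$. MPNN outputs are Lipschitz in the cut distance \cite{levie2024graphon}. Because the family of such test functions generates the weak* topology on $\mathcal{P}(\mathcal{H}^L)$, and $\mathbf{OT}_{d^L_{\mathrm{IDM}}}$ metrizes that topology, convergence $\Gamma_{(W_n,f_n),L}\to\Gamma_{(W,f),L}$ follows.

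The main obstacle is this continuity step, specifically the density of MPNN readouts among continuous functions on $\mathcal{P}(\mathcal{H}^L)$, or at least enough of them to determine weak* limits. I would first try to cite the corresponding separation and continuity result of \cite{rauchwerger2024generalization} (its Theorems 63--64) directly. Failing that, I would run the explicit induction: the measures $\gamma_{(W,f),L+1}(x)(L+1)$ converge in $\mathbf{OT}$ because integrals of Lipschitz functions of lower-order IDMs against $W(x,\cdot)$ are controlled by the cut norm after an optimal measure-preserving alignment.
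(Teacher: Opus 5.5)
Your argument is correct and is essentially the route the paper relies on. The paper does not reprove this statement; it cites it from \cite{rauchwerger2024generalization}, where it follows from compactness of graphon-signals under the cut distance together with the DIDM mover's topology being coarser than the cut topology, and your proof of that coarseness (cut-Lipschitz MPNNs plus MPNN outputs detecting DIDM convergence) is the same argument the paper gives for its bofop analogue, Theorem \ref{theo:fine}. The ``main obstacle'' you flag is already settled by Theorem \ref{theo:UATDIDMs} (or Theorem \ref{theo:expressDIDM}) combined with Lemma \ref{IDM:rauchLemma11}, and the detour through closedness in $\mathcal{P}(\mathcal{H}^L)$ is unnecessary, since the continuous image of the compact cut-distance space is already compact.
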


\subsection{Hierarchy of DIDM Spaces: Graphon-DIDMs Are a Strict Subset of General DIDMs}

Combining the compactness properties of the spaces of DIDMs and graphon-DIDMs yields the following dichotomy: either every DIDM arises from graphon-signal through the 1-WL algorithm, or that the space of graphon-DIDMs is a strict subset of the space of all DIDMs. The following example shows that the latter holds; namely, we construct a DIDM that cannot arise from any graphon-signal via 1-WL. We use this terminology: we say that a measure $\mu$ is supported on $A$ if $\mu(B)=\mu(A\cap B)$ for every measurable set $B$.

\begin{example}
\label{example:compIDMsubset}

Consider the set of features $[-1,1]$ (recall that the set of features, or node attributes, is defined to be $[-r,r]^d$, and set $r=1$ and $d=1$ in this example). Set $L=1$. We define an order-1 DIDM and show that it cannot be obtained by applying the 1-WL  to any graphon-signal. Take a probability measure $\alpha\in\mathscr{P}(\mathcal{H}^1)$ to be the pushforward of the Lebesgue measure on $[0,1]$ via the mapping $[0,1]\ni x\mapsto \left(1,\lambda\vert_{\left[0,1-\frac{x}{3}-\frac{1}{3}\right]}\right)\in\mathcal{H}^1$ , where $\lambda\vert_{[0,a]}(A)=\lambda(A\cap[0,a])$ for a Borel set $A\subset [-1,1]$  (Note that the restriction of a measure is a measure, see \cite{terrytao_measure}). Observe that this map is a measurable map with respect to the Borel $\sigma$-algebra on $\mathcal{H}^1$, therefore we have that $\alpha$ is a DIDM (see Appendix \ref{sec:pushforward} for more details on the definition of pushforward). Now let $(W,f)$ be a graphon, and suppose that $\Gamma_{(W,f),1}=\alpha$. If $W(x,y)=0$ for a.e. $(x,y)\in[0,1]$, then $\gamma_{(W,f),1}(x)(1)$ is the trivial measure (assigning measure zero to every set) for a.e. $x\in[0,1]$, so the marginal of  $\Gamma_{(W,f),1}$ on the second axis is a delta distribution on the trivial measure. Since the marginal of $\alpha$ on the second axis is not a delta on the trivial measure, this is a contradiction. Hence, $W(x,y)\neq 0$ on a set of positive measure. By the base of the 1-WL algorithm (Definition \ref{def:1WLgraph+graphon}), $\gamma_{(W,f),1}(x)(0)=\gamma_{(W,f),0}(x)=f(x)$ for a.e. $x\in[0,1]$. Since the marginal distribution of $\alpha$ on the first axis is supported in $\{1\}$, which must be equal to the pushforward of $\lambda$ under $x\mapsto\gamma_{(W,f),1}(0)=f(x)$, we conclude that $f(x)=1$ for a.e. $x\in [0,1]$. Then for every $x\in [0,1]$, such that $f(x)=1$ by the first step of the 1-WL algorithm on $(W,f)$, by step 2 in Definition \ref{def:1WLgraph+graphon}, for every set $B\subset [-1,1]=
\mathcal{H}^0$ such that $B\cap\{1\}=\emptyset$, we have that $\gamma_{(W,f),1}(x)(1)(B)=0$. This is because since $\gamma^{-1}_{(W,f),0}(B)=\emptyset$ when $B$ does not contain 1:
\[
    \gamma_{(W,f),1}(x)(1)(B)=\int_{\gamma^-1_{(W,f),0}(B)}W(x,-)d\lambda=0.
\]
Therefore $\gamma_{(W,f),1}(x)(1)$ is a measure supported on $\{1\}$. Hence, $\Gamma_{(W,f),1}$, the pushforward of $\lambda$ under $\gamma_{(W,f),1}$, has a marginal on the second axis (an element of $\mathscr{P}(\mathcal{M}^1)$) which is supported on the measures of $\mathcal{M}_{1}=\mathscr{M}_{\leq 1}([-1,1])$, which are supported on $\{1\}$. This is a contradiction to the definition of $\alpha$, which has a marginal on the second axis that is not supported on  the measures of $\mathscr{M}_{\leq 1}([-1,1])$ which are supported on $\{1\}$. Hence, $\alpha$ is a DIDM that cannot be a graphon-DIDM.

\end{example}

We therefore stress that the space of DIDMs is strictly larger than that of graphon-DIDMs, in the sense that the closure of graphon-DIDMs is not all of DIDMs. It is also shown that the space of computation DIDMs of graph-signals is dense in the space of graphon-DIDMs, see Appendix I.3 in \cite{rauchwerger2024generalization}.

\begin{figure}[htbp]
    \centering
    \includegraphics[width=0.7\textwidth]{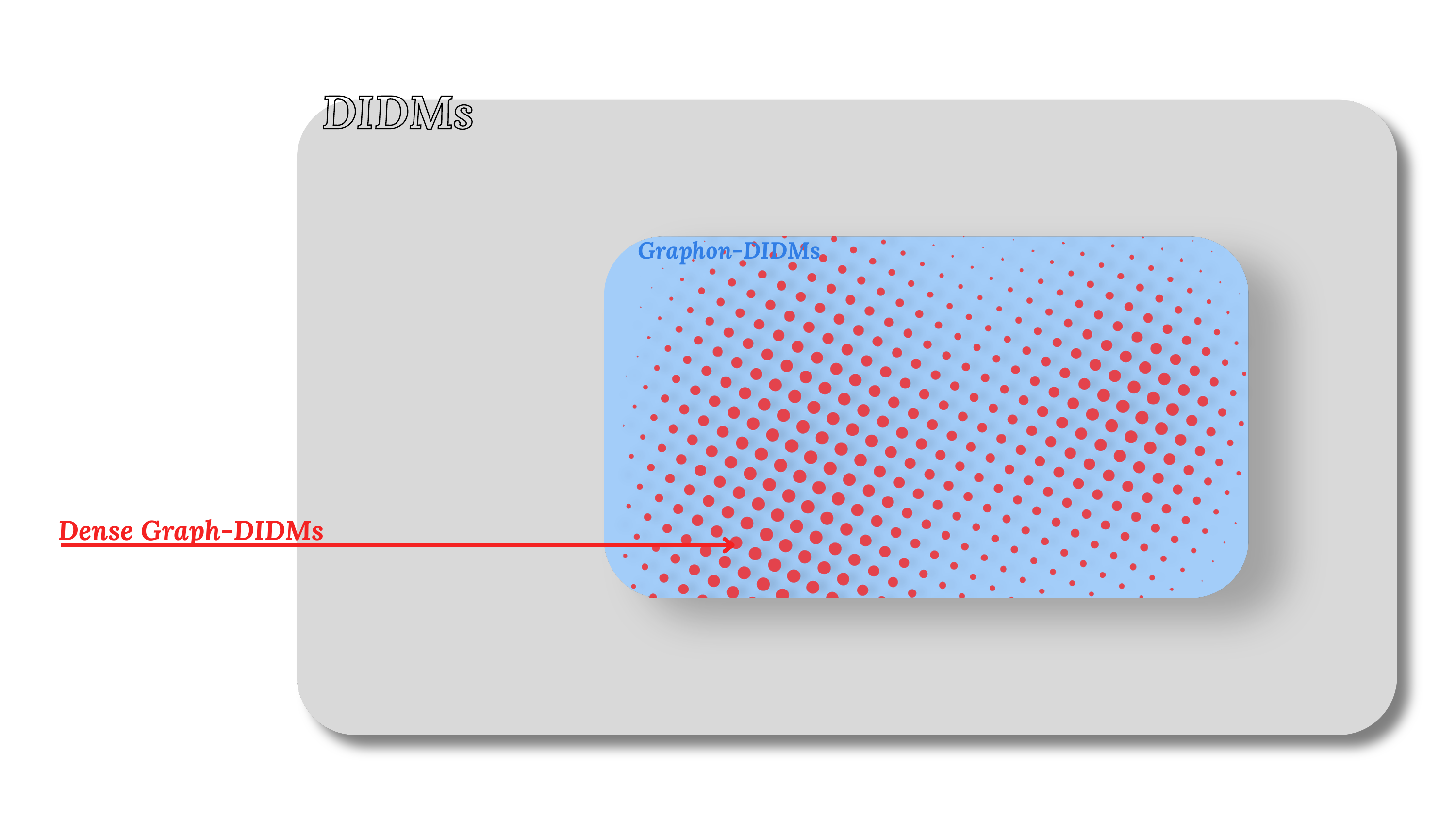}
    \caption{Hierarchy of DIDM spaces: the space of all DIDMs (black); its strict subset of graphon-DIDMs, isomorphic to graphon-signals (blue); and the space of dense-DIDMs, isomorphic to graph-signals, shown as a dense subset (red dots).}
    \label{fig:graphon-didms}
\end{figure}

\subsection{MPNNs on graph-signals, graphon-signal, and DIDMs}
\label{sec:MPNNIDM}

Recall that in Appendix \ref{sec:GraphonAnalysis} we presented an extension of MPNN which processes graph-signals to process graphon-signals in such a way that for any graph-signal, applying the MPNN to the graph yields exactly the same output as applying the MPNN to the corresponding induced graphon-signal. 

B{\"o}ker et. al. \cite{boker2024fine} integrate MPNNs into the framework of IDMs, and extend the definition of MPNN to process IDMs in such a way the application of MPNN to a graph (or to its induced graphon) yields exactly the same output as applying the MPNN on the IDM obtained from running the 1-WL algorithm on the graph (Appendix C1 in \cite{boker2024fine}). Building on that \cite{rauchwerger2024generalization} extend the result from MPNNs that process graphs to graph-signals.

 Recall that $p_{L,j}:\mathcal{H}^L\mapsto\mathcal{H}^j$ and $p_L:\mathcal{H}^L\rightarrow\mathcal{M}^L$ are the projections where $j\leq L<\infty$. Namely, for $u\in\mathcal{H}^L$, we have that $p_{L,j}(u)$ is a vector consisting of the first $j+1$ coordinates of $u$, and $p_L(u)$ is the last coordinate of $u$.

\begin{definition}[MPNNs on IDMs and DIDMs]
\label{def:MPNNIDM}
 Let $(\varphi,\psi)$ be an MPNN model with readout. The application of the MPNN on IDMs and DIDMs is defined as follows: initialize $\mathfrak{f}^{(0)}_-:= \varphi^{(0)}(-)$, and compute the hidden IDM node representations $\mathfrak{f}_-^{(t)}:\mathcal{H}^t\to\mathbb{R}^{d_t}$ on any order-$t$ IDM $\tau\in\mathcal{H}^t$, and the DIDM-level output $\mathfrak{F}\in\mathbb{R}^d$ on an L-order DIDM $\nu\in\mathscr{P}(\mathcal{H}^L)$, by 
 \[
 \mathfrak{f}_{\tau}^{(t)}:=\varphi^{(t)}\left(\mathfrak{f}^{(t-1)}_{p_{t,t-1}(\tau)},\int_{\mathcal{H}^{t-1}} \mathfrak{f}_-^{(t-1)}\,dp_t(\tau)\right)
 \quad\text{and}\quad \mathfrak{F}:= \psi\left(\int_{\mathcal{H}^L}\mathfrak{f}_-^{(L)}d\nu\right).
\]
\end{definition}

We also denote $\mathfrak{f}(\varphi)_\tau^{(t)}$ or $\mathfrak{f}(\varphi,\tau)^{(t)}$, and $\mathfrak{F}(\varphi,\psi)$ or $\mathfrak{F}(\varphi,\psi,\nu)$. 

As discussed above, Lemma 11 in \cite{rauchwerger2024generalization} shows the commutation between the application of MPNN to graphon-signals (Definition \ref{def:MPNN_graphon_signal}) and to IDMs (Definition \ref{def:MPNNIDM}
).

\begin{lemma}
\label{IDM:rauchLemma11}
    Let $(W,f)$ be a graphon-signal and $(\varphi,\psi)$ and L-layer MPNN model with readout. Then, given the corresponding grpahon-IDMs $\{\gamma_{(W,f),t}\}_{t=0}^L$ and graphon-DIDM $\Gamma_{(W,f),L}$, we have that $\mathfrak{w}(\varphi,W,f)_x^{(t)}=\mathfrak{f}(\varphi)^{(t)}_{\gamma_{(W,f),t}(x)}$ for any $t\in [L]$ and $x\in [0,1]$. Similarly, $\mathfrak{W}(\varphi, \psi, W, f)=\mathfrak{F}(\varphi,\psi,\Gamma_{(W,f),L})$.
\end{lemma}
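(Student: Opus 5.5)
The plan is to argue by induction on the layer index $t$, proving the stronger pointwise statement $\mathfrak{w}(\varphi,W,f)^{(t)}_x=\mathfrak{f}(\varphi)^{(t)}_{\gamma_{(W,f),t}(x)}$ for a.e. $x\in[0,1]$ and every $t\le L$. The readout identity then follows by one change of variables.

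\textbf{Base case.} For $t=0$, both sides equal $\varphi^{(0)}(f(x))$, since $\gamma_{(W,f),0}(x)=f(x)$ and $\mathfrak{f}^{(0)}_-=\varphi^{(0)}(-)$.

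\textbf{Inductive step.} Assume the claim holds at layer $t-1$. We compare the two arguments fed to $\varphi^{(t)}$.
\begin{itemize}
\item \emph{First argument.} By item 1 of Definition \ref{def:1WLgraph+graphon}, $p_{t,t-1}(\gamma_{(W,f),t}(x))=\gamma_{(W,f),t-1}(x)$. Hence the first argument on the IDM side is $\mathfrak{f}^{(t-1)}_{\gamma_{(W,f),t-1}(x)}$, which equals $\mathfrak{w}^{(t-1)}_x$ by the hypothesis.
\item \emph{Aggregation argument.} Here $p_t(\gamma_{(W,f),t}(x))$ is the measure $\nu_x$ on $\mathcal{H}^{t-1}$ given by $\nu_x(E)=\int_{\gamma^{-1}_{(W,f),t-1}(E)}W(x,y)\,d\lambda(y)$. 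This is exactly the pushforward of the finite measure $W(x,-)\,d\lambda$ under $\gamma_{(W,f),t-1}$. Applying the change of variables formula (Theorem \ref{Theorem:changevariables}) coordinatewise gives
\[
\int_{\mathcal{H}^{t-1}}\mathfrak{f}^{(t-1)}_-\,d\nu_x=\int_{[0,1]}\mathfrak{f}^{(t-1)}_{\gamma_{(W,f),t-1}(y)}W(x,y)\,d\lambda(y).
\]
By the hypothesis this equals $\int W(x,y)\mathfrak{w}^{(t-1)}_y\,d\lambda(y)$, which is the graphon aggregation.
\end{itemize}
Applying $\varphi^{(t)}$ to equal arguments yields equal outputs, which closes the induction.

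\textbf{Readout.} Since $\Gamma_{(W,f),L}=(\gamma_{(W,f),L})_*\lambda$, change of variables gives
\[
\int_{\mathcal{H}^L}\mathfrak{f}^{(L)}_-\,d\Gamma_{(W,f),L}=\int_{[0,1]}\mathfrak{f}^{(L)}_{\gamma_{(W,f),L}(x)}\,d\lambda(x)=\int_{[0,1]}\mathfrak{w}^{(L)}_x\,d\lambda(x).
\]
Applying $\psi$ to both sides gives $\mathfrak{W}=\mathfrak{F}$.

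\textbf{Main obstacle.} The only delicate point is justifying the change of variables, which requires two facts.
\begin{itemize}
\item \emph{Measurability.} We need $\gamma_{(W,f),t}$ to be Borel measurable into $\mathcal{H}^t$ and $\mathfrak{f}^{(t)}$ to be continuous. The first I would prove by a parallel induction: weak$^*$ measurability of $x\mapsto\nu_x$ reduces to measurability of $x\mapsto\int g\circ\gamma_{(W,f),t-1}(y)\,W(x,y)\,dy$ for continuous $g$, which holds by Fubini. The second holds by induction because each $\varphi^{(t)}$ is Lipschitz and integration of a continuous bounded function against a measure is weak$^*$ continuous.
\item \emph{Integrability.} All integrands are bounded, because $\mathcal{H}^t$ is compact, and all measures are finite, because $W\le1$.
\end{itemize}
A further subtlety is that the identity may only hold for a.e. $x$ rather than every $x$. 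This is resolved by taking the representatives that define $\gamma$ pointwise through the formula above, for which the equality then holds for every $x$.
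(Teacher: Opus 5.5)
Your proof is correct and matches the paper's approach. The paper cites this lemma from \cite{rauchwerger2024generalization} rather than proving it, but its bofop analogue (Lemmas \ref{lem:eqMPNN} and \ref{lem:eqreadout}) uses the same induction: it writes the new IDM coordinate as the pushforward of the neighborhood measure under $\gamma_{(A,f),t-1}$, applies change of variables, and treats the readout through $\Gamma=\gamma_*\mu$. Your a.e.\ caveat is unnecessary, because the base case and the pointwise definition of the graphon IDM already give equality for every $x$ (assuming $W$ is Borel, so every section $W(x,\cdot)$ is measurable).
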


We summarize this section with the following diagram that shows the commutation relation between MPNN on graph-signals, induced graphon-signals and the corresponding IDMs and DIDMs. This can be vied as combination of Lemma \ref{IDM:rauchLemma11} and Equation (\ref{eq:MPNN_graph=MPNN_graphon}).

\begin{equation*}
\left.
\begin{array}{ccc}
    (G, f) & \xrightarrow{\quad \varphi \quad} & \mathfrak{g}^{(L)} \\
    \Big\downarrow & & \Big\Vert \rlap{$\;\scriptstyle \text{Eq. (\ref{eq:MPNN_graph=MPNN_graphon})}$} \\
    (W_G, f_f) & \xrightarrow{\quad \varphi \quad} & \mathfrak{w}^{(L)} \\
    \llap{$\scriptstyle \text{1-WL}\;$} \Big\downarrow & & \Big\Vert \rlap{$\;\scriptstyle \text{Lem. \ref{IDM:rauchLemma11}}$} \\
    \Gamma_{(W_G, f_f), L} & \xrightarrow{\quad \varphi \quad} & \mathfrak{f}^{(L)}
\end{array}
\hspace{6mm}
\right\} \xrightarrow{\quad \text{Readout $\psi$} \quad} \mathbb{R}^p
\end{equation*}

\subsection{Continuity of MPNNs With Respect To DIDM Mover's Distance}

The MPNN model on IDMs and IDMs defined in \ref{def:MPNNIDM} is such, that given an MPNN model, it induces a function from the space of DIDMs to vectors. This function is shown to be Lipschitz continuous. This formulation allows to state and prove the following theorem which is crucial for the generalization analysis in \cite{rauchwerger2024generalization}.

\begin{theorem}
\label{theo:continuitympnnDIDM}
Let $\varphi$ be an L-layer MPNN model. Then, there exists a constant $C_\varphi$, that depends only on L, the number of layers and the Lipschitz constants of the model's update functions, such that 
\[
    \Vert\mathfrak{f}(\varphi,\alpha)^{(L)}-\mathfrak{f}(\varphi,\beta)^{(L)}\Vert_2\leq C_\varphi \cdot d^L_{\mathrm{IDM}}(\alpha,\beta)
\]
for all $\alpha,\beta\in\mathcal{H}^L$. If $\varphi$ has a readout function $\psi$, then for all $\mu,\nu\in\mathscr{P}(\mathcal{H}^L)$ there exists a constant $C_{(\varphi,\psi)}$ that depends only on $C_\varphi$ and the Lipschitz constant of the model's readout function, such that
\[
    \Vert\mathfrak{F}(\varphi,\psi,\mu)-\mathfrak{F}(\varphi,\psi,\nu)\Vert_2\leq C_{(\varphi,\psi)}\cdot\textbf{OT}_{d_{IDM}^L} (\mu,\nu).
\]
\end{theorem}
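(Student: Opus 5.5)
We prove both inequalities by induction on the layer index $t$, working directly with the recursive definitions of $\mathfrak{f}^{(t)}$ (Definition \ref{def:MPNNIDM}) and $d^t_{\mathrm{IDM}}$. Write $L_t$ for the Lipschitz constant of $\varphi^{(t)}$ and $M_t:=\sup_{\tau\in\mathcal{H}^t}\|\mathfrak{f}^{(t)}_\tau\|_2$. The induction will produce a Lipschitz constant $C_t$ for $\mathfrak{f}^{(t)}$ on $(\mathcal{H}^t,d^t_{\mathrm{IDM}})$, and it carries the sup bound $M_t$ along as a second hypothesis.

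\textbf{Base case.} Since $\mathfrak{f}^{(0)}=\varphi^{(0)}$ on $[-1,1]^d$, we get $C_0=L_0$. The bound $M_0$ is finite because $\varphi^{(0)}$ is continuous on a compact set.

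\textbf{Inductive step.} Take $\alpha,\beta\in\mathcal{H}^t$. Lipschitzness of $\varphi^{(t)}$ gives
\[
\|\mathfrak{f}^{(t)}_\alpha-\mathfrak{f}^{(t)}_\beta\|_2\le L_t\Bigl(C_{t-1}\,d^{t-1}_{\mathrm{IDM}}(p_{t,t-1}\alpha,p_{t,t-1}\beta)+\Bigl\|\int \mathfrak{f}^{(t-1)}\,dp_t(\alpha)-\int\mathfrak{f}^{(t-1)}\,dp_t(\beta)\Bigr\|_2\Bigr).
\]
We bound the aggregation term as follows. Let $\mu=p_t(\alpha)$ and $\nu=p_t(\beta)$, and suppose without loss of generality $\|\mu\|\le\|\nu\|$. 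Take any unbalanced coupling $\gamma$ with first marginal $\mu$ and second marginal $\nu'\le\nu$. Then
\[
\int\mathfrak{f}\,d\mu-\int\mathfrak{f}\,d\nu=\int\bigl(\mathfrak{f}(x)-\mathfrak{f}(y)\bigr)\,d\gamma(x,y)-\int\mathfrak{f}\,d(\nu-\nu').
\]
The first term is at most $C_{t-1}\int d^{t-1}_{\mathrm{IDM}}\,d\gamma$. The second is at most $M_{t-1}(\|\nu\|-\|\mu\|)$, because $\nu-\nu'$ is a nonnegative measure of mass $\|\nu\|-\|\mu\|$. Taking the infimum over $\gamma$ bounds the aggregation term by $\max(C_{t-1},M_{t-1})\,\mathbf{OT}_{d^{t-1}_{\mathrm{IDM}}}(\mu,\nu)$.

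By the recursive definition,
\[
d^t_{\mathrm{IDM}}(\alpha,\beta)=d^{t-1}_{\mathrm{IDM}}(p_{t,t-1}\alpha,p_{t,t-1}\beta)+\mathbf{OT}_{d^{t-1}_{\mathrm{IDM}}}(\mu,\nu).
\]
Hence $C_t=L_t\max(C_{t-1},M_{t-1})$ works. For the sup bound, all measures have mass at most $r$ (or $1$), so
\[
M_t\le\|\varphi^{(t)}(0)\|_2+L_t\bigl(M_{t-1}+rM_{t-1}\bigr)<\infty.
\]

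\textbf{Dependence of the constants.} As stated, the constants also depend on $\|\varphi^{(t)}(0)\|_2$ through $M_t$. This dependence is what the paper's phrase "depends only on $L$ and the Lipschitz constants" must implicitly absorb, for instance by normalizing $\varphi^{(t)}(0)=0$ or by allowing dependence on the model. I expect this bookkeeping, i.e. controlling the mass-mismatch term by $M_{t-1}$, to be the only delicate point. Everything else is a direct unrolling of the definitions.

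\textbf{Readout.} For $\mu,\nu\in\mathscr{P}(\mathcal{H}^L)$ we have balanced couplings and equal masses. Lipschitzness of $\psi$ and the same coupling argument give
\[
\|\mathfrak{F}(\mu)-\mathfrak{F}(\nu)\|_2\le L_\psi\Bigl\|\int\mathfrak{f}^{(L)}\,d\mu-\int\mathfrak{f}^{(L)}\,d\nu\Bigr\|_2\le L_\psi C_L\int d^L_{\mathrm{IDM}}\,d\gamma.
\]
Taking the infimum over $\gamma$ yields $C_{(\varphi,\psi)}=L_\psi C_L$.
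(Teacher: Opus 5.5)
The paper gives no proof of this theorem; it is quoted as background from \cite{rauchwerger2024generalization}, so there is no in-paper argument to compare against. Your layer-wise induction is correct and complete. The splitting $\int\mathfrak{f}\,d\mu-\int\mathfrak{f}\,d\nu=\int(\mathfrak{f}(x)-\mathfrak{f}(y))\,d\gamma(x,y)-\int\mathfrak{f}\,d(\nu-\nu')$ is compatible with the paper's unbalanced couplings (first marginal the lighter measure, second marginal dominated by the heavier one). It also shows exactly why $M_{t-1}$ has to enter $C_t$. The readout step is the balanced special case. One point you leave implicit is that $\mathfrak{f}^{(t-1)}$ is Borel on $\mathcal{H}^{t-1}$, which the aggregation integrals need. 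This follows from your Lipschitz bound, because $d^{t-1}_{\mathrm{IDM}}$ metrizes the product/weak* topology.

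Your caveat about the constants is not just bookkeeping: as literally stated, the theorem is false. Take $d=1$, $L=1$, $\varphi^{(0)}(x)=x+c$ and $\varphi^{(1)}(a,b)=b$, both $1$-Lipschitz. Take $\alpha=(0,\delta_0)$ and $\beta_\epsilon=(0,(1-\epsilon)\delta_0)$. Then $d^1_{\mathrm{IDM}}(\alpha,\beta_\epsilon)=\mathbf{OT}_{d^0_{\mathrm{IDM}}}(\delta_0,(1-\epsilon)\delta_0)=\epsilon$, while $\mathfrak{f}^{(1)}_\alpha=c$ and $\mathfrak{f}^{(1)}_{\beta_\epsilon}=(1-\epsilon)c$. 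The ratio is therefore $|c|$, unbounded with the Lipschitz constants fixed. Taking $\psi=\mathrm{id}$ on $\delta_\alpha$ and $\delta_{\beta_\epsilon}$ breaks the readout bound in the same way.

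So a bound on $\|\varphi^{(t)}(0)\|_2$, or on the ranges of the update functions, must enter, exactly as your recursion for $M_t$ records. The paper elsewhere assumes hidden features lie in $[-1,1]^{d_l}$; under that convention $M_t\le\sqrt{d_t}$, and the extra dependence reduces to the hidden dimensions. For $r\ge 1$, $\delta_\alpha$ and $\delta_{\beta_\epsilon}$ are the bofop-DIDMs of $(I,0)$ and $((1-\epsilon)I,0)$. Hence the same bias control is also needed for the uniform equicontinuity over $\mathcal{MP}_D$ that the paper relies on later.
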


As a corollary MPNNs are continuous over graphon-signals with respect to DIDM Mover's Distance.

 \subsection{Expressivity of MPNNs via DIDM Mover's Topology}

In \cite{rauchwerger2024generalization} it was shown that the expressivity of MPNNs can be characterized via the DIDM Mover's Distance. Namely, convergence of a sequence of DIDMs is equivalent to convergence of their output with respect to any MPNN. This is the content of the following theorem,

\begin{theorem}[\cite{rauchwerger2024generalization}, Corollary 12]
\label{theo:expressDIDM}
    Let $L\in \mathbb{N}_0$ and $d>0$ be fixed. Let $\nu\in\mathscr{P}(\mathcal{H}^L)$ and $(\nu_i)_i$ be a sequence with $\nu_i\in\mathscr{P}(\mathcal{H}^L)$. Then, $\nu_i\rightarrow\nu$ if and only if $\mathfrak{F}(\varphi,\psi,\nu_i)\rightarrow\mathfrak{F}(\varphi,\psi,\nu)$ for all L-layer MPNN models $\varphi$ with a readout function $\psi:\mathbb{R}^{d_L}\rightarrow\mathbb{R}^d$.
\end{theorem}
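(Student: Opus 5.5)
The plan is to split the equivalence into its two directions. The forward direction follows from Lipschitz continuity. The converse follows from a compactness-plus-separation argument.

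\emph{Forward direction.} Suppose $\nu_i\to\nu$. By the discussion preceding Theorem \ref{theo:compactDIDM}, the DIDM mover's distance $\textbf{OT}_{d^L_{\mathrm{IDM}}}$ metrizes the weak* topology of $\mathscr{P}(\mathcal{H}^L)$. Hence $\textbf{OT}_{d^L_{\mathrm{IDM}}}(\nu_i,\nu)\to0$. Theorem \ref{theo:continuitympnnDIDM} then gives, for every fixed model $(\varphi,\psi)$,
\[
\|\mathfrak{F}(\varphi,\psi,\nu_i)-\mathfrak{F}(\varphi,\psi,\nu)\|_2\le C_{(\varphi,\psi)}\,\textbf{OT}_{d^L_{\mathrm{IDM}}}(\nu_i,\nu)\to0 .
\]

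\emph{Converse direction.} Assume the outputs converge for every model, and let $(\nu_{i_k})$ be an arbitrary subsequence. By Theorem \ref{theo:compactDIDM}, $\mathscr{P}(\mathcal{H}^L)$ is compact, so there is a further subsequence converging to some $\nu'$. By the forward direction, the outputs along it converge to $\mathfrak{F}(\varphi,\psi,\nu')$. Therefore $\mathfrak{F}(\varphi,\psi,\nu')=\mathfrak{F}(\varphi,\psi,\nu)$ for all models. Once we know that MPNNs separate points of $\mathscr{P}(\mathcal{H}^L)$, we get $\nu'=\nu$. Since every subsequence then has a sub-subsequence converging to $\nu$, the whole sequence converges to $\nu$.

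\emph{Separation (the main step).} I would prove by induction on $t$ the following claim. The closure in $C(\mathcal{H}^t)$ of the set $\mathcal{A}_t$ of scalar functions $\tau\mapsto \ell(\mathfrak{f}^{(t)}_\tau)$ is all of $C(\mathcal{H}^t)$, where $\ell$ ranges over linear functionals and $\mathfrak{f}^{(t)}$ over $t$-layer models. The proof of each step uses the Stone--Weierstrass theorem (Theorem \ref{theorem:SW}) on the compact space $\mathcal{H}^t$.

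\begin{itemize}
\item \emph{Closure is a unital algebra.} Constants are realized by constant update functions. Sums come from running two models in parallel with concatenated hidden features. For products, I would let the last update function apply a Lipschitz map approximating $(a,b)\mapsto ab$ uniformly on the bounded range of the features. Ranges are bounded because $\mathcal{H}^t$ is compact and MPNN features are continuous.
\item \emph{Separation at level $0$.} If two IDMs $\tau\neq\tau'$ differ at coordinate $0$, then $\varphi^{(0)}=\mathrm{id}$ separates them. Later layers carry this through by letting each update function copy its first argument.
\item \emph{Separation at level $j\ge1$.} Suppose $\tau,\tau'$ differ in the measure coordinate $p_j$. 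By the induction hypothesis, $\mathcal{A}_{j-1}$ is dense in $C(\mathcal{H}^{j-1})$. Finite Borel measures on a compact metric space are determined by their integrals against continuous functions, with the constant $1$ detecting total mass. So some $\mathfrak{f}^{(j-1)}$ has different aggregated integrals $\int\mathfrak{f}^{(j-1)}\,dp_j(\tau)\neq\int\mathfrak{f}^{(j-1)}\,dp_j(\tau')$. Layer $j$ can output this aggregate, and subsequent layers copy it forward.
\end{itemize}

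Finally, take the readout $\psi$ linear (or Lipschitz approximations thereof). Then the quantities $\int g\,d\nu$ for $g\in\mathcal{A}_L$ are MPNN outputs, and density of $\mathcal{A}_L$ in $C(\mathcal{H}^L)$ shows that these outputs determine $\nu$.

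\emph{Expected obstacle.} The delicate point is the algebra and approximation step. We must keep all update functions Lipschitz, as required in Definition \ref{def:MPNN}, while realizing approximate products and copying of earlier coordinates. We must also check that uniform approximation errors at layer $j$ propagate controllably through later layers. For this I would use the Lipschitz bounds of Theorem \ref{theo:continuitympnnDIDM}, together with the uniform boundedness of aggregated features, which holds because measures in $\mathcal{M}^j$ have mass at most $r$.
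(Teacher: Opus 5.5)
Your proof is correct. It is essentially the standard argument behind this result, which the paper does not prove itself but imports from \cite{rauchwerger2024generalization}: Lipschitz continuity (Theorem~\ref{theo:continuitympnnDIDM}) gives the forward direction, and Stone--Weierstrass density of MPNN features on the compact spaces $\mathcal{H}^t$, together with compactness of $\mathscr{P}(\mathcal{H}^L)$, gives the converse.

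The obstacle you anticipate does not actually arise. Products can be realized exactly by a clipped, globally Lipschitz product in the last update, and separation needs only one exact separating model, so no approximation errors have to be propagated. Also, once $\mathcal{A}_L$ is dense, the bound $|\int (g-h)\,d\nu_i|\le\|g-h\|_\infty$ yields weak convergence directly, so the sub-subsequence step is optional.
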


We can interpret this important theorem in the following way: two DIDMs are close to each other if and only if the outputs of any MPNN on them are close to each other. Hence, MPNNs have the same separation power as the DIDM Mover's Distance.

We saw in Lemma \ref{IDM:rauchLemma11} that MPNNs on IDMs and DIDMs extend canonically MPNNs on graphon-signals. We therefore have that convergence of graphon-signals is equivalent to convergence of all MPNN outputs.

\subsection{Universal Approximation Theorems based on DIDM Mover's Topology}\label{sec:univ_approx}
The notion of universal approximation was introduced in \ref{sec:UAT+express}. In this section, we present the results concerning universal approximation from \cite{rauchwerger2024generalization}. 
Following \cite{rauchwerger2024generalization}, we define the set $\mathcal{N}_L^{d_L}:=\{\mathfrak{f}(\varphi)_-^{(L)}:\mathcal{H}^L\mapsto \mathbb{R}^{d_L} \vert \varphi \text{ is an L-layer MPNN model}\}\subseteq C(\mathcal{H}^L,\mathbb{R}^{d_L})$, where $C(\mathcal{H}^L,\mathbb{R}^{d_L})$ is the set of all continuous functions $\mathcal{H}^L\mapsto\mathbb{R}^{d_L}$ w.r.t $d^L_\mathrm{IDM}$ and the Euclidean metrics. Similarly we define the set $\mathcal{N}\mathcal{N}^d_L:=\{\mathfrak{F}(\varphi,\psi,-):\mathscr{P}(\mathcal{H}^L)\mapsto\mathbb{B}^d_r(\varphi,\psi)\vert (\varphi,\psi) \text{ is an L-layer MPNN model with readout}\}\subseteq C(\mathscr{P}(\mathcal{H}^L),\mathbb{R}^d)$, where one considers the metric $\mathbf{OT}_{d^L_\mathrm{IDM}}$ on the domain.

The following theorem states that any continuous function from DIDMs to scalars can be approximated by an MPNN on DIDMs.
\begin{theorem}
\label{theo:UATDIDMs}
    Let $L\in\mathbb{N}_0$. Then, the set $\mathcal{N}^1_L$ is uniformly dense in $C(\mathcal{H}^L),\mathbb{R})$ and the set $\mathcal{NN}^1_L$ is uniformly dense in $C(\mathscr{P}(\mathcal{H}^L,\mathbb{R})$.
\end{theorem}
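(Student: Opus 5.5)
The plan is to apply the Stone--Weierstrass theorem (Theorem \ref{theorem:SW}) twice: once on the compact metric space $(\mathcal{H}^L,d^L_{\mathrm{IDM}})$ and once on $(\mathscr{P}(\mathcal{H}^L),\mathbf{OT}_{d^L_{\mathrm{IDM}}})$. Both spaces are compact by Theorem \ref{theo:compactDIDM}, and these metrics induce their topologies. Every element of $\mathcal{N}^1_L$ and $\mathcal{NN}^1_L$ is continuous by Theorem \ref{theo:continuitympnnDIDM}. Stone--Weierstrass requires an algebra, and $\mathcal{N}^1_L$ itself may not be one, so I will show instead that its uniform closure $\overline{\mathcal{N}^1_L}$ is a unital subalgebra that separates points. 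Then $\overline{\mathcal{N}^1_L}$ is a closed dense subalgebra, so it equals $C(\mathcal{H}^L,\mathbb{R})$. The same argument applies to $\mathcal{NN}^1_L$.

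\emph{Algebra structure.} Constants are obtained by taking $\varphi^{(L)}$ (respectively $\psi$) to be constant.

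For linear combinations, I run two MPNNs $\varphi,\varphi'$ in parallel. Set the hidden dimensions to $d_t+d'_t$, and define block update functions
\[
\tilde\varphi^{(t)}((a,a'),(b,b'))=(\varphi^{(t)}(a,b),\varphi'^{(t)}(a',b')).
\]
These are Lipschitz, and by linearity of the integral in Definition \ref{def:MPNNIDM} they reproduce both hidden representations at once. At IDM level I append one more layer $(u,u')\mapsto \alpha u+\beta u'$, or equivalently fold this linear map into the last update function. At DIDM level I take the readout $\tilde\psi(u,u')=\alpha\psi(u)+\beta\psi'(u')$.

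For products the issue is that $(x,y)\mapsto xy$ is not globally Lipschitz. However, $\mathcal{H}^L$ is compact and every hidden representation is continuous, so all outputs lie in a bounded box $[-R,R]^2$. I replace multiplication by a Lipschitz function that agrees with $xy$ on that box, for instance $\mathrm{clip}_R(x)\,\mathrm{clip}_R(y)$, and use it as the final layer or readout. The products $\mathfrak{f}\mathfrak{f}'$ and $\mathfrak{F}\mathfrak{F}'$ then lie exactly in the respective classes. Passing to uniform closures preserves sums and products, since the functions are uniformly bounded on a compact set. So the closures are unital algebras.

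\emph{Separation of points.} For DIDMs, suppose $\mu\neq\nu$. The constant sequence $\nu_i=\mu$ does not converge to $\nu$, so by Theorem \ref{theo:expressDIDM} some MPNN with readout $\psi:\mathbb{R}^{d_L}\to\mathbb{R}^d$ gives $\mathfrak{F}(\mu)\neq\mathfrak{F}(\nu)$. Composing with a coordinate projection, which is Lipschitz and can be absorbed into $\psi$, gives a scalar-valued separating element of $\mathcal{NN}^1_L$.

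For IDMs, suppose $\alpha\neq\beta$ and, for contradiction, that every $\mathfrak{f}^{(L)}$ agrees on $\alpha$ and $\beta$. Then $\mathfrak{F}(\varphi,\psi,\delta_\alpha)=\psi(\mathfrak{f}^{(L)}_\alpha)=\psi(\mathfrak{f}^{(L)}_\beta)=\mathfrak{F}(\varphi,\psi,\delta_\beta)$ for every MPNN model. By the DIDM case just proved, this forces $\delta_\alpha=\delta_\beta$, so $\alpha=\beta$, a contradiction. Projecting onto a coordinate again makes the separating function scalar-valued.

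I expect the main obstacle to be the product step. Because MPNN models are required to be Lipschitz, products must be realized through the bounded-range truncation. This needs a uniform bound on the outputs, which comes from compactness together with continuity (Theorem \ref{theo:continuitympnnDIDM}). One must also check that the parallel block construction commutes with the integral $\int \mathfrak{f}^{(t-1)}\,dp_t(\tau)$. The remaining steps are direct applications of the cited results.
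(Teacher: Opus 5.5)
Your argument is valid relative to the results the paper states. It follows the same Stone--Weierstrass scheme the paper relies on. The paper gives no proof of its own here; it imports the statement from \cite{rauchwerger2024generalization}. In the analogous Theorem~\ref{theo:UATbofop} it combines compactness, Theorem~\ref{theo:continuitympnnDIDM}, and Lemma~78 of \cite{rauchwerger2024generalization} for the algebra and separation hypotheses. Where you differ is that you supply those hypotheses yourself.

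Your algebra step is correct and self-contained. The block construction commutes with $\int \mathfrak{f}^{(t-1)}_-\,dp_t(\tau)$ because the integral acts coordinatewise. The clipped product is exactly what the Lipschitz requirement on update and readout functions forces. The construction in fact shows that $\mathcal{N}^1_L$ and $\mathcal{NN}^1_L$ are themselves unital algebras, so passing to closures is unnecessary. One slip: ``appending one more layer'' is not equivalent to folding the map into $\varphi^{(L)}$. Appending changes the depth to $L+1$ and the domain to $\mathcal{H}^{L+1}$, so all post-processing must be absorbed into $\varphi^{(L)}$ (resp.\ $\psi$).

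The substantive point is separation, which you read off Theorem~\ref{theo:expressDIDM} via constant sequences. This is formally admissible, since the paper states that result earlier. Given compactness of $\mathscr{P}(\mathcal{H}^L)$ and continuity of MPNN outputs, however, the nontrivial direction of Theorem~\ref{theo:expressDIDM} is \emph{equivalent} to MPNNs separating points. A subsequence argument gives one implication and your constant-sequence argument gives the other. In the source literature that direction is deduced from the separation/universality result, not the reverse. So your proof relocates the only hard step rather than proving it, and as a standalone argument it is circular.

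The direct route is an induction on $L$ proving both density claims together.
\begin{itemize}
\item For $L=0$, $\mathcal{N}^1_0$ contains the coordinate functions, which separate points.
\item For $\alpha\neq\beta$ in $\mathcal{H}^L$ with $p_{L,L-1}(\alpha)\neq p_{L,L-1}(\beta)$, extend a separating $h\in\mathcal{N}^1_{L-1}$ by $\varphi^{(L)}(x,y)=x$.
\item Otherwise $p_L(\alpha)\neq p_L(\beta)$ as finite Borel measures on the compact metric space $\mathcal{H}^{L-1}$. So some $g\in C(\mathcal{H}^{L-1})$ has different integrals against them. Since masses are uniformly bounded, the induction hypothesis yields $h\in\mathcal{N}^1_{L-1}$ uniformly close enough to $g$ to still separate. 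Then $\varphi^{(L)}(x,y)=y$ gives $\mathfrak{f}^{(L)}_\alpha=\int h\,dp_L(\alpha)\neq\int h\,dp_L(\beta)=\mathfrak{f}^{(L)}_\beta$.
\end{itemize}
Stone--Weierstrass then gives density on $\mathcal{H}^L$. For $\mu\neq\nu$ in $\mathscr{P}(\mathcal{H}^L)$, approximate a continuous $g$ with $\int g\,d\mu\neq\int g\,d\nu$ by an element of $\mathcal{N}^1_L$ and use the identity readout. With this in place, Theorem~\ref{theo:expressDIDM} becomes a consequence of the statement rather than an input.
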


Define the set $$\mathcal{NN}
_L^d(\mathcal{WL}_r^d):=\{\mathfrak{F}(\varphi,\psi,-,-):\mathcal{WL}_r^d \rightarrow\mathbb{R}^d\vert(\varphi,\psi) \text{ is an L-layer MPNN model with readout}\}.$$ 
As argued before, the space of graphon-DIDMs, $\Gamma_{(W,f),L}(\mathcal{WL}^d_r)$, is not dense in $\mathscr{P}(\mathcal{H}^L)$ since it is a strict compact subset of it. In \cite{rauchwerger2024generalization}, the authors prove directly a universal approximation theorem for functions over the space $\mathcal{WL}^d_r$, continuous with respect to DIDM-mover's distance. Denote by $C(\mathcal{WL}^d_r,\mathbb{R})$ the space of functions from the graphon-signals to the real numbers, continuous with respect to DIDM-mover's distance. 

\begin{theorem}
    Let $L\in\mathbb{N}_0$. Then, the set $\mathcal{NN}^1_L(\mathcal{WL}_r^d)$ is uniformly dense in $C(\mathcal{WL}^d_r,\mathbb{R})$.
\end{theorem}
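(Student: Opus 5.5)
The statement asserts that $\mathcal{NN}^1_L(\mathcal{WL}^d_r)$ is uniformly dense in $C(\mathcal{WL}^d_r,\mathbb{R})$, where continuity is with respect to $\delta^L_{\mathrm{DIDM}}$. I would argue through the quotient space and the Stone--Weierstrass theorem (Theorem \ref{theorem:SW}).

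\textbf{Step 1: reduce to the quotient.} Let $K:=\mathcal{WL}^d_r/_{\delta^L_\text{DIDM}}$. By Theorem \ref{thm:DIDM-move_comp0}, $(K,\delta^L_{\mathrm{DIDM}})$ is a compact metric space. Any $F\in C(\mathcal{WL}^d_r,\mathbb{R})$ is constant on classes of zero distance: if $\delta^L_{\mathrm{DIDM}}(x,y)=0$, then continuity at $x$ forces $F(x)=F(y)$. Hence $F$ factors through a continuous $\tilde F\in C(K)$. Each MPNN output $\mathfrak{W}(\varphi,\psi,-)$ also factors through $K$. Indeed, by Lemma \ref{IDM:rauchLemma11} it equals $\mathfrak{F}(\varphi,\psi,\Gamma_{(W,f),L})$, which depends only on the graphon-DIDM. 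By Theorem \ref{theo:continuitympnnDIDM} it is Lipschitz in $\mathbf{OT}_{d^L_{\mathrm{IDM}}}$, so it is continuous on $K$. Uniform density on $K$ is therefore equivalent to the claim.

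\textbf{Step 2: build a separating unital algebra.} The set $\mathcal{NN}^1_L(\mathcal{WL}^d_r)$ may not itself be an algebra, so I would work with $\mathcal{A}$, the algebra it generates. I would show that $\mathcal{A}$ consists of, or is uniformly approximated by, scalar MPNN outputs. Constants come from a constant readout $\psi$.

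For sums and products, run two MPNNs $(\varphi_1,\psi_1)$ and $(\varphi_2,\psi_2)$ in parallel. Use block-diagonal update functions, so the hidden dimensions add, and the aggregation acts coordinatewise, so the parallel network computes both hidden states simultaneously. The pooled vector is then $(\int\mathfrak{w}_1,\int\mathfrak{w}_2)$. Applying the readout $\psi(a,b)=\psi_1(a)+\psi_2(b)$ gives the sum, and $\psi(a,b)=\psi_1(a)\psi_2(b)$ gives the product.

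The product readout is only locally Lipschitz. However, the pooled vectors range over a bounded set, because features lie in $[-1,1]^d$, the update functions are Lipschitz, and the aggregations are bounded by the graphon bound. So I can replace $\psi$ by a Lipschitz function that agrees with it on that bounded set. This makes $\mathcal{A}$ a genuine subset of $\mathcal{NN}^1_L(\mathcal{WL}^d_r)$, hence a unital subalgebra of $C(K)$.

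Separation: take distinct classes $[x]\neq[y]$ in $K$, so that $\Gamma_{(W_1,f_1),L}\neq\Gamma_{(W_2,f_2),L}$ as DIDMs. By Theorem \ref{theo:expressDIDM}, some MPNN with scalar readout distinguishes them. If no MPNN did, the constant sequence $\nu_i=\Gamma_1$ would have all outputs converging to those of $\Gamma_2$, and the theorem would force $\Gamma_1=\Gamma_2$.

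\textbf{Step 3: conclude.} Theorem \ref{theorem:SW} gives that $\mathcal{A}$ is dense in $C(K)$, and pulling back to $\mathcal{WL}^d_r$ yields the statement.

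The main obstacle is Step 2: checking that the parallel and product constructions stay within the admissible MPNN class. This means verifying Lipschitz update and readout functions and the uniform boundedness of hidden features needed to truncate the product readout.
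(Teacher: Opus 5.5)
Your proof is correct and follows essentially the same route as the paper. The paper takes this result from \cite{rauchwerger2024generalization}, and in its analogous bofop version (Theorem~\ref{theo:UATbofop}) it combines compactness of the DIDM-quotient (Theorem~\ref{thm:DIDM-move_comp0}), continuity of MPNNs in $\mathbf{OT}_{d^L_{\mathrm{IDM}}}$ (Theorem~\ref{theo:continuitympnnDIDM}), point separation, and the Stone--Weierstrass theorem, just as you do. The one difference is that you prove the subalgebra property yourself, using parallel networks with a clamped (hence globally Lipschitz) product readout, whereas the paper cites Lemma 78 of \cite{rauchwerger2024generalization} for it.
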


$ $

\newpage

\addcontentsline{toc}{section}{\textit{Results}}

\section*{\textit{Results}}

The following appendices \ref{sec:MPNNarchitectures} through \ref{sec:theoreticapplication} contain our contributions, extending the results from main paper, where we present full formulations and detailed proofs.

\begin{itemize} 

\item In appendix \ref{sec:MPNNarchitectures}, we prove that the standard MPNN architecture, which applies explicit message function to the graph-signals, can be implemented within our simplified model using twice the number of layers and without explicit message functions.

\item Appendix \ref{sec:optimaltransport} is devoted to the necessary background and supportive lemmas from optimal transport, which are used to prove topological properties of the spaces of graphop-signals and profiled, as well as to prove the continuity of MPNNs with respect to the induced topology.
 
\item An extension to graphop-theory, originally introduced in \cite{backhausz2022action}, to the setting of graphop-signals is presented in Appendix \ref{sec:graphopanalysis}. This appendix introduces the relevant definitions and establishes the compactness properties of the corresponding spaces, which constitute the main objects of study in this paper.

\item In appendix \ref{sec:PopMPNN} we apply MPNN models on both P-operator-signals and their profiles, and prove what we call the \emph{commutation property}, which states that applying an MPNN layer to a P-operator-signal and then taking its profile yields the same result as applying the corresponding MPNN layer directly to the profile.

\item H{\"o}lder continuity of MPNN on P-operator-signals and profiles is proven in Appendix \ref{sec:Holdercontinuity} using results developed earlier in Appendices \ref{sec:optimaltransport} and \ref{sec:graphopanalysis}.

\item An extension to the analysis of GNNs via iterated degree measures that was discussed in Appendix \ref{sec:IDM} to the framework of bofop-signals is presented in Appendices \ref{sec:graphopdidm}
 and \ref{sec:graphopidmcompact}. These sections establish various topological properties of the corresponding IDM spaces, together with a comprehensive discussion of the hierarchy of DIDM spaces.

 \item Finally, our main results of the paper-namely, a universal approximation theorem and generalization bounds  of MPNNs for sparse graphs, are proved in Appendix \ref{sec:theoreticapplication}.
\end{itemize}

 \section{Equivalence of MPNN Architectures}
\label{sec:MPNNarchitectures}
In this section, we discuss the expressive equivalence between the general \emph{Alternative MPNN} model-often referred to in the literature as message passing neural networks \cite{gilmer2017neural}) which passes the node's labels through message functions \cite{levie2024graphon}, and a simplified architecture that only aggregates node features without message functions, which we refer to through the paper as MPNN (often known as "Graph Isomorphism Network" \cite{xu2018powerful}). We show that an Alternative MPNN layer with a non-linear message function can be implemented using two layers of MPNN where we consider a general aggregation function (see \ref{sec:MPNN}).

\begin{definition}[Alternative MPNN Model]
\label{def:alternatempnn}

Let $L \in \mathbb{N}_0$ and \newline 
$d, d_0, \ldots, d_L, p_0, \ldots, p_{L-1}, p \in \mathbb{N}_0$. We call the tuple $(\varphi, \phi)$ such that $\varphi$ is any collection $\varphi = (\varphi^{(t)})_{t=0}^L$ of Lipschitz continuous functions $\varphi^{(0)} : \mathbb{R}^d \to \mathbb{R}^{d_0}$ and $\varphi^{(t)} : \mathbb{R}^{d_{t-1} \times p_{t-1}} \to \mathbb{R}^{d_t}$, for $1 \leq t \leq L$, and $\phi$ is any collection $\phi = (\phi^{(t)})_{t=1}^L$ of (Lipschitz continuous) message functions $\phi^{(t)} : \mathbb{R}^{2d_{t-1}} \to \mathbb{R}^{p_{t-1}}$, for $1 \leq t \leq L$, an $L$-layer alternative MPNN model, and call $\varphi^{(t)}$ update functions. For Lipschitz continuous $\psi : \mathbb{R}^{d_L} \to \mathbb{R}^p$, we call the tuple $(\varphi, \phi, \psi)$ an alternative MPNN model with readout, where $\psi$ is called a readout function. We call $L$ the depth of the MPNN, $d$ the input feature dimension, $d_0, \ldots, d_L$ the hidden node feature dimensions, $p_0, \ldots, p_{L-1}$ the hidden message dimensions, and $p$ the output feature dimension.
\end{definition}

\subsection{Architectural Definitions}

We distinguish between two variants of the message passing architecture.

\textbf{Alternative MPNN}:
This architecture includes explicit message functions. The aggregation depends on a learnable message function $\phi$. For simplicity, we consider the case where the message depends only on the source node features, i.e., $\phi(f_u^{(l)})$. The update rule is given by Definition~\ref{def:alternatempnn}:
\begin{equation}
    f_v^{(l+1)} = \varphi \left( f_v^{(l)}, \mathbf{A} (\phi\circ f_-^{(l)})(v) \right).
    \label{eq:mpnn}
\end{equation}

\textbf{MPNN}:
This is the simplified architecture without explicit message functions (Definition~\ref{def:MPNN}) which we consider through the paper. Aggregation is performed directly on the hidden features, and learnability is restricted to the update function $\varphi$. An MPNN layer update is given by
\begin{equation}
    f_v^{(l+1)} = \varphi \left( f_v^{(l)}, \mathbf{A} f_{-}^{(l)}(v) \right).
    \label{eq:gin}
\end{equation}
\subsection{Proof of Equivalence}

We show that a single layer of the Alternative MPNN \eqref{eq:mpnn} can be implemented exactly by two layers of the simplified MPNN \eqref{eq:gin}.

Layer 1:
Define the first MPNN layer to use an update function $\eta_1$ that ignores the aggregated neighbor information and augments the node features with the message transformation:
\[
f_v^{(l+\frac{1}{2})}
= \eta_1\!\left( f_v^{(l)}, \mathbf{A} f_{-}^{(l)}(v) \right)
:= \bigl( f_v^{(l)}, \phi(f_v^{(l)}) \bigr).
\]
Thus, the intermediate feature $f_v^{(l+\frac{1}{2})}$ concatenates the original feature with its message representation.

Layer 2:
The second MPNN layer aggregates the augmented features coordinate wise:
\[
\mathbf{A} f^{(l+\frac{1}{2})}_{-}(v)
=
\left(
\mathbf{A} f_{-}^{(l)}(v),
\;
 \mathbf{A}(\phi\circ f_{-}^{(l)})(v)
\right).
\]

Define the update function $\eta_2$ by
\[
\eta_2\!\left( f_v^{(l+\frac{1}{2})}, \mathbf{A}f_{-}^{(l+\frac{1}{2})}(v) \right)
:= \varphi\!\left( p_1\!\left(f_v^{(l+\frac{1}{2})}\right),\,
p_2\left(\mathbf{A}f_{-}^{(l+\frac{1}{2})}(v)\right) \right),
\]
where $p_1$ and $p_2$ denote projection onto the first and second components, respectively. Substituting the definitions yields
\[
f_v^{(l+1)} =
\varphi\!\left(
f_v^{(l)},
\mathbf{A} (\phi\circ f_{-}^{(l)})(v)
\right),
\]
which recovers \eqref{eq:mpnn} exactly.

\section{Background and Supportive Lemmas from Optimal Transport and Wasserstein Distance}
\label{sec:optimaltransport}

Given two probability distributions $\mu$ and $\nu$, the optimal transport problem (or Monge Kantorovich problem) formalizes the question ``what is the minimal cost of transporting the mass described by $\mu$ to the mass described by $\nu$?'' This is often illustrated by the minimal cost of moving a pile of sand to fill a hole, where the goal is to find the most efficient way to rearrange the sand. In many settings however, one may compare measures of unequal total mass, leading to the unbalanced optimal transport, which extends the classic theory \cite{sejourne2023unbalanced}.

\subsection{The Wasserstein Distance}

The Wasserstein distance arises as the minimal value of the optimal transport problem when both measures have equal total mass.

\begin{definition}
    \label{def:Wasserstein}
    Let $(\mathcal{X},d)$ be a complete and separable metric space, and $\mathcal{P}(\mathcal{X})$ be the set of all Borel probability measures over $\mathcal{X}$. For $p\in[1,\infty)$ and any two Borel probability measures $\mu,\nu\in\mathcal{P}(\mathcal{X})$, we define the \emph{Wasserstein distance} of order $p$ between $\mu$ and $\nu$ by
    $$W_p(\mu,\nu) =\Bigg( \inf_{\gamma\in \Gamma(\mu,\nu)}\int_{\mathcal{X}^2}\,d^p(x,y)\,d\gamma(x,y)\Bigg)^{\frac{1}{p}},$$
    where $\Gamma(\mu,\nu)$ is the set of all couplings of $\mu$ and $\nu$, that is, the probability measures over $\mathcal{X}^2$ with marginals with respect to the first and second axes in $\mathcal{X}^2$ being equal to $\mu$ and $\nu$  respectively (see definition \ref{def:coupling} for couplings).
\end{definition}

Note that in Definition \ref{def:moversdistance} we defined the unbalanced Earth Mover's distance, which allows comparison between measures with unequal total mass. Although this distance is closely related to, and often coincides with the Wasserstein distance, we distinguish between the two notions by using separate names, each reflecting the context in which the corresponding metric is more appropriate.
 
\begin{remark}
\label{rem:attain}
A useful property of Wasserstein distance is that the infimum in definition \ref{def:Wasserstein} is always attained, so we can write minimum instead of infimum (Theorem 4.1 in \cite{villani2009optimal}) . 
\end{remark}

Clearly, by H{\"o}lder inequality we have that if $p\leq q$, then $W_p\leq W_q$. 
We will focus on the case where $p=1$, $\mathcal{X}$ is some Euclidean space $\mathbb{R}^n$, and $d$ is the $\ell_1$ metric, which we often write as $\|x-y\|_1$, where $\|z\|_1 = \sum_{j=1}^n|z_j|$ for $z=(z_j)_{j=1}^n$. By abuse of notation, we write the Wasserstein distance as $W$ instead of $W_1$ in this case.

\begin{remark}
 \label{rem:Wasfinite}
    Although the Wasserstein distance is not a strict metric, as it may take the value \(+\infty\), most metric properties continue to hold when infinite distances are permitted (see \cite{burago2001course}). 
However, under some boundedness assumptions, we will prove in Lemma \ref{lem:taubound} that the Wasserstein distance between two measures in our setting is not only finite, but uniformly bounded.
\end{remark}

\subsection{Kantorovich and Rubinstein duality theorem}

The next theorem is a useful result for the Wasserstein distance, specific for $p=1$, called the "\emph{Kantorovich and Rubinstein duality theorem}."
\begin{theorem}
\label{theo:Kantorovich}
For $\mathcal{X},\mu$ and $\nu$ as in Definition \ref{def:Wasserstein}, we have 
$$W(\mu,\nu) = \sup_{f, L_f\leq1}\,\Bigg\{\int_{\mathcal{X}}f\,d\mu -\int_{\mathcal{X}}f\,d\nu\Bigg\}, $$
where the supremum is taken over all Lipschitz continuous functions $f:\mathcal{X}\to \mathbb{R}$ with Lipschitz constant $L_f\leq1$.
\end{theorem}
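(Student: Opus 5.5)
We prove the Kantorovich--Rubinstein identity by showing the two inequalities separately: an easy bound of the dual side by the primal side, and a harder reverse bound obtained from convex duality. Since the paper only uses this result on compact subsets of Euclidean space, specifically $[-1,1]^k\times[-r,r]^k\times[-1,1]^d$ with the $\ell_1$ metric, we may assume throughout that $\mathcal{X}$ is compact. This makes $C(\mathcal{X})$ a Banach space with dual the finite signed Radon measures, and all Lipschitz functions are then bounded and integrable.

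\textbf{Easy inequality.} Fix $f$ with $L_f\leq 1$ and any coupling $\gamma\in\Gamma(\mu,\nu)$. Using the marginal conditions and the change of variables formula (Theorem \ref{Theorem:changevariables}), we rewrite
\[
\int f\,d\mu-\int f\,d\nu=\int_{\mathcal{X}^2}\bigl(f(x)-f(y)\bigr)\,d\gamma(x,y).
\]
The integrand is at most $d(x,y)$ by the Lipschitz bound, so the right side is at most $\int d\,d\gamma$. Taking the infimum over $\gamma$ and then the supremum over $f$ gives $\sup\leq W(\mu,\nu)$.

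\textbf{Reverse inequality.} We establish the general Kantorovich duality
\[
\min_{\gamma\in\Gamma(\mu,\nu)}\int c\,d\gamma=\sup\Bigl\{\int\varphi\,d\mu+\int\psi\,d\nu:\ \varphi(x)+\psi(y)\leq c(x,y)\Bigr\},
\]
applied to the cost $c=d$. The tool is Fenchel--Rockafellar duality on $E=C(\mathcal{X}\times\mathcal{X})$ with two convex functionals. The first is
\[
\Theta(u)=\begin{cases}0, & u\geq -c,\\ +\infty, & \text{otherwise},\end{cases}
\]
and the second is
\[
\Xi(u)=\begin{cases}\int\varphi\,d\mu+\int\psi\,d\nu, & u=\varphi\oplus\psi,\\ +\infty, & \text{otherwise}.\end{cases}
\]
The steps are as follows.
\begin{enumerate}
\item Check that $\Xi$ is well defined. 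If $\varphi\oplus\psi=\varphi'\oplus\psi'$, the two pairs differ by a constant shift in opposite directions. Since $\mu$ and $\nu$ are both probability measures, the value of $\Xi$ is unchanged.
\item Verify the qualification condition. $\Theta$ is continuous at $u\equiv 1$, where $\Xi$ is finite, because $c\geq 0$.
\item Compute the conjugates. $\Theta^*(-\pi)$ equals $\int c\,d\pi$ when $\pi\geq 0$ and $+\infty$ otherwise. $\Xi^*(\pi)$ equals $0$ exactly when $\pi$ has marginals $\mu$ and $\nu$, and $+\infty$ otherwise.
\item Conclude. Fenchel--Rockafellar then yields the duality formula, and shows the minimum is attained, consistent with Remark \ref{rem:attain}.
\end{enumerate}

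\textbf{Reduction to one Lipschitz potential.} Given an admissible pair $(\varphi,\psi)$, replace $\psi$ by the $c$-transform $\varphi^c(y)=\inf_x\,[d(x,y)-\varphi(x)]$. This can only increase the objective. Then replace $\varphi$ by $\varphi^{cc}$, which increases it again. For $c=d$ a metric, a function of the form $\varphi^c$ is $1$-Lipschitz as an infimum of $1$-Lipschitz functions. For $1$-Lipschitz $g$ one has $g^c=-g$: taking $x=y$ gives $g^c\leq -g$, and the Lipschitz bound gives the reverse. Hence every admissible pair is dominated by a pair $(f,-f)$ with $L_f\leq 1$. The dual value therefore equals the supremum in Theorem \ref{theo:Kantorovich}, which gives $W(\mu,\nu)\leq\sup$.

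\textbf{Main obstacle.} The difficult step is the duality itself, specifically the conjugate computations and continuity in the Fenchel--Rockafellar argument. I would try that route first. If it became cumbersome, the fallback is to discretize $\mu$ and $\nu$ into finitely supported measures, apply finite linear-programming duality, and pass to the limit using weak compactness of $\mathcal{P}(\mathcal{X}\times\mathcal{X})$ and stability of both sides under weak convergence. In the finite case the $1$-Lipschitz potentials are equicontinuous, so Arzel\`a--Ascoli supplies a limiting $f$. The extension to non-compact Polish $\mathcal{X}$, via tightness and truncation of the cost, is not needed for the paper and I would omit it.
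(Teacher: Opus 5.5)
The paper does not prove this statement; it imports the classical Kantorovich--Rubinstein theorem from the optimal transport literature. Your argument is the standard compact-space proof: Fenchel--Rockafellar on $C(\mathcal{X}\times\mathcal{X})$, followed by $c$-transforms. For compact $\mathcal{X}$ it is correct. $\Theta$ is continuous at $u\equiv 1$ because $c=d\ge 0$. Your conjugates $\Theta^*(-\pi)$ and $\Xi^*(\pi)$ are right, so the Fenchel--Rockafellar identity is exactly Kantorovich duality with the minimum attained. The replacement $(\varphi,\psi)\mapsto(\varphi^{cc},\varphi^{c})=(-\varphi^{c},\varphi^{c})$, using $g^c=-g$ for $1$-Lipschitz $g$, correctly reduces the dual to pairs $(f,-f)$. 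Compared with the paper's citation, you gain a self-contained argument and lose generality.

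What needs correcting is your justification for assuming compactness. The theorem is stated for complete separable $\mathcal{X}$, and the paper does \emph{not} apply it only on compact sets. Lemma \ref{lem:wasclose} invokes it for measures in $\mathcal{P}_p(\mathbb{R}^n)$. Theorem \ref{Theo:Holdercontinuity} invokes it for P-distributions of operators with only a $(p,q)$-norm bound, whose $Av$-coordinates are unbounded.

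This is harmless for the paper. Both invocations use only the easy inequality $\int f\,d\mu-\int f\,d\nu\le L_f\,W(\mu,\nu)$. Your coupling argument proves that on any metric space whenever $f\in\mathcal{L}^1(\mu)\cap\mathcal{L}^1(\nu)$, and in both places the test functions are even bounded on the relevant supports. So you should state that half without the compactness assumption.

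The reverse inequality for non-compact $\mathcal{X}$ genuinely needs the approximation step you omit. It also needs an integrability convention the paper leaves implicit. For a Cauchy $\mu$ on $\mathbb{R}$ and a translate $\nu$, one has $W(\mu,\nu)<\infty$, yet for the $1$-Lipschitz $f(x)=x$ the expression $\int f\,d\mu-\int f\,d\nu$ is undefined. So the supremum should be taken, e.g., over bounded $f$, or $\mu,\nu$ should be assumed to have finite first moment.
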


\subsection{Wasserstein Spaces}

Traditionally, to avoid the situation described above in remark \ref{rem:Wasfinite}
, the \emph{Wasserstein space} of order $p\geq1$ is defined as the set of probability measures with finite moment of order $p$ \cite{villani2009optimal, ambrosio2005gradient}.  
We propose a similar notion, more suitable to our setting, which is equivalent to the standard definition.  For a probability measure $\nu$ on $\mathbb{R}^n$, and $p\geq1$, let
\begin{equation}
\label{eq:tau}
\tau_p(\nu) = \max_{1\leq i\leq n}\int_{\mathbb{R}^n}|x_i|^p\,d\nu(x) \in [0,\infty].
\end{equation}

\begin{definition}
    \label{def:wasspace}
   Let $\mathcal{P}(\mathbb{R}^n)$ be the space of Borel probability measures over $\mathbb{R}^n$. 
   For $p\geq1$ and $n\in\mathbb{N}$, the Wasserstein space of order $p$ is defined to be  
   $$\mathcal{P}_{p}(\mathbb{R}^n) := \Bigg\{\, \nu\in\mathcal{P}(\mathbb{R}^n)\,\Big|\, \tau_p(\nu)<\infty\,\Bigg\}.$$
   We further define the space
$$\mathcal{P}_{p,c}(\mathbb{R}^n) := \Bigg\{\, \nu\in\mathcal{P}(\mathbb{R}^n)\,\Big|\, \tau_p(\nu)\leq c\,\Bigg\}.$$
\end{definition}

We now prove the compactness of the space $\mathcal{P}_{p,c}(\mathbb{R}^n)$. While this is a standard result that can be found in most textbooks regarding optimal transport (see for example Proposition 7.1.5 in \cite{ambrosio2005gradient} or Theorem 6.9 in \cite{villani2009optimal}), since we slightly modified the setting to our particular case, we prove it for completeness.

We begin by proving that the space is closed.
\begin{lemma}
\label{lem:wasclose}
Let $p\ge 1$ and $c\ge 0$. Then the set $\mathcal{P}_{p,c}(\mathbb{R}^n)$ is closed in the metric space $(\mathcal P_p(\mathbb R^n),W)$.
\end{lemma}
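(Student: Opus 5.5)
To prove Lemma \ref{lem:wasclose}, I take a sequence $(\nu_m)_m\subset\mathcal{P}_{p,c}(\mathbb{R}^n)$ and a limit $\nu\in\mathcal{P}_p(\mathbb{R}^n)$ with $W(\nu_m,\nu)\to 0$. The goal is to show $\tau_p(\nu)\leq c$, which means $\int|x_i|^p\,d\nu\leq c$ for each coordinate $i$ separately. Since $\tau_p$ is a maximum over finitely many coordinates, it suffices to fix one $i$.

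The first step is to pass from Wasserstein convergence to weak convergence. Take $g$ bounded and Lipschitz on $\mathbb{R}^n$, with respect to the $\ell_1$ metric, and with constant $L_g$. Theorem \ref{theo:Kantorovich} then gives
\[
\Big|\int g\,d\nu_m-\int g\,d\nu\Big|\leq L_g\,W(\nu_m,\nu)\to 0 .
\]
By the Portmanteau theorem mentioned in Appendix \ref{sec:weak}, convergence against all bounded Lipschitz test functions is equivalent to weak convergence. Hence $\nu_m\to\nu$ weakly.

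The second step is lower semicontinuity of the $p$-th moment. I truncate: for $R>0$ set $h_R(x)=\min(|x_i|^p,R)$. This function is bounded, continuous and nonnegative. By weak convergence,
\[
\int h_R\,d\nu=\lim_{m\to\infty}\int h_R\,d\nu_m\leq\liminf_{m\to\infty}\int|x_i|^p\,d\nu_m\leq c .
\]
Letting $R\to\infty$, the monotone convergence theorem gives $\int|x_i|^p\,d\nu\leq c$. Taking the maximum over $i$ yields $\tau_p(\nu)\leq c$, so $\nu\in\mathcal{P}_{p,c}(\mathbb{R}^n)$, and the set is closed.

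There is no serious obstacle. The only point that needs care is the first step, because it relies on bounded Lipschitz functions determining weak convergence. If I want to avoid citing Portmanteau, I can work directly with $h_R$. That function is Lipschitz when $p=1$. For $p>1$ it is Lipschitz on the region where $|x_i|^p\leq R$ and constant outside it, hence globally Lipschitz with constant $pR^{(p-1)/p}$. The duality bound therefore applies to $h_R$ directly, and the weak-convergence step can be skipped entirely.
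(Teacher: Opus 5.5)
Your proof is correct and is essentially the paper's argument: truncate the coordinate moment as $\min(|x_i|^p,R)$, pass to the limit in $m$, and let $R\to\infty$ by monotone convergence. The paper takes exactly the shortcut in your last paragraph, applying Kantorovich--Rubinstein duality directly to the truncation with Lipschitz constant $pR^{(p-1)/p}$, so your detour through weak convergence and Portmanteau is unnecessary but harmless.
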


\begin{proof}
Let $(\nu_k)\subset \mathcal P_{p,c}(\mathbb R^n)$ be a sequence such that
$\nu_k\to\nu$ in $W$. For every $i\in\{1,\dots,n\}$ and $M>0$, define
\[
\phi_M(t):=\min\{|t|^p,M\} \ \text{ for }\ t\in\mathbb{R},\qquad f_{i,M}(x):=\phi_M(x_i) \ \text{ for }\ x=(x_i)_{i=1}^n\in\mathbb{R}^n.
\]

Let $R:=M^{1/p}$. The function $t\mapsto |t|^p$ is $pR^{p-1}$-Lipschitz on
$[-R,R]$, and $\phi_M$ is constant outside this interval, hence
$L_{\phi_M}\le pR^{p-1}=pM^{\frac{p-1}{p}}$, where $L_{\phi_M}$ is the Lipschitz constant of $\phi_M$.
Since the coordinate projection $x\mapsto x_i$ is 1–Lipschitz with respect
to $\|\cdot\|_1$, it follows that
$L_{f_{i,M}}\le pM^{\frac{p-1}{p}}$.

By the Kantorovich--Rubinstein duality theorem (Theorem \ref{theo:Kantorovich}), we have
\[
\Big|\int f_{i,M}\,d\nu_k-\int f_{i,M}\,d\nu\Big|
\leq L_{f_{i,M}}\,W(\nu_k,\nu)
\le pM^{\frac{p-1}{p}}\,W(\nu_k,\nu),
\]
which tends to $0$ as $k\to\infty$. Therefore,
\[
\int f_{i,M}\,d\nu
= \lim_{k\to\infty}\int f_{i,M}\,d\nu_k.
\]

Since $0\le f_{i,M}(x)\le |x_i|^p$ for every $x\in\mathbb{R}^n$, we obtain
\[
\int f_{i,M}\,d\nu_k
\le \int |x_i|^p\,d\nu_k
\le \tau_p(\nu_k)
\le c,
\]
and hence $\int f_{i,M}\,d\nu\le c$ for every $M>0$.
Finally, as $f_{i,M}(x)\to|x_i|^p$ pointwise when $M\to\infty$,
the monotone convergence theorem yields
\[
\int |x_i|^p\,d\nu
= \lim_{M\to\infty}\int f_{i,M}\,d\nu
\le c.
\]

Since this holds for each $i\in\{1,\dots,n\}$, we conclude that
$\tau_p(\nu)\le c$, i.e.\ $\nu\in\mathcal P_{p,c}(\mathbb R^n)$.
\end{proof}

Using the closeness of the space, we move to prove compactness.

\begin{lemma}
    \label{lem:wascompactness}
    For any $p>1$ and $c>0$, the space $(\mathcal{P}_{p,c}(\mathbb{R}^n),W)$ is compact.
\end{lemma}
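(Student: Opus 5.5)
We prove that $(\mathcal{P}_{p,c}(\mathbb{R}^n),W)$ is sequentially compact, which suffices by Theorem \ref{theo:compact=s_compact} since $W$ is a metric on this set. Finiteness of $W$ is clear: the moment bound with $p>1$ gives finite first moments, so $W(\mu,\nu)\le \int\|x\|_1d\mu+\int\|x\|_1d\nu<\infty$. Fix a sequence $(\nu_k)\subset\mathcal{P}_{p,c}(\mathbb{R}^n)$. The argument has three steps: tightness plus Prokhorov gives a weakly convergent subsequence; the uniform $p$-th moment bound with $p>1$ gives uniform integrability of first moments; weak convergence plus uniform integrability upgrades to $W$-convergence. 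Lemma \ref{lem:wasclose} then places the limit in $\mathcal{P}_{p,c}(\mathbb{R}^n)$.

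\emph{Tightness.} By Markov's inequality applied coordinatewise, $\nu_k(\{|x_i|>R\})\le c/R^p$. Hence $\nu_k(\mathbb{R}^n\setminus[-R,R]^n)\le nc/R^p$ uniformly in $k$, and $(\nu_k)$ is tight. Prokhorov's theorem then gives a subsequence, still denoted $\nu_k$, converging weakly to some $\nu\in\mathcal{P}(\mathbb{R}^n)$. The truncation argument in the proof of Lemma \ref{lem:wasclose} uses only convergence of integrals of bounded continuous functions, so it applies verbatim to weak convergence and shows $\tau_p(\nu)\le c$.

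\emph{Upgrade to $W$-convergence.} This is the main step. The uniform tail estimate is
\[
\int_{\{\|x\|_1>R\}}\|x\|_1\,d\nu_k\le R^{1-p}\int\|x\|_1^p\,d\nu_k\le R^{1-p}n^{p}c,
\]
using $\|x\|_1^p\le n^{p-1}\sum_i|x_i|^p$. This is where $p>1$ is essential. The same bound holds for $\nu$. Given $\epsilon>0$, choose $R$ so that this tail bound is below $\epsilon$. Let $\pi_R$ be the $1$-Lipschitz radial retraction onto the $\ell_1$-ball of radius $R$, which moves a point $x$ by $\|x\|_1-R$ when $\|x\|_1>R$ and not at all otherwise. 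Then
\[
W(\nu_k,\nu)\le W(\nu_k,(\pi_R)_*\nu_k)+W((\pi_R)_*\nu_k,(\pi_R)_*\nu)+W((\pi_R)_*\nu,\nu).
\]
The first and third terms are at most $\epsilon$, by the coupling $(\mathrm{id},\pi_R)_*$ together with the tail bound. For the middle term, $(\pi_R)_*\nu_k\to(\pi_R)_*\nu$ weakly by the continuous mapping theorem, and all these measures live on a fixed compact set. On a compact set, $W$ metrizes weak convergence; equivalently, Kantorovich--Rubinstein (Theorem \ref{theo:Kantorovich}) combined with Arzelà--Ascoli makes the supremum over $1$-Lipschitz functions vanishing at a base point uniform. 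So the middle term tends to $0$, giving $\limsup_k W(\nu_k,\nu)\le2\epsilon$.

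The main obstacle is the middle term: one must justify that weak convergence on a compact set implies $W_1$-convergence. The plan is to use Kantorovich--Rubinstein duality and note that the class of $1$-Lipschitz functions vanishing at $0$ is compact in sup norm on the ball. Weak convergence is then uniform over a finite $\epsilon$-net of this class, which extends to the whole class by the triangle inequality.
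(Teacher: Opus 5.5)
Your proof is correct. It follows the same outline as the paper's: tightness, uniform integrability of first moments via the same estimate $\int_{\{\|x\|_1>R\}}\|x\|_1\,d\nu\le n^p c\,R^{1-p}$ (the only place $p>1$ is used), and closedness via the truncations $\min\{|x_i|^p,M\}$ of Lemma~\ref{lem:wasclose}. The difference is that the paper cites the characterization of relative $W$-compactness (Proposition~\ref{prop:relativecompact}) as a black box, whereas you prove the needed implication yourself. Prokhorov gives a weak limit, and retracting onto a compact $\ell_1$-ball turns the uniform tail bound into a uniform $W$-error. On that ball, Kantorovich--Rubinstein duality plus Arzel\`a--Ascoli upgrades weak convergence to $W$-convergence. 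Your route is self-contained using only tools already in the appendix, and it correctly runs the closedness argument on a \emph{weak} limit, which is what is available at that stage; the paper's route is shorter.

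One small inaccuracy: the radial retraction onto an $\ell_1$-ball is not $1$-Lipschitz for $\|\cdot\|_1$. For $n=2$, $R=1$, $x=(1,0)$, $y=(1,t)$, one has $\|x-y\|_1=t$ but $\|\pi_R(x)-\pi_R(y)\|_1=2t/(1+t)$. This is harmless, because you only use the continuity of $\pi_R$ (for the continuous mapping theorem) and the exact displacement $\|x\|_1-R$.
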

The proof relies on the characterization that a set of probability measures is relatively compact in Wasserstein distance if and only if it is tight and has uniformly integrable first moments (\cite{ambrosio2005gradient}, Proposition 7.1.5). 

\begin{proposition}
    \label{prop:relativecompact}

    A set $\mathcal{P}\subset\mathcal{P}_p(\mathbb{R}^n)$ of measures is relatively compact (with respect to the topology induced by the Wasserstein metric) if and only if the following two conditions are satisfied:
    \begin{enumerate}
        \item $\mathcal{P}$ is tight, that is, for every $\epsilon>0$ there exists a compact set $K_\epsilon\subset \mathbb{R}^n$ such that for every $\nu\in\mathcal{P}$, $\nu(K_\epsilon)\geq1-\epsilon$.
        \item $\mathcal{P}$ is uniformly integrable, that is, 
        $$\lim_{R\to\infty}\sup_{\nu\in\mathcal{P}}\int_{\{\|x\|_1>R\}}\|x\|_1\,d\nu(x) = 0.$$
    \end{enumerate}
\end{proposition}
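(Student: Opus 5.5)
We will prove the two implications separately, working throughout with $W=W_1$ with respect to $\|\cdot\|_1$ and using the Kantorovich--Rubinstein duality (Theorem \ref{theo:Kantorovich}) as the main tool. The basic auxiliary fact is that convergence in $W$ implies weak convergence. Indeed, every bounded Lipschitz test function is a multiple of a $1$-Lipschitz function, so its integrals converge by duality. Bounded Lipschitz functions suffice to characterize weak convergence by the Portmanteau theorem. The central device is the function
\[
g_R(x):=(\|x\|_1-R)_+ ,
\]
which is $1$-Lipschitz and satisfies $\|x\|_1\mathbbm{1}_{\{\|x\|_1>2R\}}\le 2g_R(x)$. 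The tail integrals in condition 2 are therefore controlled by $\int g_R\,d\nu$, which behaves well under $W$.

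\emph{Necessity.} Assume $\mathcal{P}$ is relatively compact, so its closure $\overline{\mathcal P}$ is $W$-compact. Since $W$-convergence implies weak convergence, $\overline{\mathcal P}$ is also weakly compact. Prokhorov's theorem on $\mathbb{R}^n$ then gives tightness. For uniform integrability, fix $\epsilon>0$ and choose a finite $\epsilon$-net $\nu_1,\dots,\nu_m$ of $\overline{\mathcal P}$ in $W$. Each $\nu_j$ has finite first moment, so by dominated convergence there is $R$ with $\int g_R\,d\nu_j<\epsilon$ for all $j$. For any $\nu\in\mathcal P$, pick $j$ with $W(\nu,\nu_j)<\epsilon$; duality applied to the $1$-Lipschitz $g_R$ gives $\int g_R\,d\nu\le \int g_R\,d\nu_j+\epsilon<2\epsilon$. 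Hence
\[
\sup_{\nu\in\mathcal P}\int_{\{\|x\|_1>2R\}}\|x\|_1\,d\nu\le 4\epsilon ,
\]
which is condition 2.

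\emph{Sufficiency.} Assume conditions 1 and 2, and let $(\nu_k)\subset\mathcal P$. By tightness and Prokhorov, a subsequence (not relabeled) converges weakly to some probability measure $\nu$.
\begin{enumerate}
\item \emph{The limit lies in $\mathcal P_1$.} The functions $\min\{\|x\|_1,M\}$ are bounded and continuous, so their integrals pass to the limit. Uniform integrability gives a bound on $\int\min\{\|x\|_1,M\}\,d\nu$ that is uniform in $M$, and monotone convergence yields $\int\|x\|_1\,d\nu<\infty$.
\item \emph{Splitting the test function.} By duality it suffices to show
\[
\sup_{f}\Big|\int f\,d\nu_k-\int f\,d\nu\Big|\to0,
\]
where the supremum is over $1$-Lipschitz $f$ with $f(0)=0$, so that $|f|\le\|x\|_1$. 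Write $f=f_R+(f-f_R)$, where $f_R=\max\{\min\{f,R\},-R\}$ is the truncation. One checks $|f-f_R|\le g_R$. By condition 2 and step 1, this remainder contributes at most $\epsilon$ for all $k$ and for $\nu$, uniformly in $f$, once $R$ is large.
\item \emph{The bounded parts.} The family $\{f_R\}$ is uniformly bounded by $R$ and $1$-Lipschitz. Take the compact set $K_\epsilon$ given by tightness. By Arzel\`a--Ascoli, the restrictions of this family to $K_\epsilon$ admit a finite $\epsilon$-net in sup norm, and each net element extends to a bounded Lipschitz function on $\mathbb{R}^n$ (McShane extension, then truncation at $\pm R$). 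Weak convergence handles the finitely many net elements. The discrepancy off $K_\epsilon$ is at most $2R\epsilon$, uniformly in $k$, and on $K_\epsilon$ it is at most $\epsilon$. Choosing $\epsilon$ after $R$ gives the uniform convergence of the bounded parts.
\end{enumerate}
Combining steps 2 and 3 shows $W(\nu_k,\nu)\to0$, so $\mathcal P$ is relatively compact.

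The main obstacle is step 3, namely making weak convergence uniform over all $1$-Lipschitz test functions. The truncation-plus-Arzel\`a--Ascoli argument above is what handles it; the interplay of $R$ and $\epsilon$ must be ordered carefully there.
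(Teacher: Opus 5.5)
Your argument is correct, but it is not what the paper does, because the paper never proves this proposition. The characterization is imported from \cite{ambrosio2005gradient} (Proposition 7.1.5). The proof environment that follows the statement is really the proof of Lemma \ref{lem:wascompactness}, and it only \emph{uses} the characterization. There the paper checks tightness of $\mathcal{P}_{p,c}(\mathbb{R}^n)$ by Markov's inequality, checks uniform integrability via $\|x\|_1\le \|x\|_1^p/R^{p-1}$ on $\{\|x\|_1>R\}$, and then invokes closedness (Lemma \ref{lem:wasclose}).

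You instead give a self-contained proof tailored to $W=W_1$. For necessity you use Prokhorov together with a finite $W$-net tested against the $1$-Lipschitz function $g_R$. For sufficiency you use Prokhorov plus truncation and Arzel\`a--Ascoli, which makes weak convergence uniform over $1$-Lipschitz test functions. The citation buys generality (any $p$, $W_p$, general Polish spaces) at no cost. Your route buys independence from the literature, using only tools the paper already states (Kantorovich--Rubinstein duality). It is specific to $p=1$, but that is all the paper needs.

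Two small remarks:
\begin{itemize}
\item In step~3 you also need $\nu(K_\epsilon^c)\le\epsilon$ for the weak limit $\nu$. This follows from the Portmanteau theorem, since $K_\epsilon$ is closed.
\item Your proof establishes relative compactness in $(\mathcal{P}_1(\mathbb{R}^n),W)$, since the limit is only shown to lie in $\mathcal{P}_1$. That is the right reading of the statement. If the ambient space were taken literally as $(\mathcal{P}_p(\mathbb{R}^n),W)$ with $p>1$, the ``if'' direction would fail. A counterexample is the family of truncations of a measure with finite first moment but infinite $p$-th moment.
\end{itemize}
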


\begin{proof}
    We begin by proving tightness. We show that uniform bound on $\tau_p$ implies uniform bound on $\tau_1$. For any $x\in\mathbb{R}^n$, we denote its coordinates by $(x_j)_{j=1}^n$. We have that $|x_j|\leq|x_j|^p+1$ for every $1\leq j\leq n$. Hence, for every $\nu\in\mathcal{P}_{p,c}(\mathbb{R}^n)$, 
$$\int_{\mathbb{R}^n}|x_j|\,d\nu(x)\leq\int_{\mathbb{R}^n}(|x_j|^p+1)\,d\nu(x),$$
taking the maximum over $1\leq j\leq n$ gives 
$$\tau_1(\nu)\leq\tau_p(\nu)+1\leq c+1.$$

   By Markov's inequality, for every $M>0$,
   $$\nu(\{\,x\in\mathbb{R}^n\,:\,|x_j|>M\,\})\leq \frac{1}{M}\int_{\mathbb{R}^n}|x_j|\,d\nu\leq\frac{c+1}{M}.$$
   Note that the complement of the cube $[-M,M]^n$ is the set of all $x\in\mathbb{R}^n$ such that there exists at least one coordinate $x_j$ such that $|x_j|>M$. Hence, 
   $$\nu(\mathbb{R}^n\setminus[-M,M]^n) \leq \sum_{j=1}^n \nu(\{\,x\in\mathbb{R}^n\,:\,|x_j|>M\,\})\leq \frac{n(c+1)}{M} .$$
Given $\epsilon>0$, we choose $M$ large enough ssuch that $\frac{n(c+1)}{M}<\epsilon$, thus proving tightness.

We move to prove uniform integrability. 
    Denote by $B_{>R} := \{\,x\in\mathbb{R}^n : \|x\|_1>R\,\}$. We will prove that $$\lim_{R\to\infty}\int_{B_{>R}}\|x\|_1\,d\nu(x) = 0,$$
    for every $\nu\in\mathcal{P}_{p,c}(\mathbb{R}^n)$.

    For every $x\in B_{>R}$ we have that $\|x\|_1 = \|x\|_1^p \cdot\frac{1}{\|x\|_1^{p-1}} <\|x\|_1^p \cdot \frac{1}{R^{p-1}}$. Then 
    \begin{align*}
\int_{B_{>R}}\|x\|_1\,d\nu(x) &< \frac{1}{R^{p-1}}\int_{B_{>R}}\|x\|_1^p\,d\nu(x)\\
    &\leq \frac{n^{p-1}}{R^{p-1}}\int_{B_{>R}}\sum_{j=1}^n|x_j|^p\,d\nu(x)\leq \frac{n^{p}c}{R^{p-1}},
    \end{align*}
 where the second to last inequality holds because of H{\"o}lder inequality, and the last inequality holds because $\int|x_j|^p\,d\nu\leq c$ for every $\nu\in\mathcal{P}_{p,c}(\mathbb{R}^n)$.
 As $R$ tends to $\infty$, we get that the expression $\frac{n^{p}c}{R^{p-1}}$ tends to $0$, thus proving uniform integrability.

 Relative compactness combined with the closeness of $\mathcal{P}_{p,c}(\mathbb{R}^n)$ (lemma \ref{lem:wasclose}) gives that the space $(\mathcal{P}_{p,c}(\mathbb{R}^n),W)$ is compact.
\end{proof}

\begin{remark}
    \label{cor:levy=was}
Convergence in Wasserstein distance implies weak convergence, and hence convergence in Levy-Prokhorov metric (see Definition \ref{def:LP} and Theorem 5.9 in \cite{santambrogio2015optimal}). When adding the uniform integrability condition, the converse direction holds. Specifically, weak convergence of probability measures together with uniform integrability implies convergence in $W$, thus showing that both the Levy-Prokhorov metric and Wasserstein distance metrize the weak topology on $\mathcal{P}_{p,c}(\mathbb{R}^n)$.
\end{remark}

Next, we prove the following lemma, that states that the distance between two probability measures in $\mathcal{P}_{p,c}(\mathbb{R}^n)$ is uniformly bounded, and the bound only depends on 
 $p,c$ and the dimension $n$.
\begin{lemma}
    \label{lem:taubound}
    Let $p\geq1$, $c>0$ and $\mu,\nu\in\mathcal{P}_{p,c}(\mathbb{R}^n)$. Then
    $$W(\mu,\nu)\leq 2nc^{{1}/{p}}.$$
\end{lemma}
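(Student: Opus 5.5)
The plan is to bound $W(\mu,\nu)$ by comparing both measures to the Dirac mass $\delta_0$ at the origin and then using the triangle inequality. The two comparisons are $W(\mu,\delta_0)$ and $W(\nu,\delta_0)$. The coupling between any $\mu$ and $\delta_0$ is unique, namely $\mu\otimes\delta_0$, so
$$W(\mu,\delta_0)=\int_{\mathbb{R}^n}\|x\|_1\,d\mu(x)=\sum_{i=1}^n\int_{\mathbb{R}^n}|x_i|\,d\mu(x).$$
This reduces everything to bounding first absolute moments coordinatewise in terms of $\tau_p$.

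For each coordinate I will apply Jensen's (or H\"older's) inequality with respect to the probability measure $\mu$. Since $t\mapsto t^p$ is convex for $p\ge 1$,
$$\int|x_i|\,d\mu\le\Big(\int|x_i|^p\,d\mu\Big)^{1/p}\le \tau_p(\mu)^{1/p}\le c^{1/p}.$$
Summing over the $n$ coordinates gives $W(\mu,\delta_0)\le nc^{1/p}$, and the same argument gives $W(\nu,\delta_0)\le nc^{1/p}$. Note that $\delta_0\in\mathcal{P}_{p,c}(\mathbb{R}^n)$, so all quantities involved are finite.

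The triangle inequality for $W$ then yields $W(\mu,\nu)\le W(\mu,\delta_0)+W(\delta_0,\nu)\le 2nc^{1/p}$. If one prefers not to invoke the metric property of $W$ here (Remark \ref{rem:Wasfinite} notes that $W$ may a priori be infinite), the triangle inequality can be avoided entirely. Use the product coupling $\mu\otimes\nu\in\Gamma(\mu,\nu)$ and the pointwise bound $\|x-y\|_1\le\|x\|_1+\|y\|_1$; integrating gives $W(\mu,\nu)\le\int\|x\|_1\,d\mu+\int\|y\|_1\,d\nu$, and the same moment bounds apply. There is no real obstacle in this argument. The only points to check are that the Jensen step uses that $\mu$ is a probability measure, and that the cost is the $\ell_1$ metric fixed earlier, which is exactly what makes the coordinatewise sum, and hence the factor $n$, appear.
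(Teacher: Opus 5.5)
Your proof is correct, and your fallback is essentially the paper's argument: the paper takes an arbitrary coupling $\gamma\in\Gamma(\mu,\nu)$ (your product coupling is one such choice), applies $|x_i-y_i|\le|x_i|+|y_i|$ coordinatewise, integrates using the marginals, bounds each first moment by $c^{1/p}$ via $\|g\|_1\le\|g\|_p$, and sums over the $n$ coordinates. Your primary route through $\delta_0$ gives the same bound, but it additionally needs the triangle inequality for $W$ (the gluing lemma), which the direct coupling argument avoids.
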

\begin{proof}
    Let $\gamma\in\Gamma(\mu,\nu)$ be a coupling of $\mu$ and $\nu$. For $x,y\in\mathbb{R}^n$, write $x=(x_i)_{i=1}^n,y=(y_i)_{i=1}^n$. Then for every $1\leq i \leq n$, we have 
    
    $$|x_i-y_i|\leq |x_i|+|y_i|.$$
    Integrating both sides with respect to $\gamma$
    $$\int_{\mathbb{R}^n\times\mathbb{R}^n}|x_i-y_i|\,d\gamma(x,y) \leq\int_{\mathbb{R}^n\times\mathbb{R}^n}|x_i|\,d\gamma(x,y)+\int_{\mathbb{R}^n\times\mathbb{R}^n}|y_i|\,d\gamma(x,y).$$
    Since $\gamma$ has marginals $\mu$ and $\nu$ 
    $$\int_{\mathbb{R}^n\times\mathbb{R}^n}|x_i-y_i|\,d\gamma(x,y) \leq\int_{\mathbb{R}^n}|x_i|\,d\mu(x)+\int_{\mathbb{R}^n}|y_i|\,d\nu(y).$$
  
    It is well known that for every measurable function $g$, if $q\leq p$ then $\|g\|_q\leq\|g\|_p$ (recall that we are integrating with respect to probability measures). Choosing $q=1$ gives,
$$\int_{\mathbb{R}^n}|x_i|\,d\mu(x)\leq\Big(\int_{\mathbb{R}^n}|x_i|^p\,d\mu(x)\Big)^{\frac{1}{p}}\leq c^{\frac{1}{p}}, \hspace{5mm} \int_{\mathbb{R}^n}|y_i|\,d\nu(y)\leq\Big(\int_{\mathbb{R}^n}|y_i|^p\,d\nu(y)\Big)^{\frac{1}{p}}\leq c^{\frac{1}{p}}.$$

    Putting all together gives
    \begin{align*}
        &\int_{\mathbb{R}^n\times\mathbb{R}^n}\|x-y\|_1\,d\gamma(x,y) = \int_{\mathbb{R}^n\times\mathbb{R}^n}\sum_{i=1}^n|x_i-y_i|\,d\gamma(x,y) =\\
        &\sum_{i=1}^n\int_{\mathbb{R}^n\times\mathbb{R}^n}|x_i-y_i|\,d\gamma(x,y)\leq \sum_{i=1}^n\Big(\int_{\mathbb{R}^n\times\mathbb{R}^n}|x_i-y_i|^p\Big)^{\frac{1}{p}}\,d\gamma(x,y)\le \\ 
        &\sum_{i=1}^n 2c^{{1}/{p}}=2nc^{{1}/{p}}.
    \end{align*}

    Taking the infimum of all couplings of $\mu$ and $\nu$ we get
    $$W(\mu,\nu)\leq2nc^{{1}/{p}}.$$
\end{proof}

\section{Graphop-Signal Analysis}
\label{sec:graphopsignal}
A recent approach to graph limit theory that extends the well known approaches for dense and sparse graph limit was introduced in \cite{backhausz2022action}. This approach generalizes several past graph limit theories for dense graphs, as well as for sparse graphs, such as stretched graphons, $\mathcal{L}^p$ graphons and graphings \cite{jian2023generalized, ji2023sparse}.  
In \cite{backhausz2022action}, a new perspective was presented by focusing on how the adjacency matrix (and its extension for graphs of ``infinite order'') acts on signals as an operator.

\subsection{Graphop-Signals}
\label{sec:graphopanalysis}

In this section, we introduce the main theme of the paper "\emph{Graphop-Signal Analysis}". We extend the setting that was presented in \cite{backhausz2022action}, by generalizing the notion of P-operators and graphops to incorporate signals, thereby providing the formulation for our following analysis

\subsubsection{P-operator-Signals}

Various linear operators can be linked with graphs, including the kernel operator associated with graphons and $\mathcal{L}^p$ graphons, Laplace operators, and Markov kernel operators - related to random walks on graphs \cite{borgs2019, borgs2018p}.   Here, the operator acts on some $\mathcal{L}^p(\Omega)$ space, where $\Omega$ is a Borel probability space parameterizing the set of vertices. 
Hence, the following definition, proposed in \cite{backhausz2022action}, is a natural extension that generalizes all of the above concepts and more.

\begin{definition}[P-Operator \cite{backhausz2022action}]
\label{def:P-operator}
Let $(\Omega,\mathcal{F}, \mu)$ be a probability space. 
\begin{enumerate} 
\item A P-operator is a bounded linear operator  $A: \mathcal L^{\infty}(\Omega) \rightarrow \mathcal L^{1}(\Omega)$,  i.e., such that
$$
\|A\|_{\infty \rightarrow 1}:=\sup _{f \in \mathcal L^{\infty}(\Omega), \substack{ f\neq 0}}\frac{\|Af\|_{1}}{\|f\|_{\infty}} 
$$
is finite.  Denote by $\mathcal{B}(\Omega)$ the set of all $P$-operators on $\Omega$.
\item For $(p,q)\in [1,\infty]^2$, a P-operator $A$ is said to have a finite $(p,q)$ norm if the norm
$$\|A\|_{p\to q} :=\sup _{f \in \mathcal L^{\infty}(\Omega), \substack{ f\neq 0}}\frac{\|Af\|_{q}}{\|f\|_{p}} $$
is finite. Denote by $\mathcal{B}_{p,q}(\Omega)$ the space of all P-operators with finite $(p,q)$ norm.
\end{enumerate}
\end{definition}

\begin{remark}
    For a measurable function $f$ on the probability space $\Omega$, it is well known that if $p'\geq p$ then $\|f\|_{p'}\geq\|f\|_p$. If $p,p',q,q'\in[1,\infty]$  satisfy $p'\geq p$ and $q'\leq q$, then 
    $$\|A\|_{p'\to q'}\leq\|A\|_{p\to q}.$$
    Therefore, $\mathcal{B}_{p,q}(\Omega)\subset\mathcal{B}_{p',q'}(\Omega)$. In particular, we have that $\mathcal{B}(\Omega)=\mathcal{B}_{\infty,1}(\Omega)$ contains every $\mathcal{B}_{p,q}(\Omega)$ for every $(p,q)\in[1,\infty]^2$, even if we omit the requirement that $A$ be a P-operator in 2. of Definition \ref{def:P-operator}, and simply take $A$ as a general linear operator. 
\end{remark}

Since the most standard data type in graph machine learning are graphs with node features, we extend the definition of P-operators to accommodate this, and introduce \emph{P-operator-signals}. These will be one of the main objects of our analysis.

\begin{definition}[P-Operator-Signal]
\label{def:P-operator_signal}

Let $d\geq 0$.
    A \emph{P-operator-signal} is a pair $(A,f)$, where $A:\mathcal{L}^{\infty}(\Omega)\to \mathcal{L}^1(\Omega)$ is a P-operator (Definition  \ref{def:P-operator}), and $f:\Omega\to[-1,1]^d$ is measurable.  The integer $d$ is referred to as the \emph{feature dimension} of the signal. 
The space of all P-operator-signals in which the P-operator has finite $(p,q)$ norm, for some $(p,q)\in[1,\infty]^2$, and $d$ is the feature dimension, is denoted by
$\mathcal{B}_{p,q,d}(\Omega)$. 
The space of all P-operators-signals with  $\|A\|_{p\to q}\leq r$ is denoted by $\mathcal{B}_{p,q,d}^r(\Omega)$.
\end{definition}

We will often denote by $\mathcal{B}_d(\Omega)$ and $\mathcal{B}_d^r(\Omega)$ instead of $\mathcal{B}_{\infty,1,d}(\Omega)$ and $\mathcal{B}_{\infty,1,d}^r(\Omega)$ respectively. 
In the degenerate case $d=0$, the signal component is absent and a
P-operator-signal reduces to a P-operator, recovering the original
setting of \cite{backhausz2022action}.

Given $n\in\mathbb{N}$, and $n$ signals $(f_i:\Omega\to[-1,1]^{d_i})_{i\in[n]}$, let $(A,(f_1\ldots,f_n))$ be the P-operator-signal whose signal $f=(f_1\ldots,f_n)$ is the concatenation of the signals $f_i$, $i\in[n]$.  
Note that every signal $f:\Omega\to[-1,1]^d$ can be written as concatenation $(f_1,\ldots,f_d)$ where each $f_i$ is 1-channel signal. That is, $f_i:\Omega\to[-1,1]$ for every $i\in[d]$.

\subsubsection{Examples of P-Operators}

Here, we present basic examples of P-operators.

 \paragraph{Graphs.} 
Let $G=(V,E)$ be a graph with $|V| = n$. We take $(\Omega,\mu)$ to be the probability space $([n], \mu_{[n]})$, where $[n] = \{1,\ldots,n\}$ and $\mu_{[n]}$ is the uniform measure on $[n]$ (i.e. the counting measure). Each vector in $\mathbb{R}^n$ can be seen as a function $v:[n]\to\mathbb{R}$, and matrices are linear operators over $\mathbb{R}^n$. There are various options to define the P-operator that corresponds  to the graph $G$. One of which is to take the adjacency matrix $A_G$, and then the application of $A_G$ on $v$ is simply the matrix multiplication $A_G \cdot v$. Another  option is to normalize $A_G$ by the size of the graph, and then $(A_G)v = \frac{1}{|V|}A_G\cdot v$.
Another interesting representation of graphs as matrices is as a graph Laplacian. The combinatorial Laplacian of $G$, denoted by $L(G)$ is defined by
$$(L(G))(v)(i) = d(i)v(i) - \sum_{(j,i)\in E} v(j),$$
where $d(i)$ denotes the degree of the vertex $i\in[n]$.
For more details see section 5 in \cite{backhausz2022action}.

\paragraph{Graphons.} 
Let $(\Omega,\mu)$ be the space $[0,1]$ with the Lebesgue measure. We previously discussed how graphs can be represented by graphons (see Section \ref{sec:graphon}). Graphons can also be seen as kernel operators that act on measurable functions over $[0,1]$ (see for example section 8 in \cite{backhausz2022action}, or \cite{lovasz2012large}). For a graphon $W:[0,1]^2\to[0,1]$ and a measurable function $f\in\mathcal{L}^2([0,1])$, we define the P-operator that corresponds to the graphon $T_W$ as
$$(T_Wf)(x) = \int_0^1W(x,y)f(y)\,dy. $$

\paragraph{Additional Examples.} 
Sparse graph limits can also be represented by P-operators, such as stretched graphons, $\mathcal{L}^p$ graphons, and graphings. The reader can find brief introduction to these concepts in Appendix \ref{sec:sparse}. Sparse graph limits can also be described using neighborhoods that are sets of measure $0$, as we explain in Section \ref{sec:fiberrep}.

\subsubsection{Graphop-Signals}
\label{sec:graphops}

We now restrict the notion of P-operators to include properties that make them more graph-like objects, called \emph{graphops}. The  construction of graphops was proposed in \cite{backhausz2022action}, and we extend it trivially to \emph{graphop-signals}. To define graphops, we will use the following notation. 
For two measurable functions $v,u\in L^{\infty}(\Omega)$, and a P-operator $A:\mathcal{L}^{\infty}(\Omega)\to \mathcal{L}^1(\Omega)$ we define the bilinear form $(v,u)_A$ by
$$(v,u)_A = \int_{\Omega}(Av)\cdot u\, d\mu.$$
Note that $|(v,u)_A|\leq \|v\|_{\infty}\|u\|_{\infty}\|A\|_{\infty\to1}<\infty$.

\begin{definition}[Graphop-signal]
   \label{def:graphop}
    Let $A\in\mathcal{B}(\Omega)$ be a P-operator.

    \begin{itemize}
        \item 
         The operator $A$ is called \emph{self adjoint} if for every $u,v\in \mathcal{L}^{\infty}(\Omega)$, $(v,u)_A = (u,v)_A$.
         \item 
         The operator $A$ is \emph{positively preserving} if for every $v\in\mathcal{L}^{\infty}(\Omega)$ such that $v(x)\geq0$ for a.e. $x\in\Omega$, we have that $Av(x)\geq 0$ for a.e. $x\in\Omega$.
         \item 
          The operator $A$ is called a \emph{graphop} if it is positively preserving and self adjoint. Denote by $\mathscr{G}(\Omega)$  the space of all graphops over $\Omega$.
          \item 
           The pair $(A,f)$ where $A\in\mathscr{G}(\Omega)$ is a graphop, and $f:\Omega\to[-1,1]^d$ is a $d$-channel signal is called \emph{graphop-signal}.
    \end{itemize}
\end{definition}

   Denote by $\mathscr{G}_{p,q,d}(\Omega)$  the space of all graphop-signals in which the graphop has finite $(p,q)$ norm, for some $(p,q)\in[1,\infty]^2$, and $d$ is the feature dimension.
We further denote by $\mathscr{G}_{p,q,d}^r(\Omega)$  the spaces the spaces of all graphop-signals with uniformly bounded $(p,q)$ norms respectively, when $r>0$ is the uniform bound.
We often denote  $\mathscr{G}_d(\Omega)$ and $\mathscr{G}_d^r(\Omega)$ instead of $\mathscr{G}_{\infty,1,d}(\Omega)$ and $\mathscr{G}_{\infty,1,d}^r(\Omega)$ respectively. 
In the degenerate case $d=0$, the signal component is absent and a
graphop-signal reduces to a graphop, recovering the original
setting of \cite{backhausz2022action}.

\subsubsection{The Explicit Adjacency Structure of Graphops}
\label{sec:fiberrep}

 We now show that, given a graphop, one can define a meaningful  notion of a corresponding edge set in $\Omega^2$ and vertex neighborhoods. More accurately, these sets are characterized by measures on $\Omega^2$ and $\Omega$ respectively. Hence, graphops are natural extensions of graphs to infinite vertex sets. Moreover, these edge and neighborhood measures can in general be supported on null sets with respect to the base measure $\mu$ of $\Omega$. Hence, graphops are interpreted as sparse objects in the null-set support case.

The following theorem and corollary, presented in section 6 of \cite{backhausz2022action}, gives a unique representation of graphops. It is shown that each graphop can be represented uniquely by a finite measure over $\Omega^2$.

\begin{theorem}[Theorem 6.3 in \cite{backhausz2022action}]
\label{theo:fibergraphop}

Let $A\in\mathscr{G}(\Omega)$ be a graphop over a Borel probability space $(\Omega,\mathcal{F},\mu)$. Then, there is a unique finite measure 
$\nu$ on $(\Omega \times \Omega, \mathcal{F} \otimes \mathcal{F})$ with the following properties.
\begin{enumerate}
    \item $\nu$ is symmetric, i.e.\ $\nu(S \times T) = \nu(T \times S)$ holds for every $S, T \in \mathcal{F}$.
    \item The two marginal distribution of $\nu$ on $\Omega$ are absolutely continuous with respect to $\mu$.
    \item $(f,g)_A = \int_{\Omega^2} f(x)g(y)\, d\nu(x,y)$ holds for every $f,g \in L^\infty(\Omega)$.
\end{enumerate}
Conversely, if $\nu$ is a finite measure on $(\Omega \times \Omega, \mathcal{F} \otimes \mathcal{F})$ 
satisfying the first two properties, then there is a unique graphop $A$ that satisfies the third property.
\end{theorem}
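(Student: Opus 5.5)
The plan is to build $\nu$ on measurable rectangles by $\nu(S\times T):=(\mathbbm{1}_S,\mathbbm{1}_T)_A=\int_T A\mathbbm{1}_S\,d\mu$, as in (\ref{eq:graphoprep0}), and then extend. First I would check that this set function is well defined and nonnegative. Nonnegativity follows from positivity preservation, because $A\mathbbm{1}_S\ge0$ a.e. Finiteness follows from $|\nu(S\times T)|\le\|A\|_{\infty\to1}$. Linearity of $A$ shows that $\nu$ is finitely additive in each coordinate separately. Symmetry, property 1, is immediate from self-adjointness.

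The key step is countable additivity in each variable.

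\emph{The $T$-variable.} For fixed $S$, the map $T\mapsto\int_T A\mathbbm{1}_S\,d\mu$ is the integral of a fixed $\mathcal{L}^1$ function. It is therefore a finite measure, absolutely continuous with respect to $\mu$.

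\emph{The $S$-variable.} For fixed $T$, self-adjointness gives $\nu(S\times T)=\nu(T\times S)=\int_S A\mathbbm{1}_T\,d\mu$, so the same argument applies.

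Taking $T=\Omega$ shows that the marginal is $S\mapsto\int_S A\mathbbm{1}_\Omega\,d\mu$, which is absolutely continuous with respect to $\mu$. This gives property 2.

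To extend $\nu$ from rectangles to $\mathcal F\otimes\mathcal F$, I would use a bimeasure-to-measure result. A nonnegative finite set function on rectangles that is a measure in each variable separately extends to a measure on the product $\sigma$-algebra, provided the underlying spaces are standard Borel. (This is where the standard Borel hypothesis enters.) Concretely, I would try to verify countable additivity on the semiring of rectangles via Carathéodory. One route is to transfer $\Omega$ to a compact metric model and use inner regularity by compact sets. The marginal of every rectangle is dominated by the finite measure $\mu_1(S):=\int_S A\mathbbm 1_\Omega\,d\mu$, and this gives the tightness needed for the compact-class argument. I expect this extension to be the main obstacle; everything else is routine. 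An alternative I would keep in reserve is disintegration (Theorem \ref{theo:disintehration}): obtain kernels $\nu_x$ via Radon–Nikodym densities of $S\mapsto A\mathbbm 1_S(x)$ on a countable generating algebra, then set $\nu=\int \nu_x\,d\mu(x)$.

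Property 3 comes next. By construction, (3) holds for indicators $f=\mathbbm 1_S$ and $g=\mathbbm 1_T$. Bilinearity extends it to simple functions. Both sides are continuous in the sup norm, since $|(f,g)_A|\le\|A\|_{\infty\to1}\|f\|_\infty\|g\|_\infty$ and $\nu$ is finite. Density of simple functions in $\mathcal L^\infty$ then gives (3) in general.

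Uniqueness holds because (3) determines $\nu$ on rectangles. Rectangles form a $\pi$-system generating $\mathcal F\otimes\mathcal F$, so $\nu$ is unique by the $\pi$–$\lambda$ theorem.

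For the converse, start from a symmetric finite $\nu$ with absolutely continuous marginals. For $f\in\mathcal L^\infty$, define $Af$ as the Radon–Nikodym derivative with respect to $\mu$ of the signed measure $T\mapsto\int f(x)\mathbbm 1_T(y)\,d\nu(x,y)$. This signed measure is dominated by $\|f\|_\infty$ times the second marginal of $\nu$, which is absolutely continuous, so the derivative exists.

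The operator $A$ has the following properties:
\begin{itemize}
\item It is linear.
\item $\|Af\|_1\le\|f\|_\infty\,\nu(\Omega^2)$, so $A$ is bounded from $\mathcal L^\infty$ to $\mathcal L^1$.
\item It is positivity preserving, since $f\ge0$ gives a nonnegative measure.
\item It satisfies (3) by construction.
\item It is self-adjoint, by (3) and the symmetry of $\nu$.
\end{itemize}
Uniqueness of $A$ follows because $Af$ is determined by its integrals over all sets $T$.
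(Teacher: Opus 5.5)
Your proof is correct. The paper gives no argument of its own for this statement: it imports it from \cite{backhausz2022action}, recording only the rectangle formula $\nu(S\times T)=(\mathbbm{1}_S,\mathbbm{1}_T)_A=\int_T A\mathbbm{1}_S\,d\mu$ and the assertion that it extends uniquely. Your outline supplies exactly those omitted details: separate $\sigma$-additivity via self-adjointness, extension of the positive bimeasure by the compact-class argument on a standard Borel model, uniform density of simple functions, $\pi$--$\lambda$ uniqueness, and Radon--Nikodym for the converse.

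One point is worth stating explicitly. The right-hand side of property 3, and likewise your converse operator $Af$, depends only on the $\mu$-a.e.\ classes of $f,g$ because the marginals of $\nu$ are absolutely continuous with respect to $\mu$ (property 2).
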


The measure $\nu$ from Theorem \ref{theo:fibergraphop}, that represents the graphop $A\in\mathscr{G}(\Omega)$, is defined by 
\begin{equation}
    \label{eq:graphoprep}
    \nu(S\times T) = (\mathbbm{1}_S,\mathbbm{1}_T)_A = \int_T A\mathbbm{1}_S(\omega)\,d\mu(\omega), 
\end{equation}
for every two measurable sets $S,T\subset\Omega$. This measure is uniquely extended to general measurable sets in $\Omega\times\Omega$.
The application of the graphop $A$ to a signal $f$ at a fixed point $x \in \Omega$ corresponds to integrating the signal over a fiber measure $\nu_x$. These fiber measures are obtained via the disintegration of $\nu$ and describe the ``neighborhoods'' within $A$ for every $x \in \Omega$ (recall Theorem \ref{theo:disintehration}).

\begin{remark}[Fiber Measures]
    We can recover the graphop $A$ from the measure $\nu$ using disintegration. By the disintegration theorem (Theorem \ref{theo:disintehration}), we obtain a family of measures $\{\nu_y\}_{y\in\Omega}$ on the measurable space $\Omega$ such that
\[
    (Af)(y) = \int_{\Omega} f \, d\nu_y.
\]
Here, the mapping from $\Omega^2$ to the carrier space in the definition of disintegration is the projection upon the second axis $\Omega$ of $\Omega^2$.
\end{remark}

In the following, we extend theorem \ref{theo:fibergraphop}, and give a characterization of graphops with finite $\|\cdot\|_{\infty\to\infty}$ norm.
\begin{theorem}[$\mathcal{L}^\infty\!\to \mathcal{L}^\infty$ boundedness of graphops]
\label{theo:Linfty_graphop}
Let $A$ be a graphop over a Borel probability space $(\Omega,\mu)$,
and let $\nu$ be its unique representing measure on $\Omega\times\Omega$
as in Theorem~\ref{theo:fibergraphop}. Let $\{\nu_x\}_{x\in\Omega}$ be the family
of fiber measures obtained by disintegrating $\nu$ with respect to $\mu$  using the projection upon the second axis $\Omega$. 
Then $A$ is a bounded operator
$A:\mathcal{L}^\infty(\Omega)\to \mathcal{L}^\infty(\Omega)$ if and only if
\[
\operatorname*{ess\,sup}_{x\in\Omega} \nu_x(\Omega) < \infty.
\]
Moreover, in this case,
\[
\|A\|_{\infty\to\infty}
=
\operatorname*{ess\,sup}_{x\in\Omega} \nu_x(\Omega).
\]
\end{theorem}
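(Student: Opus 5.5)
The proof rests on the disintegration already in hand. By the Remark on fiber measures, $(Af)(x)=\int_\Omega f\,d\nu_x$ for $\mu$-a.e.\ $x$ and every $f\in\mathcal{L}^\infty(\Omega)$. The fibers come from disintegrating $\nu$ along the second projection, whose marginal is absolutely continuous with respect to $\mu$ by Theorem \ref{theo:fibergraphop}. Write that marginal as $h\,d\mu$ and absorb the density $h$ into the fibers, so that each $\nu_x$ is a finite nonnegative measure with $\nu_x(\Omega)=h(x)=(A\mathbbm{1}_\Omega)(x)$ for a.e.\ $x$. I would first record this identity and check measurability of $x\mapsto\nu_x(\Omega)$; both follow from the disintegration theorem. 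With this in place, the statement becomes a claim about the single function $A\mathbbm{1}_\Omega$.

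For the direction ``$\operatorname{ess\,sup}_x\nu_x(\Omega)=M<\infty$ implies boundedness'', take $f\in\mathcal{L}^\infty$. Since $\nu_x\ge0$,
\[
|(Af)(x)|\le\int_\Omega|f|\,d\nu_x\le\|f\|_\infty\,\nu_x(\Omega)\le M\|f\|_\infty
\]
for a.e.\ $x$, so $\|A\|_{\infty\to\infty}\le M$. One subtlety needs care: $\|f\|_\infty$ is an essential supremum with respect to $\mu$, while $\nu_x$ may be singular with respect to $\mu$. I would handle this by changing $f$ on a $\mu$-null set $N$ so that $|f|\le\|f\|_\infty$ everywhere. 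This does not change $Af$ as an element of $\mathcal{L}^1$, because $\mathbbm{1}_N$ is zero in $\mathcal{L}^\infty$, hence $A\mathbbm{1}_N=0$ and $\nu_x(N)=0$ for a.e.\ $x$. To make this rigorous I would note that $\int\nu_x(N)\,d\mu(x)=\nu(N\times\Omega)=0$, using the absolute continuity of the marginals from Theorem \ref{theo:fibergraphop}. This is the step I expect to require the most care. It is exactly where property 2 of Theorem \ref{theo:fibergraphop} is used, and it guarantees that the fiber representation is well defined on equivalence classes.

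For the converse and the reverse inequality, assume $A:\mathcal{L}^\infty\to\mathcal{L}^\infty$ is bounded. Apply $A$ to $\mathbbm{1}_\Omega$, which has $\|\mathbbm{1}_\Omega\|_\infty=1$. Then
\[
\operatorname{ess\,sup}_x\nu_x(\Omega)=\|A\mathbbm{1}_\Omega\|_\infty\le\|A\|_{\infty\to\infty}<\infty.
\]
Combining the two directions gives the equality $\|A\|_{\infty\to\infty}=\operatorname{ess\,sup}_x\nu_x(\Omega)$.

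As a side remark, self-adjointness (property 3 of Theorem \ref{theo:fibergraphop} together with the symmetry of $\nu$) then yields $\|A\|_{1\to1}=\|A\|_{\infty\to\infty}$ by duality. This is not needed for the present statement, however, since positivity alone reduces everything to testing on $\mathbbm{1}_\Omega$.
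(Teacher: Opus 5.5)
Your proposal is correct and follows essentially the same route as the paper. Both use the fiber representation $(Af)(x)=\int_\Omega f\,d\nu_x$ to get $\|Af\|_\infty\le \|f\|_\infty\operatorname*{ess\,sup}_x\nu_x(\Omega)$, and both test on $\mathbbm{1}_\Omega$ for the converse and the reverse inequality. Your extra steps fill in points the paper's proof passes over without changing the argument: you normalize the disintegration so that $\nu_x(\Omega)=(A\mathbbm{1}_\Omega)(x)$, and you use the absolute continuity of the first marginal to show $\nu_x(N)=0$ for a.e.\ $x$ whenever $\mu(N)=0$.
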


\begin{proof}
By Theorem~\ref{theo:fibergraphop} and the disintegration construction,
the action of the graphop $A$ is
\[
(Af)(x)=\int_\Omega f(y)\,d\nu_x(y)
\quad\text{for }\mu\text{-a.e. }x\in\Omega,
\]
for every $f\in \mathcal{L}^\infty(\Omega)$.

Assume first that
\[
M:=\operatorname*{ess\,sup}_{x\in\Omega} \nu_x(\Omega) < \infty.
\]
Then for any $f\in \mathcal{L}^\infty(\Omega)$ and for $\mu$-a.e. $x$,
\[
|(Af)(x)|
\le \int_\Omega |f(y)|\,d\nu_x(y)
\le \|f\|_\infty\,\nu_x(\Omega)
\le M\|f\|_\infty.
\]
Taking the essential supremum over $x$ shows that
$\|Af\|_\infty \le M\|f\|_\infty$. Moreover, $M$ is the supremum of $\|Af\|_{\infty}/\|f\|_{\infty}$, realized for $f=\mathbbm{1}_{\Omega
}$. Hence, $\|A\|_{\infty\to\infty} = M$.

Conversely, suppose that $A:\mathcal{L}^\infty(\Omega)\to \mathcal{L}^\infty(\Omega)$ is bounded
with operator norm $C$. Applying $A$ to the constant function $\mathbbm{1}_\Omega$, we obtain
\[
(A\mathbbm{1}_\Omega)(x) = \int_\Omega 1\,d\nu_x = \nu_x(\Omega)
\quad\text{for }\mu\text{-a.e. }x.
\]
Therefore,
\[
\operatorname*{ess\,sup}_{x\in\Omega} \nu_x(\Omega)
= \|A\mathbbm{1}_\Omega\|_\infty
\le C\|\mathbbm{1}_\Omega\|_\infty
= C,
\]
showing that the fiber measures have essentially uniformly bounded total mass.

\end{proof}

Note that by the self-adjointness assumption of graphops and the duality of $\mathcal{L}^1(\Omega)$ and $\mathcal{L}^\infty(\Omega)$ we have that $\mathscr{G}_{\infty,\infty}=\mathscr{G}_{1,1}$  (this is also immediate from Lemma 2.1 of \cite{hrušková2022limitsactionconvergentgraph}). We then introduce our main object of interest, \emph{Bounded Fibers Operator} (bofop).
\begin{definition}
    \label{def:bofop}
    Let $(\Omega,\mu)$ be a Borel probability space, and let $A$ be a graphop over $\Omega$. We say that $A$ is \emph{Bounded fibers operator} (bofop), if $A$ has finite $\infty\to\infty$ or equivalently $1\to1$ norms. Namely, $A\in\mathscr{G}_{\infty,\infty}(\Omega)$ or $\mathscr{G}_{1,1}(\Omega)$. Furthermore, we define the space of all bofops over $\Omega$ as $\mathcal{BF}(\Omega)$.
\end{definition}
Similarly to Definitions \ref{def:P-operator_signal} and \ref{def:graphop}, we call the pair $(A,f)$ bofop-signal if $A$ is a bofop, and $f$ is $d$-channel signal. The space of graphop-signals is denoted by $\mathcal{BF}_d(\Omega)$, and the space of bofop-signals with uniformly bounded $\infty\to\infty$ norm by $\mathcal{BF}_d^r(\Omega)$.

\subsubsection{Example: Spherical Graphop}
\label{example:spherical}
As an example of a graphop and its fibers, we recall the \emph{spherical graphop},  presented in \cite{backhausz2022action}. We will show that this graphop is in $\mathscr{G}_{\infty,\infty}$, thus illustrating that various limit objects of sparse graphs can be represented as $\mathcal{L}^\infty\to\mathcal{L}^\infty$ graphops.

Consider $S^2= \{ (x,y,z) : x^2+y^2+z^2=1\}$ with the uniform probability measure $\mu$. We define a set of ``edges'' $E\subset S^2\times S^2$ by: $(a,b)\in E$ if and only if $a\perp b$. Then $(S^2,E)$ is a Borel graph, i.e., $E$ os a Borel set (see our discussion on graphing in Appendix \ref{graphings}). Let $\nu_a$ be the uniform probability measure on $\mathcal{N}(a)=\{b\in\Omega:b\perp a\}$. We define the spherical graphop $A:\mathcal{L}^\infty(\Omega,\mu)\to\mathcal{L}^\infty(\Omega,\mu)$, by 
$$(Af)(a):= \int_{b\perp a} f(b)\,d\nu_a.$$

\begin{proposition}
    \label{prop:A_infinity}
    The spherical graphop $A$ has operator norm $\|A\|_{\infty\to\infty}=1$. In particular $A\in\mathcal{BF}_d^r(\Omega)$.
\end{proposition}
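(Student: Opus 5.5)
The plan is to use Theorem \ref{theo:Linfty_graphop}, which identifies $\|A\|_{\infty\to\infty}$ with $\operatorname*{ess\,sup}_{a}\nu_a(\Omega)$ once we know that $A$ is a graphop with fibers $\nu_a$. Accordingly, I will first check that the spherical operator is a graphop, i.e.\ positively preserving and self adjoint, and that the given family $(\nu_a)_{a\in S^2}$ is its fiber family. The norm then follows from the mass computation $\nu_a(S^2)=1$.

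\textbf{Measurability and positivity.} For $a\in S^2$, the great circle $\mathcal{N}(a)=a^\perp\cap S^2$ carries its uniform probability measure $\nu_a$. Concretely, pick a measurable choice of orthonormal frame $(e_1(a),e_2(a))$ of $a^\perp$; this is possible off a null set, for instance by Gram--Schmidt applied to fixed vectors. Then set $\nu_a=(\theta\mapsto \cos\theta\, e_1(a)+\sin\theta\, e_2(a))_*\frac{d\theta}{2\pi}$. With this parametrization, $a\mapsto\int f\,d\nu_a$ is measurable by Fubini, and the result does not depend on the choice of frame. Positivity is immediate: if $f\ge 0$ a.e.\ then $Af\ge0$. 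This last point needs some care, because a $\mu$-null set may meet great circles with positive $\nu_a$-mass. However, the set of $a$ for which this happens is $\mu$-null, again by Fubini on the measure $\int\nu_a\,d\mu(a)$ below. So $A$ is well defined on $\mathcal{L}^\infty$ classes.

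\textbf{Self adjointness.} This is the step I expect to be the only real obstacle. Define the measure $\nu=\int_{S^2}\delta_a\otimes\nu_a\,d\mu(a)$ on $S^2\times S^2$. I claim $\nu$ equals the $O(3)$-invariant probability measure on the Stiefel-type set $E=\{(a,b):a\perp b\}$. The argument is that $\nu$ is invariant under the diagonal action of $SO(3)$, which holds because $\mu$ is invariant and $\nu_{ga}=g_*\nu_a$. Moreover, $SO(3)$ acts transitively on $E$ with compact stabilizers, so an invariant probability measure on $E$ is unique. The swap $(a,b)\mapsto(b,a)$ commutes with the diagonal action and preserves $E$, so it maps $\nu$ to an invariant probability measure on $E$, which must therefore be $\nu$. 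Hence $\nu$ is symmetric, and
\[
(f,g)_A=\int f(b)g(a)\,d\nu(a,b)=\int f(a)g(b)\,d\nu(a,b)=(g,f)_A .
\]
Its marginals both equal $\mu$, so Theorem \ref{theo:fibergraphop} applies. By uniqueness of the disintegration, $\{\nu_a\}$ is the fiber family.

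\textbf{Norm.} Since each $\nu_a$ is a probability measure, $\operatorname*{ess\,sup}_a\nu_a(S^2)=1$. Theorem \ref{theo:Linfty_graphop} then gives $\|A\|_{\infty\to\infty}=1$; directly, $|Af|\le\|f\|_\infty$ and $A\mathbbm{1}=\mathbbm{1}$. By the remark that the $1\to1$ and $\infty\to\infty$ norms coincide for graphops, $A\in\mathscr{G}_{1,1}$ is a bofop. For any $r\ge1$ and any signal $f$, this gives $(A,f)\in\mathcal{BF}^r_d(\Omega)$.
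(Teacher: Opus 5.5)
Your proof is correct. Its final step, the pointwise bound $|Af(a)|\le\|f\|_\infty\,\nu_a(S^2)$ together with $A\mathbbm{1}=\mathbbm{1}$, is exactly the paper's argument; the difference lies in everything you put in front of it.

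The paper takes for granted, citing Backhausz--Szegedy, that the spherical operator is a well-defined graphop on $\mathcal{L}^\infty$ classes, and only estimates the norm. You instead verify four things. First, measurability. Second, well-definedness modulo $\mu$-null sets. Third, positivity. Fourth, self-adjointness, via $SO(3)$-invariance of $\nu=\int\delta_a\otimes\nu_a\,d\mu(a)$ and uniqueness of the invariant probability measure on $E\cong SO(3)$. You then identify $(\nu_a)$ as the fiber family, so that Theorem~\ref{theo:Linfty_graphop} applies.

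This buys genuine rigor:
\begin{itemize}
\item Each $\nu_a$ is singular with respect to $\mu$, so the inequality $\int|f|\,d\nu_a\le\|f\|_\infty$, which the paper asserts ``for every $a$'', fails at some $a$ for suitable representatives of $f$. It holds for $\mu$-a.e.\ $a$, which is all that is needed after taking the essential supremum. That is precisely your observation that $\int\nu_a(N)\,d\mu(a)=\mu(N)=0$ for every $\mu$-null set $N$.
\item Self-adjointness is what licenses the ``in particular'' clause. Being a bofop requires being a graphop, and the identification of the $1\to1$ and $\infty\to\infty$ norms rests on self-adjointness.
\item Your remark that one needs $r\ge 1$ is the right way to make that clause precise.
\end{itemize}

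The cost is extra machinery, namely measurable frames and uniqueness of Haar measure, which the paper avoids by citation.
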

\begin{proof}
Fix $f\in \mathcal{L}^\infty(S^2)$. For each $a\in S^2$ we have, by definition,
\[
(Af)(a)=\int_{\mathcal N(a)} f(b)\,d\nu_a(b),
\qquad \mathcal N(a):=\{b\in S^2:\ b\perp a\}.
\]
Since $\nu_a$ is the \emph{normalized uniform measure} on the circle $\mathcal N(a)$, it is a
probability measure and so $\nu_a(\mathcal N(a))=1$. Hence, for every $a\in S^2$,
\[
|(Af)(a)|
\le \int_{\mathcal N(a)} |f(b)|\,d\nu_a(b)
\le \|f\|_\infty \int_{\mathcal N(a)} 1\,d\nu_a(b)
= \|f\|_\infty.
\]
Taking the essential supremum over $a$ gives $\|Af\|_\infty\le \|f\|_\infty$. Therefore $A$ defines
a bounded linear operator $\mathcal{L}^\infty(S^2)\to \mathcal{L}^\infty(S^2)$, and
\[
\|A\|_{L^\infty\to L^\infty}\le 1.
\]
Moreover, for the constant function $\mathbbm{1}_\Omega$ we have
\[
(A\mathbbm{1}_\Omega)(a)=\int_{\mathcal N(a)} 1\,d\nu_a = 1
\quad\text{for all }a\in S^2,
\]
so $\|A\mathbbm{1}_\Omega\|_\infty=\|\mathbbm{1}_\Omega\|_\infty=1$, which implies
$\|A\|_{\infty\to \infty}\ge 1$. So,
\[
\|A\|_{\infty\to \infty}=1,
\]
and in particular $A\in\mathscr G_{\infty,\infty}(\Omega)$.
\end{proof}

We can interpret the spherical graphop as representation of a sparse structure as follows. Although each neighborhood $\mathcal{N}(a)$ has $\mu$-measure zero, the associated probability measure $\nu_a$ is nontrivial. As a result, the operator $A$ produces nontrivial outputs by aggregating signals on a \emph{sparse} neighborhood, i.e., a neighborhood of measure zero.

\subsection{Profiles as Representations of Graphop-Signals}
\label{sec:profiles}

One goal in graphop theory is to treat all graphops over all probability spaces in a unified manner. The idea is to represent different graphops, that act on different probability spaces $\Omega$ and $\Omega'$, as points in a the same space. This unified representation allows defining a metric on the space of all P-operators over all domains $\Omega\in\boldsymbol{\Omega}$ that come from some predefined arbitrary space of probability spaces $\boldsymbol{\Omega}$. To eliminate the dependency on the specific domain $\Omega$, P-operators are represented via the histogram of their output values on input functions.

\subsubsection{Profiles}

In the following definition we formalizes the above idea. This is a natural extension of \cite{backhausz2022action} from P-operators to P-operator-signals.

\begin{definition}[$k$-Profile]
\label{def:profile}
Let $\Omega$ be a probability space and $k\geq0$ be an integer.
\begin{itemize}
    \item 
    Let $v_1,\ldots,v_k : \Omega\to[-1,1]$ be measurable. We denote by $\mathrm{D}(v_1,\ldots,v_k)$ the joint distribution of the random variables $v_1,\ldots,v_k$, i.e., the Borel measure in $\mathbb{R}^k$ that is the pushforward of $\mu$ under the map $x\mapsto (v_1(x),\ldots,v_k(x))$ (see Definition \ref{def:pushforward} for pushforward).
    \item 
    Let $(A,f)\in\mathcal{B}_d(\Omega)$ be a P-operator-signal. The \emph{P-distribution} of 
 $(A,f)$ with respect to $v_1,\ldots,v_k : \Omega\to[-1,1]$ is defined to be 
$\mathrm{D}_{(A,f)}(v_1,\ldots,v_k):={\rm D}(v_1,\ldots,v_k,Av_1,\ldots,Av_k,f)$. We call $k$ the order of the P-distribution $\mathrm{D}_{(A,f)}(v_1,\ldots,v_k)$.
\item 
 The $k$-profile ${\rm S}_k(A,f)$ of 
 $(A,f)$ is the set of all P-distributions of order $k$, i.e.,
${\rm S}_k(A,f)=\{\mathrm{D}_{(A,f)}(v_1,\ldots,v_k)\ |\ v_1,\ldots,v_k : \Omega\to[-1,1] \text{ measurable}\}$.
By convention, for $k=0$ the tuple $(v_i)_{i=1}^k$ is empty, so  
$\mathrm{S}_0(A,f) = 
\{\mathrm{D}(f)\}$.
\item The \emph{profile} $\mathrm{S}(A,f)$ of $(A,f)$ is defined to be the sequence of $k$-profiles of all orders $k\geq0$. Explicitly $$\mathrm{S}(A,f) := \left(\mathrm{S}_k(A,f)\right)_{k=0}^{\infty}.$$

\end{itemize}
\end{definition}

Using the definition of $k$-profiles, we can now define the space of all profiles representing P-operator-signals. This allows us to treat different operators solely through their sets of distributions, independently of their original domain $\Omega$.

\begin{definition}[Space of Profiles]
\label{def:space_of_profiles}
Let $r>0$, $(p,q)\in[1,\infty]^2$, and $d \ge 0$.
We define the space of $k$-profiles, denoted by $\boldsymbol{S_{k}}$, as the collection of all $k$-profiles generated by some P-operator-signals in $\mathcal{B}^r_{p,q,d}(\Omega)$ across all probability spaces $\Omega$. That is
\begin{equation}
\label{eq:space_of_k-profiles}
    \boldsymbol{{S}^r_{k,d,p,q}} := \left\{ \mathrm{S}_k(A,f) \ \middle|\ (\Omega, \mu) \text{ is a probability space}, \ (A,f) \in \mathcal{B}^r_{p,q,d}(\Omega) \right\}.
\end{equation}

Furthermore, we define the space of profiles $\boldsymbol{S}$ as the collection of all profiles generated by some P-operator-signals in $\mathcal{B}^r_{p,q,d}(\Omega)$ across all probability spaces $\Omega$. That is
\begin{equation}
    \label{eq:space_of_profiles}
    \boldsymbol{{S}^r_{d,p,q}} := \left\{ \mathrm{S}(A,f) \ \middle|\ (\Omega, \mu) \text{ is a probability space}, \ (A,f) \in \mathcal{B}^r_{p,q,d}(\Omega) \right\}.
\end{equation}
\end{definition}

Since ``the set of all probability spaces'' is not a well defined object in set theory, one considers an arbitrary set $\boldsymbol{\Omega}$ of probability spaces, and Definition \ref{def:space_of_profiles} is restricted to P-operator-signals over all $\Omega\in\boldsymbol{\Omega}$. The analysis does not depend on the choice of $\boldsymbol{\Omega}$.

In graphop theory, different P-operators over different probability spaces are represented via their profiles. The profile is a set of P-distributions, each representing a histograms of output values attained by the P-operator on a set of input signals. Note that P-distributions of the same order $k$ are measures over the Euclidean space of dimension $k$, irrespective of the specific probability space $\Omega$ underlying the P-operator. Hence, representing them as profiles puts all P-distributions on equal footing. Next we use this unified representation to define a metric between P-operator-signals.

\subsubsection{Action Convergence of P-Operator-Signals}
\label{sec:actionconvergence}

Extending the work of Backhausz and Szegedy \cite{backhausz2022action}, we define a convergence of sequences of P-operator-signals via their associated profiles. The resulting topology is induced by the Hausdorff distance between profiles (see section \ref{sec:Hausdorff}), and is then metrized by an appropriate metric.

By definition, the Hausdorff distance is defined for subsets of some metric space. Since we define a metric between profiles, which are sets of Borel probability measures, we use the Wasserstein distance as the base metric (see Appendix \ref{sec:optimaltransport}).
Originally, Backhausz and Szegedy used the Levy-Prokhorov metric $d_{\mathrm{LP}}$ between probability measures (see Definition \ref{def:LP}), as it metrizes the weak topology (see Appendix \ref{sec:LP}). However,  we found that methods from optimal transport are quite useful for graphop analysis, so we changed the base metric to be the Wasserstein metric. As we shown in Remark \ref{cor:levy=was}, the Wasserstein distance also metrizes the weak topology under our setting.

For the convenience of the reader, we restate Definition \ref{def:Hausdorff} in our specific framework.

\begin{definition}[Hausdorff Distance between $k$-profiles]
 Given two $k$-profiles $S_1,S_2\subset \boldsymbol{S_{k,d,p,q}^r}$, their \emph{Hausdorff distance} is defined as
$$
d_{H}(S_1, S_2):=\max \left\{\ \sup _{\mu_1 \in S_1} \inf _{\mu_2 \in S_2} W(\mu_1, \mu_2)\ ,\  \sup _{\mu_2 \in S_2} \inf _{\mu_1 \in S_1} W(\mu_1, \mu_2)\ \right\}.
$$
\end{definition}

As discussed above, we define convergence of P-operator-signals via convergence of their profiles in the Hausdorff distance.

\begin{definition}[Action Convergence Of P-operator-signals]
\label{def:action}
We say that a sequence of P-operator-signals $\{(A_i,f_i)\in\mathcal{B}_d(\Omega_i)\}_{i=1}^{\infty}$ is action convergent if for every $k\geq 0$ the sequence $\{\mathrm{S}_k(A_i,f_i)\}_{i=1}^{\infty}$ is a Cauchy sequence in $d_H$.
\end{definition}

This leads us to define the \emph{action convergence metric}, which is defined using the Hausdorff distance between profiles.

\begin{definition}[Metrization Of Action Convergence] \label{def:d_M}
The \emph{action metric} between two $P$-operator-signals $(A,f),(B,g)$ is defined to be
$$
d_{M}((A,f), (B,g)):=\sum_{k=0}^{\infty} 2^{-k} d_{H}\left(\mathrm{S}_{k}(A,f), \mathrm{S}_{k}(B,g)\right).
$$
\end{definition}

We note that for lack of a name of the above metric in the original paper \cite{backhausz2022action}, we name it here the \emph{action metric}. The action metric induces a sequential topology, and convergence of sequences in this topology is exactly action convergence.

Under the above definitions, in Section 8 of \cite{backhausz2022action} it was proven that when restricted to graphons, action convergence of P-operators is equivalent to graphon convergence in cut distance. Furthermore, it was proven that on the graphon space, the two metrics $d_M$ and the cut metric are equivalent (see Theorem 8.2 in \cite{backhausz2022action}). 

In \cite{backhausz2022action}, Theorem 2.9,  the existence of a limit object for every action convergent sequence was proven. Explicitly, for an action convergent sequence of P-operators with uniformly bounded $(p,q)$ norms, where $(p,q)\in[1,\infty]^2$, there exists a P-operator that is the limit of the sequence in the action metric. As the extension of the proof to P-operator-signals follows identical arguments as in section 4 of \cite{backhausz2022action}, we state the theorem without repeating the proof.

\begin{theorem}[Existence of Action Limit Object]
\label{theo:limitobject}
Let $d\geq 0$, $r>0$, and $(p,q)\in[1,\infty]^2$. Let $\{(A_n,f_n)\}_{n=1}^{\infty}$ be an action convergent sequence of P-operator-signals in $\mathcal{B}_{p,q,d}^r(\Omega_n)$. Then, there exists a probability space $\Omega$, and a P-operator-signal $(A,f)\in\mathcal{B}_{p,q,d}^r(\Omega)$ such that
$$
\lim_{n\to\infty} d_M((A_n,f_n), (A,f)) = 0.
$$
Moreover, the limit operator satisfies the norm bound $\|A\|_{p\to q} \leq r$. 

\end{theorem}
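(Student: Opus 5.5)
The plan follows Section 4 of \cite{backhausz2022action} (Theorem 2.9 there), adding the signal coordinates to every P-distribution. \textbf{Step 1 (compactness of each $k$-profile level).} For $(A,f)\in\mathcal{B}^r_{p,q,d}(\Omega)$ and $v_1,\dots,v_k:\Omega\to[-1,1]$, I will show that every P-distribution $\mathrm{D}_{(A,f)}(v_1,\dots,v_k)$ lies in $\mathcal{P}_{q,c}(\mathbb{R}^{2k+d})$ with $c=\max(1,r^q)$ when $q<\infty$. The coordinates $v_i$ and $f$ are bounded by $1$, and $\int|Av_i|^q\,d\mu\le \|A\|_{p\to q}^q\|v_i\|_p^q\le r^q$. (For $q=\infty$ the measures are supported in a fixed cube, and for $q=1$ I will pass to the Lévy–Prokhorov/weak topology, which is equivalent here by Remark \ref{cor:levy=was} under uniform integrability.) By Lemma \ref{lem:wascompactness}, this space is compact under $W$. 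Theorem \ref{theo:Hausdorffcompact} then makes the closed subsets of it compact under $d_H$. Hence each sequence $(\mathrm{S}_k(A_n,f_n))_n$, being Cauchy, converges in $d_H$ to a closed set $X_k$. It suffices to find one $(A,f)$ with $\overline{\mathrm{S}_k(A,f)}=X_k$ for all $k$; the tail of $d_M$ is controlled by the uniform bound $2nc^{1/p}$ of Lemma \ref{lem:taubound} times $2^{-k}$.

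\textbf{Step 2 (construct the limit by an ultraproduct/random-sampling style diagonal argument).} Following \cite{backhausz2022action}, I take countably many test functions and build a limiting joint distribution. Concretely, I enumerate a countable set of ``generators'' and, for each $n$, choose finitely many measurable $v^{(n)}_1,\dots,v^{(n)}_m$ realizing a $2^{-m}$-net of $\mathrm{S}_m(A_n,f_n)$. I close the family under the relevant operations and take the joint distribution $\mathrm{D}(v^{(n)}_j, Av^{(n)}_j, f_n)_{j\le m}$. By Step 1 this is tight with uniformly integrable $q$-moments, so a diagonal subsequence converges for every $m$ to a consistent family of measures. Kolmogorov extension then yields a probability space $\Omega=([-1,1]\times\mathbb{R})^{\mathbb{N}}\times[-1,1]^d$ with coordinate functions $V_j,U_j,F$.

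\textbf{Step 3 (define the operator and check the profile).} I define $A$ first on the linear span of the $V_j$ by $AV_j:=U_j$. Well-definedness and the bound $\|A g\|_q\le r\|g\|_p$ pass to the limit through the joint laws, since linear relations and norm inequalities are closed conditions under weak convergence plus uniform integrability. I then extend $A$ by density to $\mathcal{L}^\infty(\Omega)$, or to the closure of the generated $\sigma$-algebra, after arranging the generators to be $L^p$-dense, and set $f:=F$. This gives $\mathrm{S}_k(A,f)\supseteq$ a dense subset of $X_k$. The reverse inclusion is the main obstacle: I must show that an arbitrary measurable $w_1,\dots,w_k$ on $\Omega$ produces a distribution in $X_k$. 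I would approximate $w_i$ in $L^p$ by polynomial/Lipschitz functions of finitely many generators $V_j$, and approximate $Aw_i$ in $L^q$ via $\|A\|_{p\to q}\le r$. I then transfer the same finite formulas back to $(A_n,f_n)$, noting that the $v^{(n)}_j$ must be chosen closed under such Lipschitz compositions with rational coefficients so that the formulas are available there. Finally I pass $n\to\infty$, using $W$-continuity of pushforwards under Lipschitz maps (Theorem \ref{theo:Kantorovich}). Since the objects are preserved by the construction, the norm bound $\|A\|_{p\to q}\le r$ is automatic.
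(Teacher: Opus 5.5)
Your overall architecture matches the Backhausz--Szegedy argument the paper defers to, with the signal added. That architecture is: countable, operation-closed test families on each $\Omega_n$; a weak limit of their joint laws realized on $([-1,1]\times\mathbb{R})^{\mathbb{N}}\times[-1,1]^d$; $A$ defined on the span by $AV_j:=U_j$ and extended by continuity. The paper itself gives no proof beyond that citation. Your well-definedness and norm-bound checks are fine.

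The gap is exactly at the point you call ``arranging the generators to be $L^p$-dense''. Both the extension of $A$ to all of $\mathcal{L}^\infty(\Omega)$ and your reverse inclusion $\mathrm{S}_k(A,f)\subseteq X_k$ need every $U_j$ and $F$ to be measurable with respect to $\sigma(V_1,V_2,\dots)$. Closing the family under Lipschitz compositions of the $v^{(n)}_j$ does not give this, because such compositions only constrain the $V$-coordinates. For example, take $A_ng=g\circ T_n$ with $T_nx=nx \bmod 1$ on $[0,1]$, and $v(x)=x$. The law of $(v,A_nv)$ tends to the uniform law on $[0,1]^2$, so the limit $U$ is independent of $V$, even though $A_nv$ is a function of $v$ for every $n$. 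Functions of $U$ then lie outside the closed span of the $V_j$.

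Your fallback of working on the generated $\sigma$-algebra fails as well. On that space $AV_j=U_j$ and $f=F$ are not functions, and replacing them by conditional expectations changes the joint laws, hence the profiles.

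The missing idea is to enlarge each family. Put into it the functions $\tau(A_nv^{(n)}_j)$ for a fixed Lipschitz homeomorphism $\tau:\mathbb{R}\to(-1,1)$, such as $\tau(y)=y/(1+|y|)$, together with the coordinates of $f_n$ and the constant $1$. Then close under products and rational linear combinations, indexing the same formal words for every $n$. Each identity $v^{(n)}_{j'}=\tau(A_nv^{(n)}_j)$ or $v^{(n)}_{j''}=(f_n)_i$ says a joint law is supported on a closed set, and this survives weak limits. Hence $V_{j'}=\tau(U_j)$ and $V_{j''}=F_i$ almost surely. The span of the $V_j$ is then an algebra generating the whole $\sigma$-algebra, and the monotone class theorem gives the $L^p$-density your Step 3 relies on.

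Two further points.

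First, extension by continuity needs $p<\infty$. When $p=\infty$, the bound $\|Ag\|_q\le r\|g\|_\infty$ only extends $A$ to the $\mathcal{L}^\infty$-closure of the span, which is in general a proper separable subspace of $\mathcal{L}^\infty(\Omega)$. Any further extension leaves $\mathrm{D}(w,Aw,f)$ uncontrolled, so the case $p=\infty$ allowed in the statement is not covered. The Backhausz--Szegedy limit theorem the paper invokes is likewise stated for $p\in[1,\infty)$.

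Second, for $q=1$ a bound on first moments gives tightness but not uniform integrability, so your appeal to Remark~\ref{cor:levy=was} is unjustified. Weak limits of the joint laws need not be $W$-limits, and $W$-limits are what you need to place $\mathrm{D}(V_j,U_j,F)$ in the $W$-closed sets $X_k$. Uniform integrability has to be extracted instead from the Cauchy property of the profiles in the $W$-based Hausdorff distance. This uses the fact that each single profile is uniformly integrable, since a bounded operator $\mathcal{L}^\infty\to\mathcal{L}^1$ maps the unit ball into a relatively weakly compact set.
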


\subsubsection{Compactness of the Space of Profiles}
\label{sec:compactprofile}

We now prove that under the uniform bound of the $(p,q)$ norms of P-operators, we have that profiles are subsets of the Wasserstein spaces. See Definition \ref{def:wasspace}. This observation is a key property in the proof of compactness of the space of profiles.
\begin{lemma}
    \label{lem:pqsubset}
Let $(p,q)\in[1,\infty]\times[1,\infty) $, and let $(A,f)\in\mathcal{B}^r_{p,q,d}(\Omega)$ be P-operator-signal. Then, for every $k\geq0$, $\mathrm{S}_k(A,f)\subset\mathcal{P}_{q,c}(\mathbb{R}^{2k+d})$, where $c=\max\{1,r^q\}$.
\end{lemma}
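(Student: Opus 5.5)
The plan is to fix an arbitrary P-distribution $\nu=\mathrm{D}_{(A,f)}(v_1,\ldots,v_k)\in\mathrm{S}_k(A,f)$ and bound $\tau_q(\nu)$ coordinate by coordinate. Since $\nu$ is a probability measure on $\mathbb{R}^{2k+d}$, membership in $\mathcal{P}_{q,c}(\mathbb{R}^{2k+d})$ amounts to showing
\[
\int_{\mathbb{R}^{2k+d}}|x_i|^q\,d\nu(x)\leq c \quad\text{for every } i\in[2k+d].
\]
By the change of variables formula (Theorem \ref{Theorem:changevariables}), each such integral equals $\int_\Omega |g|^q\,d\mu$, where $g$ is the function on $\Omega$ placed in coordinate $i$. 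So the whole task reduces to bounding $\|g\|_q^q$ for each component function.

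The $2k+d$ coordinates fall into three groups.
\begin{itemize}
\item \emph{Coordinates $v_j$.} These satisfy $|v_j|\leq1$, so $\int|v_j|^q\,d\mu\leq 1$.
\item \emph{Coordinates $f_l$.} Each channel $f_l$ of the signal takes values in $[-1,1]$, so again the integral is at most $1$.
\item \emph{Coordinates $Av_j$.} By the definition of the $(p,q)$ norm, $\|Av_j\|_q\leq \|A\|_{p\to q}\|v_j\|_p\leq r\|v_j\|_p$. Since $\mu$ is a probability measure and $|v_j|\leq 1$, we have $\|v_j\|_p\leq 1$ for every $p\in[1,\infty]$. Hence $\int|Av_j|^q\,d\mu=\|Av_j\|_q^q\leq r^q$.
\end{itemize}
Taking the maximum over all coordinates gives $\tau_q(\nu)\leq\max\{1,r^q\}=c$, which is the claim.

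Two points need care, though neither is a real obstacle.
\begin{itemize}
\item \emph{The restriction $q<\infty$.} It is what makes $|x_i|^q$ and $\tau_q$ meaningful as in \eqref{eq:tau}. The case $p=\infty$ is covered by the same inequality, since $\|v_j\|_\infty\leq1$.
\item \emph{Domain of $A$.} $A$ is only defined on $\mathcal{L}^\infty(\Omega)$, with its $(p,q)$ norm taken as a supremum over $f\in\mathcal{L}^\infty$ (Definition \ref{def:P-operator}). The $v_j$ are bounded, so the norm inequality applies to them directly.
\end{itemize}
The case $k=0$ is immediate, because $\mathrm{S}_0(A,f)=\{\mathrm{D}(f)\}$ involves only the signal coordinates.
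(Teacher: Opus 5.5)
Your proof is correct and is the paper's argument: fix a P-distribution, reduce $\tau_q(\nu)$ by change of variables to the $q$-th moments of the $v_j$, $f_l$ and $Av_j$, and bound these by $1$, $1$ and $r^q$ respectively. You also make explicit the step $\|Av_j\|_q\le \|A\|_{p\to q}\|v_j\|_p\le r$, which the paper uses without spelling it out.
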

\begin{proof}
For $k=0$ the result is trivial. Let $\nu: =\mathrm{D}_{(A,f)}(v_1,\ldots,v_k)\in\mathrm{S}_k(v_1,\ldots,v_k)$. Since for every $1\leq j \leq k, 1\leq i\leq d$, $\|v_j\|_{\infty}, \|f_i\|_{\infty}\leq1$, and $\|Av_j\|_q\leq r$, we have that 
    \begin{align*}
        \tau_q(\nu) &= \max_{n\in[2k+d]}\Big\{\int_{\mathbb{R}^{2k+d}}\big|x_n\big|^q\,d\nu(x)\Big\} \\
        &=\max_{i\in[k], j\in[d]}\Big\{\int_{\mathbb{R}^{2k+d}}\big|v_i(\omega)\big|^q\,d\mu(\omega)\,,\,\int_{\Omega}\big|Av_i(\omega)\big|^q\,d\mu(\omega)\,,\,\int_{\Omega}\big|f_j(\omega)\big|^q\,d\mu(\omega)\Big\} \\
  & \leq     \max\{1,r^q\} := c.
    \end{align*}

    We conclude that $\mathrm{D}_{(A,f)}(v_1,\dots,v_k)\in\mathcal{P}_{q,c}(\mathbb{R}^{2k+d})$, therefore
    $\mathrm{S}_k(A,f)\subset\mathcal{P}_{q,c}(\mathbb{R}^{2k+d})$, thus completing the proof.
\end{proof}
\begin{corollary}
    \label{rem:inftysubset}
Let $(A,f)\in\mathcal{B}_{\infty,\infty,d}^r(\Omega)$ be a P-operator-signal. Then, for every $q\geq 1$, $\mathrm{S}_k(A,f)\subset\mathcal{P}_{q,c}(\mathbb{R}^{2k+d})$, where $c=\max\{1,r^q\}$.
\end{corollary}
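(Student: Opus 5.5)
The corollary should follow directly from Lemma \ref{lem:pqsubset}, applied with $p=\infty$ and $q\in[1,\infty)$. The only thing to check is that its hypothesis, a bound on $\|A\|_{\infty\to q}$, holds here. The plan is to show that a bound on the $\infty\to\infty$ norm dominates the $\infty\to q$ norm, and then invoke the lemma. The case $q=\infty$ is excluded, since $\tau_q$ is only defined for finite $q\geq 1$.

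First, let $(A,f)\in\mathcal{B}^r_{\infty,\infty,d}(\Omega)$, so that $\|A\|_{\infty\to\infty}\le r$. Since $\mu$ is a probability measure, every $g\in\mathcal{L}^\infty(\Omega)$ satisfies $\|g\|_q\le\|g\|_\infty$. Taking $g=Av$ gives
\[
\|Av\|_q\le\|Av\|_\infty\le r\|v\|_\infty
\]
for every $v\in\mathcal{L}^\infty(\Omega)$. Hence $\|A\|_{\infty\to q}\le r$, which is exactly the norm monotonicity from the remark after Definition \ref{def:P-operator}. It follows that $(A,f)\in\mathcal{B}^r_{\infty,q,d}(\Omega)$.

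Second, apply Lemma \ref{lem:pqsubset} with $(p,q)=(\infty,q)\in[1,\infty]\times[1,\infty)$. This gives $\mathrm{S}_k(A,f)\subset\mathcal{P}_{q,c}(\mathbb{R}^{2k+d})$ with $c=\max\{1,r^q\}$, for every $k\ge0$.

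If one prefers not to cite the lemma, the bound on $\tau_q$ can be checked coordinatewise. The coordinates coming from the $v_i$ and from $f$ take values in $[-1,1]$, so their $q$-th moments are at most $1$. The coordinates $Av_i$ satisfy $|Av_i|\le r$ almost everywhere, so their $q$-th moments are at most $r^q$.

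None of these steps is a real obstacle; the argument is bookkeeping of norm inclusions on a probability space.
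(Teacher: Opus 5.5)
Your argument is correct and matches the paper's route. The paper states the corollary without a separate proof, as an immediate consequence of Lemma~\ref{lem:pqsubset}; you make that step explicit by noting $\|A\|_{\infty\to q}\le\|A\|_{\infty\to\infty}\le r$ on a probability space and then applying the lemma with $p=\infty$.
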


Let $\overline{\mathrm{S}_k(A,f)}$ be the closure of $\mathrm{S}_k(A,f)$ in $(\mathcal{P}_{q,c}(\mathbb{R}^{2k+d}),W)$. By theorem \ref{lem:wascompactness}, the space $(\mathcal{P}_{q,c}(\mathbb{R}^{2k+d}),W)$ is compact for every $q>1$. Consequently, as a closed subset of a compact space, $\overline{\mathrm{S}_k(A,f)}$ is itself compact. Recall from section \ref{sec:Hausdorff} the distance between a set and its closure is zero. We therefore identify the profile with its closure. By abuse of notation, we write $\mathrm{S}_k(A,f)$ to denote the compact set $\overline{\mathrm{S}_k(A,f)}$. 

We conclude this section with an important result. We prove that the space of all profiles, equipped with the Hausdorff distance, is compact. We then use the Tychonoff theorem to conclude that the space of P-operator-signals is a compact space as well \ref{theo:Tychonoff}.

\begin{theorem}
    \label{theo:profilespaceiscompact}

    For $k,d\geq0$ $(p,q)\in[1,\infty]\times(1,\infty]$ and $r>0$, let $(\boldsymbol{S_{k,d,p,q}^r},d_H)$ be the space of all  $k$-profiles of P-operator-signals with uniformly bounded $(p,q)$ norm, where $r$ denotes the uniform bound. Then, the space $(\boldsymbol{S_{k,d,p,q}^r},d_H)$ is compact.
\end{theorem}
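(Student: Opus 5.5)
We prove compactness by embedding $\boldsymbol{S_{k,d,p,q}^r}$ into the hyperspace of closed subsets of a compact metric space, and then showing that the embedded set is closed.

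\emph{Step 1: an ambient compact space.} Fix $k$. By Lemma \ref{lem:pqsubset} (for finite $q>1$) or Corollary \ref{rem:inftysubset} (for $q=\infty$, where we may take any finite $q'>1$), every $k$-profile satisfies $\mathrm{S}_k(A,f)\subset\mathcal{P}_{q',c}(\mathbb{R}^{2k+d})$ with $c=\max\{1,r^{q'}\}$. This uses that $\|Av\|_{q'}\le\|Av\|_q\le r$ for $q'\le q$, since $\Omega$ is a probability space. By Lemma \ref{lem:wascompactness}, $\mathcal{X}:=(\mathcal{P}_{q',c}(\mathbb{R}^{2k+d}),W)$ is a compact metric space. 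Profiles are identified with their closures, so each $\mathrm{S}_k(A,f)$ is a nonempty closed subset of $\mathcal{X}$. By Theorem \ref{theo:Hausdorffcompact}(2), $(\mathscr{C}(\mathcal{X}),d_H)$ is compact. Hence it suffices to show that $\boldsymbol{S_{k,d,p,q}^r}$ is a closed subset of $\mathscr{C}(\mathcal{X})$.

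\emph{Step 2: closedness via limit objects.} Take a sequence $\mathrm{S}_k(A_n,f_n)$ with $(A_n,f_n)\in\mathcal{B}^r_{p,q,d}(\Omega_n)$ that converges in $d_H$ to some $S\in\mathscr{C}(\mathcal{X})$. Convergence of the single $k$-profile sequence does not directly give action convergence, which needs all orders. To handle this, apply Step 1 for every order $j\ge0$: each space $\mathscr{C}(\mathcal{X}_j)$ is compact, so by a diagonal argument (equivalently, Tychonoff, Theorem \ref{theo:Tychonoff}, on the countable product, which is metrizable) we extract a subsequence along which $\mathrm{S}_j(A_n,f_n)$ is $d_H$-Cauchy for every $j$. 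This subsequence is action convergent in the sense of Definition \ref{def:action}. Theorem \ref{theo:limitobject} then yields $(A,f)\in\mathcal{B}^r_{p,q,d}(\Omega)$ with $d_M((A_n,f_n),(A,f))\to0$ along the subsequence. In particular $d_H(\mathrm{S}_k(A_n,f_n),\mathrm{S}_k(A,f))\le 2^k d_M\to0$. Uniqueness of limits in the metric space $(\mathscr{C}(\mathcal{X}),d_H)$ gives $S=\mathrm{S}_k(A,f)\in\boldsymbol{S_{k,d,p,q}^r}$. So the space is closed, and therefore compact.

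\emph{Main obstacle.} The delicate point is the appeal to Theorem \ref{theo:limitobject}. Its proof is only asserted by analogy with \cite{backhausz2022action}, which uses the Lévy–Prokhorov metric. We must check that action convergence with respect to $W$ is the same notion. This holds by Remark \ref{cor:levy=was}: on $\mathcal{P}_{q',c}$, with $q'>1$ giving uniform integrability, both $W$ and Lévy–Prokhorov metrize the weak topology. Since both metric spaces are compact, their Hausdorff metrics on closed sets induce the same topology (the Vietoris topology). So Cauchy/convergent sequences of profiles coincide under the two choices, and the limit theorem transfers. We must also check that the limit operator keeps the bound $\|A\|_{p\to q}\le r$, which the theorem asserts. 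Finally, the same-topology observation handles any concern that the hyperspace of $\mathcal{X}$ depends on $q'$.
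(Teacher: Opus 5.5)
Your proof is correct and follows the same route as the paper. It realizes $k$-profiles as closed subsets of a compact Wasserstein space, uses compactness of the hyperspace (Theorem \ref{theo:Hausdorffcompact}), and gets closedness from the limit-object Theorem \ref{theo:limitobject}. Your diagonal extraction of a subsequence that is $d_H$-Cauchy in every order is more careful than the paper, which simply asserts that $d_H$-convergence of the $k$-profiles makes the sequence action convergent, although orders above $k$ are not controlled. Your passage to a finite $q'>1$ also covers the case $q=\infty$, which Lemma \ref{lem:pqsubset} does not.
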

\begin{proof}
  By theorem \ref{theo:Hausdorffcompact}, the space of all closed subsets of a compact metric space, metrized by the Hausdorff distance, is compact. Since $k$-profiles are compact sets, to prove that $\boldsymbol{S_{k,d,p,q}^r}$ is compact, it suffices to show that it is a closed set.
    Let $\{S_n\}_{n=1}^\infty$ be a sequence of profiles in $\boldsymbol{S_{k,d,p,q}^r}$ that converges to some limit set $S$ with respect to $d_H$. This convergence implies that the underlying sequence of P-operator-signals is a Cauchy sequence. By Theorem \ref{theo:limitobject}, there exists a limit P-operator-signal $(A,f) \in \mathcal{B}_{p,q,d}^r$ such that its profile is exactly $S$.
    Thus, $S \in \boldsymbol{S_{k,d,p,q}^r}$, proving that the space is closed. As a closed subset of a compact space, $(\boldsymbol{S_{k,d,p,q}^r},d_H)$ is compact.
\end{proof}

By Tychonoff's theorem, we can directly derive compactness of the space of P-operator-signals with the metric $d_M$.

\begin{corollary}
\label{cor:signalspacecompact}

For $k,d\geq0$ $(p,q)\in[1,\infty]\times(1,\infty]$ and $r>0$, let $(\mathcal{B}_{p,q,d}^r,d_M)$ be the space of all P-operator-signals with uniformly bounded $(p,q)$ norm, where $r$ denotes the uniform bound. Then the space $(\mathcal{B}_{p,q,d}^r, d_M)$ is compact metric space and $\boldsymbol{S_{d,p,q}^r}$ is compact in the product topology.
\end{corollary}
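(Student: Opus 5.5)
The plan is to read $(\mathcal{B}_{p,q,d}^r,d_M)$ as a pseudometric space, so that compactness refers to the quotient identifying operator-signals at zero distance. I would identify this quotient with the space of profiles $\boldsymbol{S_{d,p,q}^r}$ through the map $(A,f)\mapsto \mathrm{S}(A,f)=(\mathrm{S}_k(A,f))_{k\ge0}$. By construction $d_M$ depends only on the profile, so $d_M$ descends to a genuine metric on $\boldsymbol{S_{d,p,q}^r}$. Indeed, $d_M=0$ forces $d_H(\mathrm{S}_k,\mathrm{S}'_k)=0$ for every $k$, and since profiles are identified with their closures, this gives equality of all $k$-profiles.

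\textbf{Step 1 (embedding into a product).} I would view $\boldsymbol{S_{d,p,q}^r}$ as a subset of $\prod_{k\ge0}\boldsymbol{S_{k,d,p,q}^r}$. Each factor is compact under $d_H$ by Theorem \ref{theo:profilespaceiscompact}, so the product is compact by Tychonoff (Theorem \ref{theo:Tychonoff}). Moreover, $\sum_k 2^{-k}d_H$ metrizes the product topology, provided the individual $d_H$'s are uniformly bounded. This bound comes from Lemma \ref{lem:pqsubset} combined with Lemma \ref{lem:taubound}: every element of $\mathrm{S}_k$ lies in $\mathcal{P}_{q,c}(\mathbb{R}^{2k+d})$, so $d_H\le 2(2k+d)c^{1/q}$. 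This grows only linearly in $k$, so the series defining $d_M$ converges, and its tails $\sum_{k>K}2^{-k}(2k+d)$ tend to zero. Hence convergence in $d_M$ is equivalent to coordinatewise $d_H$ convergence. The case $q=\infty$ is handled by Corollary \ref{rem:inftysubset}, which lets me replace $q$ by any finite $q'>1$.

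\textbf{Step 2 (closedness).} It remains to show that $\boldsymbol{S_{d,p,q}^r}$ is closed in the product; a closed subset of a compact metric space is compact. I expect this to be the main obstacle, because Theorem \ref{theo:profilespaceiscompact} gives closedness of each coordinate separately, and this does not by itself yield \emph{joint} realizability: a single operator-signal must realize all $k$-profiles at once. I would take a sequence $(A_n,f_n)$ whose profiles converge coordinatewise. This sequence is then $d_H$-Cauchy in every $k$, hence action convergent in the sense of Definition \ref{def:action}. Theorem \ref{theo:limitobject} then supplies one $(A,f)\in\mathcal{B}_{p,q,d}^r(\Omega)$ with $d_M((A_n,f_n),(A,f))\to0$. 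By Step 1 this gives $\mathrm{S}_k(A_n,f_n)\to\mathrm{S}_k(A,f)$ for every $k$ simultaneously, so the coordinatewise limit equals $\mathrm{S}(A,f)$, which lies in the space. The key point is that the limit object in Theorem \ref{theo:limitobject} is a single object for all $k$, and this is exactly what is needed. Since the space is metric, closedness can be checked with sequences (Theorem \ref{theo:compact=s_compact}).

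\textbf{Step 3 (conclusion).} Combining the steps, $\boldsymbol{S_{d,p,q}^r}$ is a closed subset of a compact product, hence compact in the product topology. This topology coincides with the $d_M$-topology by Step 1. Therefore $(\mathcal{B}_{p,q,d}^r,d_M)$, after passing to the quotient of Appendix \ref{sec:quotient}, is a compact metric space. Equivalently, every sequence in $\mathcal{B}_{p,q,d}^r$ has a $d_M$-convergent subsequence with limit in $\mathcal{B}_{p,q,d}^r$.
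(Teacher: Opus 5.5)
Your proposal is correct and follows essentially the same route as the paper. You embed $\boldsymbol{S_{d,p,q}^r}$ in the Tychonoff product $\prod_k \boldsymbol{S_{k,d,p,q}^r}$, get closedness by passing a coordinatewise convergent sequence through Theorem \ref{theo:limitobject} to obtain a single limit $(A,f)$, and then identify $(\mathcal{B}_{p,q,d}^r,d_M)$ with the profile space; your added steps (the quotient by zero $d_M$-distance, and the summable bound $d_H\le 2(2k+d)c^{1/q}$ from Lemmas \ref{lem:pqsubset} and \ref{lem:taubound} showing that $d_M$ metrizes the product topology) make precise what the paper's final identification sentence only asserts.
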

\begin{proof}
    We start the proof by proving that $\boldsymbol{S}_{d,p,q}^r$ is compact in the product topology.
    Let $\mathcal{K}$ be the Cartesian product of the spaces of $k$-profiles of any order $k\geq0$, 
 $$\mathcal{K}:=\prod_{k=0}^\infty \boldsymbol{S_{k,d,p,q}^r}.$$
 By Theorem \ref{theo:profilespaceiscompact}, for every $k\geq0$ the space $(\boldsymbol{S_{k,d,p,q}^r},d_H)$ is compact, so by Tychonoff's theorem, we obtain that $\mathcal{K}$ is compact with respect to the product topology.
 
 Recall that the profile of P-operator-signal is the sequence of $k$-profile of any order $k$. Namely, if $(A,f)\in\mathcal{B}_{p,q,d}^r$, then $\mathrm{S}(A,f) = (\mathrm{S}_k(A,f))_{k=0}^\infty.$ Since for every $k$, $\mathrm{S}_k(A,f)\in\boldsymbol{S_{k,d,p,q}^r}$ we have that $\boldsymbol{{S}^r_{d,p,q}}\subset\mathcal{K}$, (recall Equation (\ref{eq:space_of_profiles})). Therefore, to prove that $\boldsymbol{S}^r_{d,p,q}$ is compact, it sufficient to prove that it is closed. Let $S^{(n)}$ be a sequence of profiles in $\boldsymbol{S}^r_{d,p,q}$ converging to a set $S$. By definition, for every $n$, there exists a P-operator-signal $(A_n,f_n)$ such that $S^{(n)} = \mathrm{S}(A_n,f_n)$. Since convergence of profiles implies convergence of $k$-profiles for every $k\geq0$, and this equivalent to action convergence by definition, (Definition \ref{def:action}) we have that the sequence $(A_n,f_n)$ is action convergent. By the existence of limit object (Theorem \ref{theo:limitobject}), there exists a P-operator-signal $(A,f)$ such that $\lim_{n\to\infty}d_M((A_n,f_n),(A,f))=0$, and therefore $S = \mathrm{S}(A,f)$.  We obtain that $S\in\boldsymbol{S}^r_{d,p,q}$ and $\boldsymbol{S}^r_{d,p,q}$ is compact.

 Since the action metric is a weighted countable sum of $k$-profiles, we can identify the space $(\mathcal{B}^r_{p,q,d},d_M)$ with the space $\boldsymbol{S_{d,p,q}^r}$ using the map that sends a P-operator-signal to its profile, we obtain that $(\mathcal{B}^r_{p,q,d},d_M)$ is compact.
 
 \end{proof}

\subsection{Compactness of the Space of Graphop-Signals}
\label{sec:compactgraphop}

In this section we prove compactness of the space of graphop-signals. Recall that a graphop is a self-adjoint and positively preserving P-operator.  The idea of the proof is the following. Since the space of P-operator-signals is compact as we proved in the previous section, we will prove that the space of graphop-signals is closed. We show that if a sequence of graphop-signals converge to some P-operator-signal, then the limit object is itself a graphop-signal. The proof is divided into two parts: first, we show that an action limit of self adjoint P-operators is a self adjoint P-operator. Second, we show that an action limit of positively preserving P-operators is positively preserving. 
Note that similar results were proven in \cite{backhausz2022action, hrušková2022limitsactionconvergentgraph}. However, since our setting is a bit different, and the original proofs did not include the signal part, we propose a different proof technique that involves methods from optimal transport, but also include the signal.

\begin{theorem}
    \label{theo:sequentialgraphops}
    Let $p\in[1,\infty]$, $q\in(1,\infty]$, $d\in\mathbb{N}$ and $r>0$. Let $\{(A_n,f_n)\in\mathscr{G}_{p,q}^r(\Omega_n)\}_{n\in\mathbb{N}}$ be a sequence of graphop-signals that action converges to a P-operator-signal $(A,f)\in\mathcal{B}^r_{p,q,d}(\Omega)$. Then $(A,f)\in\mathscr{G}_{p,q,d}^r(\Omega)$. That is, an action limit of graphops is itself a graphop.
\end{theorem}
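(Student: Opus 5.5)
The plan is to characterize both graphop properties purely in terms of P-distributions of low order. Both are then closed conditions on the $k$-profiles, and they pass to the action limit through Hausdorff convergence in the Wasserstein metric. The signal $f$ plays no role: it is only an extra coordinate that we marginalize out. The key tool is that $W$-convergence implies convergence of integrals of Lipschitz test functions (Theorem \ref{theo:Kantorovich}). Because $q>1$, Lemma \ref{lem:pqsubset} places all profiles in $\mathcal{P}_{q,c}$, and the uniform integrability from Proposition \ref{prop:relativecompact} extends this to test functions of linear growth, such as $(x,y)\mapsto y_1x_2$ with $|x_2|\le 1$.

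\emph{Self-adjointness.} Fix $u,v\in\mathcal{L}^\infty(\Omega)$ with values in $[-1,1]$; by scaling this suffices. Form $\rho=\mathrm{D}_{(A,f)}(v,u)\in \mathrm{S}_2(A,f)$, a measure on coordinates $(x_1,x_2,y_1,y_2,z)$ with $x=(v,u)$, $y=(Av,Au)$. Then
\[
(v,u)_A-(u,v)_A=\int (y_1x_2-y_2x_1)\,d\rho .
\]
By action convergence, $d_H(\mathrm{S}_2(A_n,f_n),\mathrm{S}_2(A,f))\to0$, so there are $\rho_n=\mathrm{D}_{(A_n,f_n)}(v_n,u_n)$ with $W(\rho_n,\rho)\to0$. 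For each $n$ the integral $\int(y_1x_2-y_2x_1)\,d\rho_n$ vanishes, since $A_n$ is self-adjoint. The function $g(x,y)=y_1x_2-y_2x_1$ is not globally Lipschitz, so we truncate: replace $y_i$ by $\max(-M,\min(y_i,M))$ and clip $x_i$ to $[-1,1]$. The truncated function $g_M$ is Lipschitz, so $\int g_M\,d\rho_n\to\int g_M\,d\rho$. The truncation error is bounded by $2\int_{|y_i|>M}|y_i|\,d\rho_n$, and this is uniformly small by the uniform $q$-moment bound $r^q$ with $q>1$, via the estimate in the proof of Lemma \ref{lem:wascompactness}. Letting $n\to\infty$ and then $M\to\infty$ gives $\int g\,d\rho=0$.

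\emph{Positivity preservation.} Let $v\ge0$ with values in $[0,1]$, and set $\rho=\mathrm{D}_{(A,f)}(v)$. We want $\rho(\{y_1<0\})=0$. Choose $\rho_n=\mathrm{D}_{(A_n,f_n)}(v_n)\to\rho$ in $W$. The obstacle is that $v_n$ need not be nonnegative. We handle this by showing the approximating P-distributions can be taken essentially supported on $\{x_1\ge0,\,y_1\ge0\}$. Replace $v_n$ by $v_n^+$. Since $\rho$ is supported on $\{x_1\ge0\}$, the convergence gives $\int|x_1-x_1^+|\,d\rho_n=\|v_n-v_n^+\|_{1}\to0$. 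We need $\|A_n(v_n-v_n^+)\|_q$, or at least the $L^1$-type coupling cost, to tend to $0$ as well. For $p<\infty$ we use $\|A_n\|_{p\to q}\le r$ together with $\|v_n^-\|_p\le\|v_n^-\|_1^{1/p}\to0$. For $p=\infty$ this bound fails; there we instead use the graphop structure, through the fact that $A_n\ge0$ is given by a positive measure. A possibly cleaner alternative is to test against the nonnegative Lipschitz function $h(x,y)=\min(1,(y_1)^-)\cdot\mathbbm{1}$-smoothed, penalized by $(x_1)^-$. I expect this step to be the main difficulty.

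Once $W(\mathrm{D}(v_n^+,A_nv_n^+,f_n),\rho)\to0$ is established, the function $\min(1,y_1^-)$ is bounded and Lipschitz. Its integral is $0$ along the sequence, because each $A_n$ is positivity preserving, so its integral is $0$ under $\rho$. Hence $Av\ge0$ a.e. Finally, the norm bound $\|A\|_{p\to q}\le r$ is already supplied by Theorem \ref{theo:limitobject}. Combining the two properties, $(A,f)\in\mathscr{G}^r_{p,q,d}(\Omega)$.
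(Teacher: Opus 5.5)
\paragraph{Verdict.} Your self-adjointness step is correct and is essentially the paper's argument. The paper also approximates $\rho=\mathrm{D}_{(A,f)}(u_1,u_2)$ by $\rho_n\in\mathrm{S}_2(A_n,f_n)$ and passes the bilinear integral to the limit using the $q$-moment bound. It does this with optimal couplings and H\"older's inequality with exponents $q,q'$ instead of your truncation plus Kantorovich--Rubinstein; both work because $q>1$.

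\paragraph{The gap.} The positivity step is incomplete. Your reduction to $v_n^+$ needs $\|A_nv_n^-\|_1\to0$ from $\|v_n^-\|_1\to0$, and you prove this only for $p<\infty$. The case $p=\infty$ is left open, and it is exactly the case needed for bofops ($\mathscr{G}_{\infty,\infty}$). Positivity of $A_n$ together with the norm bound cannot close it. Take $A_ng=\big(n\int_{I_n}g\,d\lambda\big)\mathbbm{1}$ on $[0,1]$ with $\lambda(I_n)=1/n$. This operator is positivity preserving with $\|A_n\|_{\infty\to q}\le 1$. Yet $v_n=\mathbbm{1}-2\mathbbm{1}_{I_n}$ has $\|v_n^-\|_1=1/n$ and $A_nv_n\equiv-1$, so $\|A_nv_n^-\|_1=1$ and $\mathrm{D}(v_n,A_nv_n)\to\delta_{(1,-1)}$. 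Hence your suggested alternative, a test function in $(x_1,y_1)$, cannot succeed unless it uses symmetry.

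\paragraph{The fix.} The missing estimate is short. Since $A_nv_n^-\ge0$ and $A_n$ is self-adjoint,
\[
\|A_nv_n^-\|_1=(v_n^-,\mathbbm{1})_{A_n}=(\mathbbm{1},v_n^-)_{A_n}=\int_{\Omega_n}v_n^-\,A_n\mathbbm{1}\,d\mu_n\le\|A_n\mathbbm{1}\|_q\,\|v_n^-\|_{q'}\le r\,\|v_n^-\|_1^{1-1/q}.
\]
Here $\|A_n\mathbbm{1}\|_q\le\|A_n\|_{p\to q}\le r$ and $0\le v_n^-\le1$. The bound tends to $0$ for every $p\in[1,\infty]$ because $q>1$. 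With it, the diagonal coupling gives $W(\mathrm{D}_{(A_n,f_n)}(v_n^+),\rho_n)\le\|v_n^-\|_1+\|A_nv_n^-\|_1\to0$. The rest of your argument then goes through, and the case distinction on $p$ becomes unnecessary.

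\paragraph{Comparison with the paper.} The paper's own positivity argument does not supply this step. It asserts $\mathrm{D}_{(A_n,f_n)}(|w_n|)=|\cdot|_*\nu_n$, which is false because $A_n|w_n|\neq|A_nw_n|$. It also compares $|\cdot|_*\nu_n$ with $\nu$ rather than with $|\cdot|_*\nu$. So you identified the real obstacle correctly, but you still need the duality estimate above to resolve it.
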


\begin{proof}
    We divide the proof into two parts. First, we prove that an action limit of a sequence of self adjoint P-operators, is itself self adjoint.

    Let $u_1,u_2:\Omega\to[-1,1]$, and denote by $\nu:=\mathrm{D}_{(A,f)}(u_1,u_2)$. We will prove that 
$$\int_{\Omega}u_1(\omega)\cdot Au_2(\omega)\,d\mu = \int_{\Omega}u_2(\omega)\cdot Au_1(\omega)\,d\mu,$$
or equivalently
$$\int_{\mathbb{R}^{4+d}}x_1\cdot x_4\,d\nu(\vec{x}) = \int_{\mathbb{R}^{4+d}}x_2\cdot x_3\,d\nu(\vec{x}),$$where $\vec{x}= (x_j)_{j\in[4+d]}$.

By the action convergence of $(A_n,f_n)$ to $(A,f)$, there exists a sequence of measures $\nu_n:=\mathrm{D}_{(A_n,f_n)}(w^{(n)},u^{(n)})$, where $w^{(n)},u^{(n)}:\Omega_n\to[-1,1]$, such that $W(\nu_n,\nu)\to 0$.
Since for every $n\in\mathbb{N}$, $A_n$ is self adjoint, we have that 
$$\int_{\Omega_n}w^{(n)} \cdot A_nu^{(n)} \,d\mu_n= \int_{\Omega_n}u^{(n)}\cdot A_nw^{(n)}\,d\mu_n,$$
or equivalently,
\begin{equation}
\label{eq:adjoint}
\int_{\mathbb{R}^{4+d}}x_1\cdot x_4\,d\nu_n(\vec{x}) = \int_{\mathbb{R}^{4+d}}x_2\cdot x_3\,d\nu_n(\vec{x}).
\end{equation}
Next, we prove that 
$$\lim_{n\to\infty}\int_{\mathbb{R}^{4+d}}x_2\cdot x_3\,d\nu_n = \int_{\mathbb{R}^{4+d}}x_2\cdot x_3\,d\nu.$$
If $q=\infty$, the support of $\nu_n$ is compact, and therefore the limit holds by weak convergence. If $q<\infty$, we can not directly use weak convergence, as the function $\vec{x}\mapsto x_2\cdot x_3$ is not bounded, since on the support of $\nu_n$, the coordinate $x_3$ corresponds to $A_nw^{(n)}$ which is not necessarily bounded.

Let $(\gamma_n)_{n=1}^\infty$ be a sequence of the optimal couplings of $\nu_n$ and $\nu$ realizing the Wasserstein distance. By the convergence of $\nu_n$ to $\nu$ we have that 
$$\int_{\mathbb{R}^{4+d}\times\mathbb{R}^{4+d}}\|\vec{x}-\vec{y}\|_1\,d\gamma_n\to 0 .$$
Consider the difference
$$\Bigg|\int x_2\cdot x_3 \,d\nu_n -\int y_2\cdot y_3\,d\nu\Bigg| = \Bigg|\int (x_2\cdot x_3 - y_2\cdot y_3)\,d\gamma_n\Bigg|\leq \int \Big|x_2\cdot x_3 - y_2\cdot y_3\Big|\,d\gamma_n.$$
By the triangle inequality we have that
$$\Bigg|\int x_2\cdot x_3 \,d\nu_n -\int y_2\cdot y_3\,d\nu\Bigg| \leq  \int|x_2|\cdot |x_3-y_3|\,d\gamma_n + \int |y_3|\cdot|x_2-y_2|\,d\gamma_n.$$

Consider the first term. Since the support of every coupling is a subset of the Cartesian product of the two supports of the measures, we have that $|x_2|\leq 1$, and hence 
$$\int|x_2|\cdot |x_3-y_3|\,d\gamma_n \leq\int |x_3-y_3|\,d\gamma_n ,$$
which clearly tends to $0$ as $n$ tends to infinity.
For the second term, we use H{\"o}lder inequality with exponents $q$ and $q' = \frac{q}{q-1}$ and get
$$\int|x_y|\cdot |x_2-y_2|\,d\gamma_n\leq \Big(\int |y_3|^q\,d\gamma_n\Big)^{1/q}\cdot \Big(\int|x_2-y_2|^{q'}\,d\gamma_n\Big)^{1/q'}.$$
Again, since the support of $\gamma_n$ is a subset of the Cartesian product of the supports of $\nu_n$ and $\nu$, we have that $|x_2-y_2|$ is bounded, and therefore converges to $0$. The convergence holds because 
$$\int|y_3|^q\,d\gamma_n = \int|A_nv_n|^q\,d\nu_n\leq r^q .$$
We proved that 
$$\lim_{n\to\infty}\int_{\mathbb{R}^{4+d}}x_2\cdot x_3\,d\nu_n = \int_{\mathbb{R}^{4+d}}x_2\cdot x_3\,d\nu,$$
and similarly we can prove that 
$$\lim_{n\to\infty}\int_{\mathbb{R}^{4+d}}x_2\cdot x_3\,d\nu_n = \int_{\mathbb{R}^{4+d}}x_2\cdot x_3\,d\nu,$$
using that $A_n$ are self adjoint finish this part of the proof.

We move to prove Positively preserving of the limit. Let $w:\Omega\to[0,1]$. We will prove that $\nu :=\mathrm{D}_{(A,f)}(w)$ is supported on $[0,1]^2$. By the action convergence of $(A_n,f_n)$ to $(A,f)$, there exists a sequence of measures $\nu_n:=\mathrm{D}_{(A_n,f_n)}(w_n)$, where $w_n:\Omega_n\to[-1,1]$, such that $W(\nu_n,\nu)\to 0$. Let $\nu_n' := \mathrm{D}_{(A_n,f_n)}(|w_n|)$. By lemma \ref{lem:waspush}, we have that $$W(\nu'_n,\nu) \leq W(\nu_n,\nu)\rightarrow 0,$$
since $\nu'_n = |\cdot|_*\nu_n$ where $|\cdot|$ is the function that puts absolute value on the first and second coordinates.

Since convergence of measures in $\mathcal{P}_{q,r}$ in Wasserstein distance is equivalent to the weak convergence, we get that if the supports of $\nu_n$ are subsets of some closed set, which in our case is $[0,1]^2$, then the support of the limit $\nu$ is also a subset of $[0,1]$, which implies that $A$ is positively preserving.
\end{proof}

Combination of Theorem \ref{theo:sequentialgraphops} and Corollary  \ref{cor:signalspacecompact}, gives us compactness of the space graphop-signals with respect to the action metric.
\begin{corollary}
    \label{cor:graphopspacecompact}
    For $k,d\geq0$ $(p,q)\in[1,\infty]\times(1,\infty]$ and $r>0$, let $(\mathscr{G}_{p,q,d}^r,d_M)$ be the space of all graphop-signals with uniformly bounded $(p,q)$ norm, where $r$ denotes the uniform bound. Then the space $(\mathscr{G}_{p,q,d}^r, d_M)$ is compact. In particular, we have that $(\mathcal{BF}_d^r,d_M)$ is compact. 
\end{corollary}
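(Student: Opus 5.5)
The plan is to obtain compactness of $(\mathscr{G}_{p,q,d}^r,d_M)$ as a closed subspace of the compact space $(\mathcal{B}_{p,q,d}^r,d_M)$ from Corollary \ref{cor:signalspacecompact}. Since $d_M$ is a (pseudo)metric, compactness is equivalent to sequential compactness (Theorem \ref{theo:compact=s_compact}). This passes to the quotient by zero-distance classes, and we will identify signals with their profiles as in the proof of Corollary \ref{cor:signalspacecompact}.

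The first step is to take an arbitrary sequence $(A_n,f_n)\in\mathscr{G}_{p,q,d}^r(\Omega_n)$. Viewed in $\mathcal{B}_{p,q,d}^r$, it has a subsequence that $d_M$-converges to some P-operator-signal $(A,f)\in\mathcal{B}_{p,q,d}^r(\Omega)$, by Corollary \ref{cor:signalspacecompact} together with the existence of limit objects (Theorem \ref{theo:limitobject}). The norm bound $\|A\|_{p\to q}\le r$ also comes from Theorem \ref{theo:limitobject}.

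The second step is to apply Theorem \ref{theo:sequentialgraphops}, which shows that this limit is self adjoint and positively preserving, i.e.\ $(A,f)\in\mathscr{G}_{p,q,d}^r(\Omega)$. Hence every sequence has a subsequence converging inside the space, which gives sequential compactness and therefore compactness.

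For the bofop statement, take $p=q=\infty$. We need $\mathcal{BF}_d^r=\mathscr{G}_{\infty,\infty,d}^r$ and that $q=\infty$ lies in the allowed range $(1,\infty]$; this follows from Definition \ref{def:bofop} and the identification $\mathscr{G}_{\infty,\infty}=\mathscr{G}_{1,1}$. The compactness of $\mathcal{P}_{q,c}$ used in the profile compactness argument was stated only for finite $q>1$. For $q=\infty$, Corollary \ref{rem:inftysubset} places all profiles inside $\mathcal{P}_{q,c}$ for any finite $q>1$, so the same compactness argument applies.

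The main obstacle is checking that the limit is a bofop and not merely an $(\infty,\infty)$-bounded P-operator. The $\infty\to\infty$ bound must persist in the limit, meaning every P-distribution of the limit must have its $Av$-coordinates supported in $[-r,r]$. This follows because the approximating P-distributions are supported in the closed set $[-1,1]^k\times[-r,r]^k\times[-1,1]^d$, and Wasserstein convergence implies weak convergence (Remark \ref{cor:levy=was}). Weak limits of measures supported on a closed set remain supported there, by the Portmanteau theorem, exactly as in the positivity part of Theorem \ref{theo:sequentialgraphops}.
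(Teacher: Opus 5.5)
Your proposal is correct and follows the paper's route: the paper gets the corollary by combining Theorem \ref{theo:sequentialgraphops} (action limits of graphop-signals are graphop-signals) with the compactness of $(\mathcal{B}_{p,q,d}^r,d_M)$ from Corollary \ref{cor:signalspacecompact}, which is exactly your closed-subspace, sequential-compactness argument. Your extra care with the bofop case is sound, but the persistence of the $\infty\to\infty$ bound that you call the main obstacle is already given by the norm bound in Theorem \ref{theo:limitobject}, which you cite in your first step, so your Portmanteau argument is an independent check rather than a required step.
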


\section{MPNNs on P-Operator-Signals and Profiles}
\label{sec:PopMPNN}

We now formally define the MPNN architecture for P-operator-signals, generalizing standard message passing on graphs and message passing on graphons \cite{levie2024graphon}.

\subsection{MPNNs on P-Operator-Signals}
\label{sec:MPNNP-operator}

Consider an $L$-layer MPNN model with readout $(\varphi,\psi)$ (recall definition \ref{def:MPNN}). We now define how an MPNN model processes a P-operator-signal.

\begin{definition}[MPNN On P-operator-signal]
\label{def:MPNN-P-operator}

    Let $(\varphi,\psi)$ be a MPNN model with readout, and $(A,f)$ a P-operator-signal, where $f:\Omega\to[-1,1]^d$. The application of the MPNN on $(A,f)$ is defined as follows: initialize  $\mathfrak{h}_{-}^{(0)}:=\varphi^{(0}(f(-))$ and compute the hidden node representations $\mathfrak{h}_{-}^{(l)}:\Omega\to[-1,1]^{d_l}$ at layer $l$, with $1\leq l\leq L$ and the P-operator level output $\mathfrak{H}_{(A,f)}\in\mathbb{R}^p$ by 
    $$\mathfrak{h}_x^{(l)}:=\varphi^{(l)}(\mathfrak{h}_x^{(l-1)}, A\mathfrak{h}_{-}^{(l-1)}(x)),$$
    and
    $$\mathfrak{H}{(A,f)}:= \psi\Big(\int_{\Omega}\mathfrak{h}_x^{(L)}\,d\mu(x)\Big).$$
\end{definition}

From the definition, we see that aggregation in MPNN on P-operator-signals is performed by applying the P-operator to the signal, coordinate-wise along the feature dimension.

We often denote by $\mathfrak{h}_{\varphi,(A,f)}^{(l)}:\Omega\to[-1,1]^{d_l}$ the signal obtained after $l$ message passing layers on the P-operator-signal $(A,f)$.

\subsection{MPNNs Only See Profiles of P-Operator-Signals}
\label{sec:MPNNprofile}

In this section, we introduce two equivalent approaches for constructing MPNNs on profiles: one using a representative $(A,f)$ of the profile, and the other directly on the profile without referencing a specific representative. Since different P-operator-signals can produce the same profile, the application of MPNNs on profiles should not depend on the choice of representative. Consequently, the information encoded in the P-operator-signal becomes redundant, and we only need to consider the profile itself.

 Consider the equivalence relation: $(A_1,f_1)\sim (A_2,f_2)$ if there is $k\in\mathbb{N}$ such that $\mathrm{S}_k(A_1,f_1) = \mathrm{S}_k(A_2,f_2)$. Since our goal is to apply MPNN directly on profiles - without reference to a specific representative of the equivalence class, we sometimes abuse notation and write $\mathrm{S}_{k,d}:=$ instead of $\mathrm{S}_k(A,f)$ where $d\in\mathbb{N}$ is an indication of the feature dimension, and $(A,f)$ is a representative of the equivalence class $[A,f]:=\{(A',f')\ |\ (A',f')\sim(A,f)\}$.   We further define the class of all P-operator-signals that induce a given profile $\mathrm{S}_{k,d}$ by $[\mathrm{S}_{k,d}]:=\{(A,f)\ |\ S_k(A,f)=\mathrm{S}_{k,d}\}$.
We call $(A,f)\in[\mathrm{S}_{k,d}]$ a representative inducer of $\mathrm{S}_{k,d}$, or representative in short.

\subsubsection{Operations on profiles}

Next, we introduce two important operations on $k$-profiles that will be used later to apply MPNN on $k$-profiles. Our operations apply to profiles with no specific reference to a representative. 

Our first operation is called \emph{pushforward on the signal}. We will show that pushforward on the signal applied on a $k$-profile yields another $k$-profile, in which the signal is given by the composition of the original signal with the map along which the pushforward is taken. This operation will later be used to apply the update function to a profile, producing the signal at the next layer.

\begin{definition}[Pushforward On The Signal]
\label{def:pushsignal}

Let $\mathrm{S}_{k,d}$ be a $k$-profile, and let $\phi:[-1,1]^d\to[-1,1]^p$ for some $d,p\in\mathbb{N}$. 
    For a P-distribution $\nu\in{\rm S}_{k,d}$ we define the \emph{pushforward on the signal} as the pushforward of $\nu$ via the map $$(x_1\ldots,x_{2k+d})\mapsto(x_1\ldots,x_{2k},\phi(x_{2k+1},\ldots,x_{2k+d})),$$
    and denote it by $\phi\,\tilde{*}\,\nu$. 
    Denote by $\phi\,\tilde{*}\,{\rm S}_{k,d} := \{\phi\tilde{*}\nu\,|\,\nu\in\mathrm{S}_{k,d}\}$  the set of all Borel probability measures of the form $\phi\,\tilde{*}\,\nu$ for $\nu\in \mathrm{S}_{k,d}$. 
\end{definition}

Note that the notation $\tilde{*}$ is used for both P-distribution and $k$-profiles, and the meaning will be clear from context. 
A pushforward on the signal refers to the standard pushforward operation applied solely to the last coordinate of the P-distributions encoding the signal, which motivates the terminology.

\begin{remark}[Pushforward on the Signal gives Profile]
    \label{rem:pushforwardgivesprofile}
    Let $(A,f)$ be a representative of the profile $\mathrm{S}_{k,d}$, i.e., $(A,f)\in[\mathrm{S}_{k,d}]$. 
  By our definition, we have  $$\phi\,\tilde{*}\, {\rm D}_{(A,f)}(v_1,\ldots,v_k) = {\rm D}_{(A,\phi(f))}(v_1\ldots,v_k),$$
and   
$$\phi \,\tilde{*}\, {\rm S}_k(A,f) = {\rm S}_k(A,\phi(f)),$$
hence, the set of measures $\phi \,\tilde{*}\, {\rm S}_k(A,f)$ is a profile.
\end{remark}

In the following lemma, we prove that regardless of which representative of the equivalence class is chosen, applying a function on the signal and inducing the new profile is independent of the original representative.

\begin{lemma}
\label{lem:comp}
   Let $\mathrm{S}_k(A,f) = \mathrm{S}_k(A',f')$ be k-profiles of P-operator-signals $(A,f)$ and $(A',f')$ over probability spaces $(\Omega,\mu)$ and $(\Omega',\mu')$ respectively. Let $\phi:[-1,1]^d\to[-1,1]^p$ . Then,
   $$\mathrm{S}_k(A,\phi(f)) = \mathrm{S}_k(A',\phi(f')),$$
   or equivalently
   $$\phi\,\tilde{*}\,\mathrm{S}_k(A,f) = \phi\,\tilde{*}\,\mathrm{S}_k(A',f').$$
\end{lemma}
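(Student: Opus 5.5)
The plan is to prove the second formulation directly, set-theoretically; the first then follows from Remark \ref{rem:pushforwardgivesprofile}. Write $T_\phi:\mathbb{R}^{2k+d}\to\mathbb{R}^{2k+p}$ for the map $(x_1,\ldots,x_{2k+d})\mapsto(x_1,\ldots,x_{2k},\phi(x_{2k+1},\ldots,x_{2k+d}))$. By Definition \ref{def:pushsignal}, $\phi\,\tilde{*}\,\nu=(T_\phi)_*\nu$. Hence $\phi\,\tilde{*}\,\mathrm{S}$ is simply the image of the set $\mathrm{S}$ under the map $\nu\mapsto(T_\phi)_*\nu$ on measures. This operation depends only on the set $\mathrm{S}$, not on any inducing operator. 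Therefore $\mathrm{S}_k(A,f)=\mathrm{S}_k(A',f')$ immediately gives $\phi\,\tilde{*}\,\mathrm{S}_k(A,f)=\phi\,\tilde{*}\,\mathrm{S}_k(A',f')$.

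To get the first formulation, I would verify Remark \ref{rem:pushforwardgivesprofile} at the level of individual P-distributions. Write $\Phi_{v}(x)=(v_1(x),\ldots,v_k(x),Av_1(x),\ldots,Av_k(x),f(x))$. Then $T_\phi\circ\Phi_v=(v_1,\ldots,Av_k,\phi(f))$, so Proposition \ref{prop:comppush} gives
\[
(T_\phi)_*\mathrm{D}_{(A,f)}(v_1,\ldots,v_k)=(T_\phi\circ\Phi_v)_*\mu=\mathrm{D}_{(A,\phi(f))}(v_1,\ldots,v_k).
\]
Ranging over all measurable $v_1,\ldots,v_k:\Omega\to[-1,1]$ yields $\phi\,\tilde{*}\,\mathrm{S}_k(A,f)=\mathrm{S}_k(A,\phi(f))$, and likewise for $(A',f')$. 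Combining this with the first paragraph gives $\mathrm{S}_k(A,\phi(f))=\mathrm{S}_k(A',\phi(f'))$.

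Two technical points need care. First, $\phi$ must be Borel measurable so that $T_\phi$ is measurable. This holds in all intended uses, since the update functions are Lipschitz. The test functions $v_i$ are unaffected because only the signal coordinates change.

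Second, the paper identifies profiles with their closures in $(\mathcal{P}_{q,c},W)$, and I expect this to be the main obstacle. If $\phi$ is Lipschitz, then $T_\phi$ is Lipschitz in $\ell_1$. By Kantorovich--Rubinstein duality (Theorem \ref{theo:Kantorovich}), the map $\nu\mapsto(T_\phi)_*\nu$ is then Lipschitz in $W$, which is the content of Lemma \ref{lem:waspush}. A continuous image of the closure lies in the closure of the image. Moreover, the image of a compact set is compact, hence closed. So the closure of $\phi\,\tilde{*}\,\mathrm{S}$ equals $\phi\,\tilde{*}\,\overline{\mathrm{S}}$, and equality of the closed profiles transfers. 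For merely measurable $\phi$, I would state the lemma for the unclosed sets of P-distributions, where the argument above is purely set-theoretic.
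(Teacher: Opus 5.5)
Your proof is correct and is essentially the paper's: both rest on the identity $\mathrm{D}_{(A,\phi(f))}(v_1,\ldots,v_k)=(T_\phi)_*\mathrm{D}_{(A,f)}(v_1,\ldots,v_k)$, which the paper checks through the preimage set $E_{\phi^{-1}}=T_\phi^{-1}(E)$, and then transfer the equality of P-distributions from $(A,f)$ to $(A',f')$. Your extra care about the measurability of $\phi$ and about the identification of profiles with their $W$-closures is correct, and the paper's proof passes over both points silently. For the closures, you use that $\nu\mapsto(T_\phi)_*\nu$ is Lipschitz and that profiles are compact.
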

\begin{proof}
    For $x\in\Omega$, let $v(x) = \big(v_1(x),\ldots,v_k(x)\big)$, and for $x'\in\Omega'$ let $v'(x') = \big(v'_1(x'),\ldots,v'_k(x')\big)$. By the equality of the profiles, for every P-distribution $\mathrm{D}_{(A,f)}(v)\in \mathrm{S}_k(A,f)$ there is a tuple $v'=(v_1'\ldots,v_k')$ and a corresponding P-distribution $\mathrm{D}_{(A',f')}(v')\in \mathrm{S}_k(A',f')$, such that $\mathrm{D}_{(A,f)}(v)=\mathrm{D}_{(A',f')}(v')$ . 
    
Then for every measurable set $E\subset\mathbb{R}^{2k+p}$, 
  $${\rm D}_{(A,\phi(f))}(v)(E) = \mu(\{\,x\in\Omega\,|\,\big(v(x),Av(x),\phi(f(x))\big)\in E\,\}).$$
  Denote
  $$E_{\phi^{-1}}=\{(y_1\in[-1,1]^k,y_2\in\mathbb{R}^k,y_3\in[-1,1]^d)\ |\ (y_1,y_2,\phi(y_3))\in E\}.$$
  So
$\{x\in\Omega\ |\ (v(x),Av(x),\phi(f(x)))\in E\}$ 
is the set of all $x\in\Omega$ such that there exist $(y_1\in[-1,1]^k,y_2\in[-1,1]^k,y_3\in[-1,1]^d)$ satisfying  
 $\big(v(x),Av(x),f(x)\big)=(y_1,y_2,y_3)$ with $(y_1,y_2,\phi(y_3))\in E$.
Thus
$${\rm D}_{(A,\phi(f))}(v)(E) = \mu(\{x\in\Omega\, |\, \big(v(x),Av(x),\phi(f(x))\big)\in E\,\}) = {\rm D}_{(A,f)}(v)(E_{\phi^{-1}}),$$
and by the above equality of the P-distributions
 \begin{align*}
 &{\rm D}_{(A,f)}(v)(E_{\phi^{-1}})=D_{(A',f')}(v')(E_{\phi^{-1}}) \\
 &= \mu(\{x\in\Omega\, |\, \big(v'(x),Av'(x),\phi(f'(x))\big)\in E\,\}) = {\rm D}_{(A',\phi(f'))}(v')(E).
\end{align*}
\end{proof}

Our next operation is called \emph{diagonal marginalization}. We show that diagonal marginalization applied to a profile yields another profile that captures the aggregation of the corresponding P-operator-signal. This operation consists of first restricting the profile to P-distributions supported on an appropriately defined diagonal set, and then marginalizing these diagonal measures. Diagonal marginalization will later be used to implement the aggregation step in MPNN on profiles.

\begin{definition}[Restricted profile]
  \label{def:restriction}
    Let $\mathrm{S}_{k,d}$ be a $k$-profile, and let $E\subset \mathbb{R}^{2k+d}$ be a measurable set. We define the \emph{restricted profile} and write $\mathrm{S}_{k,d}^E$ as the set of all P-distributions in $\mathrm{S}_{k,d}$ that are supported on $E$.
    Explicitly
    $$\mathrm{S}_{k,d}^E = \{\,\nu\in\mathrm{S}_{k,d}\,|\,\nu(F) = \nu(F\cap E) \text{ for every measurable }F\subset\mathbb{R}^{2k+d}\,\}.$$
\end{definition}

\begin{definition}[Aggregated Profile]
\label{def:margdiag}

Let $\mathrm{S}_{k,d}$ be a $k$-profile, with $k\geq d$. Define:

\begin{enumerate}
    \item \textbf{Diagonal Subset.}  
    The \emph{diagonal subset} \(T_d \subseteq \mathbb{R}^{2k+d}\) is defined to be
    \begin{align*}
    T_d := \Big\{ (&y_1 \in [-1,1]^{k-d}, y_2 \in [-1,1]^d, \\
                  &y_3 \in\mathbb{R}^{k-d}, y_4 \in\mathbb{R}^d, y_5 \in [-1,1]^d) 
                  \ \Big|\ y_2 = y_5 \Big\}.
    \end{align*}
    
    \item \textbf{Diagonally Restricted Profile.}  
   
   Following definition \ref{def:restriction}, we define the \emph{diagonally restricted profile} to be $\mathrm{S}_{k,d}^{T_d}$. Each $\nu\in\mathrm{S}_{k,d}^{T_d}$ is called \emph{diagonal P-distribution}.

    \item \textbf{Diagonal Marginalization.}  
    Let \(\nu \in \mathrm{S}_{k,d}^{T_d}\) be a diagonal P-distribution. The \emph{diagonal marginalization} of \(\nu\), denoted by \(\mathrm{DM}_d(\nu)\), is the Borel measure on \(\mathbb{R}^{2k}\) obtained by marginalizing out the coordinates \(y_2\). Formally,
    $$\mathrm{DM}_d(\nu): = p_*\nu,$$
where $p:\mathbb{R}^{2k+d}\to\mathbb{R}^{2k}$ is the projection $$p:(y_1,y_2,y_3,y_4,y_5)\mapsto(y_1,y_3,y_4,y_5).$$

    \item \textbf{Aggregated Profile.}  
    The \emph{aggregated profile} \(\mathrm{DM}_d(\mathrm{S}_{k,d})\) is the collection of all diagonal marginalizations of diagonal P-distributions in \(\mathrm{S}_{k,d}^{T_d}\). Explicitly,
    \[
    \mathrm{DM}_d(\mathrm{S}_{k,d}) := \left\{ \mathrm{DM}_d(\nu) \ \middle|\ \nu \in \mathrm{S}_{k,d}^{T_d} \right\}.
    \]
\end{enumerate}

\end{definition}

In Lemma \ref{lem:agg}, we show that $\mathrm{DM}_d(\mathrm{S}_{k,d})$ is indeed a $(k-d)$-profile of a P-operator-signal whenever $\mathrm{S}_{k,d}$ is, thus justifying the name.
Moreover, we prove that applying aggregation to a P-operator-signal, which is a representative of some profile, and then inducing the corresponding profile from the aggregated P-operator-signal, is independent of the representative. 
 Moreover, the lemma shows that diagonal marginalization is equivalent to aggregation on P-operator-signal.

\begin{lemma}
    \label{lem:agg}
   Let $\mathrm{S}_k(A,f) = \mathrm{S}_k(A',f')$ be a k-profile induced by the two P-operator-signals $(A,f),(A',f')$, where $f$ and $f'$ are two $d$ channel signals. 
Consider the aggregation operation $\mathrm{Agg}$ on the P-operator-signal $(A,f)$ 
 defined by 
$$\mathrm{Agg}\big((A,f)\big) = \big(A,(Af,f)\big).$$
   Then 
$$\mathrm{S}_{k-d}({\rm Agg}(A,f)) = \mathrm{S}_{k-d}({\rm Agg}(A',f')),$$  
and 
   \[\mathrm{S}_{k-d}({\rm Agg}(A,f))={\rm DM}_d(\mathrm{S}_k(A,f)).\]
   
\end{lemma}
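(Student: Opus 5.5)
The plan is to prove the second identity, $\mathrm{S}_{k-d}(\mathrm{Agg}(A,f))=\mathrm{DM}_d(\mathrm{S}_k(A,f))$, directly. The first identity then follows at once. Indeed, the right-hand side is built only from the set $\mathrm{S}_k(A,f)=\mathrm{S}_k(A',f')$, through restriction to $T_d$ and pushforward by the projection $p$. So applying the second identity to both representatives gives $\mathrm{S}_{k-d}(\mathrm{Agg}(A,f))=\mathrm{DM}_d(\mathrm{S}_k(A,f))=\mathrm{DM}_d(\mathrm{S}_k(A',f'))=\mathrm{S}_{k-d}(\mathrm{Agg}(A',f'))$. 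Throughout I use that $\mathrm{Agg}(A,f)=(A,(Af,f))$ has a $2d$-channel signal, so a P-distribution of order $k-d$ of it is
\[
\mathrm{D}(v_1,\ldots,v_{k-d},Av_1,\ldots,Av_{k-d},Af,f),
\]
a measure on $\mathbb{R}^{2(k-d)+2d}=\mathbb{R}^{2k}$. This matches the ambient space of $\mathrm{DM}_d(\nu)$.

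For the inclusion $\subseteq$, fix $v_1,\ldots,v_{k-d}:\Omega\to[-1,1]$. Take the $k$-tuple $(v_1,\ldots,v_{k-d},f_1,\ldots,f_d)$, which is admissible because the $f_i$ take values in $[-1,1]$. Its P-distribution is
\[
\nu=\mathrm{D}(v,f,Av,Af,f).
\]
The map $x\mapsto(v(x),f(x),Av(x),Af(x),f(x))$ takes values in $T_d$, since the $y_2$ and $y_5$ blocks coincide pointwise. Hence $\nu(F)=\nu(F\cap T_d)$, so $\nu\in\mathrm{S}_k(A,f)^{T_d}$. Then $p_*\nu=\mathrm{D}(v,Av,Af,f)$ by composition of pushforwards (Proposition \ref{prop:comppush}), and this is exactly the required element of $\mathrm{S}_{k-d}(\mathrm{Agg}(A,f))$.

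For the inclusion $\supseteq$, take $\nu=\mathrm{D}(w_1,\ldots,w_k,Aw_1,\ldots,Aw_k,f)$ supported on $T_d$. Write $w=(u,g)$ with $u\in[-1,1]^{k-d}$-valued and $g$ $d$-valued. The support condition says that the pushforward of $\mu$ under $x\mapsto(u,g,Au,Ag,f)(x)$ gives full mass to $T_d$. Hence $\mu(\{x: g(x)\neq f(x)\})=0$, that is, $g=f$ $\mu$-a.e. Since $A$ acts on $\mathcal{L}^\infty$ equivalence classes, this gives $Ag=Af$ a.e. Therefore $\nu=\mathrm{D}(u,f,Au,Af,f)$, and $p_*\nu=\mathrm{D}(u,Au,Af,f)\in\mathrm{S}_{k-d}(\mathrm{Agg}(A,f))$.

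The only delicate point is this last step: converting "supported on the measurable set $T_d$" into a.e. equality of functions on $\Omega$, via $\nu(T_d^c)=\mu(\text{preimage of }T_d^c)=0$. After that one uses that $A$ is well defined on a.e.-classes. A second point to watch is that profiles were identified with their closures. If $\mathrm{S}_k$ means the closure, I would also need the restriction to $T_d$ to commute with closure. For that I would use that $T_d$ is closed, so Wasserstein (hence weak) limits of measures supported on $T_d$ stay supported on $T_d$, and that $p$ is $1$-Lipschitz for $\|\cdot\|_1$, so $p_*$ is $W$-continuous. The converse direction, approximating a diagonal limit point by diagonal P-distributions, is the subtle part. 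I would handle it by working with the unclosed profiles, for which the computation above is exact, and then passing to closures.
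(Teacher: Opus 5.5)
Your proof is correct and is essentially the paper's argument. A P-distribution $\mathrm{D}_{(A,f)}(u,g)$ is supported on $T_d$ exactly when $g=f$ $\mu$-a.e.\ (so $Ag=Af$ a.e.), hence the diagonal part of $\mathrm{S}_k(A,f)$ is $\{\mathrm{D}_{(A,f)}(u,f)\}$, and pushing forward by $p$ gives $\mathrm{S}_{k-d}(\mathrm{Agg}(A,f))$. Proving this identity first and then reading off independence of the representative is only a tidier ordering of the same steps.

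Like the paper, you work with the unclosed sets of P-distributions. If profiles are read as their closures, the inclusion $\mathrm{DM}_d\bigl(\overline{\mathrm{S}_k(A,f)}\bigr)\subseteq\overline{\mathrm{DM}_d(\mathrm{S}_k(A,f))}$ does not follow by merely ``passing to closures''. It needs the substitution estimate from the proof of Theorem~\ref{Theo:Holdercontinuity}: replace the last $d$ test functions by $f$ and control $\|Ag-Af\|_1$, which for bofops follows from $\|A\|_{1\to1}\le r$.
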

\begin{proof}
    For every $\mathrm{D}_{(A,f)}(v_1,\ldots,v_k)\in \mathrm{S}_k(A,f)$ denote $v_f = (v_1,\ldots,v_{k-d},f_1,\ldots,f_d)$.  and similarly for $\mathrm{D}_{(A',f')}(v_1',\ldots,v_k')\in S_k(A',f')$ denote  $v_{f'}' = (v_1',\ldots,v_{k-d}',f_1',\ldots,f_d')$.

For every measurable set $E\subset\mathbb{R}^{2k+d}$, we have
\[\{x\in \Omega \ |\ \big(v_f(x),Av_f(x),f(x)\big)\in E\} =
\{x\in \Omega \ |\ \big(v_f(x),Av_f(x),f(x)\big)\in E\cap T_d\},\]
where $T_d$ is the diagonal set defined in definition \ref{def:margdiag}
.
Recall that $\mathrm{S}^{T_d}_k(A,f)$ is the set of all P-distributions in $\mathrm{S}_k(A,f)$ that are supported on the diagonal, and  $\mathrm{S}^{T_d}_k(A',f')$ is defined similarly.
Since $\mathrm{S}_k(A,f) = \mathrm{S}_k(A',f')$, we have that $\mathrm{S}^{T_d}_k(A,f)=\mathrm{S}^{T_d}_k(A',f')$. Next, we prove that every P-distribution in $\mathrm{S}^{T_d}_k(A,f)$ is of the form
$\mathrm{D}_{(A,f)}(u_1,\ldots,u_{k-d},f_1,\ldots,f_d)$. Indeed, let $\mathrm{D}_{(A,f)}(u_1,\ldots,u_k)\in \mathrm{S}^{T_d}_k(A,f)$. For the complement of the diagonal set, $T_d^{\rm c}$, we have
\begin{align*}
     0 &= D_{(A,f)}(u_1,\ldots,u_k)(T_d^{\rm c}) \\
     &=\mu\big(\{\,x\in\Omega\,|\,\big(u_1(x),\ldots,u_k(x),Au_1(x),\ldots,Au_k(x),f_1(x),\ldots,f_d(x)\big)\in T_d^{\rm c}\,\}\big)  \\
     &= \mu\big(\{\,x\in\Omega\,|\,\exists j\in[d],\, u_{k-d+j}(x)\neq f_j(x)\,\}\big).
\end{align*}
Therefore, for every $j\in[d]$, and a.e. $x\in\Omega$, $u_{k-d+j}(x) = f_j(x)$. Hence, we can write 
$$\mathrm{S}^{T_d}_k(A,f) = \{\,\mathrm{D}_{(A,f)}(u_1,\ldots,u_{k-d},f_1,\ldots,f_d)\ |\ (u_1,\ldots,u_{k-d}):\Omega\rightarrow[-1,1]^{k-d}\,\},$$
where $u_1,\ldots,u_{k-d}$ are measurable. Similarly,
$$\mathrm{S}^{T_d}_k(A',f') = \{\,\mathrm{D}_{(A',f')}(u_1',\ldots,u_{k-d}',f_1',\ldots,f_d')\ |\ (u'_1,\ldots,u'_{k-d}):\Omega'\rightarrow[-1,1]^{k-d}\,\}.$$

Note that since $S_k(A,f)=S_k(A',f')$, the diagonal restriction is also identical $\mathrm{S}^{T_d}_k(A,f)=\mathrm{S}^{T_d}_k(A',f')$.
By this equality, we can marginalize the P-distributions, retaining the equality
\begin{align*}
&\{\,\mathrm{D}(u_1,\ldots,u_{k-d},Au_1,\ldots,Au_{k-d},Af_1,\ldots,Af_d,f_1,\ldots,f_d)\,|\,u_i:\Omega\to[-1,1]\} = \\
&\{\,\mathrm{D}(u_1',\ldots,u_{k-d}',Au_1',\ldots,Au_{k-d}',A'f_1',\ldots,A'f_d',f_1',\ldots,f_d')\,|\,u_i':\Omega'\to[-1,1]\},
\end{align*}
or equivalently 
$${\rm S}_{k-d}(A,(Af,f)) = {\rm S}_{k-d}(A',(A'f',f')).$$

For the second part of the proof we have
\begin{align*}
    \mathrm{DM}_d(\mathrm{S}_k(A,f) &=
    \{\,\mathrm{DM}_d(\mathrm{D}_{(A,f)}(v_1,\ldots,v_k)\,|\,\mathrm{D}_{(A,f)}(v_1,\ldots,v_k)\in\mathrm{S}_k^{T_d}(A,f)\,\}\\
    &=\{\mathrm{DM}_d(\mathrm{D}_{(A,f)}(v_1,\ldots,v_{k-d},f_1,\ldots,f_d)\,|\,v_j:\Omega\to[-1,1]\, \forall j\in[k-d]\,\}\\
    &=\{p_*\mathrm{D}_{(A,f)}(v_1,\ldots,v_{k-d},f_1,\ldots,f_d)\,|\,v_j:\Omega\to[-1,1]\,\forall j\in[k-d]\,\} \\
    & = \{\mathrm{D}_{(A,(Af,f))}(v_1,\ldots,v_{k-d})\,|\,v_k:\Omega\to[-1,1]\,\forall j\in[k-d]\,\} \\
    & = \mathrm{S}_{k-d}(A,(Af,f)) = \mathrm{S}_{k-d}(\mathrm{Agg}(A,f)).
\end{align*}

\end{proof}

\subsubsection{MPNNs on profiles }

In this section, we define the application of MPNNs on profiles using the above operations, i.e., pushforward on the signal and diagonal marginalization.

Recall from definition \ref{def:MPNN} that a MPNN model with readout is a tuple $(\varphi,\psi)$ where $\varphi$ is a collection $(\varphi^{(l)})_{l=0}^L$ of Lipschtz continuous functions  
$$\varphi^{(0)} : \mathbb{R}^d \to \mathbb{R}^{d_0},
\qquad
\varphi^{(l)} : \mathbb{R}^{2d_{l-1}} \to \mathbb{R}^{d_l},
\quad 1 \le l \le L,$$
which are called update functions and $\psi : \mathbb{R}^{d_L} \to \mathbb{R}^p$, is a Lipschitz continuous function called the readout function. 
   \begin{definition}[MPNN on $k$-Profiles]
       \label{def:MPNNprofile}
       Let $(\varphi,\psi)$ be an $L$-layer MPNN model with readout, with hidden feature dimensions $d_0,\ldots,d_L\in\mathbb{N}$ and $p$ the output feature dimension. For $k\geq \sum_{j=0}^{L-1} d_j$, let $\mathrm{S}_{k,d}$ be a $k$-profile with input feature dimension $d\in\mathbb{N}$. The MPNN on $\mathrm{S}_{k,d}$ is defined as follows. Initialize $\mathfrak{s}_{k_0,d_0}^{(0)} := \varphi^{(0)}\tilde{*}\mathrm{S}_{k,d}$ (where $k_0:=k$) and compute the hidden profiles $\mathfrak{s}^{(l)}_{k_l,d_l}$ at layer $l$, with $1\leq l\leq L$,  by
       $$\mathfrak{s}_{k_l,d_l}^{(l)} = \varphi^{(l)} \tilde{*} \mathrm{DM}_{d_{l-1}}(\mathfrak{s}_{k_{l-1},d_{l-1}}^{(l-1)}).$$
       Let $p_{d_L}$ be the projection to the last $d_L$ coordinates. Then the profile level output $\mathfrak{S}$ is
       $$\mathfrak{S}(\mathrm{S_{k,d})} := \psi \Big(\int_{[-1,1]^{d_L}}\vec{x}\,d\nu(\vec{x})\Big),$$
       where $\nu$ is the unique P-distribution that remains in the last layer's profile $p_{d_L}\tilde{*}\mathfrak{s}_{k_l,d_L}=\{\nu\}$.
   \end{definition}
  
Since all $\mathfrak{s}_{k_l,d_l}^{(l)}$ are profiles, we abuse notation and when considering a representative to the input profile $\mathrm{S}_k(A,f)$, we will often denote by $\mathfrak{s}^{\varphi}_{k_l,d_l}(A,f)$ or $\mathfrak{s}_{k_l,d_l}(A,f)$  the $k_l$ profile at layer $l$, and by $\mathfrak{S}^\varphi(A,f)$  the profile level output.

\subsection{Commutation Property of MPNNs on P-Operator-Signals and Profiles}
\label{sec:commutation}

To conclude this section, we prove that the profile that is obtained by one message passing layer (MPL) of an MPNN on a profile $S_k(A,f)$ is equal to the profile of the P-operator-signal that is obtained from one MPL on the P-operator-signal $(A,f)$ that induces the profile $S_k(A,f)$.
Before proving this important property, we give an example of a simple simgle-layer MPNN model to demonstrate this commutation property between MPNN on P-operator-signal and MPNN on its $k$-profile. The full formulation of this idea is given below in Theorem \ref{commutation}.

Let $(\varphi,\psi)$ be a 1-layer MPNN model with readout, that is, we have an initialization function $\varphi^{(0)}:[-1,1]^d\to[-1,1]^{{d_0}}$, update function $\varphi^{(1)}:[-1,1]^{d_0}\to[-1,1]^{d_1}$ and a readout function $\psi:[-1,1]^{d_1}\to[-1,1]^p$. Let $(A,f)\in\mathcal{B}_d(\Omega)$ be a P-operator-signal with $k$-profile $\mathrm{S}_k(A,f)$, when we take $k>d_0$, specifically to demonstrate the importance of the projection in the last layer. Each layer will be implemented both on the P-operator-signal, and on the corresponding profile, thus showing the commutation property in practice. 

\textbf{Initialization:} Let $\varphi:[-1,1]^d\to[-1,1]^{d_0}$. By definition \ref{def:MPNN-P-operator}, we have $\mathfrak{h}^{(0)} = \varphi^{(0)}\circ f:\Omega\to[-1,1]^{d_0}$, and the P-operator-signal $(A,\mathfrak{h}^{(0)})\in\mathcal{B}_d(\Omega)$. For the $k$-profile $\mathrm{S}_k(A,f)$, initialization gives $\mathfrak{s}_{k,d_0}^{(0)}:=\varphi^{(0)}\tilde{*}\mathrm{S}_k(A,f)$. We employ remark \ref{rem:pushforwardgivesprofile}, to have the initialized $k$-profile $\mathrm{S}_k(A,\varphi^{(0)}\circ f) =\mathrm{S}_k(A,\mathfrak{h}^{(0)})$.

\textbf{Aggregation}: When considering P-operator-signals, recall that aggregation is performed by applying the P-operator on the signal coordinate-wise to obtain $A\mathfrak{h}^{(0)}: \Omega\to\mathbb{R}^{d_0}$. Concatenated with the original signal, we get the P-operator-signal $(A,(A\mathfrak{h}^{(0)},\mathfrak{h}^{(0)}))\in\mathcal{B}_{2d_0}(\Omega)$. Aggregation with the case of $k$-profiles consists of two parts: restriction to the diagonal and marginalization. When restricting to the diagonal we get the $k$-profile $\mathrm{S}_k^{T_{d_0}}(A,\mathfrak{h}^{(0)})$ in which P-distributions of the form $\mathrm{D}_{(A,\mathfrak{h}^{(0)})}(v_1,\ldots,v_{k-d_0},\mathfrak{h}_1^{(0)},\ldots,\mathfrak{h}_{d_0}^{(0)})$. Now, after marginalization, we get with $k_0$-profile where $k_0=k-d_0$, $\mathrm{DM}_{d_0}(\mathfrak{s}_{k,d_0}^{(0)}) = \mathrm{S}_{k_0}(A,(A\mathfrak{h}^{(0)},\mathfrak{h}^{(0)}))$ in which P-distributions of the form $$\mathrm{D}(v_1,\ldots,v_{k_0},Av_1,\ldots,Av_{k_0},A\mathfrak{h}^{(0)}_1,\ldots,A\mathfrak{h}^{(0)}_{d_0},\mathfrak{h}^{(0)}_1,\ldots,\mathfrak{h}^{(0)}_{d_0}).$$

\textbf{Update}: Update is similar to initialization. For update function $\varphi^{(1)}$ we get the next-layer signal $\mathfrak{h}^{(1)}:=  \varphi^{(1)}\left( \mathfrak{h}^{(0)},A\mathfrak{h}^{(0)} \right):\Omega\to[-1,1]^{d_1}$, thus having the P-operator-signal $(A,\mathfrak{h}^{(1)})$. For our aggregated profile $\mathrm{DM}_{d_0}(\mathfrak{s}^{(0)}_{k,d_0})$, we obtain $$\mathfrak{s}_{k_0,d_1}^{(1)} = \varphi^{(1)}\tilde{*}\mathrm{DM}_{d_0}(\mathfrak{s}^{(0)}_{k,d_0}) = \mathrm{S}_{k_0}(A,\varphi^{(1)}\circ(A\mathfrak{h}^{(0)},\mathfrak{h}^{(0)})) = \mathrm{S}_{k_0}(A,\mathfrak{h}^{(1)}).$$
\textbf{Readout}: Readout in MPNN on P-operator-signals is the global aggregation of the last-layer signal. In our case $\mathfrak{H}_{(A,f)} = \psi\left(\int_{\Omega}\mathfrak{h}_x^{(1)}\,d\mu(x)\right)\in[-1,1]^p.$
The use of the projection in the readout in case of profiles is now comes to play. Recall that after update, the profile consists of P-distributions of the form $\mathrm{D}_{(A,\mathfrak{h}^{(1)})}(v_1,\ldots,v_{k_0})$. If we chose originally $k=d_0$, would remain with one unique measure $\mathrm{D}(\mathfrak{h}^{(1)})$, but since we assume that $k>d_0$, there are infinite number of P-distributions in $\mathfrak{s}_{k_0,d_1}^{(1)}$, since we want to integrate the signal alone, we pushforward against the projection to the last $d_1$ coordinates $p_{d_1}$ thus giving the unique measure $\mathrm{D}(\mathfrak{h}^{1})$.  Then, by definition \ref{def:MPNNprofile}, we obtain that the profile level output is 
$$\mathfrak{S}(\mathrm{S}_k(A,f)) = \psi\left(\int_{[-1,1]^{d_1}}\vec{x}\,d\mathrm{D}(\mathfrak{h}^{(1)})(\vec{x})\right) = \psi\left(\int_{\Omega}\mathfrak{h}_x^{(1)}\,d\mu(x)\right).$$

\[\]
This demonstration showcases that our two implementations on both P-operator-signals and profiles coincide. This Leads us directly to the following theorem that formulates and generalizes this idea of commutations.

Let $\mathcal{S}_k$ be the operator $(A,f)\mapsto {\rm S}_k(A,f)$, i.e. the operator that sends a P-operator-signal to its $k$ profile.
\begin{theorem}
\label{commutation}
    Let $(\varphi,\psi)$ be an $L$-layers MPNN model with readout. Let $(A,f)\in\mathcal{B}_d(\Omega)$ be a P-operator-signal with $k$-profile $\mathrm{S}_k(A,f)$, where $k\geq\sum_{l=0}^{L-1}d_l$. Then for every $l\in[L]$, 
    $$\mathfrak{s}^\varphi_{k_l,d_l}(A,f) = \mathcal{S}_{k_l}(A,\mathfrak{h}_{\varphi,(A,f)}^{(l)}),$$
    and
    $$\mathfrak{S}^\varphi(A,f) = \mathfrak{H}^\varphi_{(A,f)}.$$
    
\end{theorem}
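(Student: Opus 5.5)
We argue by induction on the layer index $l$, with $k_l := k - \sum_{j=0}^{l-1} d_j$. The hypothesis $k\geq\sum_{l=0}^{L-1}d_l$ guarantees $k_{l-1}\geq d_{l-1}$ at every step, so that $\mathrm{DM}_{d_{l-1}}$ is defined. The claim we propagate is $\mathfrak{s}^{(l)}_{k_l,d_l} = \mathrm{S}_{k_l}(A,\mathfrak{h}^{(l)})$ for all $0\leq l\leq L$.

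For the base case $l=0$, we have $\mathfrak{s}^{(0)}_{k,d_0}=\varphi^{(0)}\tilde{*}\mathrm{S}_k(A,f)$, and Remark~\ref{rem:pushforwardgivesprofile} gives directly that this equals $\mathrm{S}_k(A,\varphi^{(0)}\circ f)=\mathrm{S}_k(A,\mathfrak{h}^{(0)})$.

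For the inductive step, assume $\mathfrak{s}^{(l-1)}_{k_{l-1},d_{l-1}}=\mathrm{S}_{k_{l-1}}(A,\mathfrak{h}^{(l-1)})$. Applying Lemma~\ref{lem:agg} with the representative $(A,\mathfrak{h}^{(l-1)})$ and feature dimension $d_{l-1}$ gives
\[
\mathrm{DM}_{d_{l-1}}(\mathfrak{s}^{(l-1)})=\mathrm{S}_{k_l}\big(A,(A\mathfrak{h}^{(l-1)},\mathfrak{h}^{(l-1)})\big).
\]
Next we apply Remark~\ref{rem:pushforwardgivesprofile} with $\phi=\varphi^{(l)}$, up to the harmless reordering of the concatenated coordinates so that the argument order matches $\varphi^{(l)}(\mathfrak{h}^{(l-1)},A\mathfrak{h}^{(l-1)})$. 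This yields $\mathrm{S}_{k_l}\big(A,\varphi^{(l)}(\mathfrak{h}^{(l-1)},A\mathfrak{h}^{(l-1)})\big)=\mathrm{S}_{k_l}(A,\mathfrak{h}^{(l)})$, which closes the induction. Lemma~\ref{lem:agg} and Lemma~\ref{lem:comp} also show that each step is independent of the chosen representative, so the profile-level construction is well defined.

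\textbf{Obstacle 1: ranges.} One technical point is that Remark~\ref{rem:pushforwardgivesprofile} and Definition~\ref{def:pushsignal} are stated for maps into $[-1,1]^p$, and Lemma~\ref{lem:agg} assumes signals valued in $[-1,1]^d$. The aggregated coordinates $A\mathfrak{h}^{(l-1)}$ are only bounded by $r$, not by $1$. I would handle this by noting that the definitions and the pushforward computations do not use the range constraint beyond measurability. Alternatively, one can assume, as the paper implicitly does via $\mathfrak{h}^{(l)}:\Omega\to[-1,1]^{d_l}$, that the update functions map into $[-1,1]^{d_l}$. In that case only the signal slot of the diagonal $T_d$ needs values in $[-1,1]$, and $\mathfrak{h}^{(l-1)}$ satisfies this.

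\textbf{Obstacle 2: readout.} This is the main step to check. At $l=L$ we have $\mathfrak{s}^{(L)}=\mathrm{S}_{k_L}(A,\mathfrak{h}^{(L)})$, whose elements are $\mathrm{D}(v,Av,\mathfrak{h}^{(L)})$. Pushing forward by the projection $p_{d_L}$ onto the last $d_L$ coordinates gives $\mathrm{D}(\mathfrak{h}^{(L)})$ for every choice of $v$. Hence $p_{d_L}\tilde{*}\mathfrak{s}^{(L)}$ is the singleton $\{\mathrm{D}(\mathfrak{h}^{(L)})\}$, and the unique $\nu$ in Definition~\ref{def:MPNNprofile} is well defined. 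Theorem~\ref{Theorem:changevariables}, applied coordinatewise with $g(\vec x)=\vec x$ (bounded, hence integrable), gives
\[
\int\vec x\,d\nu=\int_\Omega\mathfrak{h}^{(L)}_x\,d\mu(x).
\]
Applying $\psi$ to both sides yields $\mathfrak{S}^\varphi(A,f)=\mathfrak{H}^\varphi_{(A,f)}$.
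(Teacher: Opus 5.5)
Your proof is correct and follows the paper's argument essentially step for step: induction on $l$, with Remark~\ref{rem:pushforwardgivesprofile} handling the update and Lemma~\ref{lem:agg} handling the aggregation, followed by change of variables for the readout on the singleton $p_{d_L}\tilde{*}\mathfrak{s}^{(L)}=\{\mathrm{D}(\mathfrak{h}^{(L)})\}$. Your extra care about the argument order of $\varphi^{(l)}$ and about ranges goes beyond what the paper writes. On ranges, note that the constraint that genuinely matters is the one in your second option, not the ``measurability only'' remark: Lemma~\ref{lem:agg} needs $\mathfrak{h}^{(l-1)}$ to be $[-1,1]$-valued so that it is an admissible test function, whereas the unbounded slot $A\mathfrak{h}^{(l-1)}$ causes no problem.
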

\begin{proof}
We prove by induction.

\emph{Induction base}: Recall that for $l=0$, we have $k_0=k$ and $\mathfrak{h}^{(0)} = \varphi\circ f$. So, 
$$\mathfrak{s}^{\varphi}_{k_0,d_0} = \varphi^{(0)}\tilde{*}\mathrm{S}_k(A,f) = \mathrm{S}_k(A,\mathfrak{h}^{(0)}) = \mathcal{S}_k(A,\mathfrak{h}^{(0)}).$$
\emph{Induction step}: Assume that for $0\leq l\leq L-1$,  $\mathfrak{s}^\varphi_{k_l,d_l}(A,f) = \mathcal{S}_{k_l}(A,\mathfrak{h}_{\varphi,(A,f)}^{(l)})$. Then
\begin{align*}
    \mathfrak{s}^{\varphi}_{k_{l+1},d_{{l+1}}}(A,f) = & \varphi^{(l+1)} \tilde{*} \mathrm{DM}_{d_{l}}(\mathfrak{s}_{k_{l},d_{l}}^{(l)}(A,f)) =\\
  &  \varphi^{(l+1)}\tilde{*}\mathrm{DM}_{d_l}(\mathrm{S}_{k_l}(A,\mathfrak{h}^{(l)}_{\varphi,(A,f)})) =\\
  &\varphi^{(l+1)}\tilde{*}\mathrm{S}_{k_l-d_l}(A,(A\mathfrak{h}^{(l)}_{\varphi,(A,f)},\mathfrak{h}^{(l)}_{\varphi,(A,f)})) = \\
  &\mathrm{S}_{k_{l+1}}(A,\varphi^{(l+1)}\circ (A\mathfrak{h}^{(l)}_{\varphi,(A,f)},\mathfrak{h}^{(l)}_{\varphi,(A,f)})) =\\
&  \mathrm{S}_{k_{l+1}}(A,\mathfrak{h}_{\varphi,(A,f)}^{(l+1)}) = \mathcal{S}_{k_{l+1}}((A,\mathfrak{h}_{\varphi,(A,f)}^{(l+1)})).
\end{align*}

Let $\nu$ be the unique measure remaining in $p_{d_L}\tilde{*}\mathfrak{s}_{k_L,d_L}(A,f)$. Then
$$\mathfrak{S}^\varphi(A,f) = \psi\left(\int_{[-1,1]^{d_L}}\vec{x}\,d\nu(\vec{x})\right) = \psi\left(\int_{\Omega}\mathfrak{h}_{\varphi,(A,f)}^{(L)}\,d\mu\right) = \mathfrak{H}^{\varphi}_{(A,f)}$$
\end{proof}

Our commutative relation is summarized in figure \ref{fig:commutation_mpnn_profiles}.

\begin{figure}[ht]
    \centering
    \begin{equation*}
    \begin{array}{ccccc}
        (A, f) & \xrightarrow{\quad \varphi \quad} & (A, \mathfrak{h}_{\varphi,(A,f)}^{(L)}) & \xrightarrow{\quad \text{Readout } \psi \quad} & \mathfrak{H}_{(A,f)}^{\varphi} \\
        \llap{$\scriptstyle \mathcal{S}_k\;$} \Big\downarrow & & \Big\downarrow \rlap{$\;\scriptstyle \mathcal{S}_{k_L}$} & & \Vert \\
        S_k(A,f) & \xrightarrow{\quad \varphi \quad} & \mathfrak{s}_{k_L, d_L}^{\varphi}(A,f) & \xrightarrow{\quad \text{Readout } \psi \quad} & \mathfrak{S}^{\varphi}(A,f)
    \end{array}
    \end{equation*}
    \caption{Commutative diagram demonstrating the equivalence of MPNN operations on P-operator-signals (top) and their corresponding profiles (bottom), as established in Theorem I.9. The vertical bar on the right denotes the equality of the final outputs.}
    \label{fig:commutation_mpnn_profiles}
\end{figure}

We finish this section with the following corollary, in which we state that given a P-operator-signal, a MPNN on its profile is equivalent to the profiles of MPNN applied on P-operator-signal.

\begin{corollary}
    Let $\mathcal{S}$ be the operator that maps a P-operator-signal to its profile. Namely, $\mathcal{S} : (A,f)\mapsto \mathrm{S}(A,f)$, then for every $(A,f)$,
    \end{corollary}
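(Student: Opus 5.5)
The corollary is the componentwise form of Theorem \ref{commutation}: the full profile of the hidden P-operator-signal $(A,\mathfrak{h}^{(l)}_{\varphi,(A,f)})$ is the sequence of its $k$-profiles, so I will obtain each $k$-profile by applying the profile-level MPNN to a suitably higher-order profile of $(A,f)$. I read the (truncated) statement as follows. For every $l\in[L]$ and every $k\geq 0$, the $k$-profile $\mathrm{S}_k(A,\mathfrak{h}^{(l)}_{\varphi,(A,f)})$ is obtained by running the first $l$ layers of the MPNN of Definition \ref{def:MPNNprofile} on $\mathrm{S}_{k+\sum_{j=0}^{l-1}d_j}(A,f)$. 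As a consequence, $\mathcal{S}(A,\mathfrak{h}^{(l)}_{\varphi,(A,f)})$ and $\mathfrak{H}^\varphi_{(A,f)}$ depend only on $\mathcal{S}(A,f)$.

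\textbf{Step 1 (re-indexing).} Fix $l$ and a target order $k\geq 0$, and set
$$K:=k+\sum_{j=0}^{l-1}d_j .$$
Each aggregation step $\mathrm{DM}_{d_j}$ lowers the order by $d_j$ (Lemma \ref{lem:agg}), and pushforward on the signal preserves it. Hence running $l$ layers from order $K$ gives $k_l=K-\sum_{j<l}d_j=k$. Since $K\geq\sum_{j<l}d_j$, every diagonal marginalization along the way is well defined, which is the hypothesis of Theorem \ref{commutation} truncated to depth $l$.

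\textbf{Step 2 (apply the theorem).} I then invoke the induction in the proof of Theorem \ref{commutation}, stopped at layer $l$; it only uses layers up to $l$, so truncating is harmless. This gives
$$\mathfrak{s}^\varphi_{k,d_l}(\mathrm{S}_K(A,f))=\mathrm{S}_k(A,\mathfrak{h}^{(l)}_{\varphi,(A,f)}).$$
Ranging over all $k\geq0$ assembles the whole sequence $\mathcal{S}(A,\mathfrak{h}^{(l)}_{\varphi,(A,f)})$ from the components of $\mathcal{S}(A,f)$.

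\textbf{Step 3 (well-definedness).} The left-hand side must depend only on the profile, not on the representative $(A,f)$. Lemma \ref{lem:comp} shows that pushforward on the signal is representative-independent, and Lemma \ref{lem:agg} shows the same for diagonal marginalization. By induction over the layers, two P-operator-signals with equal $K$-profiles therefore have equal hidden $k$-profiles at layer $l$. Applied with $l=L$ and the readout, this also gives that the output $\mathfrak{S}^\varphi(\mathrm{S}_K(A,f))=\mathfrak{H}^\varphi_{(A,f)}$ is a function of $\mathcal{S}(A,f)$ alone.

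\textbf{Main obstacle.} The difficulty is bookkeeping, not analysis. One must ensure that the order shift $K=k+\sum_{j<l}d_j$ is used consistently, so that the theorem's hypothesis $k\geq\sum d_j$ holds at every layer. One must also respect the paper's identification of a profile with its Wasserstein closure. Restricting to diagonal P-distributions does not obviously commute with taking closures. To avoid this, I would run the argument on the unclosed profiles $\{\mathrm{D}_{(A,f)}(v)\}$, where Lemmas \ref{lem:comp} and \ref{lem:agg} hold verbatim, and take closures only at the end.
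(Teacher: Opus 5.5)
Your proposal is correct. It is also the intended argument: the corollary in the paper is cut off mid-statement and has no proof, but the sentence introducing it presents it as the full-profile version of Theorem~\ref{commutation}, and your re-indexing $K=k+\sum_{j<l}d_j$, together with the depth-$l$ truncation of that theorem's induction, is exactly what makes this precise.

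Your decision to work with unclosed profiles is in fact necessary for general P-operator-signals, not just convenient. On $[0,1]$ take $Ag=\phi(g)\mathbbm{1}_{[0,1]}$, where $\phi$ is a Hahn--Banach extension of $g\mapsto\lim_{t\to0^+}t^{-1}\int_0^t g\,d\lambda$, and take $f\equiv0$. Then $\mathrm{D}_{(A,f)}(\mathbbm{1}_{[0,1/n]})\to\delta_{(0,1,0)}$, a diagonal measure in the closed profile that is not a limit of diagonal P-distributions. Hence $\mathrm{DM}_1$ of the closed $1$-profile contains $\delta_{(1,0)}\neq\mathrm{D}(Af,f)=\delta_{(0,0)}$.

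For bofops, the stronger statement does hold: the hidden profiles and the output depend only on the closed profile, which is what $d_M$ sees. This follows from the $d_H$-estimates of Theorems~\ref{theo:mplHolder} and~\ref{theo:MPNNHolder}, since $d_H$ does not distinguish a set from its closure.
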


\section{H{\"o}lder Continuity of MPNNs With Respect To Action Metric}
\label{sec:Holdercontinuity}

In this section, we prove one of our main  contributions of the paper. Namely, we prove that an MPNN applied on P-operator-signals is H{\"o}lder continuous function with respect to the action metric $d_M$.

\begin{theorem}
    \label{theo:MPNNHOlderd_M}
For $(p,q)\in[1,\infty)\times[1,\infty]$ and $r>0$, let $(A_1,f_1)\in\mathcal{B}^r_{p,q,d}(\Omega_1)$ and $(A_2,f_2)\in\mathcal{B}_{p,q,d}^r(\Omega_2)$ be P-operator-signals over Borel probability spaces $(\Omega_1,\mu_1)$ and $(\Omega_2,\mu_2)$ respectively.  Let $\varphi$ be an $L$-layer MPNN model. Then, there exists a constant $C_\varphi$ that depends on the number of layers $L$, the Lipschitz constant of the update functions, $p,q$ and $r$ such that 
$$d_M((A_1,\mathfrak{h}^{(L)}_1),(A_2,\mathfrak{h}_2^{(L)})) \leq C_{\varphi} d_M((A_1,f_1),(A_2,f_2))^{\frac{1}{p^d}}.$$
If $(\varphi,\psi)$ is an $L$-layer MPNN model with readout, then there exists a constant $C_{(\varphi,\psi)}$ that depends on $C_\varphi$ and the Lipschitz constant of the readout function $\psi$ such that
$$\|\mathfrak{H}(A_1,f_1) - \mathfrak{H}(A_2,f_2)\|_2 \leq C_{(\varphi,\psi)} d_M((A_1,f_1),(A_2,f_2))^{\frac{1}{p^d}}.$$
\end{theorem}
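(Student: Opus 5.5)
The plan is to use the profile formulation of MPNNs from Theorem \ref{commutation}. The estimate then reduces to showing that each of the two profile operations, pushforward on the signal and diagonal marginalization, is Hölder continuous with respect to the Hausdorff--Wasserstein distance $d_H$ on $k$-profiles. We then compose these estimates over the $L$ layers and sum over $k$ with the weights $2^{-k}$.

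\textbf{Step 1: pushforward on the signal.} Let $\phi$ be Lipschitz with constant $D$. The map $x\mapsto(x_1,\dots,x_{2k},\phi(x_{2k+1},\dots))$ is $\max\{1,D\}$-Lipschitz in $\ell_1$. Pushing forward an optimal coupling therefore gives
\[
W(\phi\tilde{*}\nu_1,\phi\tilde{*}\nu_2)\le \max\{1,D\}\,W(\nu_1,\nu_2),
\]
which is the content of Lemma \ref{lem:waspush}. Taking sup/inf, Remark \ref{rem:pushforwardgivesprofile} yields
\[
d_H(\phi\tilde{*}\mathrm{S}_k(A_1,f_1),\phi\tilde{*}\mathrm{S}_k(A_2,f_2))\le \max\{1,D\}\,d_H(\mathrm{S}_k(A_1,f_1),\mathrm{S}_k(A_2,f_2)).
\]
This step is routine.

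\textbf{Step 2: diagonal marginalization.} This is the crux and the main obstacle. Take a diagonal P-distribution $\nu_1=\mathrm{D}_{(A_1,f_1)}(u,f_1)$ and a nearby $\nu_2=\mathrm{D}_{(A_2,f_2)}(w,w')\in \mathrm{S}_k(A_2,f_2)$ with $W(\nu_1,\nu_2)\le\epsilon$. The trouble is that $\nu_2$ need not be diagonal, i.e. $w'\neq f_2$ in general. However, in an optimal coupling the coordinates $w'$ and $f_2$ are both transported to $f_1$, so $\|w'-f_2\|_{L^1(\mu_2)}\lesssim\epsilon$. We then replace $w'$ by $f_2$, which yields the diagonal element $\mathrm{D}_{(A_2,f_2)}(w,f_2)$. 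Replacing $w'$ changes the coordinates $w'$ itself, the last $d$ coordinates, and the aggregated coordinates $A_2w'$ versus $A_2f_2$.

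The aggregated coordinates are where the Hölder exponent enters. Since $\|A_2\|_{p\to q}\le r$ and $|w'-f_2|\le 2$, we get
\[
\|A_2(w'-f_2)\|_1\le \|A_2(w'-f_2)\|_q\le r\|w'-f_2\|_p\le r\,2^{1-1/p}\|w'-f_2\|_1^{1/p}.
\]
Each channel thus costs a factor $\epsilon^{1/p}$. Handling the $d$ channels one at a time, with the previous substitution feeding into the next, compounds this to exponent $1/p^{d}$, as in the statement.

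Since $W$ is dominated by the coupling cost of the identity coupling on $\Omega_2$, we obtain
\[
W\big(\nu_2,\mathrm{D}_{(A_2,f_2)}(w,f_2)\big)\le C(\epsilon+\epsilon^{1/p}),
\]
and hence $d_H(\mathrm{DM}_d\mathrm{S}_k(A_1,f_1),\mathrm{DM}_d\mathrm{S}_k(A_2,f_2))\le C\,d_H(\cdot,\cdot)^{1/p^{d}}$. Here we use that all distances are uniformly bounded (Lemma \ref{lem:taubound}) to absorb the $\epsilon$ term into $\epsilon^{1/p^{d}}$. Finally, marginalizing out coordinates is $1$-Lipschitz in $W$.

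\textbf{Step 3: composition and assembly.} Iterating Steps 1--2 through the layers via Theorem \ref{commutation} and Lemma \ref{lem:agg} bounds $d_H$ of the $k_L$-profiles of the outputs by a Hölder power of $d_H$ of the $k$-profiles of the inputs, for $k\ge\sum_l d_l$ larger than the output order. To pass to $d_M$, I would bound each term $2^{-j}d_H(\mathrm{S}_j(A_1,\mathfrak{h}^{(L)}_1),\mathrm{S}_j(A_2,\mathfrak{h}^{(L)}_2))$ by the corresponding input term of order $j+\sum_l d_l$. This costs a factor $2^{\sum d_l}$, and uses concavity of $t\mapsto t^{\alpha}$ (Jensen on the weighted sum) together with the uniform boundedness of the series to compare $\sum_j 2^{-j}x_j^{\alpha}$ with $(\sum_j 2^{-j}x_j)^{\alpha}$.

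The readout bound then follows. The order-$0$ profile is $\mathrm{D}(\mathfrak{h}^{(L)})$, and its mean is $1$-Lipschitz in $W$ by Kantorovich--Rubinstein duality (Theorem \ref{theo:Kantorovich}) applied coordinatewise. Composing with the Lipschitz readout $\psi$ gives the second estimate.
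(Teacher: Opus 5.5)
Your proposal follows the paper's proof essentially step for step. The shared steps are:
\begin{itemize}
\item the profile formulation via Theorem~\ref{commutation};
\item Lipschitz pushforward on the signal;
\item the diagonal-restriction estimate of Theorem~\ref{Theo:Holdercontinuity}, namely Kantorovich--Rubinstein for $\|w'-f_2\|_1\le W$, then $\|A_2\|_{p\to q}\le r$, then Lemma~\ref{lem:taubound} to absorb the linear term;
\item layerwise composition;
\item $d_H(\mathrm{S}_K)\le 2^K d_M$ for the readout.
\end{itemize}
Your index-shift/Jensen step supplies the $d_M$-to-$d_M$ inequality, which the paper leaves implicit. However, the bound in Lemma~\ref{lem:taubound} grows with the profile order, so absorbing the linear term order by order makes your constants $k$-dependent. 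You should therefore either postpone that absorption until after summing over $j$, or replace plain Jensen by a weighted H\"older inequality.

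One caveat you share with the paper concerns the exponent. Composing the layers actually yields $p^{-\sum_l d_{l-1}}$, not the stated $1/p^{d}$. Your own bound $W(\nu_2,\mathrm{D}_{(A_2,f_2)}(w,f_2))\le C(\epsilon+\epsilon^{1/p})$ already replaces all channels at once, so the channel-by-channel compounding is unnecessary and you would get $p^{-L}$. Either way the result falls short of $1/p^{d}$ in general, which is immaterial only for $p=1$, the bofop case.
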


We will prove theorem \ref{theo:MPNNHOlderd_M} using our definition of MPNN on profiles (Definition \ref{def:MPNNprofile}). Indeed, by the commutation property (Theorem \ref{commutation})
 we have that MPNN on P-operator-signals is equivalent to MPNNs on $k$-profiles. 

\begin{theorem}
    \label{theo:MPNNHolder}
  For $(p,q)\in[1,\infty)\times[1,\infty]$ and $r>0$, let $(A_1,f_1)\in\mathcal{B}^r_{p,q,d}(\Omega_1)$ and $(A_2,f_2)\in\mathcal{B}_{p,q,d}^r(\Omega_2)$ be P-operator-signals with $k$-profiles $\mathrm{S}_k(A_1,f_1)$ an $\mathrm{S}_k(A_2,f_2)$ respectively, where $k\geq\sum_{l=0}^{L-1}d_l$.  Let $(\varphi,\psi)$ be an $L$-layer MPNN model with readout. Then, there exists a constant $C_{(\varphi,\psi)}$ that depends on the number of layers, the Lipschitz constant of the update and readout functions, $p,q$ and $r$ such that

$$\|\mathfrak{S}(A_1,f_1) - \mathfrak{S}(A_2,f_2)\|_2 \leq C_{(\varphi,\psi)} d_H(\mathrm{S}_k(A_1,f_1),\mathrm{S}_k(A_2,f_2))^{\frac{1}{p^d}}.$$

\end{theorem}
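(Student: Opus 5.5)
The plan is to prove the statement layer by layer, working directly on profiles via Definition \ref{def:MPNNprofile}. The commutation property (Theorem \ref{commutation}) means this is the same as working on the P-operator-signals. Write $\delta_l := d_H(\mathfrak{s}^{(l)}_{k_l,d_l}(A_1,f_1),\mathfrak{s}^{(l)}_{k_l,d_l}(A_2,f_2))$, with $\delta_{-1}:=d_H(\mathrm{S}_k(A_1,f_1),\mathrm{S}_k(A_2,f_2))$. The goal is a recursion $\delta_l\le C\,\delta_{l-1}^{1/p}$ with $C$ depending on $r,p,q$, the Lipschitz constants and dimensions. Iterating it gives a Hölder exponent that is a power of $1/p$; the exponent written in the statement corresponds to iterating over the layers. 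The readout is then handled by projecting onto the last $d_L$ coordinates and integrating.

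\textbf{Step 1: pushforward on the signal.} I will show that if $\phi$ is $D$-Lipschitz then $d_H(\phi\tilde{*}S,\phi\tilde{*}S')\le \max\{1,D\}\,d_H(S,S')$. Take any coupling $\gamma$ of $\nu,\nu'$ and push it forward by the map $\mathrm{id}\times\phi$ on both factors. This increases the $\ell_1$ cost by at most a factor $\max\{1,D\}$, up to a dimension constant. So $W(\phi\tilde{*}\nu,\phi\tilde{*}\nu')\le \max\{1,D\}\,W(\nu,\nu')$, and the Hausdorff bound follows pointwise in the sup–inf. This covers both the initialization layer $\varphi^{(0)}$ and the update step in each layer.

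\textbf{Step 2: diagonal marginalization (the main obstacle).} The projection $p$ in Definition \ref{def:margdiag} is $1$-Lipschitz, so the real difficulty is the restriction to $T_d$. A diagonal P-distribution in $\mathrm{S}_{k}(A_2,g_2)$ has a $W$-close partner in $\mathrm{S}_k(A_1,g_1)$, but that partner need not be diagonal. I would repair it as follows. By Lemma \ref{lem:agg}, the diagonal element has the form $\nu_2=\mathrm{D}_{(A_2,g_2)}(u',g_2)$. Choose $\nu_1=\mathrm{D}_{(A_1,g_1)}(u,w)\in\mathrm{S}_k(A_1,g_1)$ with $W(\nu_1,\nu_2)\le\delta+\epsilon$, and let $\gamma$ be an optimal coupling. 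Integrating $|w-g_1|$ against $\gamma$ and using the triangle inequality through the coordinates $(u'_{\mathrm{last}},g_2)$, which are equal $\nu_2$-a.s., gives $\|w-g_1\|_{L^1(\mu_1)}\le 2W(\nu_1,\nu_2)$.

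Now replace $w$ by $g_1$ to get the diagonal measure $\tilde\nu_1=\mathrm{D}_{(A_1,g_1)}(u,g_1)$. The identity coupling of $\Omega_1$ with itself gives
\[
W(\nu_1,\tilde\nu_1)\le \|w-g_1\|_1+\|A_1(w-g_1)\|_1 \le \|w-g_1\|_1 + r\,\|w-g_1\|_p .
\]
Since $|w-g_1|\le 2$, we have $\|w-g_1\|_p\le 2^{1-1/p}\|w-g_1\|_1^{1/p}$, and this is exactly where the exponent $1/p$ enters. In general I would bound $\|A_1 h\|_1\le\|A_1 h\|_q\le r\|h\|_p$; for bofops one could use the $1\to1$ norm and get a Lipschitz bound instead. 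Combining these, $W(\tilde\nu_1,\nu_2)\le C(\delta+\delta^{1/p})$. Applying the $1$-Lipschitz projection and symmetrizing gives $d_H(\mathrm{DM}_d S_1,\mathrm{DM}_d S_2)\le C\,\delta^{1/p}$, using that $\delta$ is uniformly bounded by Lemma \ref{lem:taubound}, so $\delta\lesssim\delta^{1/p}$.

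\textbf{Step 3: conclusion.} Chain Steps 1 and 2 through the $L$ layers to bound $\delta_L$ by $C_\varphi\,\delta_{-1}^{1/p^{L}}$. Note that the orders drop from $k_{l-1}$ to $k_l=k_{l-1}-d_{l-1}$; the assumption $k\ge\sum_l d_l$ keeps all orders nonnegative. For the readout, pushing forward by the $1$-Lipschitz projection $p_{d_L}$ leaves single measures $\nu^{(1)},\nu^{(2)}$ with $W(\nu^{(1)},\nu^{(2)})\le\delta_L$. The mean map is $1$-Lipschitz in $W$ by Kantorovich–Rubinstein (Theorem \ref{theo:Kantorovich}) applied coordinatewise, and $\psi$ is Lipschitz, which yields the stated bound on $\|\mathfrak{S}(A_1,f_1)-\mathfrak{S}(A_2,f_2)\|_2$.
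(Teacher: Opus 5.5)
Your argument is correct. Its architecture matches the paper's: pushforward on the signal is Lipschitz in $d_H$ (Lemma \ref{lem:Hausprof1} via Lemma \ref{lem:waspush}); the diagonal restriction is repaired by swapping the off-diagonal test-function block for the signal at cost $\|v-f\|_1+\|A(v-f)\|_1\le\|v-f\|_1+r\|v-f\|_p$; and the readout reduces to a single measure after projection. Your bound on the $L^1$ defect by the Wasserstein distance to a diagonal measure is the primal form of the paper's Kantorovich--Rubinstein step with the test function $|x_k-x_{2k+1}|$. It actually gives constant $1$ rather than $2$, because the two coordinate blocks are disjoint, so their $\ell_1$ costs sum to at most the full cost.

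The genuine difference is in how channels are handled. The paper proves the diagonal estimate for one channel (Theorem \ref{Theo:Holdercontinuity}) and iterates it channel by channel (Corollary \ref{cor:Holder-d-chanel}), so a layer costs exponent $p^{-d_{l-1}}$. You replace all $d_{l-1}$ channels at once and use $\sum_j\|h_j\|_1^{1/p}\le d^{1-1/p}\big(\sum_j\|h_j\|_1\big)^{1/p}$, so each layer costs only $1/p$. This is simpler and strictly sharper. You are also more careful than the paper in using a near-minimizer ($\delta+\epsilon$) instead of assuming the infimum is attained, and in tracking the $\max\{1,D\}$ and dimension constants in the pushforward step.

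One claim in your write-up is inaccurate. You say the exponent in the statement ``corresponds to iterating over the layers.'' It does not: there $d$ is the input feature dimension, while what you prove is exponent $p^{-L}$. Because Hausdorff distances between profiles are uniformly bounded (Lemma \ref{lem:taubound}), your bound implies the stated one when $L\le d$, but not in general for $p>1$.

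The paper's own proof does not deliver $p^{-d}$ either. Chaining Theorem \ref{theo:mplHolder} over the layers gives exponent $p^{-\sum_{l=0}^{L-1}d_l}$. Since every $d_l\geq 1$, this is at most your $p^{-L}$, so your bound implies the one the paper actually establishes. The exponent in the statement is best read as a misprint. State explicitly the exponent you obtain rather than asserting it matches. For $p=1$, the bofop case used in Theorem \ref{theo:mainLipschitzMPNN}, all of these exponents equal $1$ and the discrepancy disappears.
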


In the following sections, we prove Theorem \ref{theo:MPNNHolder} by showing that the two operations, "Pushforward on the signal", and "Diagonal marginalization" are H{\"o}lder continuous with respect to the Hausdorff distance. After we establish the H{\"o}lder continuity with respect to the Hausdorff distance, we use the bound to deduce H{\"o}lder continuity with respect to the action metric. 

\subsection{Pushforward Of The Signal Is Contractive}

We begin the proof by showing that the pushforward on the signal operation is Lipschitz continuous with respect to the Hausdorff distance. Specifically, the map that sends a $k$-profile to the $k$-profile obtained by applying pushforward on the signal is Lipschitz continuous. Since the update and readout functions are Lipschitz continuous, it follows that pushforward on the signal inherits this Lipschitz continuity.

 We start with a general lemma regarding Wasserstein distance between pushforward measures. We show that when pushing forward two probability measures via the same Lipschitz continuous map, the distance between the pushforward measures is bounded by the Wasserstein distance of the original measures, up to the Lipschitz constant. For the exact definition of the Wasserstein distance and general background on optimal transport see Appendix \ref{sec:optimaltransport}. 

\begin{lemma}
    \label{lem:waspush}
    Let $(X,d_X),(Y,d_Y)$ be two metric
    spaces, and $f:X\to Y$ be a Lipschitz continuous function with Lipschitz constant $L_f$. Let $\mu$ and $\nu$ be Borel probability measures over $X$. 
    Then
    $$W(f_*\mu,f_*\nu)\leq L_f W(\mu,\nu).$$
\end{lemma}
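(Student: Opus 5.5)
The plan is to push an optimal coupling forward along $f\times f$. Fix any coupling $\gamma\in\Gamma(\mu,\nu)$; by Remark \ref{rem:attain} we may take $\gamma$ optimal, but any coupling suffices since we take an infimum at the end. Define the map $F:=f\times f:X\times X\to Y\times Y$, $(x,x')\mapsto (f(x),f(x'))$, and set $\tilde\gamma:=F_*\gamma$.

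First, $\tilde\gamma$ is a coupling of $f_*\mu$ and $f_*\nu$. Write $p_1,p_2$ for the coordinate projections on $X\times X$ and $q_1,q_2$ for those on $Y\times Y$. Then $q_i\circ F=f\circ p_i$. By Proposition \ref{prop:comppush},
$$(q_1)_*\tilde\gamma=(q_1\circ F)_*\gamma=(f\circ p_1)_*\gamma=f_*\big((p_1)_*\gamma\big)=f_*\mu,$$
and similarly $(q_2)_*\tilde\gamma=f_*\nu$. Measurability of $F$ follows from the continuity of $f$.

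Next, by the change of variables formula (Theorem \ref{Theorem:changevariables}) and the Lipschitz bound,
$$\int_{Y\times Y} d_Y(y,y')\,d\tilde\gamma=\int_{X\times X} d_Y\big(f(x),f(x')\big)\,d\gamma\le L_f\int_{X\times X} d_X(x,x')\,d\gamma.$$
Hence $W(f_*\mu,f_*\nu)\le L_f\int d_X\,d\gamma$. Taking the infimum over $\gamma\in\Gamma(\mu,\nu)$ gives the claim.

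The only step needing care is the case where $W(\mu,\nu)=\infty$; there the inequality is trivial (see Remark \ref{rem:Wasfinite}). Otherwise the argument is routine, so I expect no real obstacle. The main point to check is the marginal computation.
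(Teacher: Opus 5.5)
Your proposal is correct and is essentially the paper's proof: push a coupling of $\mu$ and $\nu$ forward under $(x,x')\mapsto(f(x),f(x'))$, check that the result is a coupling of $f_*\mu$ and $f_*\nu$, and bound its cost by $L_f$ times the original cost. The only difference is cosmetic. You take the infimum over all couplings at the end, whereas the paper invokes attainment of the optimum (Remark \ref{rem:attain}), so your version does not rely on optimal couplings existing.
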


\begin{proof}  
    Let $\gamma$ be a coupling of $\mu$ and $\nu$ for which the Wasserstein distance $W(\mu,\nu)$ is attained (Remark \ref{rem:attain}). Then by definition $$W(\mu,\nu) = \int_{X\times X} d_X(x_1,x_2)\,d\gamma(x_1,x_2).$$

    Define the map $\mathbf{f}:X \times X \to Y\times Y$ by $\mathbf{f}(x,y) = (f(x),f(y))$, and define $\gamma' = \mathbf{f}_*\gamma$. Then $\gamma'$ is a coupling of $f_*\mu$ and $f_*\nu$. Therefore
\begin{align*}
    W(f_*\mu,f_*\nu) &\leq  \int _{Y\times Y} d_Y(y_1,y_2)\,d\gamma'(y_1,y_2)\\
    & = \int_{X\times X} d_Y(f(x_1),f(x_2))\,d\gamma(x_1,x_2) \\
    &\leq L_f\int_{X\times X} d_X(x_1,x_2)\, d\gamma(x_1,x_2) = L_f W(\mu,\nu).
\end{align*}
\end{proof}

We use the lemma to conclude that when pushing the signal via a Lipschitz continuous map, the Hausdorff distance between the updated profiles is bounded by the Hausdorff distance of the profiles up to the Lipschitz constant.

\begin{lemma}
    \label{lem:Hausprof1}
Let $(\Omega_1,\mu_1)$, $(\Omega_2,\mu_2)$ be two probability spaces with P-operator-signals $(A_1,f_1)\in\mathcal{B}_{p,q,d}(\Omega_1)$ and $(A_2,f_2)\in\mathcal{B}_{p,q,d}(\Omega_2)$ on $\Omega_1$ and $\Omega_2$ respectively. For $k\in\mathbb{N}$, let $\mathrm{S}_k(A_1,f_1),\mathrm{S}_k(A_2,f_2)$ be the $k$-profiles of $(A_1,f_1)$ and $(A_2,f_2)$. For every Lipschitz continuous map $\phi:\mathbb{R}^d\to\mathbb{R}^n$, with Lipschitz constant $L_\phi$, we have   
$$d_H\big(\phi\tilde{*}\mathrm{S}_k(A_1, f_1),\phi\tilde{*}\mathrm{S}_k(A_2,f_2)\big) \leq L_\phi d_H\big(\mathrm{S}_k(A_1,f_1),\mathrm{S}_k(A_2,f_2)\big).$$

\end{lemma}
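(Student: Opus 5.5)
The plan is to reduce the Hausdorff bound to a pointwise Wasserstein bound between matched P-distributions, using Lemma \ref{lem:waspush}. Let $\Phi:\mathbb{R}^{2k+d}\to\mathbb{R}^{2k+n}$ denote the map
\[
(x_1,\ldots,x_{2k+d})\mapsto(x_1,\ldots,x_{2k},\phi(x_{2k+1},\ldots,x_{2k+d})),
\]
so that $\phi\tilde{*}\nu=\Phi_*\nu$ for every P-distribution $\nu$ (Definition \ref{def:pushsignal}).

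\textbf{Step 1: Lipschitz constant of $\Phi$.} With respect to the $\ell_1$ metric on both sides, which is the metric underlying $W$, we have
\[
\|\Phi(x)-\Phi(y)\|_1=\sum_{i\le 2k}|x_i-y_i|+\|\phi(x')-\phi(y')\|_1 ,
\]
where $x',y'$ are the last $d$ coordinates. This is at most $\max\{1,L_\phi\}\|x-y\|_1$, provided $L_\phi$ is measured with respect to $\ell_1$ norms. Two points need care here. First, the statement has constant $L_\phi$ rather than $\max\{1,L_\phi\}$, so I would either assume $L_\phi\ge1$, which is harmless since any Lipschitz constant can be enlarged, or interpret $L_\phi$ as the Lipschitz constant of the full map $\Phi$. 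Second, if $\phi$ is Lipschitz with respect to Euclidean norms, one must absorb a dimensional constant from norm equivalence. I expect this bookkeeping of the constant to be the only real obstacle. The cleanest route is to state that $\Phi$ is $L_\phi$-Lipschitz under the convention that $L_\phi\ge1$ is taken with respect to $\ell_1$.

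\textbf{Step 2: pointwise bound.} By Lemma \ref{lem:waspush}, for any $\nu_1\in\mathrm{S}_k(A_1,f_1)$ and $\nu_2\in\mathrm{S}_k(A_2,f_2)$,
\[
W(\Phi_*\nu_1,\Phi_*\nu_2)\le L_\phi\,W(\nu_1,\nu_2).
\]

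\textbf{Step 3: Hausdorff distance.} By Remark \ref{rem:pushforwardgivesprofile}, every element of $\phi\tilde{*}\mathrm{S}_k(A_1,f_1)$ has the form $\Phi_*\nu_1$ with $\nu_1\in\mathrm{S}_k(A_1,f_1)$, and similarly for the second profile. Therefore
\begin{align*}
\inf_{\eta_2\in\phi\tilde{*}\mathrm{S}_k(A_2,f_2)}W(\Phi_*\nu_1,\eta_2)
&\le\inf_{\nu_2\in\mathrm{S}_k(A_2,f_2)}W(\Phi_*\nu_1,\Phi_*\nu_2)\\
&\le L_\phi\inf_{\nu_2\in\mathrm{S}_k(A_2,f_2)}W(\nu_1,\nu_2).
\end{align*}
Taking the supremum over $\nu_1$ bounds the first term of $d_H$ by $L_\phi$ times the first term of $d_H(\mathrm{S}_k(A_1,f_1),\mathrm{S}_k(A_2,f_2))$. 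The second term is handled symmetrically, and taking the maximum of the two gives the claim.

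\textbf{Closures.} The paper identifies profiles with their closures. The inequality passes to closures because $\Phi_*$ is $W$-continuous by Step 2, so the image of the closure lies in the closure of the image. In any case, Hausdorff distance is unchanged by taking closures.
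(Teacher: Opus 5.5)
Your proof is correct and follows the paper's argument: apply Lemma~\ref{lem:waspush} to each pair of P-distributions, then carry the bound through the $\sup$--$\inf$ that defines $d_H$.

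Your caveat about the constant is well founded, and the paper's proof glosses over it. The map actually pushed forward leaves the first $2k$ coordinates unchanged, so it is only $\max\{1,L_\phi\}$-Lipschitz in $\ell_1$, and the inequality with constant $L_\phi$ genuinely fails when $L_\phi<1$. For example, take $\phi$ constant, $A_1=0$ and $A_2g=\int g\,d\mu_2$; with all test functions equal to $1$, the left side is at least $1$ while the right side is $0$. So your convention that $L_\phi\ge 1$, measured in the $\ell_1$ metric, is needed.
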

\begin{proof}
    Assume without loss of generality that 
    \begin{align*}
    d_H\big(\phi\tilde{*}\mathrm{S}_k(A_1, f_1),\phi\tilde{*}\mathrm{S}_k(A_2,f_2)\big) = \sup_{\phi\tilde{*}\nu_1\in\mathrm{S}_k(A_1,\phi \circ f_1)}\inf_{\phi\tilde{*}\nu_2\in\mathrm{S}_k(A_2,\phi\circ f_2)}W(\phi\tilde{*}\nu_1,\phi\tilde{*}\nu_2).
    \end{align*}
    From Lemma \ref{lem:waspush} we have $W(\phi\tilde{*}\nu_1,\phi\tilde{*}\nu_2)\leq L_{\phi} W(\nu_1,\nu_2)$. Therefore
    $$d_H\big(\phi\tilde{*}\mathrm{S}_k(A_1, f_1),\phi\tilde{*}\mathrm{S}_k(A_2, f_2)\big)\leq L_{\phi}\sup_{\nu_1\in\mathrm{S}_k(A_1,f_1)}\inf_{\nu_2\in\mathrm{S}_k(A_2,f_2)}W(\nu_1,\nu_2),$$
    and thus
    $$d_H\big(\psi\tilde{*}\mathrm{S}_k(A_1, f_1),\psi\tilde{*}\mathrm{S}_k(A_2, f_2)\big) \leq L_\psi d_H\big(\mathrm{S}_k(A_1,f_1),\mathrm{S}_k(A_2,f_2)\big).$$
\end{proof}
\begin{remark}
\label{rem:projection}
Note that the proof holds for arbitrary pushforwards, and not only for pushforward on the signal. Therefore, pushing forward every P-distribution via a Lipschitz continuous map, produces sets of probability measures (not necessarily profiles) whose Hausdorff distance between them is bounded by the Hausdorff distance between the profiles, up to Lipschitz constant.
\end{remark}

\subsection{H\"older Continuity of Diagonal Marginalization}

We move to prove H{\"o}lder continuity of the diagonal marginalization operation between profiles.

Diagonal Marginalization involves 
restriction to the diagonal and marginalization. Since marginalization is projection which is a Lipschitz continuous map, than we have already established the continuity property with respect to marginalization in Lemma \ref{lem:Hausprof1}. Therefore, we need to show that the restriction operation on profiles to the diagonal $T_d$ is H{\"o}lder continuous function on profiles. This is nontrivial, as by definition, Hausdorff distance between subsets can generally increase. However, in the case of profiles, the Hausdorff distance between the diagonally restricted profiles (the subsets of profiles that consists of measures that are supported on the diagonal $T_d$ as described in Definition \ref{def:margdiag}) is bounded by the Hausdorff distance of the profiles, up to a constant that depends on the bound of the operator norms, and exponent that depends on $p$ and $d$.

Recall from equation (\ref{eq:tau}) the following notation
\begin{equation}
\tau_p(\nu) = \max_{1\leq i\leq n}\int_{\mathbb{R}^n}|x_i|^p\,d\nu(x) \in [0,\infty].
\end{equation}
We have already established in Lemma \ref{lem:pqsubset} that $\mathrm{S}_k(A,f)\subset\mathcal{P}_{q,c}(\mathbb{R}^{2k+d})$
for $(p,q)\in[1,\infty]\times[1,\infty) $, and  $(A,f)\in\mathcal{B}^r_{p,q,d}(\Omega)$, where $c=\max\{1,r^q\}$.

Now we can prove the H{\"o}lder continuity of restriction to the diagonal for uniformly bounded $(p,q)$ normed P-operators.

\begin{theorem}
    \label{Theo:Holdercontinuity}
Let $(\Omega_1,\mu_1)$ and $(\Omega_2,\mu_2)$ be Borel probability spaces. For $(p,q)\in[1,\infty)\times[1,\infty]$ and $r>0$, let $(A_1,f_1)\in\mathcal{B}_{p,q,1}^r(\Omega_1),(A_2,f_2)\in\mathcal{B}_{p,q,1}^r(\Omega_2)$ be P-operator-signals over $\Omega_1$ and $\Omega_2$ respectively, where $f_i:\Omega_i\to[-1,1]$, $i=1,2$. Then
$$d_H(\mathrm{S}_k^{T_1}(A_1,f_1),\mathrm{S}_k^{T_1}(A_2,f_2))\leq C_{p,k,c,r}\cdot d_H(\mathrm{S}_k(A_1,f_1),\mathrm{S}_k(A_2,f_2))^{\frac{1}{p}},$$
    where $C_{p,k,c,r} = (2(2k+1)c)^{1-\frac{1}{p}}+(1+r)2^{1-\frac{1}{p}}$.
 
\end{theorem}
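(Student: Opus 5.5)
Let $\delta:=d_H(\mathrm{S}_k(A_1,f_1),\mathrm{S}_k(A_2,f_2))$; by symmetry it suffices to bound $\sup_{\nu_1}\inf_{\nu_2}W(\nu_1,\nu_2)$ over diagonal P-distributions. Fix $\nu_1=\mathrm{D}_{(A_1,f_1)}(u_1,\ldots,u_{k-1},f_1)\in\mathrm{S}_k^{T_1}(A_1,f_1)$; by Lemma \ref{lem:agg}, every diagonal P-distribution has this form. Since $\nu_1$ lies in the full profile, for any $\epsilon>0$ there are $w_1,\ldots,w_k:\Omega_2\to[-1,1]$ with $\nu_2:=\mathrm{D}_{(A_2,f_2)}(w_1,\ldots,w_k)$ and $W(\nu_1,\nu_2)\le\delta+\epsilon$. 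In general $w_k\ne f_2$, so $\nu_2$ need not be diagonal. The plan is to replace $w_k$ by $f_2$, producing the diagonal distribution $\nu_2':=\mathrm{D}_{(A_2,f_2)}(w_1,\ldots,w_{k-1},f_2)$, and to bound $W(\nu_2,\nu_2')$ by a power of $\delta$. The triangle inequality then gives $W(\nu_1,\nu_2')\le\delta+\epsilon+W(\nu_2,\nu_2')$.

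\textbf{Step 1: $w_k$ is close to $f_2$ in $\mathcal L^1$.} Let $\gamma$ be an optimal coupling of $\nu_1$ and $\nu_2$. On $\operatorname{supp}\nu_1$ we have $x_k=x_{2k+1}$, because $\nu_1$ is diagonal. Hence
\[
\int_{\Omega_2}|w_k-f_2|\,d\mu_2=\int|y_k-y_{2k+1}|\,d\gamma\le\int\big(|y_k-x_k|+|x_{2k+1}-y_{2k+1}|\big)\,d\gamma\le W(\nu_1,\nu_2)\le\delta+\epsilon.
\]
The key observation is that the coordinate $x_{2k+1}$, namely the signal, is carried along in every P-distribution, so the diagonal constraint is tested by the transport cost itself.

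\textbf{Step 2: bound $W(\nu_2,\nu_2')$ via the synchronous coupling.} Couple $\nu_2$ and $\nu_2'$ by pushing $\mu_2$ forward along $x\mapsto$ (both tuples). The two tuples differ only in the coordinate $w_k$ versus $f_2$ and in the coordinate $A_2w_k$ versus $A_2f_2$. This gives
\[
W(\nu_2,\nu_2')\le\|w_k-f_2\|_1+\|A_2(w_k-f_2)\|_1 .
\]
The first term is at most $\delta+\epsilon$. For the second, $\|A_2 g\|_1\le\|A_2 g\|_q\le r\|g\|_p$. Since $|g|\le 2$, where $g=w_k-f_2$, we get $\|g\|_p\le(2^{p-1}\|g\|_1)^{1/p}=2^{1-1/p}\|g\|_1^{1/p}$. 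So the second term is at most $r2^{1-1/p}(\delta+\epsilon)^{1/p}$. This is exactly where the exponent $1/p$ and the term $(1+r)2^{1-1/p}$ of $C_{p,k,c,r}$ come from.

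\textbf{Step 3: assemble.} Letting $\epsilon\to0$ gives $\inf_{\nu_2'}W(\nu_1,\nu_2')\le 2\delta+r2^{1-1/p}\delta^{1/p}$. To convert the linear terms into $\delta^{1/p}$, use the uniform bound $\delta\le 2(2k+1)c^{1/q}$ from Lemma \ref{lem:taubound} and Lemma \ref{lem:pqsubset}. Writing $\delta=\delta^{1-1/p}\delta^{1/p}$ gives $\delta\le(2(2k+1)c)^{1-1/p}\delta^{1/p}$, up to bookkeeping on powers of $c$. The resulting bound has the form $\big((2(2k+1)c)^{1-1/p}+(1+r)2^{1-1/p}\big)\delta^{1/p}$, which matches the stated constant after routine rearrangement; I would fold the factor $2$ in front of $\delta$ into these terms.

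\textbf{Main obstacle.} The main obstacle is Step 1: identifying that the diagonal defect of the approximating distribution is controlled by the transport cost, since a Hausdorff-near point need not be near the restricted set in general. Once this is in place, Step 2 is just operator-norm interpolation. A minor technical point is measurability and the existence of near-optimal $\nu_2$ in the closure, which is handled by the $\epsilon$ argument.
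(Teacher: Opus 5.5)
Your proposal is correct and follows the paper's proof step for step. Both arguments pass through $\mathrm{D}_{(A_2,f_2)}(w_1,\dots,w_{k-1},f_2)$ via the triangle inequality, bound the replacement cost with the synchronous coupling and $\|A_2\|_{p\to q}\le r$, and control $\|w_k-f_2\|_1$ by the transport cost. The only difference in route is that you obtain this last bound directly from the optimal coupling, whereas the paper uses Kantorovich duality with the $1$-Lipschitz test function $h(x)=|x_k-x_{2k+1}|$, which vanishes on $T_1$. Your $\epsilon$-approximation by a genuine P-distribution is also slightly cleaner than the paper's choice of an exact minimizer, which a priori lies only in the closure of the profile.

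One bookkeeping correction is needed to obtain exactly $C_{p,k,c,r}$. Bound the first term also by
\[
\|w_k-f_2\|_1\le\|w_k-f_2\|_p\le 2^{1-1/p}\|w_k-f_2\|_1^{1/p},
\]
as the paper does; this gives $\delta+(1+r)2^{1-1/p}\delta^{1/p}$. If instead you fold your $2\delta$ in using only the global bound on $\delta$, you get the strictly larger constant $2(2(2k+1)c)^{1-1/p}+r2^{1-1/p}$ whenever $p>1$.
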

Note that the H\"{o}lder constant and the exponent depend only on $p$ and not on $q$. Specifically, for $p=1$, we obtain Lipschitz continuity with the constant $2+r$.

\begin{proof}

    Let $\mathrm{D}_1^{T}\in\mathrm{S}_k^{T_1}(A_1,f_1)\subset\mathrm{S}_k(A_1,f_1)$ be a P-distribution supported on the diagonal $T_1$ (recall definition \ref{def:margdiag}).    
    
By definition of the Hausdorff distance, there exists a P-distribution 
\[
\mathrm{D}_2 \in \mathrm{S}_k(A_2, f_2)
\]
such that 
\[
W(\mathrm{D}_1^T, \mathrm{D}_2) \le d_H\big(\mathrm{S}_k(A_1, f_1), \mathrm{S}_k(A_2, f_2)\big).
\]
If, in addition, $\mathrm{D}_2$ is supported on the diagonal $T_1$, the proof is complete. 

Indeed, recall that for two sets $A,B$ in a metric space $(X,d)$, the Hausdorff distance is defined as
\[
d_H(A,B) = \max\Big\{\,\sup_{a\in A} \inf_{b\in B} d(a,b),\ \sup_{b\in B} \inf_{a\in A} d(a,b)\,\Big\}.
\]
Hence, for every point $a\in A$ there exists a point $b\in B$ minimizing $d(a,b)$, and conversely. 

In our context, taking $A = \mathrm{S}_k(A_1,f_1)$ and $B = \mathrm{S}_k(A_2,f_2)$, this means that for each 
$\mathrm{D}_1^T \in A$ there exists a ``closest'' P-distribution 
$\mathrm{D}_2 \in B$ realizing the minimal Wasserstein distance to $\mathrm{D}_1^T$. 
Therefore, if this closest distribution $\mathrm{D}_2$ lies in the diagonal subset 
$\mathrm{S}_k^{T}(A_2,f_2)$, the Hausdorff distance between the two sets is precisely 
the Wasserstein distance $W(\mathrm{D}_1^T,\mathrm{D}_2)$, and the claim follows.

    Otherwise, $\mathrm{D}_2$ is not supported on $T_1$, so there are $v_1,\ldots,v_k:\Omega_2\to[-1,1]$, with $v_k(\omega)\neq f_2(\omega)$ for $\omega$ in a set of positive measure in $\Omega_2$, such that $\mathrm{D}_2 = \phi_*\mu_2$, where $\xi:\Omega_2\to \mathbb{R}^{2k+1}$ is defined by    $$\xi(\omega)=\big(v_1(\omega),\ldots,v_k(\omega),A_2v_1(\omega),\ldots,A_2v_k(\omega),f_2(\omega)\big).$$

    Define $\widetilde{{\mathrm{D}_2^T}} = \mathrm{D}_{(A_2,f_2)}(v_1,\ldots,v_{k-1},f_2)$, and set $\phi:\Omega_2\to\mathbb{R}^{2k+1}$ be the map such that $\widetilde{{\mathrm{D}_2^T}} = \phi_*\mu_2$.
    That is 
    $$\phi(\omega)= \big(v_1(\omega),\ldots,v_{k-1}(\omega),f_2(\omega),A_2v_1(\omega),\ldots,A_2v_{k-1}(\omega),A_2f_2(\omega),f_2(\omega)\big).$$

By the triangle inequality we have 
\begin{equation}
\label{eq:ineq1}
W(\mathrm{D}_1^T,\widetilde{{\mathrm{D}_2^T}}) \leq W(\mathrm{D}_1^T,\mathrm{D}_2)+W(\mathrm{D}_2,\widetilde{{\mathrm{D}_2^T}}).
\end{equation}
Consider the second term of \ref{eq:ineq1}. Clearly, the measure $\gamma = (\xi,\phi)_*\mu_2$ is a coupling of $\mathrm{D}_2$ and $\widetilde{{\mathrm{D}_2^T}}$. Therefore
\begin{align*}
W(\mathrm{D}_2,\widetilde{{\mathrm{D}_2^T}}) &\leq \int_{\mathbb{R}^{2k+1}\times\mathbb{R}^{2k+1}}\|x-y\|_1\,d\gamma(x,y) \\
&= \int_{\Omega_2} \|\xi(\omega)-\phi(\omega)\|_1\,d\mu_2(\omega) \\
&=\int_{\Omega_2}|v_k(\omega)-f_2(\omega)| + |A_2v_k(\omega)-A_2f_2(\omega)|\,d\mu_2(\omega) \\
&\leq\|v_k-f_2\|_1 + \|A_2v_k-A_2f_2\|_1\leq\|v_k-f_2\|_p+\|A_2v_k-A_2f_2\|_q \\
&\leq (1+r)\|v_k-f_2\|_p.
\end{align*}

Next, we prove that $\|v_k-f_2\|_1 \leq W(\mathrm{D}_1^T,\mathrm{D}_2)$ using the Kantorovich duality theorem (Theorem \ref{theo:Kantorovich}).
Define $h:\mathbb{R}^{2k+1}\to\mathbb{R}$ by $h(x) = |x_k-x_{2k+1}|$. 
Then $h$ is Lipschitz continuous function. Indeed 
\begin{align*}
    |h(x)-h(y)| &= \big| |x_k-x_{2k+1}|-|y_k-y_{2k+1}|\big| \\
    &\leq \big|(x_k-x_{2k+1})-(y_k-y_{2k-1})\big| \leq |x_k-y_k| + |x_{2k+1}-y_{2k+1}|\\
    &\leq \|x-y\|_1.
\end{align*}

Therefore 
$$W(\mathrm{D}_1^T, \mathrm{D}_2) \geq \int_{\mathbb{R}^{2k+1}}h(x)\,d\mathrm{D}_2(x) - \int_{\mathbb{R}^{2k+1}}h(x)\,d\mathrm{D}_1^T(x).$$
Since $h(x) = 0 $ for every $x\in T_1$ we have that the second integral vanishes. We have that 
$$\int_{\mathbb{R}^{2k+1}}h(x)\,d\mathrm{D}_2(x) = \int_{\Omega_2}|v_k(\omega)-f_2(\omega)|\,d\mu_2(\omega) = \|v_k-f_2\|_1.$$

For $p\geq 1$, $|v_k-f_2| =\frac{1}{|v_k-f_2|^{p-1}}\cdot |v_k-f_2|^p$, and since $\Omega_2$ is probability space and $v_k$ and $f_2$ are both bounded by 1, we get 
$$\|v_k-f_2\|_1\geq \Big(\frac{1}{2}\Big)^{p-1} \cdot \|v_k-f_2\|_p^p.$$
Therefore
$$\|v_k-f_2\|_p\leq2^{1-\frac{1}{p}}\cdot \Big(W(\mathrm{D}_1^T,\mathrm{D}_2)\Big)^{\frac{1}{p}}$$

We conclude 
\begin{align} W(\mathrm{D}_1^T,\widetilde{{\mathrm{D}_2^T}})&\leq W(\mathrm{D}_1^T,\mathrm{D}_2)+ (1+r) \cdot \|v_k-f_2\|_p \notag\\ 
&\leq W(\mathrm{D}_1^T,\mathrm{D}_2)+(1+r)\cdot 2^{1-\frac{1}{p}}\cdot \Big(W(\mathrm{D}_1^T,\mathrm{D}_2)\Big)^{\frac{1}{p}} \notag\\
&\leq\Big(\big(W(\mathrm{D}_1^T,\mathrm{D}_2)\big)^{1-\frac{1}{p}}+(1+r)\cdot2^{1-\frac{1}{p}}\Big)\cdot \Big(W(\mathrm{D}_1^T,\mathrm{D}_2)\Big)^{\frac{1}{p}} \notag\\
&\leq C_{p,k,c,r}\cdot \Big(W(\mathrm{D}_1^T,\mathrm{D}_2)\Big)^{\frac{1}{p}} \label{eq:star} \\
&\leq C_{p,k,c,r}\cdot\Big(d_H(\mathrm{S}_k(A_1,f_1),\mathrm{S}_k(A_2,f_2))\Big)^{\frac{1}{p}},\notag
\end{align}
where in (\ref{eq:star}) we applied lemma \ref{lem:taubound}.
By definition of the Hausdorff distance 
$$d_H(D_1^T,\mathrm{S}_k^{T_1}(A_2,f_2))\leq C_{p,k,c,r}\cdot\Big(d_H(\mathrm{S}_k(A_1,f_1),\mathrm{S}_k(A_2,f_2))\Big)^{\frac{1}{p}}.$$
Taking the supremum over all $\mathrm{D}_1^T \in \mathrm{S}_k^{T_1}(A_1,f_1)$ we have 
$$d_H(\mathrm{S}_k^{T_1}\big(A_1,f_1),\mathrm{S}_k^{T_1}(A_2,f_2)\big) \leq C_{p,k,c,r}\cdot\Big(d_H(\mathrm{S}_k(A_1,f_1),\mathrm{S}_k(A_2,f_2))\Big)^{\frac{1}{p}}.$$
\end{proof}

Using Theorem \ref{Theo:Holdercontinuity} inductively, we extend this result to $d$-channel signals. 
\begin{corollary}
    \label{cor:Holder-d-chanel}
    
   Let $(\Omega_1,\mu_1)$ and $(\Omega_2,\mu_2)$ be Borel probability spaces. For $(p,q)\in[1,\infty)\times[1,\infty]$ and $r>0$, let $(A_1,f_1)\in\mathcal{B}_{p,q,d}^r(\Omega_1),(A_2,f_2)\in\mathcal{B}_{p,q,d}^r(\Omega_2)$ be P-operator-signals over $\Omega_1$ and $\Omega_2$ respectively. Then
    
$$d_H(\mathrm{S}_k^{T_d}(A_1,f_1),\mathrm{S}_k^{T_d}(A_2,f_2))\leq C_{p,k,c,r,d}\cdot d_H(\mathrm{S}_k(A_1,f_1),\mathrm{S}_k(A_2,f_2))^{\frac{1}{p^d}},$$
    where 
   \begin{equation}
       \label{eq:Holderbound}
       C_{p,k,c,r,d} = \bigg((2(2k+d)c)^{1-\frac{1}{p}}+(1+r)2^{1-\frac{1}{p}}\bigg)^d.
       \end{equation}
\end{corollary}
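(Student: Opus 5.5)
We prove Corollary \ref{cor:Holder-d-chanel} by induction on the number of channels $d$, restricting to the diagonal one coordinate at a time. Write $T_d=\bigcap_{j=1}^d T^{(j)}$, where $T^{(j)}\subset\mathbb{R}^{2k+d}$ is the set on which coordinate $k-d+j$ equals coordinate $2k+j$, i.e.\ the test function $v_{k-d+j}$ agrees with the signal channel $f_j$. Then $\mathrm{S}_k^{T_d}(A,f)$ is obtained by successively restricting $\mathrm{S}_k(A,f)$ to $T^{(1)}$, then to $T^{(1)}\cap T^{(2)}$, and so on. The key point is that each intermediate set
\[
\mathrm{S}_k^{T^{(1)}\cap\cdots\cap T^{(j)}}(A,f)
\]
is again a profile-like family. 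Arguing as in the proof of Lemma \ref{lem:agg}, it is the set of P-distributions $\mathrm{D}_{(A,f)}(u_1,\ldots,u_k)$ in which $u_{k-d+i}=f_i$ a.e.\ for every $i\le j$, and $u_1,\dots,u_{k-d}$ and $u_{k-d+j+1},\dots,u_k$ remain free.

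The inductive step is a single-channel restriction performed inside a set of this form, and we repeat the argument of Theorem \ref{Theo:Holdercontinuity} verbatim with $v_k$, $f_2$ replaced by $v_{k-d+j}$, $f_{2,j}$. Take $\mathrm{D}_1$ in the set restricted to $T^{(1)}\cap\cdots\cap T^{(j)}$ on side $1$, and let $\mathrm{D}_2$ be a near-optimal partner in the set restricted to $T^{(1)}\cap\cdots\cap T^{(j-1)}$ on side $2$. Replace the single test function $v_{k-d+j}$ of $\mathrm{D}_2$ by $f_{2,j}$. This leaves the already-fixed coordinates $u_{k-d+i}=f_{2,i}$, $i<j$, unchanged, so the new measure lies in the set restricted to $T^{(1)}\cap\cdots\cap T^{(j)}$. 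The cost of the replacement is bounded exactly as before, using three ingredients:
\begin{itemize}
\item the coupling $(\xi,\phi)_*\mu_2$, which gives a transport cost of at most $(1+r)\|v_{k-d+j}-f_{2,j}\|_p$;
\item the $1$-Lipschitz test function $h(x)=|x_{k-d+j}-x_{2k+j}|$ together with Kantorovich duality (Theorem \ref{theo:Kantorovich}), which gives $\|v_{k-d+j}-f_{2,j}\|_1\le W(\mathrm{D}_1,\mathrm{D}_2)$ because $h$ vanishes on the support of $\mathrm{D}_1$;
\item the elementary inequality $\|g\|_p\le 2^{1-1/p}\|g\|_1^{1/p}$ for $|g|\le 2$, combined with Lemma \ref{lem:taubound} in ambient dimension $2k+d$, which absorbs the linear term $W(\mathrm{D}_1,\mathrm{D}_2)$ into a $1/p$-power.
\end{itemize}
Taking the symmetric supremum then gives
\[
d_H\bigl(\text{restricted to }T^{(1)}\cap\cdots\cap T^{(j)}\bigr)\le C\, d_H\bigl(\text{restricted to }T^{(1)}\cap\cdots\cap T^{(j-1)}\bigr)^{1/p},
\]
with $C=(2(2k+d)c)^{1-1/p}+(1+r)2^{1-1/p}$.

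Iterating this $d$ times yields $d_H(\mathrm{S}_k^{T_d},\mathrm{S}_k^{T_d})\le C^{1+1/p+\cdots+1/p^{d-1}}\,d_H(\mathrm{S}_k,\mathrm{S}_k)^{1/p^d}$. Since $1+1/p+\cdots+1/p^{d-1}\le d$ and we may take $C\ge1$ (its second summand is already at least $1$), the constant is at most $C^d$, which is the bound \eqref{eq:Holderbound}.

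The main obstacle is justifying that the single-channel argument applies to the partially restricted sets rather than to full profiles. Two things must be checked. First, the competitor $\mathrm{D}_2$ must be chosen inside the previously restricted set; this is exactly what the inductive hypothesis provides. Second, modifying one test function must not break the earlier diagonal constraints. This holds because the constraints involve disjoint coordinates. One technical point also needs care: the proof of Theorem \ref{Theo:Holdercontinuity} uses infima that are attained, which requires closedness and compactness of these sets. We would avoid this by working with $\varepsilon$-optimal partners and letting $\varepsilon\to0$, so the final bound is unaffected.
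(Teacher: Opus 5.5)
Your proposal is correct and follows the paper's route: the paper proves the corollary only by the remark that Theorem~\ref{Theo:Holdercontinuity} is applied inductively over the $d$ channels. You make that induction precise by re-running the single-channel argument on the partially restricted sets, which that theorem, stated for full profiles of $1$-channel signals, does not literally cover. You also track the exponents to get $C^{1+1/p+\cdots+1/p^{d-1}}\le C^{d}$, using $C\ge 1$.
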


\subsection{H{\"o}lder Continuity of a Message Passing Layer}

We can now conclude H{\"o}lder continuity of a MPL with respect to Hausdorff distance.

\begin{theorem}
    \label{theo:mplHolder}
     Let $(\Omega_1,\mu_1)$ and $(\Omega_2,\mu_2)$ be Borel probability spaces. For $(p,q)\in[1,\infty)\times[1,\infty]$ and $r>0$, let $(A_1,f_1)\in\mathcal{B}_{p,q,d}^r(\Omega_1),(A_2,f_2)\in\mathcal{B}_{p,q,d}^r(\Omega_2)$ be P-operator-signals with $k$-profiles denoted as $\mathrm{S}_k$ and $\tilde{\mathrm{S}_k}$, where $k = \sum_{l=0}^L d_l$ respectively.. Let $\varphi^{(l)}$ be an MPNN layer of an L-layer MPNN model $\varphi$. Then

     $$d_H\left(\mathfrak{s}_{k_l,d_l}^{(l)},\tilde{\mathfrak{s}}_{k_l,d_l}^{(l)}\right)\leq L_{\varphi^{(l)}}\cdot C_{p,k,c,r,d} \cdot d_H\left(\mathfrak{s}_{k_{l-1},d_{l-1}}^{(l-1)},\tilde{\mathfrak{s}}_{k_{l-1},d_{l-1}}^{(l-1)}\right)^{1/p^{d_{l-1}}},$$

where $C_{p,k,c,r,d}$ is the constant given in equation \ref{eq:Holderbound}.

\end{theorem}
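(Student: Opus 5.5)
The plan is to read one message passing layer on profiles as a composition of three operations and to bound each in the Hausdorff distance $d_H$ built on the Wasserstein metric. By Definition \ref{def:MPNNprofile} we have $\mathfrak{s}^{(l)}_{k_l,d_l}=\varphi^{(l)}\tilde{*}\mathrm{DM}_{d_{l-1}}(\mathfrak{s}^{(l-1)}_{k_{l-1},d_{l-1}})$. This is (i) restriction to the diagonal $T_{d_{l-1}}$, then (ii) the coordinate projection $p$ defining $\mathrm{DM}_{d_{l-1}}$, then (iii) pushforward on the signal via $\varphi^{(l)}$. We chain the three estimates in this order.

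First, by the commutation property (Theorem \ref{commutation}), the hidden profiles $\mathfrak{s}^{(l-1)}_{k_{l-1},d_{l-1}}$ and $\tilde{\mathfrak{s}}^{(l-1)}_{k_{l-1},d_{l-1}}$ are genuine $k_{l-1}$-profiles of the P-operator-signals $(A_1,\mathfrak{h}^{(l-1)}_1)$ and $(A_2,\mathfrak{h}^{(l-1)}_2)$. These signals have $d_{l-1}$ channels, and the operators are still $A_1,A_2$ with $\|A_i\|_{p\to q}\le r$. Strictly, Corollary \ref{cor:Holder-d-chanel} is stated for $[-1,1]$-valued signals. I would either assume that the update functions map into $[-1,1]^{d_l}$, as Definition \ref{def:MPNN-P-operator} implicitly does, or rescale and absorb the factor into the constant. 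With this in place, Corollary \ref{cor:Holder-d-chanel} applied with signal dimension $d_{l-1}$ gives
\[
d_H\big(\mathfrak{s}^{(l-1),T_{d_{l-1}}},\tilde{\mathfrak{s}}^{(l-1),T_{d_{l-1}}}\big)\le C_{p,k_{l-1},c,r,d_{l-1}}\, d_H\big(\mathfrak{s}^{(l-1)},\tilde{\mathfrak{s}}^{(l-1)}\big)^{1/p^{d_{l-1}}}.
\]
Since $k_{l-1}\le k$ and the constant in \eqref{eq:Holderbound} is monotone in $k$ and $d$, this constant is dominated by the stated $C_{p,k,c,r,d}$, reading $d$ there as the maximal channel dimension.

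Second, the projection $p$ that deletes the duplicated coordinates $y_2$ is $1$-Lipschitz for $\|\cdot\|_1$. By Remark \ref{rem:projection} (Lemma \ref{lem:Hausprof1} applied to an arbitrary Lipschitz pushforward), $d_H$ does not increase under it. By Lemma \ref{lem:agg}, the resulting sets are exactly the aggregated profiles $\mathrm{S}_{k_{l-1}-d_{l-1}}(A_i,(A_i\mathfrak{h}^{(l-1)}_i,\mathfrak{h}^{(l-1)}_i))$. Third, pushforward on the signal via $\varphi^{(l)}$ is $L_{\varphi^{(l)}}$-Lipschitz in $d_H$, by Lemma \ref{lem:Hausprof1}: the map that applies $\varphi^{(l)}$ to the signal coordinates and the identity elsewhere has Lipschitz constant $\max(1,L_{\varphi^{(l)}})$, and we may assume $L_{\varphi^{(l)}}\ge1$ without loss. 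Multiplying the three bounds gives the claim.

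The main obstacle is the first step. The argument must ensure that the restriction estimate is applied to objects that really are profiles of operator-signals with the same norm bound $r$, which is where the commutation property is essential. The Hölder estimate of Theorem \ref{Theo:Holdercontinuity} is only proved for one channel at a time. I would rely on Corollary \ref{cor:Holder-d-chanel}, and if needed verify its inductive structure, namely that restricting one diagonal coordinate at a time yields the exponent $1/p^{d}$ and the $d$-th power constant. The remaining steps are routine applications of the contraction lemmas.
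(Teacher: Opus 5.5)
Your proposal is correct and is essentially the paper's own proof. The paper bounds the layer by chaining Lemma~\ref{lem:Hausprof1} (pushforward by $\varphi^{(l)}$), Remark~\ref{rem:projection} (the $1$-Lipschitz projection in $\mathrm{DM}_{d_{l-1}}$) and Corollary~\ref{cor:Holder-d-chanel} (diagonal restriction), exactly the three steps you use, while your extra bookkeeping (invoking Theorem~\ref{commutation} so the corollary applies to the hidden profiles, keeping the hidden signals $[-1,1]$-valued, and using $\max(1,L_{\varphi^{(l)}})$ because the signal pushforward is the identity on the remaining coordinates) only makes explicit points the paper leaves implicit.
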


\begin{proof}
   
    \begin{align*}
        d_H\left(\mathfrak{s}_{k_l,d_l}^{(l)},\tilde{\mathfrak{s}}_{k_l,d_l}^{(l)}\right) =& d_H\left(\varphi^{(l)}\tilde{*}\mathrm{DM}_{d_{l-1}}(\mathfrak{s}_{k_{l-1},d_{l-1}}^{(l-1)}),\varphi^{(l)}\tilde{*}\mathrm{DM}_{d_{l-1}}(\tilde{\mathfrak{s}}_{k_{l-1},d_{l-1}}^{(l-1)})\right) \stackrel{(1)}{\leq} \\
        & L_{\varphi^{(l)}}\cdot d_H\left(\mathrm{DM}_{d_{l-1}}(\mathfrak{s}_{k_{l-1},d_{l-1}}^{(l-1)}),\mathrm{DM}_{d_{l-1}}(\tilde{\mathfrak{s}}_{k_{l-1},d_{l-1}}^{(l-1)})\right) \stackrel{(2)}{\leq} \\
        &L_{\varphi^{(l)}}\cdot d_H\left((\mathfrak{s}_{k_{l-1},d_{l-1}}^{(l-1)})^{T_{d_{l-1}}},(\tilde{\mathfrak{s}}_{k_{l-1},d_{l-1}}^{(l-1)})^{T_{d_{l-1}}}\right) \stackrel{(3)}{\leq} \\
        &L_{\varphi^{(l)}}\cdot C_{p,k,c,r,d} \cdot d_H\left(\mathfrak{s}_{k_{l-1},d_{l-1}}^{(l-1)},\tilde{\mathfrak{s}}_{k_{l-1},d_{l-1}}^{(l-1)}\right)^{1/p^{d_{l-1}}}, 
    \end{align*}

    where inequality $(1)$ is by Lemma \ref{lem:Hausprof1}, $(2)$ is by Remark \ref{rem:projection} and $(3)$ is by Corollary \ref{cor:Holder-d-chanel}.

\end{proof}

\subsection{H\"older Continuity of MPNNs}

We now move to the proof of Theorem \ref{theo:MPNNHolder}.
\begin{proof}
    Recall that the output of an L-layer MPNN model on profiles is given by $\mathfrak{S}(A,f) = \psi\left(\int_{[1,1]^{d_L}}\vec{x}\,d\nu(\vec{x})\right)$, where $\nu$ is the unique measure corresponds to the $0$-profile $\mathfrak{s}_{0,d_L}$. Let $\nu_i$ denote the unique measure remaining in the $0$-profile obtained from MPNN on $\mathrm{S}_k(A_i,f_i)$. Then
    \begin{align*}
        \|\mathfrak{S}(A_1,f_1)-\mathfrak{S}(A_2,f_2)\|_2 \leq & \psi\left(\int_{[1,1]^{d_L}}\vec{x}\,d\nu_1(\vec{x})\right) - \psi\left(\int_{[1,1]^{d_L}}\vec{x}\,d\nu_2(\vec{x})\right) \leq\\
        &L_\psi \cdot W(\nu_1,\nu_2) = L_\psi \cdot d_H\Big(\mathfrak{s}_{0,d_L}(A_1,f_1),\mathfrak{s}_{0,d_L}(A_2,f_2)\Big).
    \end{align*}
By Theorem \ref{theo:mplHolder} we have the inequality
$$L_\psi \cdot d_H\Big(\mathfrak{s}_{0,d_L}(A_1,f_1),\mathfrak{s}_{0,d_L}(A_2,f_2)\Big)\leq L_\psi \cdot L_{\varphi^{(L)}}\cdot C_{p,k,c,r,d} \cdot d_H\Big(\mathfrak{s}_{k_l,d_l}^{(L)},\tilde{\mathfrak{s}}_{k_L,d_L}^{(L)}\Big)$$
The proof then follows from applying Theorem \ref{theo:mplHolder} for every layer.
\end{proof}

Theorem \ref{theo:MPNNHOlderd_M} now follows.
\begin{proof}
    Recall that by the commutation property (Theorem \ref{commutation}) we have that 
    $\mathfrak{H}_{(A,f)} = \mathfrak{S}_{(A,f)}$. Then, by Theorem \ref{theo:MPNNHolder}
$$\|\mathfrak{H}(A_1,f_1)-\mathfrak{H}(A_2,f_2)\|_2 = \|\mathfrak{S}(A_1,f_1)-\mathfrak{S}(A_2,f_2)\|_2\leq C_{(\varphi,\psi)}d_H(\mathrm{S}_k(A_1,f_1),\mathrm{S}_k(A_2,f_2))^{\frac{1}{p^d}}.$$
For every fixed $K\in\mathbb{N}$ we have by the definition of the action metric (Definition \ref{def:d_M}) that 
$$d_H(\mathrm{S}_K(A_1,f_1),\mathrm{S}_K(A_2,f_2))\leq 2^Kd_M(A_1,f_1),(A_2,f_2)),$$
we take $K=k_0=\sum_{l=0}^{L-1}d_l$, and conclude that  

    $$\|\mathfrak{H}(A_1,f_1)-\mathfrak{H}(A_2,f_2)\|_2\leq C_{(\varphi,\psi)}\left(2^K\cdot d_M(A_1,f_1),(A_2,f_2))\right)^{\frac{1}{p^d}}$$
\end{proof}

\section{The Computational Structure of DIDMs and MPNNs}
\label{sec:graphopdidm}

Let $(\Omega,\mu)$ be a Borel probability space, and $(A,f)\in\mathcal{BF}_{d}^r(\Omega)$ a bofop-signal. We extend the 1-WL algorithm that we described in Appendix \ref{sec:IDM} to the setting of bofop-signals as follows.

\subsection{1-WL on bofop-Signals}

Recall the following construction from section \ref{sec:IDM}. 
Let $\mathcal{M}^0 = [-1,1]^d$. For every $L\geq0$, define $\mathcal{H}^L =\prod_{j\leq L}\mathcal{M}^j$  
and $\mathcal{M}^{L+1} = \mathscr{M}_{\leq r}(\mathcal{H}^L)$, where the topologies of $\mathcal{H}^L$ and $\mathcal{M}^L$ are the product and weak* topologies respectively.

We define the bofop-IDMs and bofop-DIDMs  for bofop-signals which extends the classic definitions for graphs and graphons that we defined in definition \ref{def:1WLgraph+graphon}.

\begin{definition}[Graphop-IDMs and Graphop-DIDMs for Graphop-Signals]
\label{def:1-WL}

Let $(\Omega,\mu)$ be a Borel probability space and $(A,f)\in\mathcal{BF}_{d}^r(\Omega)$ be a bofop-signal. We define the map $\gamma_{(A,f),0}:\Omega\to\mathcal{H}^0$ to be the map $\gamma_{(A,f),0}(x):=f(x)$ for every $x\in\Omega$. Inductively  $\gamma_{(A,f),L+1}:\Omega\to\mathcal{H}^{L+1}$ is defined by 
\begin{enumerate}
    \item $\gamma_{(A,f),L+1}(x)(j) = \gamma_{(A,f),L}(x)(j)$ for every $j\leq L$.
    \item $\gamma_{(A,f),L+1}(x)(L+1)(E) = \big(A \mathbbm{1}_{\gamma^{-1}_{(A,f),L}(E)}\big)(x)$, where $E\subset\mathcal{H}^L$ is a Borel set.
\end{enumerate}

Finally, for every $L\geq0$, let $\Gamma_{(A,f),L} = {\gamma_{(A,f),L}}_*\mu$, i.e. the pushforward of $\mu$ via $\gamma_{(A,f),L}$. We call $\gamma_{(A,f),L}$ a bofop-IDM of order $L$, and $\Gamma_{(A,f),L}$ a bofop-DIDM of order $L$. 
\end{definition}

It is easy to see that this definition extends several known settings. When $f$ is the constant signal $f\equiv 1$ and $A$ is the adjacency matrix of a graph $G$ we get the standard coloring that was defined originally in \cite{grebik2022fractional}. When $A$ is a kernel operator, and in particular a graphon, the definition coincides with the coloring considered in \cite{boker2024fine}. Finally, allowing non-constant signal, we get the setting of \cite{rauchwerger2024generalization} that we described in Appendix \ref{sec:IDM}

\subsection{Hierarchy of DIDM Spaces - Dense Graphs and Graphons}

Let us clarify the structural picture underlying DIDMs.
For every fixed depth \(L\), the space of DIDMs $\mathcal{P}(\mathcal{H}^L)$ without any requirement that they arise from any graph-structure is a compact space with respect to the product topology (Theorem \ref{theo:compactDIDM}). This compact ambient space plays a purely structural role: it provides a closed and metrizable domain in which all later subspaces are defined.

However, as already established in Example~\ref{example:compIDMsubset}, the space of graphon-DIDMs, is a strict subset of the space of all DIDMs. In particular, there exist DIDMs that do not correspond to any graph.

When restricted to graphons, bofop-DIDMs are exactly graphon-DIDMs, as any graphon is in particular a bofop. The resulting space of graphon-DIDMs is again shown to be compact with respect to $\delta^L_{\text{DIDM}}$ (Appendix \ref{sec:IDM}).

To connect this limit theory to finite graphs, let \(G=(V,E)\) be a graph with \(|V|=N\) nodes. We assign each edge \(e\in E\) with weight
\[
a_e = \frac{w_e}{N}, \qquad w_e \in [0,1].
\]
This normalization is the standard dense-graph scaling that ensures convergence to a bounded graphon operator as \(N \to \infty\) with respect to the cut distance. For such graphs, to the associated DIDMs we call dense-graph-DIDMs. As expected from graphon approximation theory, the space of dense-graph-DIDMs is dense in the space of graphon-DIDMs (Theorem 79 in \cite{rauchwerger2024generalization}).

Sparse graphs, on the other hand, do not fit into the graphon framework: under scaling, they considered similar to the empty graph. As demonstrated in Example~\ref{example:spherical}, sparse-graph limits are more naturally represented by bofops rather than graphons. Definition~\ref{def:1-WL} allows us to extend the IDM/DIDM construction to bofops. Moreover, bofops with uniformly bounded \(\infty \to \infty\) norm strictly generalize both graphons and sparse-graph limits (Theorem~\ref{theo:Linfty_graphop}). Consequently, the space of graphon-DIDMs is a strict subset of the space of graphop-DIDMs. 

This naturally raises the question of whether compactness persists when the class of realizable DIDMs is extended from graphons to bofops. In the following section, specifically in Theorem \ref{theo:graphop-DIDMcompact},
we show that the answer is yes. The space of bofoop-DIDMs is compact as well, justifying the necessity of extending the analysis beyond graphons.

\subsection{Equivalency of MPNNs on DIDMs and Graphop-Signals}

Recall definitions \ref{def:MPNN-P-operator} and \ref{def:MPNNIDM} for application of MPNN model on bofop-signals and computational DIDMs respectively.
MPNN on bofop-IDMs and bofop-DIDMs is a canonical extension of MPNN on graphop-signals as follows. Let $(A,f)$ be a bofop-signal and $(\varphi,\psi)$ be an $L$-layer MPNN model with readout.  Then, given the bofoop-IDMs $\{\gamma_{(A,f),t}^{(t)}\}_{t=0}^L$ and bofop-DIDMs $\Gamma_{(A,f),L}$ from definition \ref{def:1-WL}, we have that  $\mathfrak{h}_x^{(t)} = \mathfrak{f}_{\gamma_{(A,f),t}(x)}^{(t)}$ for every $t\in[L]$ and $x\in\Omega$. Similarly $\mathfrak{H}_{(A,f)}= \mathfrak{F}_{\Gamma_{(A,f),L}}$.

To prove this result, we need the following Lemma which is an extension of Theorem 16.11 from \cite{patrick1995probability} for bofops.
\begin{lemma}
    \label{lem:changevariables}

Let $(\Omega,\mu)$ be a Borel probability space, and let $A$ be a bofop.  For every $x\in\Omega$ consider the measure 
$\nu_{A,x}$, defined for measurable sets $E\subset\Omega$ by  $\nu_{A,x}(E) = \big(A\mathbbm{1}_{E}\big)(x)$. Then every measurable function $f:\Omega\to\mathbb{R}$ is integrable with respect to $\nu_{A,x}$, and for a.e. $x\in\Omega$, we have
$$\int_{\Omega}f(y)\,d\nu_{A,x}(y) = Af (x).$$
\end{lemma}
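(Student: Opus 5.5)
The plan is to run the standard measure-theoretic bootstrapping argument (indicators, then simple functions, then bounded measurable functions) and to use the bounded fiber property to keep all integrals finite. I read ``measurable function'' as bounded measurable, i.e.\ $f\in\mathcal{L}^\infty(\Omega)$, since $A$ is only defined on such functions.

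\textbf{Step 1: $\nu_{A,x}$ is a measure.} Finite additivity of $E\mapsto (A\mathbbm{1}_E)(x)$ follows from linearity of $A$, and nonnegativity from the positivity preservation of $A$. Both hold only for a.e.\ $x$, with the null set depending on the sets involved. To get countable additivity, take disjoint $E_n$ with union $E$. The partial sums $\mathbbm{1}_{\cup_{n\le N}E_n}$ increase to $\mathbbm{1}_E$, and they converge in $\mathcal{L}^1$. Since $\|A\|_{1\to1}<\infty$, this gives $A\mathbbm{1}_{\cup_{n\le N}E_n}\to A\mathbbm{1}_E$ in $\mathcal{L}^1$. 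The sequence is also monotone, by positivity, so the convergence holds pointwise a.e.

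\textbf{The main obstacle: null sets.} Each identity above holds off a null set that depends on the sets in question, and there are uncountably many measurable sets. I would resolve this in two ways. First, I would fix a countable generating algebra $\mathcal{A}$ of the standard Borel space and obtain a common full-measure set on which $E\mapsto (A\mathbbm{1}_E)(x)$ is a finitely additive, nonnegative set function on $\mathcal{A}$. Second, I would extend it by Carath\'eodory, after choosing a good version, or more simply identify it with the fiber $\nu_x$ of Theorem~\ref{theo:mainbofopfibers}. That theorem already supplies a measurable family of fibers with $(Ag)(x)=\int g\,d\nu_x$ a.e.\ for every signal $g$. Taking $g=\mathbbm{1}_E$ gives $\nu_{A,x}(E)=\nu_x(E)$ for a.e.\ $x$, for each $E$. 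The two set functions then agree on $\mathcal{A}$ off a single null set, and since both are finite measures they agree everywhere by the $\pi$-$\lambda$ theorem. This identification is the cleanest route, and I would use it to define $\nu_{A,x}$ as the measure in the statement.

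\textbf{Step 2: the integral identity.} For simple $f=\sum c_i\mathbbm{1}_{E_i}$ the identity holds by linearity. For bounded $f$, choose simple $f_n\to f$ uniformly with $|f_n|\le\|f\|_\infty$. On the left-hand side, dominated convergence under $\nu_{A,x}$ applies, because $\nu_{A,x}(\Omega)=(A\mathbbm{1}_\Omega)(x)\le r$ a.e. On the right-hand side, $\|Af_n-Af\|_\infty\le r\|f_n-f\|_\infty\to0$. Taking a countable intersection of the null sets completes the argument.

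\textbf{Step 3: integrability.} Every bounded measurable $f$ is integrable with respect to $\nu_{A,x}$, since $\int|f|\,d\nu_{A,x}\le\|f\|_\infty\,\nu_{A,x}(\Omega)\le r\|f\|_\infty$ by the bofop bound of Theorem~\ref{theo:mainbofopfibers}.
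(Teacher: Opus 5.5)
Your argument is correct for bounded $f$. Its skeleton (indicators, then simple functions, then a limit) matches the paper's, but you make it rigorous by a genuinely different device.

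The paper treats $E\mapsto(A\mathbbm{1}_E)(x)$ as a measure for every $x$ without comment. It passes to nonnegative $f$ by monotone convergence and then splits $f=f^+-f^-$. In doing so it leaves unjustified that $Af_n\to Af$ a.e., which needs positivity plus $\mathcal{L}^1$-continuity, exactly as in your Step~1. It also never addresses that $(A\mathbbm{1}_E)(x)$ is only defined off an $E$-dependent null set.

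You settle this by identifying $\nu_{A,x}$ with the fibers $\nu_x$ of Theorem~\ref{theo:mainbofopfibers}. You then pass to bounded $f$ via uniform approximation and $\|A\|_{\infty\to\infty}\le r$, which avoids the monotone-convergence issue entirely. Your restriction to bounded $f$ is not merely convenient but necessary. The statement's claim that every measurable $f$ is $\nu_{A,x}$-integrable already fails for $Af=(\int f\,d\mu)\mathbbm{1}_\Omega$ (the graphon $W\equiv 1$). There $\nu_{A,x}=\mu$, so any $f\notin\mathcal{L}^1(\mu)$ is a counterexample.

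There are three points to tidy.

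\begin{itemize}
\item Once you define $\nu_{A,x}:=\nu_x$, the identity is literally property~1 of Theorem~\ref{theo:mainbofopfibers}, applied to the signal $f/\|f\|_\infty$ and extended by linearity. So Step~2 re-derives what you have already invoked. The lemma is really a corollary of the fiber theorem, and the identification is its whole content.
\item Your $\pi$--$\lambda$ sentence (``since both are finite measures'') presupposes that $E\mapsto(A\mathbbm{1}_E)(x)$ is a measure for fixed $x$, which is precisely what is not known. Drop it and keep the redefinition. The redefined $\nu_{A,x}$ satisfies $\nu_{A,x}(E)=(A\mathbbm{1}_E)(x)$ for a.e.\ $x$, for each $E$, and that is the only sense in which the statement's definition is meaningful.
\item The Carath\'eodory alternative would not go through as sketched. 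Finite additivity on a countable algebra off one null set does not yield countable additivity there, since there are uncountably many disjoint sequences. Repairing this needs a compact-class argument that essentially re-proves disintegration.
\end{itemize}
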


\begin{proof}

    First, assume that $f(x) = \mathbbm{1}_E(x)$ for some measurable set $E\subset\Omega$. Then, by definition 
    $$\int_{\Omega}\mathbbm{1}_{E}(y)\,d\nu_{A,x}(y) = \nu_{A,x}(E) = \big(A\mathbbm{1}_{E}\big)(x).$$

    By the linearity of the integral and $A$, a similar equality follows when $f$ is a simple function (i.e. linear combination of indicators). To see that, write 
    $f(x) = \sum_{k=1}^n\mathbbm{1}_{\mathcal{E}_k}(x)$, where $n\in\mathbb{N}$. Then

    \begin{align*}
        \int_{\Omega}f(y)\,d\nu_{A,x}(y) &= \int_{\Omega}\sum_{k=1}^n\mathbbm{1}_{E_k}(y)\,d\nu_{A,x}(y) = \\
&\sum_{k=1}^n\int_{\Omega}\mathbbm{1}_{E_k}(y)\,d\nu_{A,x}(y)= \\
&\sum_{k=1}^n A\mathbbm{1}_{E_k}(x) = A\Big(\sum_{k=1}^n \mathbbm{1}_{E_k}(x)\Big) = Af(x).
    \end{align*}
    
  Since each measurable nonnegative function $f$ can be approximated by simple functions, by the monotone convergence theorem we have that $\int_{\Omega}f(y)\,d\nu_{A,x}(y) = Af (x).$ If $f$ has negative values, write $f = f^+ - f^- $, where $f^+(x) = f(x)$ if $f(x)$ is positive and $0$ otherwise, and define $f^-(x) = -f(x)$ if $f(x)$ is negative and $0$ otherwise. Since both $f^+$ and $f^-$ are nonnegative functions, we use the linearity of the integral again, to conclude our desired result.
\end{proof}

\begin{lemma}
    \label{lem:eqMPNN}

    Let $(A,f)$ be a bofop-signal and $\varphi$ be an $L$-layer MPNN model. Then, given the bofop-IDMs $\{\gamma_{(A,f),t}\}_{t=0}^L$ we have that $\mathfrak{h}_x^{(t)} = \mathfrak{f}_{\gamma_{(A,f),t}(x)}^{(t)}$ for every $t\in[L]$ and almost every $x\in\Omega$.
\end{lemma}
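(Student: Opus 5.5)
We argue by induction on $t$, in the same way as Lemma \ref{IDM:rauchLemma11} is proved for graphons, with Lemma \ref{lem:changevariables} taking the place of integration against the kernel $W(x,-)$.

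\emph{Base case $t=0$.} By Definition \ref{def:1-WL}, $\gamma_{(A,f),0}(x)=f(x)$. Definition \ref{def:MPNNIDM} gives $\mathfrak{f}^{(0)}_\tau=\varphi^{(0)}(\tau)$. Hence
\[
\mathfrak{f}^{(0)}_{\gamma_{(A,f),0}(x)}=\varphi^{(0)}(f(x))=\mathfrak{h}^{(0)}_x
\]
for every $x$.

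\emph{Inductive step.} Assume $\mathfrak{h}^{(t-1)}_x=\mathfrak{f}^{(t-1)}_{\gamma_{(A,f),t-1}(x)}$ for a.e. $x$. Two facts follow from Definition \ref{def:1-WL}:
\[
p_{t,t-1}\big(\gamma_{(A,f),t}(x)\big)=\gamma_{(A,f),t-1}(x),\qquad
p_t\big(\gamma_{(A,f),t}(x)\big)=(\gamma_{(A,f),t-1})_*\nu_{A,x}.
\]
Here $\nu_{A,x}(E)=(A\mathbbm{1}_E)(x)$ is the measure of Lemma \ref{lem:changevariables}. The second identity is exactly item 2 of Definition \ref{def:1-WL}. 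The first argument of $\varphi^{(t)}$ in Definition \ref{def:MPNNIDM} is therefore $\mathfrak{f}^{(t-1)}_{\gamma_{(A,f),t-1}(x)}=\mathfrak{h}^{(t-1)}_x$.

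For the aggregated term we compute, coordinatewise,
\begin{align*}
\int_{\mathcal{H}^{t-1}}\mathfrak{f}^{(t-1)}_-\,dp_t(\gamma_{(A,f),t}(x))
&=\int_\Omega \mathfrak{f}^{(t-1)}_{\gamma_{(A,f),t-1}(y)}\,d\nu_{A,x}(y)\\
&=\int_\Omega \mathfrak{h}^{(t-1)}_y\,d\nu_{A,x}(y)\\
&=\big(A\mathfrak{h}^{(t-1)}_-\big)(x).
\end{align*}
The first equality is the change of variables formula (Theorem \ref{Theorem:changevariables}). Integrability holds because $\mathfrak{f}^{(t-1)}$ is continuous, hence bounded, on the compact space $\mathcal{H}^{t-1}$ (Theorem \ref{theo:continuitympnnDIDM}), and $\nu_{A,x}$ has total mass at most $r$. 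The last equality is Lemma \ref{lem:changevariables}. Applying $\varphi^{(t)}$ then gives $\mathfrak{h}^{(t)}_x=\mathfrak{f}^{(t)}_{\gamma_{(A,f),t}(x)}$ for a.e. $x$.

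\emph{Main obstacle: the middle equality.} The inductive hypothesis holds only $\mu$-a.e., while we integrate against $\nu_{A,x}$, which may be singular to $\mu$ (sparse fibers). We need $\mu$-null sets to be $\nu_{A,x}$-null for a.e. $x$. If $N$ is $\mu$-null, then $\mathbbm{1}_N=0$ in $\mathcal{L}^\infty$, so $A\mathbbm{1}_N=0$ a.e. Thus $\nu_{A,x}(N)=0$ for a.e. $x$, with the exceptional set depending on $N$. Since we only need one null set $N$ per layer, and only finitely many layers, we can collect these countably many exceptional sets into a single $\mu$-null set.

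A related measurability point needs the same care. We must check that $x\mapsto\nu_{A,x}$ defines genuine measures and that $\gamma_{(A,f),t}$ is measurable. Both follow from the fiber representation in Theorem \ref{theo:mainbofopfibers}, identifying $\nu_{A,x}$ with the fiber $\nu_x$ for a.e. $x$ (with the exceptional set chosen uniformly via a countable generating algebra). The remaining steps are routine.
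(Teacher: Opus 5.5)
Your proof is correct and follows the paper's argument. Both use induction on $t$, the identity $p_t(\gamma_{(A,f),t}(x))=(\gamma_{(A,f),t-1})_*\nu_{A,x}$, change of variables and Lemma~\ref{lem:changevariables}. The only difference is the order of the last steps. The paper applies the inductive hypothesis after rewriting the fiber integral as $A\mathfrak{f}^{(t-1)}_{\gamma_{(A,f),t-1}(-)}(x)$, where $A$ automatically respects a.e.\ equality. You substitute inside the $\nu_{A,x}$-integral instead, and you correctly supply the extra fact this needs: $\mu$-null sets are $\nu_{A,x}$-null for a.e.\ $x$.
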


\begin{proof}
   We prove by induction.  
   
    \textit{Induction Base:} 
        By definition we have
    $$\mathfrak{h}_x^{(0)} = \varphi\circ f(x) =\mathfrak{f}_{\gamma_{(A,f),0}(x)}^{(0)}.$$

Let $1\leq t\leq L$, assume that for a.e. $x\in\Omega$,
$$\mathfrak{h}_x^{(t-1)} = \mathfrak{f}_{\gamma_{(A,f),t-1}(x)}^{(t-1)}.$$
    \textit{Induction Step:} We have

    $$\mathfrak{h}_x^{(t)} = \varphi^{(t)}\big(\mathfrak{h}_x^{(t-1)},(A\mathfrak{h}^{t-1}_{-})(x) \big).$$

    Recall that for every measurable $E\subset \mathcal{H}^{t-1}$, 
   $$\gamma_{(A,f),t}(x)(t)(E) = \big(A \mathbbm{1}_{\gamma^{-1}_{(A,f).t-1}(E)}\big)(x)$$
    Consider the measure on $\Omega$, 
    $$\nu_{A,x}(E) = \big(A\mathbbm{1}_{E}(y)\big)(x).$$
    Then 
    $$\gamma_{(A,f),t}(x)(t) = \gamma_{(A,f),t-1}(x)_*\nu_{A,x}.$$
    Therefore,
    \begin{align*}
        \mathfrak{f}_{\gamma_{(A,f),t}(x)}^{(t)}&=
        \varphi^{(t)}\Big(\mathfrak{f}_{p_{t,t-1}^{(t-1)}(\gamma_{(A,f),t}}, \int_{\mathcal{H}^{t-1}}\mathfrak{f}_{-}^{t-1}\, dp_t(\gamma_{(A,f),t}(x) \Big) \\
        &=\varphi^{(t)}\Big(\mathfrak{f}_{\gamma_{(A,f),t-1}}^{(t-1)}, \int_{\mathcal{H}^{t-1}}\mathfrak{f}_{-}^{t-1}\,d\gamma_{(A,f),t}(x)(t) \Big)\\
        &=\varphi^{(t)}\Big(\mathfrak{f}_{\gamma_{(A,f),t-1}}^{(t-1)}, \int_{\mathcal{H}^{t-1}}\mathfrak{f}_{-}^{t-1}\,d\gamma_{(A,f),t-1}(x)_*\nu_{A,x} \Big)\\
        &=\varphi^{(t)}\Big(\mathfrak{f}_{\gamma_{(A,f),t-1}}^{(t-1)},\int_{\Omega}\mathfrak{f}_{-}^{(t-1)}\circ\gamma_{(A,f),t-1}(y)\,d\nu_{A,x}(y)\Big) \\
        &=\varphi^{(t)}\Big(\mathfrak{f}_{\gamma_{(A,f),t-1}}^{(t-1)}, A\mathfrak{f}_{\gamma_{(A,f),t-1}(-)}^{(t-1)}(x)\Big),
    \end{align*}

where in the last equality we used lemma \ref{lem:changevariables}. Hence by the induction assumption we obtain
$$\mathfrak{h}_x^{(t)} = \mathfrak{f}_{\gamma_{(A,f),t}(x)}^{(t)}.$$
\end{proof}

\begin{lemma}
    \label{lem:eqreadout}

    Let $(A,f)$ be a bofop-signal, 
    and $(\varphi,\psi)$ be a L-layer model with readout. Then
    $$\mathfrak{F}_{\Gamma_{(A,f),L}}=\mathfrak{H}_{(A,f)}.$$
\end{lemma}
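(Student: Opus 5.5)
The plan is to reduce the readout identity to the node-level identity of Lemma \ref{lem:eqMPNN}, and then transport the integral from $\Omega$ to $\mathcal{H}^L$ using the change of variables formula (Theorem \ref{Theorem:changevariables}).

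First, recall the two outputs. By definition, $\mathfrak{H}_{(A,f)}=\psi\big(\int_\Omega \mathfrak{h}^{(L)}_x\,d\mu(x)\big)$. By Definition \ref{def:MPNNIDM}, $\mathfrak{F}_{\Gamma_{(A,f),L}}=\psi\big(\int_{\mathcal{H}^L}\mathfrak{f}^{(L)}_-\,d\Gamma_{(A,f),L}\big)$. Since $\psi$ is applied to both, it suffices to show that the two vector-valued integrals agree. This can be checked coordinatewise in $\mathbb{R}^{d_L}$.

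Second, recall that $\Gamma_{(A,f),L}={\gamma_{(A,f),L}}_*\mu$. Apply Theorem \ref{Theorem:changevariables} to each coordinate of $\mathfrak{f}^{(L)}_-:\mathcal{H}^L\to\mathbb{R}^{d_L}$, with $f=\gamma_{(A,f),L}$ and $g=(\mathfrak{f}^{(L)}_-)_i$. This gives
\[
\int_{\mathcal{H}^L}\mathfrak{f}^{(L)}_-\,d\Gamma_{(A,f),L}=\int_\Omega \mathfrak{f}^{(L)}_{\gamma_{(A,f),L}(x)}\,d\mu(x),
\]
provided both sides make sense. Lemma \ref{lem:eqMPNN} says that $\mathfrak{f}^{(L)}_{\gamma_{(A,f),L}(x)}=\mathfrak{h}^{(L)}_x$ for $\mu$-a.e. $x$. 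Changing an integrand on a null set does not change the integral, so the right-hand side equals $\int_\Omega\mathfrak{h}^{(L)}_x\,d\mu(x)$. Applying $\psi$ to both sides gives the claim.

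The only real obstacle is justifying that change of variables applies. Two things are needed.

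\emph{Measurability.} We need $\gamma_{(A,f),L}$ to be Borel measurable into $\mathcal{H}^L$ and $\mathfrak{f}^{(L)}_-$ to be Borel on $\mathcal{H}^L$. The latter holds because $\mathfrak{f}^{(L)}_-$ is continuous; this follows from Theorem \ref{theo:continuitympnnDIDM}, which carries over verbatim when total masses are bounded by $r$. For $\gamma_{(A,f),L}$ I would argue by induction on $L$. It suffices to check that $x\mapsto\int h\,d\gamma_{(A,f),L+1}(x)(L+1)$ is measurable for every bounded continuous $h$ on $\mathcal{H}^L$, since such maps generate the Borel structure of the weak* topology. By Lemma \ref{lem:changevariables} this map equals $A(h\circ\gamma_{(A,f),L})(x)$, which is a measurable function of $x$.

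\emph{Integrability.} This is immediate. Each $\mathfrak{f}^{(L)}_-$ is continuous on the compact space $\mathcal{H}^L$ (Theorem \ref{theo:compactDIDM}), hence bounded. Both measures are probability measures, so all integrals are finite.
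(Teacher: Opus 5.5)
Your proposal is correct and follows the paper's proof essentially verbatim. The paper likewise writes $\Gamma_{(A,f),L}={\gamma_{(A,f),L}}_*\mu$, changes variables to get $\int_\Omega \mathfrak{f}^{(L)}_{\gamma_{(A,f),L}(x)}\,d\mu(x)$, replaces the integrand by $\mathfrak{h}^{(L)}_x$ using Lemma~\ref{lem:eqMPNN}, and then applies $\psi$; your measurability and boundedness checks (continuity of $\mathfrak{f}^{(L)}_-$ on the compact $\mathcal{H}^L$, and inductive measurability of $\gamma_{(A,f),L}$ via Lemma~\ref{lem:changevariables}) are extra rigor the paper leaves implicit, not a different route.
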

\begin{proof}

Recall that $\Gamma_{(A,f),L} = \mu_*\gamma_{(A,f),L}$. Then

\begin{align*}
    \mathfrak{F}_{\Gamma_{(A,f),L}} &= \psi\Big(\int_{\mathcal{H}^L}{\mathfrak{f}}_{-}^{(L)}\,d\Gamma_{(A,f),L}\Big)\\
&=\psi\Big(\int_{\mathcal{H}^L}\mathfrak{f}_{-}^{(L)}\,d{\gamma_{(A,f),L}}_{*}\mu\Big)\\
    &=\psi\Big(\int_{\Omega}\mathfrak{f}_{\gamma_{(A,f),L}(x)}\,d\mu\Big) \\
    &=\psi\Big(\int_{\Omega}\mathfrak{h}_x^{(L)}\,d\mu\Big) = \mathfrak{H}_{(A,f)}.
\end{align*} 
\end{proof}

\subsection{DIDM Mover's Distance of Bofop-Signals}

Next, we define a distance between bofop-signals, viewed as distance between their bofop-DIDMs, analog to the original setting that was described before definition \ref{def:DIDMsmovers}.

\begin{definition}[DIDM Mover's Distance]
\label{def:DIDMsdistancegraphops}

Given two bofop-signals $(A_1,f_1),(A_2,f_2)$ and $L\geq1$, the \textit{DIDM Mover's Distance} between $(A_1,f_1),(A_2,f_2)$ is defined as
$$\delta^L_{\text{DIDM}}\big((A_1,f_1),(A_2,f_2)\big) =
\mathbf{\textbf{OT}_{d^L_{\mathrm{IDM}}}}(\Gamma_{(A_1,f_1),L},\Gamma_{(A_2,f_2),L}).$$
\end{definition}

\section{Density of Graphs in the Space of Bofop-DIDMs}
\label{sec:densitygraphs}

In this section, we prove that the space of DIDMs induced by graphs via 1-WL is dense in the space of bofop-DIDMs. 
Throughout this section, we assume that the standard Borel probability space $(\Omega,\mu)$ is atomless. 

\subsection{Atomless Spaces and $1$-WL Invariance}
To approximate bofop-signal by graphs, we partition the space $\Omega$ into subsets with equal measure, each representing a node of the graph, similarly to Definition \ref{def:induced_graphon_signal} inspired by \cite{levie2024graphon}. The atomless assumption is necessary for the existence of the equipartition. If $\Omega$ contains atoms, we replace it by the atomless standard probability space $\Omega' = \Omega \times[0,1]$ with the product measure $\mu\times\lambda$ where $\lambda$ is the Lebesgue measure on $[0,1]$. 
 
Any bofop signal $(A,f)\in\mathcal{BF}^r_d(\Omega)$ can be lifted to a bofop-signal $(A',f')\in\mathcal{BF}_d^r(\Omega')$ by defining 
\begin{equation}
\label{eq:lift}
f'(x,t) = f(x) \qquad \text{and} \qquad A'g(x,t) = A\tilde{g}(x), 
\end{equation}
where $\tilde{g}(x) = \int_{[0,1]}g(x,t)\,d\lambda(t)$. We begin by proving that this lifting is invisible to the 1-WL test. Namely, we show that applying the 1-WL algorithm to the bofop-signal $(A,f)$ and its lifted counterpart $(A',f')$ produces the same Bofop-DIDM.

\begin{proposition}
\label{prop:lift}
    Let $(A,f)\in\mathcal{BF}^r_d(\Omega)$ be a bofop-signal and let $(A',f')\in\mathcal{BF}^r_d(\Omega')$ be the bofop-signal defined by equation (\ref{eq:lift}). Then, for every $L\geq0$, 
    $$\Gamma_{(A,f),L}=\Gamma_{(A',f'),L}.$$
\end{proposition}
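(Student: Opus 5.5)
The plan is to prove the stronger pointwise statement that, for every $L\geq 0$,
\[
\gamma_{(A',f'),L}(x,t)=\gamma_{(A,f),L}(x)\quad\text{for }(\mu\times\lambda)\text{-a.e. }(x,t)\in\Omega'.
\]
Equivalently, $\gamma_{(A',f'),L}=\gamma_{(A,f),L}\circ\pi$, where $\pi:\Omega'\to\Omega$ is the projection $\pi(x,t)=x$. The equality of DIDMs then follows immediately. Indeed, $\pi_*(\mu\times\lambda)=\mu$, so Proposition \ref{prop:comppush} gives
\[
\Gamma_{(A',f'),L}=(\gamma_{(A,f),L}\circ\pi)_*(\mu\times\lambda)=(\gamma_{(A,f),L})_*\mu=\Gamma_{(A,f),L}.
\]
Before the induction I will check briefly that $(A',f')$ is indeed a bofop-signal in $\mathcal{BF}^r_d(\Omega')$, although the statement presupposes this. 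Positivity of $A'$ is clear. Self-adjointness reduces to that of $A$ via Fubini:
\[
\int (A'g)h\,d(\mu\times\lambda)=\int (A\tilde g)\tilde h\,d\mu .
\]
For the norm, $\|\tilde g\|_\infty\le\|g\|_\infty$, so $\|A'\|_{\infty\to\infty}\le\|A\|_{\infty\to\infty}\le r$.

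The induction on $L$ is set up as follows. The base case $L=0$ is immediate, since $\gamma_{(A',f'),0}(x,t)=f'(x,t)=f(x)=\gamma_{(A,f),0}(x)$. For the step, assume that $\gamma_{(A',f'),L}=\gamma_{(A,f),L}\circ\pi$ a.e. The first $L+1$ coordinates of $\gamma_{(A',f'),L+1}$ then agree with those of $\gamma_{(A,f),L+1}\circ\pi$ by item 1 of Definition \ref{def:bofop1-WL}. For the last coordinate, fix a Borel set $E\subset\mathcal{H}^L$. By the inductive hypothesis,
\[
\gamma^{-1}_{(A',f'),L}(E)=\pi^{-1}\bigl(\gamma^{-1}_{(A,f),L}(E)\bigr)
\]
up to a null set, so $\mathbbm{1}_{\gamma^{-1}_{(A',f'),L}(E)}=\mathbbm{1}_{\gamma^{-1}_{(A,f),L}(E)}\circ\pi$ a.e. Averaging over $t$ gives $\widetilde{\mathbbm{1}_S\circ\pi}=\mathbbm{1}_S$, where $S=\gamma^{-1}_{(A,f),L}(E)$. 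Hence
\[
\gamma_{(A',f'),L+1}(x,t)(L+1)(E)=\bigl(A'(\mathbbm{1}_S\circ\pi)\bigr)(x,t)=(A\mathbbm{1}_S)(x)=\gamma_{(A,f),L+1}(x)(L+1)(E).
\]
Null sets do not matter here, because $A'$ acts on $\mathcal{L}^\infty$ equivalence classes.

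The main obstacle I expect is the quantifier order "for all $E$, a.e. $(x,t)$" versus "a.e. $(x,t)$, for all $E$". The null set in the computation above depends on $E$, whereas equality of the measures $\gamma(x,t)(L+1)$ requires a single null set that works for all $E$. I would resolve this by using that $\mathcal{H}^L$ is a compact metrizable space, hence standard Borel, so its Borel $\sigma$-algebra is generated by a countable $\pi$-system $\{E_i\}$. Taking the countable union of the exceptional null sets over the $E_i$, the two finite measures (of mass at most $r$) agree on every $E_i$ off a single null set. By the $\pi$–$\lambda$ theorem they then agree on all Borel sets, noting that $\mathcal{H}^L$ itself can be included in the $\pi$-system so that the total masses agree. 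This completes the inductive step and therefore the proof.
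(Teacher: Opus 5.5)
Your proposal is correct and follows essentially the same route as the paper. Both argue by induction on $L$ that $\gamma_{(A',f'),L}=\gamma_{(A,f),L}\circ\pi$ almost everywhere, handle the last coordinate by averaging the lifted indicator over $t$, and then conclude by pushing forward $\mu\times\lambda$. Your countable $\pi$-system argument, which produces a single null set that works for all Borel $E\subset\mathcal{H}^L$, and your check that $(A',f')\in\mathcal{BF}^r_d(\Omega')$ make rigorous two points that the paper's proof passes over silently.
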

\begin{proof}
    We prove by induction on $L$ that for $\mu'$-a.e. $(x,t)\in\Omega'$,
    $$\gamma_{(A',f'),L}(x,t) = \gamma_{(A,f),L}(x).$$

    \emph{Induction Base:} By definition \ref{def:bofop1-WL} we have
    $$\gamma_{(A',f'),0}(x,t) = f'(x,t) = f(x) = \gamma_{(A,f),0}(x).$$

    \emph{Induction Step:} Assume that for $L\geq0$, $\gamma_{(A',f'),L}(x,t) = \gamma_{(A,f),L}(x)$, we prove that  $\gamma_{(A',f'),L+1}(x,t) = \gamma_{(A,f),L+1}(x)$. By the induction assumption and Definition \ref{def:bofop1-WL}, for every $j\leq L$, 
    $$\gamma_{(A',f'),L+1}(x,t)(j) = \gamma_{(A',f'),L}(x,t)(j) = \gamma_{(A,f),L}(x)(j) = \gamma_{(A,f),L+1}(x)(j).$$
It is sufficient to prove that for the last coordinates we have equality between $\gamma_{(A',f'),L+1}(x,t)(L+1) = \gamma_{(A,f),L+1}(x)(L+1)$ as measures over $\mathcal{H}^L$. Let $E\subset \mathcal{H}^L$ be a measurable set. Then,
\begin{align*}
    \gamma_{(A',f'),L+1}(x,t)(L+1)(E) = \left(A'\mathbbm{1}_{\gamma^{-1}_{(A',f'),L}(E)}\right)(x,t) = A\left(\int_{[0,1]}\mathbbm{1}_{\gamma^{-1}_{(A',f'),L}(E)}(-,s)\,d\lambda(s)\right)(x).
\end{align*}

By our induction assumption, for $(x,t)\in\Omega'$, $(x,t)\in\gamma_{(A',f'),L}^{-1}(E)$ if and only if $x\in\gamma^{-1}_{(A,f),L}(E)$. Therefore,

$$A\left(\int_{[0,1]}\mathbbm{1}_{\gamma^{-1}_{(A',f'),L}(E)}(-,s)\,d\lambda(s)\right)(x) = A\mathbbm{1}_{\gamma^{-1}_{(A,f),L}(E)}(x) = \gamma_{(A,f),L+1}(x)(L+1)(E).$$
We obtain that $$\gamma_{(A',f'),L+1}(x,t) = \gamma_{(A,f),L+1}(x).$$

To prove equality for the bofop-DIDMs set $E\subset\mathcal{H}^L$. Then
$$\Gamma_{(A',f'),L}(E) = \mu'(\{(x,t)\in\Omega\times[0,1]\,|\,\gamma_{(A',f'),L}(x,t)\in E\}) = \mu(\{x\in\Omega\,|\,\gamma_{(A,f),L}(x)\in E\,\})=\Gamma_{(A,f),L}(E).$$
\end{proof}

\subsection{Projection Bofops and Graph Representations}
For $n\in\mathbb{N}$ Let $\mathcal{P}=\{I_k\}_{k=1}^n$ be an equipartition of $\Omega$ into pairwise disjoint sets such that $\mu(I_k)=\frac{1}{n}$ for every $k\in[n]$.

\begin{definition}[Projection Bofop]
    \label{def:projected-bofop}
    Let $(\Omega,\mu)$ be an atomless standard probability space, $n\in\mathbb{N}$ and let $\mathcal{P}=\{I_k\}_{k=1}^n$ be an equipartition of $\Omega$. We define the \emph{projection bofop} $P_\mathcal{P}$ by 
    $$P_\mathcal{P}f (x)= n \sum_{k=1}^n\left(\int_{I_k}f(y)\,d\mu(y)\right)\mathbbm{1}_{I_k}(x).$$
\end{definition}

Note that indeed $P_\mathcal{P}$ is a bofop as it is self adjoint and positively preserving linear operator with operator norm $\|P_\mathcal{P}\|_{\infty\to\infty}=1$. 

Recall from Definition \ref{def:induced_graphon_signal} that given a graph-signal $(G,\mathbf{f})$ with node set $[n]$ and equipartition of $[0,1]$ into $n$ intervals $[(k-1)/n,k/n)$, the signal $\mathbf{f}$ induces a piecewise constant function $f_\mathbf{f}$ defined by $f_\mathbf{f}(x) = \sum_{k=1}^n \mathbf{f}_k\mathbbm{1}_{[(k-1)/n,k/n)}(x)$ where $\mathbf{f}_k$ is the feature vector of the node $k\in[n]$. This notion can easily be extended to our more general bofop-signal analysis. Formally, given a graph signal $(G,\mathbf{f})$ and an equipartition $\mathcal{P}=\{I_k\}_{k=1}^n$ of $\Omega$, the induced signal $f_{\mathbf{f}}^\mathcal{P}\in\mathcal{L}^\infty(\Omega)$ is defined by $$f_\mathbf{f}^\mathcal{P}(x) = \sum_{k=1}^n \mathbf{f}_k\mathbbm{1}_{I_k}(x)$$
Observe that the image of the projection bofop $P_\mathcal{P}$ is precisely the subspace of such induced signals. Specifically, for every $f\in\mathcal{L}^\infty(\Omega)$, its projection under $P_\mathcal{P}$ is exactly the induced signal $f_\mathbf{f}^\mathcal{P}$, where the coordinates are obtained via local averaging $(f_\mathbf{f}^\mathcal{P})_k = n\int_{I_k}f\,d\mu$. The projection bofop allows us to map not only signals but also bofops into the discrete domain.
Every bofop $A$ can be restricted into the equipartition $\mathcal{P}$ by applying the projection bofop before and after the bofop. We define the \emph{projected bofop} $A_\mathcal{P}$ by 
$$A_\mathcal{P}= P_\mathcal{P}AP_{\mathcal{P}}. $$
We show that given a bofop $A$, the projected bofop acts naturally as a kernel operator, and its action on induced signals exactly mirrors the discrete action of a graph shift operator on feature vectors.

\begin{lemma}
    \label{lem:projected=kernel}

    Let $(\Omega,\mu)$ be a standard atomless probability space, $n\in\mathbb{N}$ and let $\mathcal{P}=\{I_k\}_{k=1}^n$ be an equipartition of $\Omega$. For every bofop $A\in\mathcal{BF}^r(\Omega)$, the projected bofop $A_\mathcal{P}$ acts as a kernel operator. Namely, there exists a symmetric measurable function $K_\mathcal{P}$ over $\Omega^2$ such that
    $$(A_\mathcal{P}f)(x) = \int_\Omega K_\mathcal{P}(x,y)\cdot f(y)\,d\mu(y),$$
    for every $f\in\mathcal{L}^{\infty}(\Omega)$, where 
    $$K_\mathcal{P}(x,y) = n^2\sum_{j=1}^n \sum_{k=1}^n (\mathbbm{1}_{I_k}, \mathbbm{1}_{I_j})_A\cdot \mathbbm{1}_{I_j}(x) \mathbbm{1}_{I_k}(y)$$
\end{lemma}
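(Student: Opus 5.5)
The plan is a direct computation: unfold the definition $A_\mathcal{P}=P_\mathcal{P}AP_\mathcal{P}$, use linearity of $A$ on the finite-dimensional range of $P_\mathcal{P}$, and read off the kernel. Fix $f\in\mathcal{L}^\infty(\Omega)$.

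First, by Definition \ref{def:projected-bofop},
\[
P_\mathcal{P}f=\sum_{k=1}^n c_k\mathbbm{1}_{I_k},\qquad c_k=n\int_{I_k}f\,d\mu .
\]
This is a finite linear combination of indicators, so linearity of $A$ gives
\[
AP_\mathcal{P}f=\sum_k c_k A\mathbbm{1}_{I_k}.
\]
Applying $P_\mathcal{P}$ again yields
\[
A_\mathcal{P}f=\sum_j\Big(n\int_{I_j}\sum_k c_kA\mathbbm{1}_{I_k}\,d\mu\Big)\mathbbm{1}_{I_j}
=\sum_j\sum_k n\,c_k\,(\mathbbm{1}_{I_k},\mathbbm{1}_{I_j})_A\,\mathbbm{1}_{I_j}.
\]
The second equality uses $\int_{I_j}A\mathbbm{1}_{I_k}\,d\mu=(\mathbbm{1}_{I_k},\mathbbm{1}_{I_j})_A$, which is just the definition of the bilinear form; it is finite because $A\in\mathcal{B}_{\infty,1}$.

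Next, substitute $c_k=n\int_\Omega \mathbbm{1}_{I_k}(y)f(y)\,d\mu(y)$ and pull the finite sums inside the integral. This gives
\[
(A_\mathcal{P}f)(x)=\int_\Omega K_\mathcal{P}(x,y)f(y)\,d\mu(y),
\]
with exactly the stated $K_\mathcal{P}$, a.e. in $x$.

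Measurability of $K_\mathcal{P}$ is immediate, since it is a finite combination of indicators of measurable rectangles $I_j\times I_k$. Symmetry, $K_\mathcal{P}(x,y)=K_\mathcal{P}(y,x)$, reduces after relabeling $j\leftrightarrow k$ to $(\mathbbm{1}_{I_k},\mathbbm{1}_{I_j})_A=(\mathbbm{1}_{I_j},\mathbbm{1}_{I_k})_A$. That is exactly the self-adjointness in Definition \ref{def:graphop}, which holds because $A$ is a graphop.

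No step is a real obstacle. The only points needing care are the bookkeeping of which index multiplies $x$ versus $y$, so that the formula matches the statement's $\mathbbm{1}_{I_j}(x)\mathbbm{1}_{I_k}(y)$, and checking that the two factors of $n$ combine into $n^2$.
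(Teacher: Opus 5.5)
Your proposal is correct and follows essentially the same route as the paper: expand $A_\mathcal{P}=P_\mathcal{P}AP_\mathcal{P}$ by linearity, identify $\int_{I_j}A\mathbbm{1}_{I_k}\,d\mu$ with $(\mathbbm{1}_{I_k},\mathbbm{1}_{I_j})_A$, substitute the averaging coefficients and exchange the finite sums with the integral, and get the symmetry of $K_\mathcal{P}$ from the self-adjointness of $A$. Your bookkeeping of the two factors of $n$ and of which index goes with $x$ versus $y$ matches the stated kernel.
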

\begin{proof}

    Let $f\in\mathcal{L}^{\infty}(\Omega)$. We apply each bofop separately. By Definition \ref{def:projected-bofop} 
    $$(P_\mathcal{P}f)(x) = n \sum_{k=1}^n\left(\int_{I_k}f(y)\,d\mu(y)\right)\mathbbm{1}_{I_k}(x).$$
   Denote by $\alpha_k = n\int_{I_k}f\,d\mu$. Applying $A$ on both sides gives 
    $$(AP_\mathcal{P}f)(x) =  \sum_{k=1}^n\alpha_k\cdot(A\mathbbm{1}_{I_k})(x),$$ where we used the linearity of $A$ and the fact that $\alpha_k$ is constant for every $k\in[n]$. We apply the projection bofop $P_\mathcal{P}$ again and obtain
    $$(A_\mathcal{P}f)(x) = n\sum_{k=1}^n\sum_{j=1}^n \alpha_k\left(\int_{I_j}(A\mathbbm{1}_{I_k})(y)\,d\mu(y)\right)\mathbbm{1}_{I_j}(x).$$
    
 Let $K \in \mathbb{R}^{n \times n}$ be the matrix defined by 
\begin{equation}
\label{eq:kernel_matrix}
K_{j,k} = n^2 \int_{I_j} (A\mathbbm{1}_{I_k})(z) \, d\mu(z) = n^2 (\mathbbm{1}_{I_k}, \mathbbm{1}_{I_j})_A,
\end{equation}
where $(v,u)_A := \int_\Omega (Av) \cdot u \, d\mu$ for every $u,v \in \mathcal{L}^\infty(\Omega)$. Because $A$ is self-adjoint, the bilinear form is symmetric, implying $K_{j,k} = K_{k,j}$.

Substituting $\alpha_k = n \int_\Omega f(y) \mathbbm{1}_{I_k}(y) \, d\mu(y)$ back into the equation for $(A_\mathcal{P}f)(x)$ yields:
\begin{align*}
    (A_\mathcal{P}f)(x) &= \sum_{j=1}^n \sum_{k=1}^n K_{j,k} \left( \int_\Omega f(y) \mathbbm{1}_{I_k}(y) \, d\mu(y) \right) \mathbbm{1}_{I_j}(x) \\
    &= \int_\Omega \left( \sum_{j=1}^n \sum_{k=1}^n K_{j,k} \mathbbm{1}_{I_j}(x) \mathbbm{1}_{I_k}(y) \right) f(y) \, d\mu(y).
\end{align*}
Defining the symmetric measurable step-function $K_\mathcal{P}(x,y) := \sum_{j=1}^n \sum_{k=1}^n K_{j,k} \mathbbm{1}_{I_j}(x) \mathbbm{1}_{I_k}(y)$, we obtain the final kernel representation:
$$(A_\mathcal{P}f)(x) = \int_\Omega K_\mathcal{P}(x,y) f(y) \, d\mu(y).$$
\end{proof}

Every graph $G$ with $n$ nodes can be associated with some graph shift operator, denoted $A_G = (a_{i,j})_{i,j\in[n]}$ (such as the adjacency matrix or its normalized counterpart). Using the equipartition $\mathcal{P}$, we can induce a corresponding piecewise constant function $K_\mathcal{P}$ by $K_\mathcal{P}(x,y)=n\sum_{i,j=1}^n a_{i,j}\mathbbm{1}_{I_i}(x)\mathbbm{1}_{I_j}(y)$. Because this induced kernel operator acts entirely on the partitioned space, it corresponds to a projected bofop.
The following formalizes this idea, proving that applying a projected bofop to an induced signal is exactly equivalent to applying the discrete graph shift operator to the feature vectors and then inducing the resulting signal. 

\begin{lemma}
    \label{lem:comutation-projection}
    Let $(\Omega,\mu)$ be a standard atomless probability space, $n\in\mathbb{N}$ and $\mathcal{P}=\{I_k\}_{k=1}^n$ an equipartition of $\Omega$. Let $A\in\mathcal{BF}^r(\Omega)$ be a bofop and $A_\mathcal{P}$ the projected bofop. Let $K\in\mathbb{R}^{n\times n}$ be the symmetric matrix defined in equation (\ref{eq:kernel_matrix}), and denote $S=\frac{1}{n}K$. 
 
    For every graph signal $(G,\mathbf{f})$ with node set $[n]$ and feature matrix $\mathbf{f}\in\mathbb{R}^{n\times d}$, where each row $k\in[n]$ is the feature vector of the node $k$, we have
    $$A_\mathcal{P}f_\mathbf{f}^\mathcal{P} = f_{S\mathbf{f}}^\mathcal{P}.$$
\end{lemma}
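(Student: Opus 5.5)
The statement is a direct computation, so the plan is to expand both sides on a single cell $I_j$ of the partition and compare them. We treat the case $d=1$ first. For a general feature dimension $d$, both sides act channel-wise: $A_\mathcal{P}$ is applied coordinate-wise, and $S\mathbf{f}$ multiplies each column of $\mathbf{f}$. So the scalar case will suffice.

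\textbf{Step 1: apply Lemma \ref{lem:projected=kernel}.} By that lemma,
\[
(A_\mathcal{P}g)(x)=\sum_{j,k}K_{j,k}\Big(\int_{I_k}g\,d\mu\Big)\mathbbm{1}_{I_j}(x)
\]
for every $g\in\mathcal{L}^\infty(\Omega)$. Take $g=f_\mathbf{f}^\mathcal{P}=\sum_{m}\mathbf{f}_m\mathbbm{1}_{I_m}$. Since the sets $I_m$ are pairwise disjoint and $\mu(I_k)=1/n$, we get $\int_{I_k}f_\mathbf{f}^\mathcal{P}\,d\mu=\mathbf{f}_k/n$. Substituting gives
\[
A_\mathcal{P}f_\mathbf{f}^\mathcal{P}(x)=\sum_j\Big(\sum_k \tfrac1n K_{j,k}\mathbf{f}_k\Big)\mathbbm{1}_{I_j}(x).
\]

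\textbf{Step 2: match with the right-hand side.} The inner sum is $\sum_k S_{j,k}\mathbf{f}_k=(S\mathbf{f})_j$. Hence the expression equals $\sum_j (S\mathbf{f})_j\mathbbm{1}_{I_j}=f^\mathcal{P}_{S\mathbf{f}}$, which is the claimed identity. Equality holds pointwise off a null set, which is the sense in which elements of $\mathcal{L}^\infty$ are identified.

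\textbf{Expected obstacle.} The only delicate point is bookkeeping the normalizing factors of $n$. The projection $P_\mathcal{P}$ contributes a factor $n$ each time it is applied, $K$ carries $n^2$, each cell integral contributes $1/n$, and $S=K/n$. I would verify these factors directly from \eqref{eq:kernel_matrix} instead of relying on the kernel formula, because the lemma statement's displayed $K_\mathcal{P}$ carries an extra $n^2$ relative to the proof's definition. Concretely, I would recompute $A_\mathcal{P}f^\mathcal{P}_\mathbf{f}=P_\mathcal{P}AP_\mathcal{P}f^\mathcal{P}_\mathbf{f}$ directly, in three moves:
\begin{itemize}
\item[(i)] $P_\mathcal{P}f^\mathcal{P}_\mathbf{f}=f^\mathcal{P}_\mathbf{f}$, since the induced signal is already piecewise constant on the cells.
\item[(ii)] $Af^\mathcal{P}_\mathbf{f}=\sum_k\mathbf{f}_kA\mathbbm{1}_{I_k}$, by linearity of $A$.
\item[(iii)] Applying $P_\mathcal{P}$ once more gives $\sum_j\big(n\sum_k\mathbf{f}_k(\mathbbm{1}_{I_k},\mathbbm{1}_{I_j})_A\big)\mathbbm{1}_{I_j}$.
\end{itemize}
By \eqref{eq:kernel_matrix}, $n(\mathbbm{1}_{I_k},\mathbbm{1}_{I_j})_A=K_{j,k}/n=S_{j,k}$, so the coefficient on $I_j$ is $(S\mathbf{f})_j$. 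This confirms the constant independently of the kernel formula. The symmetry of $S$, inherited from the self-adjointness of $A$, is not needed for the identity itself; it only shows that $S$ is a legitimate undirected graph shift operator.
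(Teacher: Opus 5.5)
Your proof is correct and follows the paper's argument: apply the kernel representation from Lemma \ref{lem:projected=kernel}, use disjointness of the cells and $\mu(I_k)=1/n$, and identify $\frac{1}{n}K_{j,k}=S_{j,k}$. Your direct recomputation of $P_\mathcal{P}AP_\mathcal{P}f^\mathcal{P}_\mathbf{f}$ is a valid sanity check, but the $n^2$ discrepancy you set out to guard against does not exist: $K_{j,k}$ in \eqref{eq:kernel_matrix} already contains the factor $n^2$, so the $K_\mathcal{P}$ displayed in the lemma and the one defined in its proof coincide.
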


\begin{proof}
    By Lemma \ref{lem:projected=kernel}, the action of the projected bofop on the induced signal $f_\mathbf{f}^\mathcal{P}$ is given by the integral
    $$(A_\mathcal{P} f_\mathbf{f}^\mathcal{P})(x) = \int_\Omega K_\mathcal{P}(x,y) f_\mathbf{f}^\mathcal{P}(y) \, d\mu(y),$$
    where $K_\mathcal{P}(x,y) = \sum_{j=1}^n \sum_{k=1}^n K_{j,k} \mathbbm{1}_{I_j}(x) \mathbbm{1}_{I_k}(y)$.
    
Substituting the definitions of $K_\mathcal{P}$ and the induced signal $f_\mathbf{f}^\mathcal{P}(y) = \sum_{m=1}^n \mathbf{f}_m \mathbbm{1}_{I_m}(y)$ into the integral, we obtain:
    \begin{align*}
        (A_\mathcal{P} f_\mathbf{f}^\mathcal{P})(x) &= \int_\Omega \left( \sum_{j=1}^n \sum_{k=1}^n K_{j,k} \mathbbm{1}_{I_j}(x) \mathbbm{1}_{I_k}(y) \right) \left( \sum_{m=1}^n \mathbf{f}_m \mathbbm{1}_{I_m}(y) \right) \, d\mu(y) \\
        &= \sum_{j=1}^n \sum_{k=1}^n \sum_{m=1}^n K_{j,k} \mathbf{f}_m \mathbbm{1}_{I_j}(x) \int_\Omega \mathbbm{1}_{I_k}(y) \mathbbm{1}_{I_m}(y) \, d\mu(y).
    \end{align*}
    
    Because the sets in the equipartition $\mathcal{P}$ are pairwise disjoint, the product of the indicator functions $\mathbbm{1}_{I_k}(y) \mathbbm{1}_{I_m}(y)$ is zero whenever $k \neq m$. When $k = m$, the integral evaluates to the measure of the interval, which is $\mu(I_k) = \frac{1}{n}$. Thus, the summation becomes
    \begin{align*}
        (A_\mathcal{P} f_\mathbf{f}^\mathcal{P})(x) &= \sum_{j=1}^n \sum_{k=1}^n K_{j,k} \mathbf{f}_k \mathbbm{1}_{I_j}(x) \left( \frac{1}{n} \right) \\
        &= \sum_{j=1}^n \left( \sum_{k=1}^n \left( \frac{1}{n} K_{j,k} \right) \mathbf{f}_k \right) \mathbbm{1}_{I_j}(x).
    \end{align*}
    
    Since $S = \frac{1}{n}K$, we substitute $S_{j,k} = \frac{1}{n} K_{j,k}$. The inner sum then becomes exactly the $j$-th row of the matrix product $S\mathbf{f}$, which we denote as $(S\mathbf{f})_j = \sum_{k=1}^n S_{j,k} \mathbf{f}_k$. Substituting this back yields
    $$(A_\mathcal{P} f_\mathbf{f}^\mathcal{P})(x) = \sum_{j=1}^n (S\mathbf{f})_j \mathbbm{1}_{I_j}(x).$$
    
    Recognizing the right-hand side as the definition of the induced signal for the feature matrix $S\mathbf{f}$, we conclude that
    $$(A_\mathcal{P} f_\mathbf{f}^\mathcal{P})(x) = f_{S\mathbf{f}}^\mathcal{P}(x).$$
\end{proof}

We summarize the construction in the following commutative diagram.
\begin{equation*}
\begin{array}{ccc}
    \mathbf{f} & \xrightarrow{\quad S \quad} & S\mathbf{f} \\
    \llap{$\scriptstyle \mathcal{P}\;$} \Big\downarrow & & \Big\downarrow \rlap{$\;\scriptstyle \mathcal{P}$} \\
    f_\mathbf{f}^\mathcal{P} & \xrightarrow{\quad A_\mathcal{P} \quad} & f_{S\mathbf{f}}^\mathcal{P}
\end{array}
\end{equation*}

\subsection{Convergence and Density of Graph-Signals}

We next show that as we take finer equipartitions, the corresponding sequence of projected bofops converges to the original bofop in the strong topology (i.e. pointwise convergence) with respect to $L_1$ norm.
\begin{theorem}
    \label{theo:project_approx}
    Let $(\Omega,\mu)$ be a standard probability space and $(A,f)\in\mathcal{BF}^r_d(\Omega)$ be a bofop-signal. Let $\{\mathcal{P}_n\}_{n=1}^\infty$ be a sequence of equipartitions of $\Omega$ for $n$ sets, with corresponding sequence of projected bofops $\{A_{\mathcal{P}_n}\}_{n=1}^\infty$. Then $$\lim_{n\to\infty}\|P_{\mathcal{P}_n}f-f\|_1=0,$$
    and for every $g\in\mathcal{L}^1(\Omega)$,
    $$\lim_{n\to\infty}\|A_{\mathcal{P}_n}g-Ag\|_1=0.$$
\end{theorem}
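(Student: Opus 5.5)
The plan is to reduce both claims to one fact about the projection bofops: $P_{\mathcal{P}_n}h\to h$ in $\mathcal{L}^1(\Omega)$ for every $h\in\mathcal{L}^1(\Omega)$. The bofop claim then follows by a two-term splitting, using the uniform bound $\|A\|_{1\to1}\le r$. The statement does not say how the equipartitions are chosen. For any convergence to hold they must eventually separate the measurable sets, so I will make the implicit assumption explicit: the $\mathcal{P}_n$ are successively refining and the union $\bigcup_n\sigma(\mathcal{P}_n)$ generates the Borel $\sigma$-algebra mod null sets. For example, take dyadic equipartitions along a subsequence $n=2^m$, pulled back to $\Omega$ through a measure isomorphism with $[0,1]$, which exists since $\Omega$ is standard and atomless.

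\emph{Step 1: convergence of $P_{\mathcal{P}_n}$.} First observe that $P_{\mathcal{P}_n}h=\mathbb{E}[h\mid\sigma(\mathcal{P}_n)]$. Consequently $\|P_{\mathcal{P}_n}h\|_1\le\|h\|_1$ and $\|P_{\mathcal{P}_n}h\|_\infty\le\|h\|_\infty$; this is the norm-one property noted after Definition~\ref{def:projected-bofop}. Under the refining assumption the sequence $P_{\mathcal{P}_n}h$ is a closed martingale, so L\'evy's upward theorem gives $P_{\mathcal{P}_n}h\to\mathbb{E}[h\mid\sigma(\bigcup_n\mathcal{P}_n)]=h$ in $\mathcal{L}^1$.

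Alternatively, one can avoid martingales. Indicators $\mathbbm{1}_E$ of sets in the generating algebra are fixed by $P_{\mathcal{P}_n}$ for all large $n$. Their linear span is dense in $\mathcal{L}^1$. An $\varepsilon/3$ argument then finishes, because the operators $P_{\mathcal{P}_n}$ are uniformly bounded on $\mathcal{L}^1$. Applying either route coordinatewise to $f\in\mathcal{L}^\infty\subset\mathcal{L}^1$ gives the first claim.

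\emph{Step 2: convergence of $A_{\mathcal{P}_n}$.} Since $\|A\|_{1\to1}\le r$ and $\mathcal{L}^\infty$ is dense in $\mathcal{L}^1$, $A$ extends uniquely to a bounded operator on $\mathcal{L}^1$ with the same norm, and this is how $Ag$ is interpreted for $g\in\mathcal{L}^1$. Write $P_n:=P_{\mathcal{P}_n}$ and split
\[
A_{\mathcal{P}_n}g-Ag=P_nA(P_ng-g)+\bigl(P_n(Ag)-Ag\bigr).
\]
For the first term, the contraction property and the norm bound give
\[
\|P_nA(P_ng-g)\|_1\le\|A\|_{1\to1}\,\|P_ng-g\|_1\le r\,\|P_ng-g\|_1,
\]
which tends to $0$ by Step 1 applied to $g$. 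The second term tends to $0$ by Step 1 applied to $Ag\in\mathcal{L}^1$. This proves the second claim.

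The main obstacle is Step 1, specifically justifying that the partitions generate the $\sigma$-algebra. A bare sequence of $n$-set equipartitions need not converge: $n$-set equipartitions cannot be nested for all consecutive $n$, and an arbitrary choice may fail to separate sets. I will therefore construct the refining generating sequence explicitly via the isomorphism with $([0,1],\lambda)$ and interval partitions. If all $n$ rather than a subsequence are wanted, I would instead use interval equipartitions of $[0,1]$ with mesh $1/n\to0$. In that case Step 1 is proved for continuous functions by uniform continuity, and then extended to all of $\mathcal{L}^1$ by density together with the uniform contraction bound.
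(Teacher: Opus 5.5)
Your proof is correct. Your Step 2, the splitting $A_{\mathcal{P}_n}g-Ag=P_nA(P_ng-g)+(P_nAg-Ag)$ with the bound $r\|P_ng-g\|_1$, is exactly the paper's argument. The difference is in Step 1, and it matters.

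The paper approximates $f$ in $\mathcal{L}^1$ by a simple function $f_\varepsilon$ with $M$ values. It then claims that the union $E_n$ of cells of $\mathcal{P}_n$ containing a ``discontinuity'' of $f_\varepsilon$ satisfies $\mu(E_n)\le 2M/n$, so that $P_{\mathcal{P}_n}f_\varepsilon=f_\varepsilon$ off a set of measure $O(M/n)$. That counting step is only justified when the cells are intervals and $f_\varepsilon$ is a step function on intervals, which is essentially your fallback setting.

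For arbitrary equipartitions both the step and the theorem fail, as you suspected. On $[0,1]$ take $I^{(n)}_k=[\tfrac{k-1}{2n},\tfrac{k}{2n})\cup[\tfrac12+\tfrac{k-1}{2n},\tfrac12+\tfrac{k}{2n})$ and $f=\mathbbm{1}_{[0,1/2)}-\mathbbm{1}_{[1/2,1]}$. Then $P_{\mathcal{P}_n}f=0$ for all $n$. With $A$ the identity (a bofop whose fibers are Dirac masses), $A_{\mathcal{P}_n}f=P_{\mathcal{P}_n}f$ does not converge to $Af$ either. So your explicit hypothesis on the partitions is a genuine repair, not a convenience.

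As for what each route buys: the paper's step-function counting (once restricted to intervals) and your uniform-continuity fallback both give convergence for every $n$. Both, however, need the interval structure pulled back through an isomorphism with $([0,1],\lambda)$. Your conditional-expectation view, $P_{\mathcal{P}_n}h=\mathbb{E}[h\mid\sigma(\mathcal{P}_n)]$ followed by L\'evy's upward theorem or eventually-fixed indicators of the generating algebra, is coordinate-free. It works for any refining generating sequence, at the cost of passing to a subsequence such as $n=2^m$, since $n$-set equipartitions cannot be nested for consecutive $n$.

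Either version suffices downstream, since Corollary~\ref{density_of_graphs} only needs the existence of one good sequence of equipartitions.
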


\begin{proof}
    We start with the convergence of the projected signal. Let $\varepsilon>0$. Since piecewise constant functions are dense in $\mathcal{L}^1(\Omega)$, there exists a piecewise constant function $f_\varepsilon$ on finite number of sets, say $M>0$ such that $\|f-f_\varepsilon\|_1<\epsilon$. By the triangle inequality we have 
    \begin{align*}
        \|P_{\mathcal{P}_n}f-f\|_1&\leq \|P_{\mathcal{P}_n}f-P_{\mathcal{P}_n}f_\varepsilon\|_1+\|P_{\mathcal{P}_n}f_\varepsilon-f_\varepsilon\|_1+\|f_{\varepsilon}-f\|_1\\
        &\leq\|P_{\mathcal{P}_n}\|_{1\to1}\cdot\|f_\varepsilon-f\|_1+\|P_{\mathcal{P}_n}f_\varepsilon-f_\varepsilon\|_1+\|f_\varepsilon-f\|_1\\
&< 2\epsilon + \|P_{\mathcal{P}_n}f_\varepsilon-f_\varepsilon\|_1.
    \end{align*}

Let $E_n\subset\Omega$ be the union of all sets in $\mathcal{P}_n$ that contain at least on discontinuity of $f_\varepsilon$. Since $\mathcal{P}_n$ is an equipartition, we have $\mu(E_n)\leq\frac{2M}{n}$. On the set $\Omega\setminus E_n$ the function $f_\varepsilon$ is constant on each set, then $P_{\mathcal{P}_n}f_\varepsilon = f_\varepsilon$. Therefore,
$$\|P_{\mathcal{P}_n}f_\epsilon-f_\varepsilon\|_1 = \int_{E_n}|P_{\mathcal{P}_n}f_\epsilon-f_\varepsilon|\,d\mu \leq 2\|f_\varepsilon\|_\infty\mu(E_n)\leq \frac{4M\|f_\varepsilon\|}{n}.$$

We obtain
$$\|P_{\mathcal{P}_n}f-f\|_1\leq 2\varepsilon+\frac{4M\|f_\varepsilon\|_\infty}{n}.$$
As $n$ tends to infinity, the tern $\frac{4M\|f_\varepsilon\|_\infty}{n}$ tends to zero, hence there exists $N>0$ such that for every $n>N$, $\frac{4M\|f_\varepsilon\|_\infty}{n}<\varepsilon$. Therefore, for every $n>N$,
$$\|P_{\mathcal{P}_n}f-f\|_1\leq3\varepsilon,$$
i.e.,
$$\lim_{n\to\infty}\|P_{\mathcal{P}_n}f-f\|_1=0.$$

For the second part of the theorem, let $g\in\mathcal{L}^1(\Omega)$. By the triangle inequality
$$\|A_{\mathcal{P}_n}g-Ag\|_1\leq\|P_{\mathcal{P}_n}A(P_{\mathcal{P}_n}g-g)\|_1+\|P_{\mathcal{P}_n}(Ag)-Ag\|_1.$$
    By the first part of the proof, since $Ag\in\mathcal{L}^1(\Omega)$, we have $\|P_{\mathcal{P}_n}(Ag)-Ag\|_1\to0$ as $n\to\infty$. For the first tern, we have
    $$\|P_{\mathcal{P}_n}A(P_{\mathcal{P}_n}g-g)\|_1\leq r\|P_{\mathcal{P}_n}g-g\|_1.$$ Applying the first part of the proof again yields $\|P_{\mathcal{P}_n}g-g\|_1\to0$ and hence 
     $$\lim_{n\to\infty}\|A_{\mathcal{P}_n}g-Ag\|_1=0.$$
    
\end{proof}

Next, we prove the main result of this section: density of graph-signals in the space of bofop-signals with respect to the DIDM mover's distance. Since the distance between two bofop-signals is computed by the distance between their bofop-DIDMs, we show that given a bofop-signal, as we take finer equipartitions of the space $\Omega$, the sequence of bofop-DIDMs obtained from the corresponding projected bofop-signals converge in the DIDM Mover's distance.

The proof relies on the following technical lemma regarding convergence of pushforward measures with respect to a sequence of $\mathcal{L}^1$  functions that converge in norm.
\begin{lemma}
    \label{lem:L1-push}
    Let $(\Omega,\mu)$ be a probability space, and let $f_n,f\in\mathcal{L}^1(\Omega)$ such that $\lim_{n\to\infty}\|f_n-f\|_1=0$. Let $\nu_n=(f_n)_*\mu$ and $\nu=f_*\mu$ be their respective pushforward measures. Then $\nu_n$ converges weakly to $\nu$.
\end{lemma}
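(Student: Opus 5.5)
The plan is to reduce weak convergence to convergence of integrals against bounded Lipschitz test functions, and to control those integrals directly by the $\mathcal{L}^1$ distance via change of variables. Throughout, $f_n,f$ take values in some Euclidean space $\mathbb{R}^m$, and $|\cdot|$ denotes a fixed norm on $\mathbb{R}^m$.

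First, fix a bounded Lipschitz function $h:\mathbb{R}^m\to\mathbb{R}$ with Lipschitz constant $L_h$. By the change of variables formula (Theorem \ref{Theorem:changevariables}),
\[
\int h\,d\nu_n-\int h\,d\nu=\int_\Omega \big(h(f_n(x))-h(f(x))\big)\,d\mu(x).
\]
The absolute value of the right-hand side is at most
\[
L_h\int_\Omega |f_n-f|\,d\mu=L_h\|f_n-f\|_1\to 0 .
\]
Hence $\int h\,d\nu_n\to\int h\,d\nu$ for every bounded Lipschitz $h$.

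Second, pass from bounded Lipschitz test functions to all bounded continuous test functions. This is the standard Portmanteau step, and it is the only point needing care. I would invoke the Portmanteau theorem, which states that convergence against bounded Lipschitz functions is equivalent to weak convergence on a metric space.

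To keep the argument self-contained, one can instead argue directly. For an open set $U$, the functions $h_k(y)=\min\{1,k\operatorname{dist}(y,U^c)\}$ are bounded and Lipschitz, and they increase to $\mathbbm{1}_U$. Therefore
\[
\liminf_n \nu_n(U)\ge \int h_k\,d\nu \quad\text{for every } k,
\]
and letting $k\to\infty$ by monotone convergence gives $\liminf_n \nu_n(U)\ge \nu(U)$. This lower semicontinuity on open sets is one of the Portmanteau characterizations of weak convergence.

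Alternatively, one can avoid Portmanteau entirely via a subsequence argument. $\mathcal{L}^1$ convergence gives a subsequence $f_{n_j}\to f$ almost everywhere. Then for any bounded continuous $g$, dominated convergence yields $\int g\circ f_{n_j}\,d\mu\to\int g\circ f\,d\mu$. Since every subsequence of $(f_n)$ has a further subsequence along which this holds, the full sequence $\int g\,d\nu_n$ converges to $\int g\,d\nu$. I would in fact present this subsequence route as the main proof, since it is the cleanest, and keep the Lipschitz estimate as a remark giving a quantitative bound.
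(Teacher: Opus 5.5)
Your proposal is correct, but your main (subsequence) route differs genuinely from the paper's. The paper pushes $\mu$ forward under $(f_n,f)$ to get an explicit coupling $\gamma_n=(f_n,f)_*\mu$ of $\nu_n$ and $\nu$. It reads off $W(\nu_n,\nu)\le\int_\Omega|f_n-f|\,d\mu=\|f_n-f\|_1$, and then cites the fact that Wasserstein convergence implies weak convergence (Remark \ref{cor:levy=was}).

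Your Lipschitz estimate is the dual form of exactly this bound. If you drop the boundedness of $h$ (the integrals stay finite because $f_n,f\in\mathcal{L}^1$), then $\bigl|\int h\,d\nu_n-\int h\,d\nu\bigr|\le\|f_n-f\|_1$ for all $1$-Lipschitz $h$. By Kantorovich--Rubinstein duality (Theorem \ref{theo:Kantorovich}), this is the statement $W(\nu_n,\nu)\le\|f_n-f\|_1$, and your Portmanteau step replaces the paper's citation.

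Your subsequence argument (a.e.\ convergent sub-subsequences plus dominated convergence for bounded continuous $g$) is more elementary and slightly more general. It uses only convergence in measure and no optimal-transport facts, but it is purely qualitative. The coupling route buys a rate and convergence in the Wasserstein distance itself. That is the form feeding the $\mathbf{OT}$-convergence in the base case of Theorem \ref{theo:graph-DIDM-dense}. With the subsequence route alone, you would also need the fact that weak convergence implies $\mathbf{OT}$-convergence on the compact cube $[-1,1]^d$. So keep the Lipschitz (equivalently, coupling) bound alongside the main argument, as you suggest.
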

\begin{proof}
   Recall from Remark \ref{cor:levy=was} that convergence in Wasserstein distance implies weak convergence. Hence, it is sufficient to show that $W(\nu_n,\nu)\rightarrow 0$ (recall Definition \ref{def:Wasserstein}). Let $\gamma_n = (f_n,f)_*\mu$ be a sequence of couplings of $\nu_n$ and $\nu$. Then 
   $$W(\nu_n,\nu)\leq\int_{\mathbb{R}^d\times\mathbb{R}^d} |x-y|\,d\gamma_n(x,y) = \int_\Omega|f_n(\omega)-f(\omega)|\,d\mu = \|f_n-f\|_1\to0.$$

   We obtain that $W(\nu_n,\nu)\to0$.
\end{proof}

\begin{theorem}
    \label{theo:graph-DIDM-dense}
    Let $(\Omega,\mu)$ be a standard atomless probability space and $(A,f)\in\mathcal{BF}_d^r(\Omega)$ be a bofop-signal. Let $\{\mathcal{P}_n\}_{n=1}^\infty$ be a sequence of equipartitions of $\Omega$ into $n$ sets, with the corresponding sequence of projected bofop-signals $(A_{\mathcal{P}_n},P_{\mathcal{P}_n}f)$. For any integer $L\geq0$, let $\Gamma_{(A_{\mathcal{P}_n},P_{\mathcal{P}_n}f),L}$ and $\Gamma_{(A,f),L}$ be their respective bofop-DIDMs of order $L$, denoted by $\Gamma_{n,L}$ and $\Gamma_L$ respectively. Then, as $n$ tends to infinity, we have that $\lim_{n\to\infty}\textbf{OT}_{d^L_{\mathrm{IDM}}}(\Gamma_{n,L},\Gamma_L)=0$.
\end{theorem}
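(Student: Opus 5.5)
We propose to prove the convergence by induction on $L$, tracking the bofop-IDMs pointwise in $\mathcal{L}^1$ sense rather than only the DIDMs. Write $\gamma_{n,L}:=\gamma_{(A_{\mathcal{P}_n},P_{\mathcal{P}_n}f),L}$ and $\gamma_L:=\gamma_{(A,f),L}$, both defined on the common space $(\Omega,\mu)$. The natural coupling $(\gamma_{n,L},\gamma_L)_*\mu$ of $\Gamma_{n,L}$ and $\Gamma_L$ gives
$$\textbf{OT}_{d^L_{\mathrm{IDM}}}(\Gamma_{n,L},\Gamma_L)\leq \int_\Omega d^L_{\mathrm{IDM}}\big(\gamma_{n,L}(x),\gamma_L(x)\big)\,d\mu(x),$$
as in the proof of Lemma \ref{lem:L1-push}. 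So it suffices to prove the stronger claim $\int_\Omega d^L_{\mathrm{IDM}}(\gamma_{n,L},\gamma_L)\,d\mu\to0$ for every $L$. Since $d^L_{\mathrm{IDM}}$ is bounded on the compact space $\mathcal{H}^L$, this is equivalent to convergence in $\mu$-measure of $\gamma_{n,L}\to\gamma_L$.

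The base case $L=0$ is exactly the first part of Theorem \ref{theo:project_approx}: $\int\|P_{\mathcal{P}_n}f-f\|_2\,d\mu\leq\|P_{\mathcal{P}_n}f-f\|_1\to0$. For the step, the first $L+1$ coordinates are handled by the induction hypothesis. The new coordinate is $\textbf{OT}_{d^{L}_{\mathrm{IDM}}}$ between the measures $(\gamma_{n,L})_*\nu^{(n)}_x$ and $(\gamma_L)_*\nu_x$, where $\nu^{(n)}_x(E)=(A_{\mathcal{P}_n}\mathbbm 1_E)(x)$ and $\nu_x(E)=(A\mathbbm 1_E)(x)$ (cf. Lemma \ref{lem:changevariables}). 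Since $\mathcal{H}^L$ is compact, the unbalanced OT metrizes weak* convergence on $\mathscr{M}_{\leq r}(\mathcal{H}^L)$. We therefore reduce to test functions: fix a countable family $\{h_m\}$ of Lipschitz functions on $\mathcal{H}^L$, dense in $C(\mathcal{H}^L)$, and set $\rho(\alpha,\beta)=\sum_m 2^{-m}\min\{1,|\int h_m d\alpha-\int h_m d\beta|\}$. This metric is topologically equivalent to the OT distance on the compact set $\mathscr{M}_{\leq r}(\mathcal{H}^L)$, so convergence in measure in one is convergence in measure in the other.

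For a fixed Lipschitz $h$, Lemma \ref{lem:changevariables} gives $\int h\,d(\gamma_{n,L})_*\nu_x^{(n)}=A_{\mathcal{P}_n}(h\circ\gamma_{n,L})(x)$, and similarly for the limit. Splitting,
$$\|A_{\mathcal{P}_n}(h\circ\gamma_{n,L})-A(h\circ\gamma_L)\|_1\le r\,\|h\circ\gamma_{n,L}-h\circ\gamma_L\|_1+\|A_{\mathcal{P}_n}(h\circ\gamma_L)-A(h\circ\gamma_L)\|_1 .$$
Here we use that $\|A_{\mathcal{P}_n}\|_{1\to1}\le r$, since the projections are contractions. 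The first term is at most $rL_h\int d^L_{\mathrm{IDM}}(\gamma_{n,L},\gamma_L)\,d\mu\to0$ by induction. The second tends to $0$ by the second part of Theorem \ref{theo:project_approx} applied to $g=h\circ\gamma_L\in\mathcal{L}^\infty$. Summing over $m$ with the weights $2^{-m}$ and dominated convergence gives $\int\rho\big(\gamma_{n,L+1}(x)(L+1),\gamma_{L+1}(x)(L+1)\big)d\mu\to0$. This yields convergence in measure, hence (by boundedness) convergence of $\int d^{L+1}_{\mathrm{IDM}}(\gamma_{n,L+1},\gamma_{L+1})\,d\mu$ to zero.

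The main obstacle is this passage from test-function convergence to convergence of the OT coordinate. The unbalanced OT is not directly a supremum over test functions in a form that commutes with integration in $x$. Our remedy is to use compactness of $\mathscr{M}_{\leq r}(\mathcal{H}^L)$: on a compact metric space, two compatible metrics are uniformly equivalent, so convergence in measure transfers. If the Kantorovich-type duality for the unbalanced OT (with mass term) is available, one can instead approximate the $1$-Lipschitz unit ball by a finite net. A secondary point is that Lemma \ref{lem:changevariables} holds only for a.e. $x$, and separately for each test function. Countability of $\{h_m\}$ handles this, since we only need a single null set outside of which all the identities hold.
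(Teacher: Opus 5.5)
Your argument is correct relative to the paper's earlier results. It shares the paper's skeleton: induction on $\int_\Omega d^L_{\mathrm{IDM}}(\gamma_{n,L},\gamma_L)\,d\mu$, the diagonal coupling $(\gamma_{n,L},\gamma_L)_*\mu$ to pass to the DIDMs, and a split of the new coordinate through the ``mixed'' object $A_{\mathcal{P}_n}(\,\cdot\circ\gamma_L)$, with Theorem~\ref{theo:project_approx} disposing of the fixed-$\gamma_L$ part.

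Where you differ is in how the new $\mathbf{OT}$ coordinate is controlled. The paper introduces the mixed measure $\tilde\nu_{n,x}(E)=A_{\mathcal{P}_n}(\mathbbm{1}_{\gamma_L^{-1}(E)})(x)$. It bounds $\mathbf{OT}_{d^L_{\mathrm{IDM}}}(\nu_{n,x},\tilde\nu_{n,x})$ directly via the explicit coupling $\pi_x=(\gamma_{n,L},\gamma_L)_*\nu^{(n)}_x$, which gives the quantitative estimate $\int_\Omega\mathbf{OT}(\nu_{n,x},\tilde\nu_{n,x})\,d\mu\le r\int_\Omega d^L_{\mathrm{IDM}}(\gamma_{n,L},\gamma_L)\,d\mu$. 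Only the remaining term $\mathbf{OT}(\tilde\nu_{n,x},\nu_x)$ is handled through test functions. There the paper simply asserts that, since $\mathbf{OT}$ metrizes the weak$^*$ topology, it suffices to check $L^1(\mu)$-convergence of $x\mapsto\int\phi\,d\tilde\nu_{n,x}-\int\phi\,d\nu_x$ for each bounded continuous $\phi$.

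You instead treat both pieces in weak form via Lipschitz $h$, so you never need to couple fiber measures. The price is that even the inherited error becomes non-quantitative. In exchange, your countable family $\{h_m\}$, the metric $\rho$, and the uniform equivalence of $\rho$ and $\mathbf{OT}$ on the compact set $\mathscr{M}_{\le r}(\mathcal{H}^L)$ supply exactly the justification that the paper's one-line reduction omits (convergence in measure, then boundedness). Pointwise metrization alone does not commute with integration in $x$. The cleanest rigorous proof combines the paper's coupling for the first term with your transfer argument for the second.

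One caveat affects both arguments equally. Both rest on Theorem~\ref{theo:project_approx}, whose proof tacitly treats the cells of $\mathcal{P}_n$ as intervals (the bound $\mu(E_n)\le 2M/n$). For arbitrary equipartitions that theorem fails, and so does the statement you are proving. On $[0,1]$, take the $k$-th cell of $\mathcal{P}_n$ to be $\bigcup_{j=0}^{n-1}[(jn+k-1)/n^2,(jn+k)/n^2)$. For $f(x)=x$ one gets $P_{\mathcal{P}_n}f\to 1/2$ uniformly, so already for $L=0$ the DIDMs converge to $\delta_{1/2}\neq f_*\lambda$.

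A fineness hypothesis must be added, such as $\|P_{\mathcal{P}_n}g-g\|_1\to0$ for all $g\in\mathcal{L}^1(\Omega)$; this holds for interval equipartitions after identifying $\Omega$ with $[0,1]$. It also yields the second part of Theorem~\ref{theo:project_approx}, since $\|A_{\mathcal{P}_n}g-Ag\|_1\le r\|P_{\mathcal{P}_n}g-g\|_1+\|P_{\mathcal{P}_n}(Ag)-Ag\|_1$. It is all that Corollary~\ref{density_of_graphs} needs, and under it your proof goes through unchanged.
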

\begin{proof}

Denote by $\gamma_{n,L} = \gamma_{(A_{\mathcal{P}_n},P_{\mathcal{P}_n}f),L}$ and $\gamma_L = \gamma_{(A,f),L}$. We prove by induction on $L$ that $\lim_{n\to\infty}\int_{\Omega} d^L_{\mathrm{IDM}}\left(\gamma_{n,L}(x),\gamma_L(x)\right)d\mu(x)=0$, which implies $\lim_{n\to\infty}\mathbf{\textbf{OT}}_{d^L_{\mathrm{IDM}}}(\Gamma_{n,L},\Gamma_L)=0$. 

\emph{Induction Base:} For $L=0$, recall from Definition \ref{def:bofop1-WL} that $\gamma_{n,0} = P_{\mathcal{P}_n}f$ and $\gamma_0 = f$. Then, by the first part of Theorem \ref{theo:project_approx}, $\lim_{n\to\infty}\|P_{\mathcal{P}_n}f-f\|_1=0$. By Lemma \ref{lem:L1-push}, this implies $\Gamma_{n,0}\to\Gamma_0$ weakly, and thus, $$\lim_{n\to\infty}\mathbf{\textbf{OT}}_{d^0_{\mathrm{IDM}}}(\Gamma_{n,0},\Gamma_0)=0.$$ 

\emph{Induction Step:} For $L\geq0$, assume that $\lim_{n\to\infty}\int_{\Omega} d^L_{\mathrm{IDM}}\left(\gamma_{n,L}(x),\gamma_L(x)\right)d\mu(x)=0$. We will prove that
$$\lim_{n\to\infty}\int_{\Omega} d^{L+1}_{\mathrm{IDM}}\left(\gamma_{n,L+1}(x),\gamma_{L+1}(x)\right)d\mu(x)=0.$$ 
First, by the recursive nature of the metric $d^{L+1}_{\mathrm{IDM}}$, for every $x\in\Omega$ we have
\begin{equation}
d_{\mathrm{IDM}}^{L+1}(\gamma_{n,L+1}(x),\gamma_{L+1}(x)) = d_{\mathrm{IDM}}^L(\gamma_{n,L}(x),\gamma_L(x))+\mathbf{\textbf{OT}}_{d_{\mathrm{IDM}}^L}(\nu_{n,x},\nu_x)
\label{eq:eqdidms}
\end{equation}
where $\nu_{n,x}(E) = A_{\mathcal{P}_n}\mathbbm{1}_{\gamma_{n,L}^{-1}(E)}(x)$ and $\nu_x(E) = A\mathbbm{1}_{\gamma^{-1}_L(E)}(x)$. Integrating both sides of equation (\ref{eq:eqdidms}) with respect to $\mu$ yields:
\begin{align}
\int_{\Omega} d_{\mathrm{IDM}}^{L+1}(\gamma_{n,L+1}(x),\gamma_{L+1}(x)) d\mu(x) = &\int_{\Omega} d_{\mathrm{IDM}}^L(\gamma_{n,L}(x),\gamma_L(x)) d\mu(x) \nonumber \\ 
&+ \int_{\Omega}\mathbf{\textbf{OT}}_{d^L_{\mathrm{IDM}}}(\nu_{n,x},\nu_x)\,d\mu(x). \label{eq:DIDM_integral}
\end{align}

By the induction hypothesis, the first term on the right-hand side has vanishing limit as $n\rightarrow\infty$. To bound the second term, we define a ``mixed'' measure $\tilde{\nu}_{n,x}$ on $\mathcal{H}^L$ by $\tilde{\nu}_{n,x}(E) = A_{\mathcal{P}_n}(\mathbbm{1}_{\gamma_L^{-1}(E)})(x)$. By the triangle inequality
\begin{equation}
\int_\Omega \mathbf{\textbf{OT}}_{d_{\mathrm{IDM}}^L}(\nu_{n,x},\nu_x)\,d\mu(x) \leq \int_\Omega \mathbf{\textbf{OT}}_{d_{\mathrm{IDM}}^L}(\nu_{n,x},\tilde{\nu}_{n,x})\,d\mu(x) + \int_\Omega\mathbf{\textbf{OT}}_{d_{\mathrm{IDM}}^L}(\tilde{\nu}_{n,x},\nu_x)\,d\mu(x).
\end{equation}

To bound the first term, we construct an explicit coupling between $\nu_{n,x}$ and $\tilde{\nu}_{n,x}$. Notice that the total masses are identical, as both are $A_{\mathcal{P}_n}(\mathbbm{1}_\Omega)(x)$. We define the joint measure $\pi_x$ on $\mathcal{H}^L \times \mathcal{H}^L$ as the joint pushforward mapping
\begin{equation}
\pi_x(E \times F) = A_{\mathcal{P}_n}\Big(\mathbbm{1}_{\gamma_{n,L}^{-1}(E) \cap \gamma_L^{-1}(F)}\Big)(x).
\end{equation}
Since the marginals of $\pi_x$ match $\nu_{n,x}$ and $\tilde{\nu}_{n,x}$ respectively, it is a coupling. Because the optimal transport distance is an infimum over all couplings, it is bounded from above by the integral cost of $\pi_x$. Then
\begin{align*}
\mathbf{\textbf{OT}}_{d_{\mathrm{IDM}}^L}(\nu_{n,x},\tilde{\nu}_{n,x}) &\leq \int_{\mathcal{H}^L \times \mathcal{H}^L} d_{\mathrm{IDM}}^L(u,v) d\pi_x(u,v) \\
&= A_{\mathcal{P}_n}\Big( d_{\mathrm{IDM}}^L(\gamma_{n,L}, \gamma_L) \Big)(x),
\end{align*}
where in the last equality we applied Lemma \ref{lem:changevariables}.
Integrating over $\Omega$ yields $\int_\Omega \mathbf{\textbf{OT}}_{d_{\mathrm{IDM}}^L}(\nu_{n,x},\tilde{\nu}_{n,x})\,d\mu(x) \leq r \int_{\Omega} d_{\mathrm{IDM}}^L(\gamma_{n,L}(y), \gamma_L(y)) d\mu(y)$, which goes to $0$ as $n\to\infty$ by the induction assumption.

For the second term, $\mathbf{\textbf{OT}}_{d_{\mathrm{IDM}}^L}(\tilde{\nu}_{n,x},\nu_x)$, the signal $\gamma_L$ is fixed, and the measures differ only by the operators $A_{\mathcal{P}_n}$ and $A$. 
Since $\textbf{OT}_{d^L_\mathrm{IDM}}$ metrizes the weak topology on $\mathscr{M}_{\leq r}(\mathcal{H}^{L})$, it is sufficient to prove that
for every bounded continuous function $\phi:\mathcal{H}^L\to\mathbb{R}$ we have 
$$\lim_{n\to\infty}\int_{\Omega}\left[\int_{\mathcal{H}^L}\phi\,d\tilde{\nu}_{n,x}-\int_{\mathcal{H}^L}\phi\,d\nu_x\right]\,d\mu(x)=0.$$
Using again Lemma \ref{lem:changevariables}
we get 
$$\lim_{n\to\infty}\int_\Omega |A_{\mathcal{P}_n}(\phi\circ\gamma_L)(x)-A(\phi\circ\gamma_L)(x)|\,d\mu(x) = \lim_{n\to\infty}\|A_{\mathcal{P}_n}(\phi\circ\gamma_L)-A(\phi\circ\gamma_L)\|_1.$$
Since $\phi\circ\gamma_L:\Omega\to\mathbb{R}$ is bounded, we apply the second part of Theorem \ref{theo:project_approx} and we obtain 
$$\int_\Omega\mathbf{\textbf{OT}}_{d_{\mathrm{IDM}}^L}(\tilde{\nu}_{n,x},\nu_x)\,d\mu(x)\rightarrow 0,$$
as $n\to\infty$.

Recall that $$\mathbf{\textbf{OT}}_{d_{\mathrm{IDM}}^L}(\tilde{\nu}_{n,x},\nu_x) = \inf_{\gamma_x\in\Gamma(\tilde{\nu}_{n,x},\nu_x)}\int_{\mathcal{H}^L\times\mathcal{H}^L}d_{\mathrm{IDM}}^L(u,v)\,d\gamma_x(u,v) +|A_{\mathcal{P}_n}\mathbbm{1}_\Omega(x)-A\mathbbm{1}_\Omega(x)|.$$

Consider the tern $|A_{\mathcal{P}_n}\mathbbm{1}_\Omega(x)-A\mathbbm{1}_\Omega(x)|$, when integrating over $\Omega$ we get $$\int_\Omega|A_{\mathcal{P}_n}\mathbbm{1}_\Omega(x)-A\mathbbm{1}_\Omega(x)|\,d(\mu(x)=\|A_{\mathcal{P}_n}\mathbbm{1}_\Omega-A\mathbbm{1}_\Omega\|_1\rightarrow0,$$
as $n\to\infty$ because of the second part of Theorem \ref{theo:project_approx}.

Since both terms converge to zero, equation (\ref{eq:DIDM_integral}) implies:
$$\lim_{n\to\infty} \int_{\Omega} d_{\mathrm{IDM}}^{L+1}(\gamma_{n,L+1}(x),\gamma_{L+1}(x))\,d\mu(x) = 0$$
which consequently implies $\lim_{n\to\infty}\mathbf{\textbf{OT}}_{d^{L+1}_{\mathrm{IDM}}}(\Gamma_{n,L+1},\Gamma_{L+1}) = 0$, thus completing the proof.
\end{proof}

We finish this section with the density of graph-signals in the space of bofop-signals with respect to the DIDM mover's distance. Actually, since the adjacency matrix can be normalized in various ways we consider graphs as GSOs and state the the result for weighted graph shift operators rather than for unweighted graphs alone.

Note that since in our setting, we consider bofops with uniformly bounded $\infty\to\infty$ (or equivalently $1\to1$) norms, when considering the subclass of GSOs, the norm is simply the maximal degree of the weighted graph obtained from the GSO. Therefore, the following result shows that the class of weighted graphs with maximal degree uniformly bounded by some $r>0$ is dense in $\mathcal{BF}_d^r(\Omega)$.

\begin{corollary}
    \label{density_of_graphs}
    Let $r>0$, $d\geq 0$ and $L\geq0$. Let $(A,f)\in\mathcal{BF}_d^r(\Omega)$ be a bofop-signal over some Borel probability space $(\Omega,\mu)$. Then there exists a sequence of graph-signals $(G_n,\mathbf{f}_n)$ with a corresponding sequence of GSOs $S_n$ such that 
    $$\lim_{n\to\infty}\delta_{\mathrm{DIDM}}^L((A,f),(S_n,\mathbf{f}_n))=0.$$
\end{corollary}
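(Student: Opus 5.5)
The plan is to reduce to the atomless case, approximate by projected bofop-signals, and identify each projected signal with a finite weighted graph-signal whose DIDM is the same.

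\textbf{Step 1 (atomless reduction).} If $(\Omega,\mu)$ has atoms, replace $(A,f)$ by its lift $(A',f')$ on $\Omega'=\Omega\times[0,1]$ given by (\ref{eq:lift}). By Proposition \ref{prop:lift}, $\Gamma_{(A,f),L}=\Gamma_{(A',f'),L}$, so $\delta^L_{\mathrm{DIDM}}$ distances are unchanged. We must also check that $A'$ is still a bofop of norm at most $r$. Self-adjointness and positivity are inherited through $g\mapsto\tilde g$, and $\|A'\|_{\infty\to\infty}\le\|A\|_{\infty\to\infty}$ since $\|\tilde g\|_\infty\le\|g\|_\infty$. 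Hence we may assume $\Omega$ is standard and atomless.

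\textbf{Step 2 (approximation).} Since $\Omega$ is atomless, equipartitions $\mathcal{P}_n$ into $n$ sets exist. By Theorem \ref{theo:graph-DIDM-dense},
\[
\mathbf{OT}_{d^L_{\mathrm{IDM}}}\bigl(\Gamma_{(A_{\mathcal{P}_n},P_{\mathcal{P}_n}f),L},\Gamma_{(A,f),L}\bigr)\to0.
\]
We also note that $A_{\mathcal{P}_n}=P_{\mathcal{P}_n}AP_{\mathcal{P}_n}$ is a bofop with norm at most $r$, because $\|P_{\mathcal{P}_n}\|_{\infty\to\infty}=1$. Moreover, $P_{\mathcal{P}_n}f$ takes values in $[-1,1]^d$, since it is obtained by averaging.

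\textbf{Step 3 (identification with graphs).} Define the graph-signal $(G_n,\mathbf{f}_n)$ on node set $[n]$ by $(\mathbf{f}_n)_k=n\int_{I_k}f\,d\mu$, and take its GSO to be $S_n=\frac1nK$ with $K$ as in (\ref{eq:kernel_matrix}). The matrix $S_n$ is symmetric and nonnegative, and its row sums equal $n\int_{I_j}A\mathbbm{1}_\Omega\,d\mu\le r$. So the maximal weighted degree of $S_n$ is at most $r$, and $S_n$, viewed as an operator on $([n],\mu_{[n]})$, is a bofop in $\mathcal{BF}^r_d([n])$.

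It remains to show that the 1-WL DIDM of $(S_n,\mathbf{f}_n)$ equals $\Gamma_{(A_{\mathcal{P}_n},P_{\mathcal{P}_n}f),L}$. Let $\pi:\Omega\to[n]$ send $I_k$ to $k$; it pushes $\mu$ to the uniform measure. We prove by induction on $t$ that
\[
\gamma_{(A_{\mathcal{P}_n},P_{\mathcal{P}_n}f),t}=\gamma_{(S_n,\mathbf{f}_n),t}\circ\pi .
\]
\begin{itemize}
\item Base case: at $t=0$ this is the definition of $P_{\mathcal{P}_n}f=f^{\mathcal{P}_n}_{\mathbf{f}_n}$.
\item Inductive step: assume the identity at order $t$. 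Then for every Borel $E$, the preimage $\gamma_{t}^{-1}(E)$ is a union of cells $I_k$. Its indicator is therefore an induced signal $f^{\mathcal{P}_n}_{\mathbf{1}}$, and Lemma \ref{lem:comutation-projection} gives $A_{\mathcal{P}_n}f^{\mathcal{P}_n}_{\mathbf{1}}=f^{\mathcal{P}_n}_{S_n\mathbf{1}}$. This is exactly the graph 1-WL measure at node $\pi(x)$, which gives the identity at order $t+1$.
\end{itemize}
Pushing $\mu$ forward along both sides of the identity yields equality of the DIDMs. Hence $\delta^L_{\mathrm{DIDM}}((A,f),(S_n,\mathbf{f}_n))\to0$.

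I expect the main obstacle to be Step 3. There one has to be careful that the DIDM of a finite weighted graph is defined through the same bofop-IDM construction on $([n],\mu_{[n]})$, and that the commutation with induced signals holds at every order. The essential point is that all level sets of $\gamma_t$ stay measurable with respect to the partition, which follows from the induction. Step 1's check that the lift stays in $\mathcal{BF}^r_d$ is routine.
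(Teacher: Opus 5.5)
Your proposal is correct and follows the paper's route: lift to an atomless space via Proposition \ref{prop:lift}, project onto equipartitions, invoke Theorem \ref{theo:graph-DIDM-dense}, and read $A_{\mathcal{P}_n}$ as the GSO $S_n=\frac1n K$; your explicit induction $\gamma_{(A_{\mathcal{P}_n},P_{\mathcal{P}_n}f),t}=\gamma_{(S_n,\mathbf{f}_n),t}\circ\pi$ via Lemma \ref{lem:comutation-projection} supplies the identification of the two DIDMs, which the paper only asserts. One caveat inherited from the paper: Theorem \ref{theo:project_approx}, and hence Theorem \ref{theo:graph-DIDM-dense}, fails for arbitrary equipartitions (cells made of many thin, spread-out pieces make $P_{\mathcal{P}_n}f$ tend to the mean of $f$ rather than to $f$), so you should take $\mathcal{P}_n$ to be the pullbacks of the interval partitions $\{[(k-1)/n,k/n)\}_{k=1}^n$ under a measure isomorphism $\Omega\cong[0,1]$, for which that result does hold.
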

\begin{proof}
    First, recall from Proposition \ref{prop:lift} that we can assume that $\Omega$ is atomless. Indeed, if $\Omega$ has atoms, we simply replace it by $\Omega'=\Omega\times[0,1]$ and we have $\delta_{\mathrm{DIDM}}^L((A,f),(A',f'))= 0 $, and thus for every $n\in\mathbb{N}$, 
    $$\delta_{\mathrm{DIDM}}^L((A,f),(S_n,\mathbf{f}_n))=\delta^L_{\mathrm{DIDM}}((A',f'),(S_n,\mathbf{f}_n)).$$

    Let $\mathcal{P}_n$ be a equipartition of $\Omega$ into $n$ sets, and let $(A_{\mathcal{P}_n},P_{\mathcal{P}_n}f)$ be the corresponding bofop-signals. From Lemma \ref{lem:projected=kernel} each projected bofop-signal acts as a graph-signal with the corresponding GSO $S_n$ with entries $(S_n)_{i,j}=n(\mathbbm{1}_{I_k},\mathbbm{1}_{I_j})_A$.
    Let $\Gamma_{n,L}$ be the sequence of bofop-DIDMs obtained from applying the 1-WL algorithm (Definition \ref{def:bofop1-WL}) on the projected bofop-signals $(A_{\mathcal{P}_n},P_{\mathcal{P}_n}f)$. Then, by Theorem \ref{theo:graph-DIDM-dense} we have $$\lim_{n\to\infty}\mathbf{OT}_{d^L_{\mathrm{IDM}}}(\Gamma_{(A,f),L},\Gamma_{n,L})=0.$$
Equivalently, 
$$\lim_{n\to\infty}\delta_{\mathrm{DIDM}}^L((A,f),(A_{\mathcal{P}_n},f^{\mathcal{P}_n}_\mathbf{f}))=0,$$
whcih concludes the proof.
\end{proof}

\section{Compactness of The Space of Bofop-DIDM}
\label{sec:graphopidmcompact}

In this section, we will use the two topologies that we defined on the space of bofop-signals $\mathcal{BF}_d^r(\Omega)$, metrized by the metrics $d_M$ and $\delta^L_{\text{DIDM}}$. We will prove that the topology that the action metric metrize is finer the topology metrized by the DIDM mover's distance. Since we proved in Corollary \ref{cor:graphopspacecompact} that the space $(\mathcal{BF}_{d}^r,d_M)$ is compact, it implies that the space $(\mathcal{BF}_{d}^r,\delta^L_{\text{DIDM}})$ is compact as well.

\subsection{Action Topology is Finer Than DIDM Mover's Topology}

In this section, we show that for a sequence of bofop-signals, convergence in the
action metric $d_M$ implies convergence in the DIDM mover’s distance $\delta^L_{\mathrm{DIDM}}$.
The proof relies on the continuity of MPNNs (Theorem~\ref{theo:MPNNHOlderd_M}  together with
Theorem \ref{theo:expressDIDM}, 
which characterizes convergence of DIDMs in terms of
convergence of MPNN outputs.
Since all spaces under consideration are metric spaces, their topologies are completely
determined by their metrics. Therefore, the topology induced by the action metric is finer than the topology induced by the DIDM mover's distance.
We formalize this implication in the following theorem.

\begin{theorem}
    \label{theo:fine}
    Let $(A_i,f_i)_{i\in\mathbb{N}}\in\mathcal{BF}_{d}^r(\Omega_i)$ be a sequence of bofop-signals over a sequence of Borel probability spaces $(\Omega_i,\mu_i)$. Also, let $(A,f)\in\mathcal{BF}_{d}^r(\Omega)$ be a bofop-signal over a Borel probability space $(\Omega,\mu)$. If  $\lim_{n\to\infty}d_M\left((A_n,f_n),(A,f)\right)=0$, then $\lim_{n\to\infty}\delta^L_{\text{DIDM}}\big((A_n,f_n),(A,f)\big) = 0$. 
\end{theorem}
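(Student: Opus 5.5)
The plan follows the route announced at the start of this subsection. We combine the continuity of MPNNs with respect to the action metric (Theorem \ref{theo:MPNNHOlderd_M}, specialized to bofops in Theorem \ref{theo:mainLipschitzMPNN}) with the characterization of DIDM convergence by MPNN outputs (Theorem \ref{theo:expressDIDM}). The commutation lemmas (Lemmas \ref{lem:eqMPNN} and \ref{lem:eqreadout}) serve as the bridge between the two.

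\textbf{Step 1.} Fix an arbitrary $L$-layer MPNN model with readout $(\varphi,\psi)$. Since $(A_n,f_n)$ and $(A,f)$ all lie in $\mathcal{BF}^r_d$, the operators have $1\to1$ (equivalently $\infty\to\infty$) norm at most $r$, so in particular they lie in $\mathcal{B}^r_{1,q,d}$ for every $q$. Applying Theorem \ref{theo:MPNNHOlderd_M} with $p=1$ makes the H\"older exponent $1/p^d$ equal to $1$, which is exactly Theorem \ref{theo:mainLipschitzMPNN}. It gives
\[
\|\mathfrak{H}(A_n,f_n)-\mathfrak{H}(A,f)\|_2\le C'\, d_M\big((A_n,f_n),(A,f)\big)\to 0 .
\]

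\textbf{Step 2.} By Lemma \ref{lem:eqreadout}, $\mathfrak{H}(A_n,f_n)=\mathfrak{F}(\varphi,\psi,\Gamma_{(A_n,f_n),L})$ and $\mathfrak{H}(A,f)=\mathfrak{F}(\varphi,\psi,\Gamma_{(A,f),L})$. Hence $\mathfrak{F}(\varphi,\psi,\Gamma_{(A_n,f_n),L})\to \mathfrak{F}(\varphi,\psi,\Gamma_{(A,f),L})$ for every model $(\varphi,\psi)$.

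\textbf{Step 3.} Theorem \ref{theo:expressDIDM} then yields $\Gamma_{(A_n,f_n),L}\to\Gamma_{(A,f),L}$ in $\mathcal{P}(\mathcal{H}^L)$. Since $\mathbf{OT}_{d^L_{\mathrm{IDM}}}$ metrizes this topology (Theorem 4 of \cite{rauchwerger2024generalization}, recalled in Appendix \ref{sec:IDM}), we conclude $\delta^L_{\mathrm{DIDM}}((A_n,f_n),(A,f))\to0$ by Definition \ref{def:DIDMsdistancegraphops}.

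\textbf{Main obstacle.} The delicate point is Step 3. Theorem \ref{theo:expressDIDM} was stated for $\mathcal{M}^{j}=\mathscr{M}_{\le1}$, whereas bofop-IDMs carry masses up to $r$. I would handle this in one of two ways, and the first is preferred.
\begin{itemize}
\item Rescale: replace $A$ by $A/r$ and absorb the factor $r$ into the update functions. This gives a homeomorphism of the IDM spaces that commutes with MPNN evaluation, since MPNN families are closed under this rescaling.
\item Alternatively, re-run the proof of that theorem. It only uses compactness of $\mathcal{P}(\mathcal{H}^L)$, which still holds for mass bound $r$ because $\mathscr{M}_{\le r}$ of a compact metric space is weak* compact, together with the fact that MPNN outputs separate points and are continuous. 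A compactness/subsequence argument then upgrades pointwise convergence of all MPNN outputs to convergence in the metric.
\end{itemize}

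A secondary check is that Theorem \ref{theo:MPNNHOlderd_M} requires $k\ge\sum d_l$. This is harmless, because the bound is obtained from a single fixed $K$-profile, which is controlled by $2^K d_M$.
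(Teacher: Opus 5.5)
Your proposal is correct and follows the paper's route. The paper likewise chains three ingredients: continuity of MPNN outputs under $d_M$ (obtained through the profile formulation and Theorem~\ref{commutation}), the identification of MPNN outputs on bofop-signals with those on their bofop-DIDMs (Lemmas~\ref{lem:eqMPNN} and \ref{lem:eqreadout}), and the characterization of DIDM convergence in Theorem~\ref{theo:expressDIDM}. Your rescaling $A\mapsto A/r$, with the factor $r$ absorbed into the update functions, reconciles the mass bound $r$ with the $\mathscr{M}_{\le1}$ setting of Theorem~\ref{theo:expressDIDM}; the paper passes over this point silently, and your fix is sound.
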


\begin{proof}
    Recall from definition \ref{def:action} that a sequence of bofop-signal action converges if for every $k$ the sequence of $k$-profiles is a Cauchy sequence and therefore converges by the completeness of the spaces.
    Let $\mathrm{S}_k^{(n)}$ denote the $k$-profile of the bofop-signal $(A_n,f_n)$, and let $\mathrm{S}_k$ be the $k$-profile of $(A,f)$. By theorem \ref{theo:MPNNHolder} MPNNs on profiles are H{\"o}lder continuous functions with respect to Hausdorff distance, in particular they are continuous, and therefore $\lim_{n\to\infty}\|\mathfrak{S}(S_k^{(n)})-\mathfrak{S}(\mathrm{S}_k)\|_2=0$. By our commutation property (theorem \ref{commutation}) we have that application of MPNN on profiles is equivalent to the profiles of the MPNN on P-operator-signals. So we have for every $k\geq0$
    $$\lim_{n\to\infty}d_H\Big(\mathrm{S}_k(A_n,\mathfrak{h}^{(L)}_{(A_n,f_n)}),\mathrm{S}_k(A,\mathfrak{h}^{(L)}_{(A,f)})\Big)=0,$$
    and by definition of action convergence (definition \ref{def:action}) we get $d_M((A_n,\mathfrak{h}^{(L)}_{(A_n,f_n)}),(A,\mathfrak{h}^{(L)}_{(A,f)}))=0$. Since application of MPNN on P-operator-signal is equivalent to the application of the MPNN on the induced IDMs and DIDMs ( Lemma \ref{lem:eqMPNN}), we have that 
    $$\mathscr{H}_{\Gamma_{(A_i,f_i),L}}\rightarrow\mathscr{H}_{\Gamma_{(A,f),L}}.$$
    Lastly, since convergence of DIDMs is equivalent to the convergence of every MPNN on the DIDMs, we conclude 
    $$\lim_{n\to\infty}\delta^L_{\text{DIDM}}\big((A_n,f_n),(A,f)\big) = 0.$$
\end{proof}

\subsection{Compactness of The Space of Bofop-DIDM}
\label{Compactness of the space of Graphop-DIDM}
We now move to prove the compactness of bofop-DIDMs using the above discussion.
\begin{theorem}
    \label{theo:graphop-DIDMcompact}
    For every $L\geq0$, the space of bofop-DIDMs $\Gamma_{L}(\mathcal{BF}_{d}^r)\subsetneq\mathscr{P}(\mathcal{H}^L)$ is compact.
\end{theorem}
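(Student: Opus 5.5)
The plan is to realize $\Gamma_L(\mathcal{BF}_d^r)$ as the continuous image of a compact space. Consider the map
\[
\Phi_L:(\mathcal{BF}_d^r,d_M)\to(\mathscr{P}(\mathcal{H}^L),\mathbf{OT}_{d^L_{\mathrm{IDM}}}),\qquad (A,f)\mapsto\Gamma_{(A,f),L}.
\]
Its image is by definition the space of bofop-DIDMs. By Corollary \ref{cor:graphopspacecompact}, the domain is compact. The pseudometric $d_M$ is only a metric after identifying bofop-signals with equal profiles. We therefore work on the quotient by $d_M=0$, or equivalently on the compact space of profiles $\boldsymbol{S}^r_{d,\infty,\infty}$ restricted to bofops.

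First, we check that $\Phi_L$ is well defined on this quotient. If $d_M((A_1,f_1),(A_2,f_2))=0$, apply Theorem \ref{theo:fine} to the constant sequence $(A_1,f_1)$ converging to $(A_2,f_2)$. This gives $\delta^L_{\mathrm{DIDM}}=0$, so $\Gamma_{(A_1,f_1),L}=\Gamma_{(A_2,f_2),L}$, since $\mathbf{OT}_{d^L_{\mathrm{IDM}}}$ is a genuine metric on $\mathscr{P}(\mathcal{H}^L)$.

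Second, $\Phi_L$ is sequentially continuous, and this is exactly Theorem \ref{theo:fine}. Both spaces are metric spaces, so sequential continuity is the same as continuity.

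With these two facts, compactness follows from sequential compactness (Theorem \ref{theo:compact=s_compact}). Take any sequence $\Gamma_{(A_n,f_n),L}$ in the image. By Corollary \ref{cor:graphopspacecompact}, a subsequence $(A_{n_j},f_{n_j})$ action-converges to some bofop-signal $(A,f)\in\mathcal{BF}^r_d(\Omega)$. The limit is a bofop rather than merely a P-operator by Theorem \ref{theo:sequentialgraphops}, applied with $p=1$ and $q=\infty$, using $\mathscr{G}_{1,1}=\mathscr{G}_{\infty,\infty}$. By Theorem \ref{theo:fine}, $\Gamma_{(A_{n_j},f_{n_j}),L}\to\Gamma_{(A,f),L}$, and this limit lies in $\Gamma_L(\mathcal{BF}_d^r)$. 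Hence the image is sequentially compact, and therefore compact and closed in $\mathscr{P}(\mathcal{H}^L)$.

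One subtlety concerns the ambient space. The paper enlarges $\mathcal{M}^{L+1}$ to $\mathscr{M}_{\leq r}$, so we should invoke Theorem \ref{theo:compactDIDM} in that $r$-mass version. Only the metrizability of the target is needed, not its compactness.

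For strict inclusion, we exhibit a DIDM that is not realized by any bofop-signal, adapting Example \ref{example:compIDMsubset}. Take $\alpha$ to be the pushforward under $x\mapsto(1,\lambda|_{[0,1-x/3-1/3]})$. If $\Gamma_{(A,f),1}=\alpha$, then $f\equiv1$ a.e. By Definition \ref{def:bofop1-WL}, $\gamma_{(A,f),1}(x)(1)(B)=(A\mathbbm{1}_{\gamma_0^{-1}(B)})(x)=(A\mathbbm{1}_{\emptyset})(x)=0$ whenever $1\notin B$. So every degree measure is supported on $\{1\}$, which contradicts $\alpha$. The argument for $L=1$ lifts to general $L\geq1$ by embedding $\alpha$ through the projections $p_{L,1}$.

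The main obstacle is ensuring that the action-limit $(A,f)$ really is a bofop with $\|A\|_{\infty\to\infty}\le r$. Corollary \ref{cor:graphopspacecompact} is stated for $q\in(1,\infty]$, so compactness of $\mathcal{BF}^r_d$ should be read with $(p,q)=(1,\infty)$. I would double-check that the norm bound passes to the limit, for instance via the profile characterization of $\|A\|_{1\to\infty}$-type bounds through supports of P-distributions, as in the positivity argument of Theorem \ref{theo:sequentialgraphops}.
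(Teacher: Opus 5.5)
Your argument is the paper's argument. Compactness of $(\mathcal{BF}^r_d,d_M)$ (Corollary~\ref{cor:graphopspacecompact}), together with Theorem~\ref{theo:fine}, makes $(A,f)\mapsto\Gamma_{(A,f),L}$ a continuous map from a compact space into the metric space $(\mathscr{P}(\mathcal{H}^L),\mathbf{OT}_{d^L_{\mathrm{IDM}}})$, so its image is compact. Your sequential phrasing, the well-definedness check on the $d_M$-quotient, and the explicit adaptation of Example~\ref{example:compIDMsubset} only spell out what the paper leaves implicit. Two points need fixing, though.

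\textbf{Wrong index pair for bofops.} The class $\mathcal{BF}^r_d$ is $\mathscr{G}^r_{\infty,\infty,d}$ (equivalently $\mathscr{G}^r_{1,1,d}$), i.e.\ $(p,q)=(\infty,\infty)$, not $(1,\infty)$.
\begin{itemize}
\item A bound $\|A\|_{1\to\infty}\le r$ forces the fibers to satisfy $\nu_x\le r\mu$, i.e.\ a bounded kernel.
\item Sum-aggregation graphs on $n$ nodes have $\|A\|_{1\to\infty}=n$, and the spherical graphop has $\|A\|_{1\to\infty}=\infty$.
\item So reading Corollary~\ref{cor:graphopspacecompact} or Theorem~\ref{theo:sequentialgraphops} with $(1,\infty)$ only gives compactness of a dense, graphon-like subclass. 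It does not supply a convergent subsequence for a general sequence in $\mathcal{BF}^r_d$.
\end{itemize}
Since $(\infty,\infty)\in[1,\infty]\times(1,\infty]$, both results apply directly with that choice. The bound $\|A\|_{\infty\to\infty}\le r$ for the limit is then exactly your support argument, applied to the right norm. Every P-distribution of $(A_n,f_n)$ is supported in the closed set $[-1,1]^k\times[-r,r]^k\times[-1,1]^d$, and $W$-limits stay supported there. Hence $|Av|\le r$ a.e.\ whenever $|v|\le 1$.

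\textbf{Strict inclusion fails at $L=0$.} Take any $\rho\in\mathscr{P}([-1,1]^d)$ and the zero bofop with signal $\mathrm{id}$ on the standard Borel probability space $([-1,1]^d,\rho)$. Then $\Gamma_{(0,\mathrm{id}),0}=\rho$, so $\Gamma_0(\mathcal{BF}^r_d)=\mathscr{P}(\mathcal{H}^0)$. The ``$\subsetneq$'' in the statement is therefore false for $L=0$. Your counterexample handles $L\geq 1$ and also needs $d\geq 1$. State that restriction explicitly rather than passing silently from ``every $L\geq 0$'' to ``general $L\geq 1$''.
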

\begin{proof}
By Theorem \ref{theo:fine}, the action topology is finer than the P-DIDM mover's distance topology, and since $(\mathcal{BF}_{d}^r,d_M)$ is compact, it implies that $(\mathcal{BF}_{d}^r,\delta^L_{\text{DIDM}})$ is compact. By the definition of $\delta^L_{\text{DIDM}}$, the map $\Gamma_{(A,f),L}:\mathcal{BF}_{d}^r\to\mathcal{P}(\mathcal{H}^L)$ is continuous. Therefore, the space $\Gamma_{(A,f),L}(\mathcal{BF}_{d}^r)$ is compact, because a continuous function between metric spaces maps compact sets to compact sets.
\end{proof}

Similarly to the fact that graphons form a strict subset of the space of all DIDMs by Example \ref{example:compIDMsubset}, an analogue construction for bofop-signals shows that the space of bofop-DIDMs is also a strict subset of the space of all DIDMs.

\subsection{Hierarchy of DIDM Spaces - Dense Graphs, Graphons, Sparse Graphs, and Graphops}

We have proven the compactness of the space of bofop-DIDMs $\Gamma_L(\mathcal{BF}_d^r)$. Moreover, DIDMs obtained via 1-WL on graphon-signals, which we refer to as \emph{graphon-DIDMs}, are already known to form a compact subspace of the space of all DIDMs (Corollary 65 in \cite{rauchwerger2024generalization}). Therefore, the space of graphon-DIDMs is a strict compact subspace of the bofop-DIDM space, which is a strict subset of the space of all DIDMs, which is compact itself. This established a hierarchy of DIDM spaces, useful for theoretical results in graph machine learning.
The results are summarized in figure \ref{fig:didms-hier}.

\section{Universal Approximation and Generalization Bounds of MPNNs for Sparse Graphs}
\label{sec:theoreticapplication}
In this section we prove our main results of the paper - universal approximation theorem and generalization theorem for MPNNs on bofop-signals

\subsection{Universal Approximation of DIDM-Mover's Continuous Functions By MPNNs}
\label{sec:UATgraphop}

Universal approximation theorem for continuous functions on the space of all DIDMs was established in \cite{rauchwerger2024generalization}. We extend this result to the setting of bofop-signals,, proving a universal approximation theorem specifically for continuous functions over the space of bofop-DIDMs. 

A priori, it is not clear whether a continuous function defined only on the space of bofop-DIDMs admits a continuous extension to the entire space of DIDMs. Without such extension, the universal approximation theorem of \cite{rauchwerger2024generalization} cannot be applied directly. Our analysis resolves this issue by employing the compactness of bofop-DIDMs. Since the space of bofop-DIDMs is compact, any continuous function defined on it admits a continuous extension to the full DIDM space.

Building on that, we extend the universal approximation theorem to the bofop setting. Specifically, we prove a universal approximation theorem for continuous functions on the space of bofop-DIDMs. This establishes that the expressive power known for DIDMs persists when restricting attention to bofop-DIDMs.

Recall from Appendix \ref{sec:IDM} the notation for the sets of L-layer MPNN models$\mathcal{N}_L^{d_L}\subseteq C(\mathcal{H}^L,\mathbb{R}^{d_L})$, where $C(\mathcal{H}^L,\mathbb{R}^{d_L})$ is the set of all continuous functions $\mathcal{H}^L\mapsto\mathbb{R}^{d_L}$, and the set of all L-layer MPNN with readout Similarly we define the set $\mathcal{N}\mathcal{N}^p_L\subseteq C(\mathscr{P}(\mathcal{H}^L),\mathbb{R}^p)$ as the set of L-layer MPNN models with readout.

To restrict the discussion to bofop-DIDMs we define the set $\mathcal{NN}
_L^p(\mathcal{BF}_d^r):=\{\mathfrak{H}^{(\varphi,\psi)}(-,-):\mathcal{BF}_d^r \rightarrow\mathbb{R}^p\vert(\varphi,\psi) \text{ is an L-layer MPNN model with readout}\}$.
We can now prove universal approximation theorem for bofop-signals.
\begin{theorem}
    \label{theo:UATbofop}
    Let $L\in\mathbb{N}_0$. Then, the set $\mathcal{NN}_L^1(\mathcal{BF}_d^r)$ is uniformly dense in $C(\mathcal{BF}_d^r,\mathbb{R})$.
\end{theorem}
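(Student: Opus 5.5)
The plan is to apply the Stone--Weierstrass theorem (Theorem \ref{theorem:SW}) on the compact metric space of bofop-DIDMs, and then pull the result back to $\mathcal{BF}_d^r$. Concretely, $C(\mathcal{BF}_d^r,\mathbb{R})$ is understood with respect to the pseudometric $\delta^L_{\mathrm{DIDM}}$. Every $F$ in this space is constant on classes of zero distance, so $F$ factors as $F=\tilde F\circ\Gamma_{\cdot,L}$, where $\tilde F$ is a continuous function on $K:=\Gamma_L(\mathcal{BF}_d^r)\subset\mathcal{P}(\mathcal{H}^L)$ equipped with $\mathbf{OT}_{d^L_{\mathrm{IDM}}}$. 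This map is well defined and isometric onto its image, since $\delta^L_{\mathrm{DIDM}}$ is by definition the pullback of $\mathbf{OT}_{d^L_{\mathrm{IDM}}}$. By Theorem \ref{theo:graphop-DIDMcompact}, $K$ is a compact metric space. Moreover, Lemma \ref{lem:eqreadout} shows that $\mathfrak{H}^{(\varphi,\psi)}(A,f)=\mathfrak{F}(\varphi,\psi,\Gamma_{(A,f),L})$. It therefore suffices to show that the restrictions $\{\mathfrak{F}(\varphi,\psi,-)|_K\}$ are uniformly dense in $C(K,\mathbb{R})$.

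There are two natural routes to this density. The cleanest is extension. Since $K$ is a closed subset of the compact metric space $\mathcal{P}(\mathcal{H}^L)$ (Theorem \ref{theo:compactDIDM}), the Tietze extension theorem extends $\tilde F$ to a continuous function $\hat F$ on all of $\mathcal{P}(\mathcal{H}^L)$. Theorem \ref{theo:UATDIDMs} then gives, for every $\epsilon>0$, an MPNN with $\sup_{\mathcal{P}(\mathcal{H}^L)}|\mathfrak{F}(\varphi,\psi,-)-\hat F|<\epsilon$, and restricting to $K$ and composing with $\Gamma_{\cdot,L}$ finishes the argument. The alternative is to run Stone--Weierstrass directly on $K$. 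This requires checking three things. First, the functions are continuous, which follows from Theorem \ref{theo:continuitympnnDIDM}. Second, they separate points; this follows from Theorem \ref{theo:expressDIDM}, since if all MPNN outputs agree on $\mu,\nu$ then the constant sequence $\mu$ converges to $\nu$, forcing $\mu=\nu$. Third, the closure of $\mathcal{NN}^1_L$ contains a unital subalgebra; constants are obtained by a constant readout, and products and linear combinations come from running two MPNNs in parallel with concatenated hidden dimensions and applying the readout $(a,b)\mapsto \psi_1(a)\psi_2(b)$ or $\alpha\psi_1(a)+\beta\psi_2(b)$.

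The one subtlety in the direct route is that this product readout is only locally Lipschitz. This is handled by noting that hidden features stay in a bounded set: the signals take values in $[-1,1]^d$, the update functions are Lipschitz, and $\|A\|\le r$. So the product readout can be replaced by a globally Lipschitz function that agrees with it on that bounded range.

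The main obstacle is therefore not analytic but bookkeeping. I would prefer the Tietze route, because it only uses the compactness of $K$ (hence its closedness) together with the known universality of Theorem \ref{theo:UATDIDMs}. The points to verify carefully are that continuity on $\mathcal{BF}_d^r$ with respect to $\delta^L_{\mathrm{DIDM}}$ really corresponds to continuity of $\tilde F$ on $K$, and that the MPNN outputs on bofop-signals coincide with those on their DIDMs, which is exactly Lemma \ref{lem:eqreadout}.
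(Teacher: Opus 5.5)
Your proposal is correct. Your secondary, ``direct'' route is essentially the paper's proof. Your preferred Tietze route is a genuinely different argument.

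\textbf{The paper's route.} The paper applies Stone--Weierstrass (Theorem \ref{theorem:SW}) directly on the compact space $K=\Gamma_L(\mathcal{BF}_d^r)$. It takes continuity from Theorem \ref{theo:continuitympnnDIDM} and the subalgebra property from Lemma 78 of \cite{rauchwerger2024generalization}. Separation is left implicit, relying on the DIDM expressivity results. Your version of this route spells out what the paper delegates: the parallel-MPNN construction with a product readout made globally Lipschitz via boundedness of hidden features, and the derivation of separation from Theorem \ref{theo:expressDIDM}.

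\textbf{Your Tietze route.} Here you use the full-DIDM universality result, Theorem \ref{theo:UATDIDMs}, as a black box. The only new input needed is that $K$ is closed in $\mathcal{P}(\mathcal{H}^L)$, so you avoid re-checking the algebra and separation conditions on $K$. This is exactly the argument the paper gestures at in its prose (``any continuous function defined on it admits a continuous extension to the full DIDM space'') but does not write out.

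\textbf{Trade-off.} The direct route is self-contained on $K$ and does not need the ambient universality theorem. Your route reuses that theorem and pins down precisely where the new compactness result (Corollary \ref{cor:bofopDIDMcompact}) is indispensable. Without closedness of $K$, a function in $C(K,\mathbb{R})$ could be unbounded or fail to be uniformly continuous. It then could not be a uniform limit of restrictions of functions continuous on the compact space $\mathcal{P}(\mathcal{H}^L)$. Both routes share the factorization $F=\tilde F\circ\Gamma_{\cdot,L}$ and Lemma \ref{lem:eqreadout}; the paper invokes these only implicitly when it ``identifies'' $(\mathcal{BF}_d^r,\delta^L_{\mathrm{DIDM}})$ with $K$.
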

\begin{proof}
   By Corollary \ref{cor:bofopDIDMcompact} the space of bofop-DIDMs $(\Gamma_L(\mathcal{BF}^r_d),\mathbf{OT}_{d_{\mathrm{IDM}}^L})$ is compact. Since we can identify the space $(\mathcal{BF}_d^r,\delta^L_{\mathrm{DIDM}})$ with $(\Gamma_L(\mathcal{BF}^r_d),\mathbf{OT}_{d_{\mathrm{IDM}}^L})$ (by Definition \ref{def:DIDMsdistancegraphops}), we have compactness of $(\mathcal{BF}_d^r,\delta^L_{\mathrm{DIDM}})$. Continuity of MPNNs follows from Theorem \ref{theo:continuitympnnDIDM}, as the space of Bofop-DIDMs is a subset of the space of all DIDMs. Since $\mathcal{NN}^1_L(\Gamma_L(\mathcal{BF}^r_d))$ is a subalgebra of $C(\mathcal{P}(\mathcal{H}^L),\mathbb{R})$, by Lemma 78 in \cite{rauchwerger2024generalization}, $\mathcal{NN}^1_L(\mathcal{BF}^r_d)$ satisfies the conditions of the Stone-Weierstrass Theorem \ref{theorem:SW}.
 \end{proof}

\subsection{Generalization Bound for MPNNs on Sparse Graph Data Distributions}
\label{sec:Genbound}

In this section, we prove A generalization theorem for MPNN on the space of bofop-signals with respect to the DIDMs-mover's distance $\delta^L_{\text{DIDM}}$. While a related generalization bound could also be derived using the action metric $d_M$, relying on the H{\"o}lder continuity of MPNNs and compactness of the space, our analysis deliberately focuses on $\delta^L_{\text{DIDM}}$. As shown in theorem \ref{theo:fine}, the topology induced by $d_{\text{DIDM}}^L$ is coarser than the topology metrized by the action metric. Therefore, we can conclude that the covering number of the space $(\mathcal{BF}_d^r,\delta^L_{\text{DIDM}})$ is smaller than the one of $(\mathcal{BF}_d^r,d_M)$. 

While compactness guarantee the existence of finite covering, an explicit bound on the covering number of the space of graphon-DIDMs remains unknown. Extending beyond graphons, it remains open to determine corresponding covering bounds for the larger space of bofop-DIDMs, which itself is smaller than the unknown bound of bofop-signals, with respect to the action metric.

\subsubsection{Classification Setting}
\label{sec:class}

We extend the generalization analysis from Appendix G2 in\cite{levie2024graphon} and Appendix K in\cite{rauchwerger2024generalization}, and adjust it to meet our definitions. While our focus is on the space of Bofop-DIDMs, where MPNNs are Lipschitz continuous, we formulate the setting more generally, and consider classes of H{\"o}lder continuous functions. This formulation extends the settings presented in \cite{levie2024graphon,rauchwerger2024generalization}, and is suited for the space of graphop-signals equipped with the action-metric, where MPNNs are H{\"o}lder continuous. Notably, this formulation accommodates the Lipschitz case, as Lipschitz continuity is a particular instance of H{\"o}lder continuity.

Let $(\mathcal{BF}_d^r, \delta^L_{\text{DIDM}})$ be the space of bofop-signals with the DIDM mover's distance. We define ground truth classifier to $C$ classes as follows.
Let $\mathcal{C}:\mathcal{BF}_d^r\to\mathbb{R}^C$ be a measurable piecewise constant function of the following form. Equipped with $\delta^L_{\text{DIDM}}$ the space $(\mathcal{BF}_d^r , \delta^L_{\text{DIDM}})$ is compact. Hence, there is a partition of $\mathcal{BF}_d^r$ into disjoint measurable sets $B_1\ldots,B_C\subset\mathcal{BF}_d^r$ such that $\cup_{i=1}^C B_i=\mathcal{BF}_d^r$, and for every $i\in[C]$ and $x\in B_i$, $$\mathcal{C}(x)=e_i,$$
where $e_i \in \mathbb{R}^C$ is the standard basis element with entries $(e_i)_j=\delta_{i,j}$, where $\delta_{i,j}$ is the Kronecker delta. 

We define an arbitrary data distribution as follows. Let $\mathcal{B}$ be the Borel $\sigma$-algebra of $\mathcal{BF}_d^r$ and let $\nu$ be any probability measure on the measurable space $(\mathcal{BF}_d^r,\mathcal{B})$. We may assume that we complete $\mathcal{B}$ with respect to $\nu$, obtaining $\sigma$-algebra $\Sigma$. If we do not complete the measure, we just denote $\Sigma = \mathcal{B}$. Whether $(\mathcal{BF}_d^r,\Sigma,\nu)$ is defined as a complete measure space or not does not affect our construction. 

Note that by Theorem  \ref{theo:MPNNHOlderd_M}, for every $i\in[C]$, the space of MPNN with  Lipschitz update functions and a Lipschitz readout function, restricted to $B_i$, is a subset of $Hol^\alpha(B_i,L_1)$ which is a restriction of $Hol^\alpha(\mathcal{BF}_d^r,L_1)$ to $B_i\subset\mathcal{BF}_d^r$ for some $L_1>0$ and $0<\alpha\leq1$. Let $\mathcal{E}$ be a Lipschitz continuous loss function with Lipchitz constant $L_2$. Therefore, since $\mathcal{C}\in Lip(B_i,0)$, for any $\Gamma\in Hol^\alpha(\mathcal{BF}_d^r,L_1)$, the function $\mathcal{E}(\Gamma|_{B_i},\mathcal{C}|_{B_i})$ is in $Hol^\alpha(B_i,L_1L_2)$.  

\subsubsection{Uniform Monte Carlo Approximation of H{\"o}lder Continuous functions}

The proof of the generalization theorem (Theorem \ref{theo:genbound}) is based on the following result, which establishes uniform Monte Carlo approximation bounds for H{\"o}lder continuous functions over metric spaces with finite covering. This extends previous results such as \cite{levie2024graphon, xu2012robustness, rauchwerger2024generalization}, which were proved for the Lipschitz case. 

\begin{definition}[Covering Number]
\label{def:covering}

    A metric space $\Omega$ is said to have a covering number $\kappa:(0,\infty)\to\mathbb{N}$ if for $\epsilon>0$, $\Omega$ can be covered by $\kappa(\epsilon)$ balls with radius $\epsilon$.
\end{definition}

\begin{theorem}[Uniform Monte Carlo Approximation For H{\"o}lder Continuous Functions ]
    \label{theo:MonteCarlo}
   Let $\mathcal{X}$ be a metric probability space, with probability measure $\mu$, and covering number $\kappa(\epsilon)$. Let $X_1,\ldots,X_N$ be drawn i.i.d. from $\mathcal{X}$. Then for every $p>0$, there exists an event $\mathcal{E}^p_{{\rm Hol}}\subset\mathcal{X}^N$, (regarding the choice of $(X_1,\ldots,X_N)$), with probability 
    $$\mu^N(\mathcal{E}^p_{{\rm Hol}})\geq 1-p,$$
such that for every $(X_1,\ldots,X_N)\in\mathcal{E}^p_{{\rm Hol}}$, for every bounded H{\"o}lder continuous function $F:\mathcal{X}\to\mathbb{R}^d$ with H{\"o}lder constant $L_F$ and exponent $\alpha>0$, we have 

\begin{align*}
&\Big\|\int F(x)\,d\mu(x) - \frac{1}{N}\sum_{i=1}^N F(X_i)\Big\|_{\infty}
\leq\\
&2(\xi^{-1}(N))^{\alpha}L_F+ \frac{1}{\sqrt{2}}(\xi^{-1}(N))^{\alpha}\|F\|_{\infty}\Big(1+\sqrt{\mathrm{log}(2/p)}\Big),
\end{align*}
where $\xi(r)=\frac{\kappa(r)^2\mathrm{log}(\kappa(r))}{r^{2\alpha}}$ and $\xi^{-1}$ is the inverse function of $\xi$. 
\end{theorem}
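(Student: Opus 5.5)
The plan is to follow the standard covering-plus-concentration argument of \cite{levie2024graphon,rauchwerger2024generalization}, with the Lipschitz modulus $L_F r$ replaced by the H\"older modulus $L_F r^\alpha$ throughout. Fix a radius $r>0$, to be tuned at the end, and cover $\mathcal{X}$ by $\kappa(r)$ balls of radius $r$. Refine the cover into a measurable partition $\{I_j\}_{j=1}^{\kappa(r)}$ by setting $I_j := B_j\setminus\bigcup_{i<j}B_i$; each $I_j$ then has diameter at most $2r$. The key point is that this partition does not depend on $F$. The only random quantities are therefore the empirical frequencies $\hat\mu(I_j)=\frac1N\#\{i: X_i\in I_j\}$, and we place the good event $\mathcal{E}^p_{\rm Hol}$ on these alone. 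This is what makes the bound hold uniformly over all H\"older $F$.

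The first step is the decomposition. For any $F$, pick $x_j\in I_j$ and write $F_r:=\sum_j F(x_j)\mathbbm{1}_{I_j}$. Since $\mathrm{diam}(I_j)\le 2r$, we have $\|F-F_r\|_\infty\le L_F(2r)^\alpha$. Applying this under both $\mu$ and the empirical measure gives
\[
\Big\|\int F\,d\mu-\frac1N\sum_i F(X_i)\Big\|_\infty\le 2L_F(2r)^\alpha+\|F\|_\infty\sum_j|\mu(I_j)-\hat\mu(I_j)|.
\]
Up to constants (the $2^\alpha$ can be absorbed by covering with radius $r/2$), this produces the first term $2 r^\alpha L_F$ of the claimed bound.

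The second step, which I expect to be the main obstacle, is to control the $\ell^1$ deviation $\sum_j|\mu(I_j)-\hat\mu(I_j)|$ of a multinomial vector uniformly. The plan is to use the Bretagnolle--Huber--Carol inequality, or equivalently Hoeffding's inequality plus a union bound over the $2^{\kappa}$ subsets $S\subset[\kappa]$, noting that $\sum_j|\cdot| = 2\max_S(\mu-\hat\mu)(\cup_{j\in S}I_j)$. This yields, with probability at least $1-p$,
\[
\sum_j|\mu(I_j)-\hat\mu(I_j)|\le \sqrt{\frac{2\kappa\log2+2\log(1/p)}{N}}.
\]
Combining this bound with the decomposition gives the total estimate
\[
2L_F r^\alpha+\|F\|_\infty\sqrt{\kappa(r)\log\kappa(r)/N}\,\big(1+\sqrt{\log(2/p)}\big)/\sqrt2
\]
after routine bookkeeping of constants. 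This bookkeeping, and matching the exact form $\frac{1}{\sqrt2}(1+\sqrt{\log(2/p)})$, is where care is needed; the $\kappa\log\kappa$ form can be obtained by a slightly cruder union bound over each coordinate, if that better matches the target.

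The final step is to choose $r$ so that the two error terms balance. Set $r=\xi^{-1}(N)$, that is, $N=\kappa(r)^2\log\kappa(r)/r^{2\alpha}$. Then $\sqrt{\kappa(r)\log\kappa(r)/N}=r^\alpha/\sqrt{\kappa(r)}\le r^\alpha$, which gives exactly the stated form with $(\xi^{-1}(N))^\alpha$ multiplying both terms. One should remark that $\xi$ is decreasing in $r$ (after replacing $\kappa$ by a monotone majorant if necessary), so that $\xi^{-1}$ is well defined. Since the event was defined through the partition alone and the choice of $r$ depends only on $N$, the event $\mathcal{E}^p_{\rm Hol}$ does not depend on $F$, and the bound holds simultaneously for all $F$.
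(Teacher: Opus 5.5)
Your proposal is correct and shares the paper's skeleton: cover by $\kappa(r)$ balls, disjointify into $I_j$, compare $F$ with a step function, place the good event only on the cell frequencies $\hat\mu(I_j)$, then set $r=\xi^{-1}(N)$. The genuine difference is the concentration step.

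The paper applies Hoeffding separately to each $\mathbbm{1}_{I_j}$ and takes a union bound over the $K=\kappa(r)$ cells. This gives $\sum_j|\mu(I_j)-\hat\mu(I_j)|\le K\sqrt{\log(2K/p)/(2N)}$. After $\sqrt{\log K+\log(2/p)}\le\sqrt{\log K}\,(1+\sqrt{\log(2/p)})$, this is precisely what generates the $\kappa^2\log\kappa$ in $\xi$, so the stated $\xi$ is tailored exactly to it.

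Your Bretagnolle--Huber--Carol bound (equivalently, Hoeffding plus a union over the $2^K$ unions of cells) controls the same $\ell^1$ deviation by $\sqrt{(2K\log 2+2\log(1/p))/N}$. That is smaller by a factor of order $\sqrt{K\log K}$ or more. So with the paper's $\xi$ your second term has room to spare. Your argument in fact supports a bound of the same shape with $\xi$ replaced (up to constants) by $\kappa(r)/r^{2\alpha}$, i.e.\ a faster rate in $N$. The paper's route buys only elementarity and an exact match with the stated $\xi$.

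Two small corrections.

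First, the factor $2^\alpha$ cannot be removed by covering with radius $r/2$: that replaces $\kappa(r)$ by $\kappa(r/2)$ and so changes $\xi$. Instead use the ball centers $c_j$ as representatives, as the paper does. Since $I_j\subset B_j$, you get $\|F(x)-F(c_j)\|_\infty\le L_F r^\alpha$ on $I_j$, hence exactly $2L_F r^\alpha$ for the first term.

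Second, for the constants, substitute $\sqrt N=\kappa\sqrt{\log\kappa}/r^\alpha$ directly into your BHC bound, rather than passing through your intermediate $\sqrt{\kappa\log\kappa/N}$ expression, which is lossier. The second term is then at most $\frac{1}{\sqrt2}r^\alpha\|F\|_\infty(1+\sqrt{\log(2/p)})$ as soon as $\kappa(r)\ge 3$. The paper's own step $\sqrt{\log K}+\sqrt{\log(2/p)}\le\sqrt{\log K}(1+\sqrt{\log(2/p)})$ tacitly needs the same restriction.

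Finally, $r=\xi^{-1}(N)$ depends on $\alpha$ as well as $N$. So the event is uniform over $F$ only for a fixed exponent, exactly as in the paper.
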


The proof will use the following well known \emph{Hoffeding's inequality}.
\begin{theorem}[Hoffeding's Inequality]
    \label{theo:Hoffeding}
Let $X_1,\ldots,X_N$, be independent random variables such that $a\leq X_i\leq b$ for every $i\in[N]$ almost surely. Then, for every $k>0$,
$$\mathbb{P}\Big(\frac{1}{N}\big|\sum_{i=1}^N(X_i-\mathbb{E}[X_i])\big|\geq k\Big) \leq 2 \cdot \text{exp}\Big(- \frac{2k^2N}{(b-a)^2}\Big).$$
\end{theorem}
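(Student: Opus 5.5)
The plan is the standard Chernoff-bound argument combined with Hoeffding's lemma; nothing from the earlier parts of the paper is needed. First center the variables: set $Y_i := X_i-\mathbb{E}[X_i]$. Each $Y_i$ has mean zero and takes values almost surely in an interval $[a_i,b_i]:=[a-\mathbb{E}[X_i],\,b-\mathbb{E}[X_i]]$ of length $b-a$. The $Y_i$ are independent. By symmetry (apply the same argument to $-Y_i$), it suffices to prove the one-sided bound
$$\mathbb{P}\Big(\tfrac{1}{N}\textstyle\sum_{i=1}^N Y_i\geq k\Big)\leq \exp\Big(-\tfrac{2k^2N}{(b-a)^2}\Big),$$
and then add the two tails with a union bound, which gives the factor $2$.

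The key intermediate result is \emph{Hoeffding's lemma}: if $Y$ has mean zero and $Y\in[a',b']$ almost surely, then $\mathbb{E}[e^{sY}]\leq \exp\big(s^2(b'-a')^2/8\big)$ for all $s\in\mathbb{R}$. To prove it, I will proceed as follows.
\begin{enumerate}
\item Use convexity of $y\mapsto e^{sy}$ on $[a',b']$ to get
$$e^{sy}\leq \tfrac{b'-y}{b'-a'}e^{sa'}+\tfrac{y-a'}{b'-a'}e^{sb'}.$$
\item Take expectations and use $\mathbb{E}Y=0$, so that $\mathbb{E}[e^{sY}]\leq e^{\Phi(h)}$ with $h=s(b'-a')$, $\theta=-a'/(b'-a')\in[0,1]$, and $\Phi(h)=-\theta h+\log(1-\theta+\theta e^{h})$.
\item Check that $\Phi(0)=\Phi'(0)=0$.
\item Check that $\Phi''(h)=u(1-u)\leq 1/4$, where $u=\theta e^h/(1-\theta+\theta e^h)\in[0,1]$.
\item Conclude by Taylor's theorem with remainder that $\Phi(h)\leq h^2/8$.
\end{enumerate}
This calculus step is the only genuinely delicate part of the proof, so I expect it to be the main obstacle. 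The crux is bounding $\Phi''$ uniformly by $1/4$.

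With the lemma in hand, for any $s>0$ I will apply Markov's inequality to $e^{s\sum_i Y_i}$ and then use independence to factor the expectation:
$$\mathbb{P}\Big(\textstyle\sum_i Y_i\geq Nk\Big)\leq e^{-sNk}\prod_{i=1}^N\mathbb{E}[e^{sY_i}]\leq \exp\Big(-sNk+\tfrac{Ns^2(b-a)^2}{8}\Big).$$
Minimizing the exponent over $s>0$ gives $s=4k/(b-a)^2$. Substituting this value yields exactly $\exp(-2k^2N/(b-a)^2)$. The identical bound for the lower tail, $\sum_i Y_i\leq -Nk$, together with the union bound, completes the proof of Theorem \ref{theo:Hoffeding}.
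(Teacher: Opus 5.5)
Your proof is correct and complete. It is the classical Chernoff-bound argument via Hoeffding's lemma, and the computations check out: $\theta\in[0,1]$ because the mean-zero condition forces $a'\le 0\le b'$, $\Phi''=u(1-u)\le 1/4$, the choice $s=4k/(b-a)^2$ gives exactly $\exp(-2k^2N/(b-a)^2)$, the union bound supplies the factor $2$, and the degenerate case $a=b$ is trivial.

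The paper does not prove this statement at all; it quotes Hoeffding's inequality as a well-known result for use in the uniform Monte Carlo approximation theorem. Your argument is precisely the standard proof it implicitly relies on.
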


The following is a proof for theorem \ref{theo:MonteCarlo}

\begin{proof}

Let $r>0$. There exists a covering of $\mathcal{X}$ by a set of $K:= \kappa(r)$ balls $\{B_j\}_{j\in[K]}$, of radius $r$. Define $J_1 := B_1$ and for $j=2,\ldots,K$, define $I_j:= B_j\setminus\cup_{i< j}B_i$. Then, $\{I_j\}_{j\in[K]}$ is a pairwise disjoint family of measurable sets that covers $\mathcal{X}$, and for every $j\in[K]$, $\mathrm{diam}(I_j)\leq2r$, (where $\mathrm{diam}(\emptyset)=0$). For each $j\in[K]$ let $c_j$ denote the center of the ball $B_j$.  

Next, we derive a uniform concentration bound on the error between the measure of $I_j$ and its Monte Carlo approximation, valid for all $j\in[K]$. Let $j\in[K]$ and $q\in(0,1)$, by theorem \ref{theo:Hoffeding}, there is an event $\mathcal{E}_j^q$ with probability $\mu(\mathcal{E}_j^q)\geq1-q$, in which 
\begin{equation}
    \label{eq:errorI_j}
   \Bigg \|\frac{1}{N}\sum_{i=1}^N\mathbbm{1}_{I_j}(X_i)-\mu(I_j)\Bigg\|_\infty\leq \frac{1}{\sqrt{2}}\cdot\frac{\sqrt{\mathrm{log}(\frac{2}{q})}}{\sqrt{N}}.
\end{equation}
Consider the event
$$\mathcal{E}_{\rm Hol}^{Kq} = \cap_{j=1}^K \mathcal{E}_j^q,$$
with probability $\mu^N(\mathcal{E}_{\rm Hol}^{Kq})\geq1-Kq$.  In this event, \ref{eq:errorI_j} holds for every $j\in[K]$. We write $p = Kq$ and denote $\mathcal{E}_{\rm Hol}^p=\mathcal{E}_{\rm Hol}^{Kq}$.

Next, we bound uniformly the Monte Carlo approximation error of the integral of bounded H{\"o}lder continuous functions $F:\mathcal{X}\to\mathbb{R}^d$. We define the step function 
$$F^r (x) = \sum_{j=1}^KF(c_j)\mathbbm{1}_{I_j}(x).$$

Then,
 \begin{equation}
 \label{eq:eq1}
    \begin{aligned}
  &  \Bigg\|\frac{1}{N}\sum_{i=1}^NF(X_i)-\int_\mathcal{X} F(x)\,d\mu(x)\Bigg\|_{\infty} \leq\\  &\Bigg\|\frac{1}{N}\sum_{i=1}^N F(X_i) -\frac{1}{N}\sum_{i=1}^N F^r(X_i)\Bigg\|_{\infty}+\\
    & \Bigg\|\sum_{i=1}^N F^r(X_i) - \int_{\mathcal{X}} F^r(x)\,d\mu(x)\Bigg\|_{\infty}+ \\
    &  \Bigg\|\int_{\mathcal{X}}F^r(x)\,d\mu(x) - \int_{\mathcal{X}} F(x)\,d\mu(x)\Bigg\|_{\infty} = \\
    &  (1)+(2)+(3).
    \end{aligned}
\end{equation}

To bound $(1)$, we define for each $X_i$ the unique index $j_i\in[K]$ for which $X_i\in I_{j_i}$.
We calculate
\begin{align*}
\Bigg\|\frac{1}{N}\sum_{i=1}^N F(X_i) -\frac{1}{N}\sum_{i=1}^N F^r(X_i)\Bigg\|_{\infty} &\leq \frac{1}{N}\sum_{i=1}^N\Bigg\|F(X_i)-\sum_{j\in[K]}F(c_j)\mathbb{1}_{I_{j_i}}(X_i)\Bigg\|_{\infty} \\
&\leq\frac{1}{N}\sum_{i=1}^N\Big\|F(X_i)-F(c_{j_i})\Big\|_{\infty}\\
&\leq\frac{L_F}{N}\sum_{i=1}^N\Big \|X_i-c_{j_i}\Big\|_{\infty}^\alpha\leq L_F\cdot r^\alpha.
\end{align*}

We next bound $(2)$. In the event $\mathcal{E}_{\rm Hol}^p$, which holds with probability $1-p$, equation \ref{eq:eq1} holds for every $j\in[K]$. In this event, we get
\begin{align*}
    \Bigg\|\frac{1}{N}\sum_{i=1}^N F^r(X_i) -\int_{\mathcal{X}}F^r(x)\,d\mu(x)\Bigg\|_\infty & =\Bigg\|\sum_{j=1}^K\bigg(\frac{1}{N}\sum_{i=1}^N F(c_i)\mathbbm{1}_{I_{j_i}}(X_i)-\int_{I_j}F(c_j)\,d\mu(x)\bigg)\Bigg\|_{\infty}\\
    &\leq\sum_{j=1}^K\|F\|_{\infty}\Bigg|\frac{1}{N}\sum_{j=1}^N\mathbb{1}_{I_{j}}(X_i)-\mu(I_j)\Bigg|\\
 &   \leq K\|F\|_{\infty}\frac{1}{\sqrt{2}}\frac{\sqrt{\mathrm{log}(2K/p)}}{\sqrt{N}}.
\end{align*}
Then, with probability at least $1-p$

$$\Bigg\|\frac{1}{N}\sum_{i=1}^N F^r(X_i) -\int_{\mathcal{X}} F^r(x)\,d\mu(x)\Bigg\|_{\infty}\leq\frac{K\|F\|_{\infty}}{\sqrt{2}}\frac{\sqrt{\mathrm{log}(K)+\mathrm{log}(2/p)}}{\sqrt{N}}.$$

To bound $(3)$, we calculate
\begin{align*}
    \Bigg\|\int_{\mathcal{X}}F^r(x)\,d\mu(x)-\int_{\mathcal{X}}F(x)\,d\mu(x)\Bigg\|_{\infty} & = \Bigg\|\int_{\mathcal{X}}\sum_{j=1}^K F(c_j)\mathbbm{1}_{I_j}\,d\mu(x)-\int_{\mathcal{X}} F(x)\,d\mu(x)\Bigg\|_{\infty}\\
    &\leq \sum_{j=1}^K \int_{I_j}\Big\|F(c_j)-F(x)\Big\|_{\infty}\,d\mu(x) \leq L_F\cdot r^\alpha. 
\end{align*}

By plugging everything to \ref{eq:eq1}, we get
\begin{align*}
    \Bigg\|\frac{1}{N}\sum_{i=1}^NF(X_i)-\int_\mathcal{X} F(x)\,d\mu(x)\Bigg\|_{\infty} &\leq 2L_F \cdot r^\alpha +\frac{K\|F\|_{\infty}}{\sqrt{2}}\frac{\sqrt{\mathrm{log}(K)+\mathrm{log}(2/p)}}{\sqrt{N}}\\
&\leq 2L_F\cdot r^\alpha +\frac{K\|F\|_{\infty}}{\sqrt{2}}\frac{\sqrt{\mathrm{log}(K)}+\sqrt{\mathrm{log}(2/p)}}{\sqrt{N}}\\
&\leq 2L_F\cdot r^\alpha +\frac{K\|F\|_{\infty}}{\sqrt{2}}\frac{\sqrt{\mathrm{log}(K)}}{\sqrt{N}}\bigg(1+\sqrt{\mathrm{log}(2/p)}\bigg).
\end{align*}

Lastly, choosing $r=\xi^{-1}(N)$, for $\xi(r) = \frac{\kappa(r)^2\mathrm{log}(\kappa(r))}{r^{2\alpha}}$, gives $\sqrt{N} = \frac{\kappa(r)\sqrt{\mathrm{log}(\kappa(r))}}{r^{\alpha}}$, so
\begin{align*}
&\Big\|\int F(x)\,d\mu(x) - \frac{1}{N}\sum_{i=1}^N F(X_i)\Big\|_{\infty}
\leq\\
&2(\xi^{-1}(N))^{\alpha}L_F+ \frac{1}{\sqrt{2}}(\xi^{-1}(N))^{\alpha}\|F\|_{\infty}\Big(1+\sqrt{\mathrm{log}(2/p)}\Big),
\end{align*}

Since the event $\mathcal{E}_{\mathrm{Hol}}^p$ is independent of the choice of $F$, the proof is finished.

\end{proof}

\subsection{Generalization Theorem For MPNNs}

  We can now prove a generalization theorem to MPNNs on profiles, using theorem \ref{theo:MonteCarlo}.

\begin{theorem}
\label{theo:genbound}
Consider the classification setting from subsection \ref{sec:class} . Let $X_1,\ldots,X_N$ be independent random samples from $(\mathcal{BF}_d^r,\Sigma,\nu)$. Then, for every $p>0$ there is an event $({\mathcal{BF}_d^r})^N$, regarding the choice of $(X_1,\ldots,X_N)$, with probability
$$\nu^N(\mathcal{E}^p)\geq 1-Cp-\frac{4C^2}{N},$$
in which for every function $\Gamma$ in the hypothesis class $Hol^\alpha(\mathcal{BF}_d^r,L_1)$, we have
\begin{align*}
 &|R_{\mathrm{stat}}(\Gamma_X)-R_\mathrm{Emp}(\Gamma_X)|\leq\\
 &2(\xi^{-1}(N/2C))^{\alpha}L_1L_2+\frac{1}{\sqrt{2}}(\xi^{-1}(N/2C))^{\alpha}\Big(L_1L_2+\mathcal{E}(0,0)\Big)L_1L_2\Big(1+\sqrt{\mathrm{log}(2/p)}\Big).
 \end{align*}
where, $\xi(r)=\frac{\kappa(r)^2log(\kappa(r))}{r^{2\alpha}}$, $\kappa$ is the covering number of $\mathcal{BF}_d^r$ and $\xi^{-1}$ is the inverse function of $\xi$. 

\end{theorem}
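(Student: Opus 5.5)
The plan follows the standard robustness-type argument of \cite{levie2024graphon,rauchwerger2024generalization}, with Theorem \ref{theo:MonteCarlo} applied separately on each class set $B_i$.

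\textbf{Conditioning on the classes.} Let $N_i$ be the number of samples falling in $B_i$.
\begin{itemize}
\item For classes with $\nu(B_i)$ small, their contribution to both risks is small. For the remaining classes, by Chebyshev's inequality applied to $N_i\sim\mathrm{Bin}(N,\nu(B_i))$, with probability at least $1-4C^2/N$ (after a union bound over $i\in[C]$) every class with non-negligible mass satisfies $N_i\geq N\nu(B_i)/2$. This is where the term $\frac{4C^2}{N}$ and the argument $N/2C$ in $\xi^{-1}$ come from; I will tune the threshold to $\nu(B_i)\geq 1/C$-type bounds.
\item Conditioned on the indices landing in $B_i$, the samples in $B_i$ are i.i.d.\ from the normalized restriction $\nu_i=\nu(\cdot\cap B_i)/\nu(B_i)$.
\item $B_i$, as a subset of the compact space $(\mathcal{BF}_d^r,\delta^L_{\mathrm{DIDM}})$, has covering number at most $\kappa(r)$ (up to the usual factor-$2$ radius adjustment).
\end{itemize}

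\textbf{Applying Theorem \ref{theo:MonteCarlo} on each class.} On each $B_i$, apply Theorem \ref{theo:MonteCarlo} with failure probability $p$ to the function class
\[
F_\Gamma=\mathcal{E}\big(\Gamma|_{B_i},\mathcal{C}|_{B_i}\big).
\]
By the discussion in Subsection \ref{sec:class}, $F_\Gamma\in Hol^\alpha(B_i,L_1L_2)$. Its sup norm is bounded by $\mathcal{E}(0,0)$ plus a Lipschitz term in $\|\Gamma\|_\infty$ and $\|\mathcal{C}\|_\infty$. I will bound $\|\Gamma\|_\infty$ via the Hölder constant and the bounded diameter of the compact space, which yields the factor $(L_1L_2+\mathcal{E}(0,0))$.

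The key point is that the event $\mathcal{E}^p_{\rm Hol}$ in Theorem \ref{theo:MonteCarlo} does not depend on $F$. This gives uniformity over the whole hypothesis class, and in particular over the data-dependent $\Gamma_X$. A union bound over the $C$ classes gives total probability at least $1-Cp-4C^2/N$.

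\textbf{Combining the classes.} Write
\[
R_{\rm stat}=\sum_i\nu(B_i)\int F_\Gamma\,d\nu_i,\qquad R_{\rm emp}=\sum_i\frac{N_i}{N}\cdot\frac{1}{N_i}\sum_{X_j\in B_i}F_\Gamma(X_j),
\]
and take the difference. For each class there are two contributions:
\begin{itemize}
\item The Monte Carlo error. It is bounded by Theorem \ref{theo:MonteCarlo} with $N_i\geq N/2C$, and $\xi^{-1}$ is monotone decreasing, so the bound is controlled by $\xi^{-1}(N/2C)$.
\item The mass error $|\nu(B_i)-N_i/N|\cdot\|F\|_\infty$. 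This is controlled by a Hoeffding bound that can be absorbed into the same expression.
\end{itemize}
Summing with weights $\nu(B_i)$ gives the stated bound.

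\textbf{Main obstacle.} The hardest step is the conditioning bookkeeping. The samples in $B_i$ must be genuinely i.i.d.\ from $\nu_i$ given $N_i$, and the mass-deviation terms must fit into the displayed constant. The exact constants in the statement may need slight adjustment there.
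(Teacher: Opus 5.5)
Your skeleton matches the paper's proof. It applies Chebyshev to the class counts to get $N_i>N/2C$ for every $i$ with probability at least $1-4C^2/N$. It then applies Theorem \ref{theo:MonteCarlo} on each $B_i$ with failure probability $p$; this is uniform over the hypothesis class because the good event does not depend on $F$, so it covers the data-dependent $\Gamma_X$. A union bound finishes. However, the step meant to produce $N/2C$ and $4C^2/N$ fails as you set it up. For $N_i\sim\mathrm{Bin}(N,\nu(B_i))$, Chebyshev gives $N_i\geq N\nu(B_i)/2\geq N/2C$ with failure probability at most $4C/N$ only when $\nu(B_i)\geq 1/C$. The classes below that threshold are not negligible: together they can carry mass close to $(C-1)/C$ (already for $C=2$ with masses $0.51$ and $0.49$), so their contribution to $R_{\rm stat}$ and $R_{\rm emp}$ is of order $\|F\|_\infty$. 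A fixed threshold $\tau$ leaves a non-vanishing extra error of order $C\tau\|F\|_\infty$. A threshold $\tau\to0$ instead replaces $N/2C$ by $N\tau/2$ and $4C^2/N$ by $4C/(N\tau)$. The paper does not threshold at all. It takes $\mathbb{E}N_i=N/C$ for every class, i.e., it tacitly assumes equiprobable classes $\nu(B_i)=1/C$, and then Chebyshev gives $N_i>N/2C$ for all $i$ at once. So either make that assumption explicit, or give up the exact form of the bound. For general $\nu$ a cleaner route is to rerun the proof of Theorem \ref{theo:MonteCarlo} once on the whole space with the cells $I_j\cap B_i$, choosing representatives inside each cell. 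On each cell $\mathcal{E}(\Gamma,\mathcal{C})$ is H\"older with constant $L_1L_2$, so this needs no class counting and no mass term, at the price of $K$ becoming $KC$ and $r$ becoming $2r$.

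Two further steps need repair.

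First, a H\"older constant plus bounded diameter controls only the oscillation of $\Gamma$, not $\|\Gamma\|_\infty$. Every constant map, however large, lies in $Hol^\alpha(\mathcal{BF}_d^r,L_1)$. So the bound $\|\mathcal{E}(\Gamma,\mathcal{C})\|_\infty\leq L_1L_2+\mathcal{E}(0,0)$ needs an extra boundedness assumption on the hypothesis class, e.g. $\|\Gamma\|_\infty\leq L_1$. The paper's inequality $|\mathcal{E}(\Gamma(x),\mathcal{C}(x))-\mathcal{E}(0,0)|\leq L_1L_2$ silently uses such an assumption.

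Second, your recombination into per-class Monte Carlo errors weighted by $N_i/N$, plus the mass term $\sum_i(\nu(B_i)-N_i/N)\int F\,d\nu_i$, is correct. It is more careful than the paper, which passes from the per-class inequality straight to the global risk. But controlling the mass term by Hoeffding costs a further failure probability (about $Cp$). It also adds a term of order $C\|F\|_\infty\sqrt{\log(2/p)/N}$, which is comparable to the displayed second term only when $\kappa\sqrt{\log\kappa}\gtrsim\sqrt{C}$ at $r=\xi^{-1}(N/2C)$.

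Finally, Theorem \ref{theo:MonteCarlo} yields $\|F\|_\infty$, not $\|F\|_\infty L_1L_2$, in the second term, so the displayed inequality follows from it only when $L_1L_2\geq1$. Along your lines you get a bound of the same shape with modified constants and probability, not the statement verbatim.
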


Note that as the size of the training set $N$ tends to infinity, the tern $\xi^{-1}(N)$ decreases to $0$.

\begin{proof}
    For every $i\in[C]$, let $N_i$ be number of samples in $\mathcal{BF}_d^r$ that falls within $B_i$. Consider the multidimensional random variable $(N_1,\ldots,N_C)$. The expected value is $(N/C,\ldots,N/C)$, and the variance is $(\frac{N(C-1)}{N^2},\ldots,\frac{N(C-1)}{C^2})\leq(N/C,\ldots,N/C)$.
By Chebyshev's inequality, for every $a>0$, 
$$P\Big(|N_i-N/C|>a\sqrt{N/C}\Big)< \frac{1}{a^2}.$$
We take $a=\frac{1}{2}\sqrt{N/C}$, so $a\sqrt{N/C}=\frac{N}{2C}$, and 
$$P\Big(|N_i-N/C|>N/2C\Big)<\frac{4C}{N}.$$
Therefore,
$$P\big(N_i > N/2C\Big)>1-\frac{4C}{N}.$$

We intersect these events an get an event, d denoted by $\mathcal{E}_{\mathrm{int}}$, with probability greater than $1-\frac{4C^2}{N}$, in which $N_i>N/2C$ for every $i\in[C]$. In the following, given a set $B_i$, we consider a realization $n:=N_i$ and use the law of total probability. 
Using theorem \ref{theo:MonteCarlo}, for every $p>0$, there exists an event $\mathcal{E}_{\mathrm{Hol},i}^p\subset B_i^n$ regarding the choice of $(X_1,\ldots,X_n)\in B_i^n$ with probability 
$$\nu^n(\mathcal{E}_{\mathrm{Hol},i}^p)\geq1-p,$$
such that for every function $\Gamma\in Hol^\alpha(\mathcal{BF}_d^r,L_1)$, we have 

\begin{align}
    \label{eq:eq2}
    &\Bigg|\int \mathcal{E}\big(\Gamma(x),\mathcal{C}(x)\big)\,d\nu(x) -\frac{1}{n}\sum_{i=1}^n \mathcal{E}\big(\Gamma(X_i),\mathcal{C}(X_i)\big)\Bigg| \leq\\
    &2(\xi^{-1}(n))^{\alpha}L_1L_2+\frac{1}{\sqrt{2}}(\xi^{-1}(n))^{\alpha}\Big\|\mathcal{E}\big(\Gamma(\cdot),\mathcal{C}(\cdot)\big)\Big\|_{\infty}L_1L_2\Big(1+\sqrt{\mathrm{log}(2/p)}\Big)\leq\nonumber\\
    &2(\xi^{-1}(N/2C))^{\alpha}L_1L_2+\frac{1}{\sqrt{2}}(\xi^{-1}(N/2C))^{\alpha}\Big(L_1L_2+\mathcal{E}(0,0)\Big)L_1L_2\Big(1+\sqrt{\mathrm{log}(2/p)}\Big),\nonumber
\end{align}
 where, $\xi(r)=\frac{\kappa(r)^2log(\kappa(r))}{r^{2\alpha}}$, $\kappa$ is the covering number of $\mathcal{BF}_d^r$ , $\xi^{-1}$ is the inverse function of $\xi$. In the last inequality we used that for every $x\in\mathcal{BF}_d^r$, 
 $$|\mathcal{E}(\Gamma(x),\mathcal{C}(x))|\leq|\mathcal{E}(\Gamma(x),\mathcal{C}(x))-\mathcal{E}(0,0)|+|\mathcal{E}(0,0)|\leq L_2L_1+|\mathcal{E}(0,0)|.$$

 Since \ref{eq:eq2} holds for every $\Gamma\in Hol^{\alpha}(\mathcal{BF}_d^r,L_1)$, it holds in particular to any realization of $X$, $\Gamma_X$. So
 \begin{align*}
 &|R_{\mathrm{stat}}(\Gamma_X)-R_\mathrm{Emp}(\Gamma_X)|\leq\\
 &2(\xi^{-1}(N/2C))^{\alpha}L_1L_2+\frac{1}{\sqrt{2}}(\xi^{-1}(N/2C))^{\alpha}\Big(L_1L_2+\mathcal{E}(0,0)\Big)L_1L_2\Big(1+\sqrt{\mathrm{log}(2/p)}\Big).
 \end{align*}
 
 Lastly, denote
$$\mathcal{E}^p =\mathcal{E}_{\mathrm{int}} \cap \bigg(\bigcup_{i\in[C]}\mathcal{E}^p_{\mathrm{Hol},i}\bigg).$$
\end{proof}

$ $

\newpage
\addcontentsline{toc}{section}{\textit{Additional Background}}
\section*{\textit{Additional Background}}

\section{Other models of sparse graphs}
\label{sec:sparse}

Here, we present the foundational theory for three models of sparse graphs: stretched graphons, $\mathcal{L}^p$ graphons and graphings, all of which are examples of graphops.

\subsection{Stretched Graphons}

Graphons have typically been used as limit objects for dense graph sequences, utilizing the cut distance as the metric for convergence. However, sparse graph sequences converge to the zero graphon $W\equiv 0$ under the conventional definition of cut distance, which make this framework inadequate for many practical applications \cite{ji2023sparse, jian2023generalized}. To address this, stretched graphons and the stretched cut distance have been introduced in \cite{borgs2018sparse}, providing a more suitable framework for analyzing the limits and convergence of sparse graph sequences. 
\begin{definition}
    \label{def:strechgraphon}
    For a graphon $W:[0,1]^2\to[0,1]$, we define the stretched graphon $W^{s}$ by $$W^s(x,y) = W(\|W\|_1^{\frac{1}{2}}x,\|W\|_1^{\frac{1}{2}}y).$$  
\end{definition}

The motivation behind this definition is that for sparse graphons, their $\mathcal{L}^1$ norm is very small, but the stretched graphon allows us to transform the sparse graphon into a denser variation with $\mathcal{L}^1$ norm equal to 1.

Recall the terns cut metric and cut distance defined in definitions \ref{def:cutdis}. Next, we define their stretched variations

\begin{definition}[Stretched Cut-Distance]

We define the stretched cut-metric by
$$d_{\square,s}(W_1,W_2) = d_{\square}(W_1^s,W_2^s),$$
and the stretched cut distance by
$$\delta_{\square,s}(W_1,W_2) = \delta_{\square}(W_1^s,W_2^s).$$
\end{definition}

We will justify the use of these definitions with the following example that was given in \cite{jian2023generalized, ji2023sparse}
\begin{example}[\cite{jian2023generalized} Example 1]
    Consider a sequence of graphs $G_n=(V_n,E_n)$ with $|V_n|=n$, constructed as follows. Let $\alpha\in(0,1)$, we choose $\lfloor n^{\frac{1+\alpha}{2}}\rfloor$ vertices $V_n'$ to form a complete subgraph with the remaining vertices being isolated. We have $|E_n| = \frac{|V_n'|\cdot(|V_n'-1|)}{2}$, and therefore the edge density $d(G_n)$ is bounded by
    $$d(G_n) = \frac{|E_n|}{n^2}\leq \frac{|V_n'|^2}{2n^2},$$
    which tends to 0 as $n$ tends to infinity because $\alpha\in(0,1)$. Hence, the induced graphon $W_{G_n}$ converges to the zero graphon in the cut distance.

    let $m_n = \frac{|V_n'|}{\sqrt{|V_n'|\cdot(|V_n'|-1)}}$. The stretched graphon $W_{G_n}^s$ is 1 in $(0,m_n)^2\backslash D_n$ where $D_n$ consists of $|V_n'|$ diagonal squares, each with length $\frac{m_n}{|V_n'|}$, and 0 elsewhere. As $n\to\infty$ we have $m_n\to1$ and $\lambda(D_n)\to 0$. Therefore, $W_{G_n}^s$ converges to $I_{[0,1]^2}$ in cut distance. We summarize with that while the sequence of graphons $W_{G_n}$ converge to 0 in the cut distance, it converge to another, non-trivial graphon in the stretched cut distance. 
\end{example}

\subsection{$\mathcal{L}^p$ Graphons}

Here, we present the basic extension of graphon theory (as presented in section\ref{sec:graphon}) to the theory of $\mathcal{L}^p$ graphons that was introduced in \cite{borgs2019, borgs2018p}. 
In this context, a graphon will be a symmetric measurable function $W:[0,1]^2\to\mathbb{R}$.

\begin{definition}
    \label{def:lpgraphon}
  A graphon $W$ is called an $\mathcal{L}^p$ graphon, for $1\leq p<\infty$, if 
  $$\|W\|_p^p = \int_{[0,1]^2}|W(x,y)|^p\,dx\,dy <\infty.$$
\end{definition}

Note that an $\mathcal{L}^q$ graphon is also an $\mathcal{L}^p$ graphon for $1\leq p\leq q\leq\infty$.

\cite{borgs2019} presents a generalization of Szemer{\'e}di's regularity lemma and proves a weak regularity lemmas for $\mathcal{L}^p$ graphons. Equipped with this lemma, the space of uniformly bounded $\mathcal{L}^p$ graphons, endowed with the cut distance is proved to be compact in Theorem 2.13 in \cite{borgs2019}.

Consider the following model of graphs $G_n$ that demonstrates the limitations of traditional $\mathcal{L}^{\infty}$ graphons.

\begin{example}
    Label the vertices of $G_n$ by $V_n = [n]$. For two vertices $i,j\in V_n$, the probability of the edge $(i,j)$ to appear in the graph is $\min\{1,n^{\beta}(ij)^{-\alpha}\}$, where $0<\alpha<1$ and $0<\beta<2\alpha$ \footnote{The inequalities $\alpha <1$ and $\beta<2\alpha$ can be understood as follows: the first inequality prevents the majority of the edges from being concentrated among a sublinear number of vertices, while the second ensures that the cut-off, which involves taking the minimum with 1, only impacts a negligible fraction of the edges.
 }. This model is a straightforward way to generate a power law degree distribution, where the expected degree of vertex $i$ decreases as $i^{-\alpha}$ \footnote{For further reading see \cite{barabasi1999emergence}.}. The expected number of the entire edges in the graph is on order of $n^{\beta-2\alpha+2}$ which is superlinear when $\beta>2\alpha-1$. However, even rescaling by the edge density $n^{\beta-2\alpha}$ does not yield an $\mathcal{L}^{\infty}$ graphon. Instead, we obtain the graphon $W(x,y)=(xy)^{-\alpha}$ which is unbounded.
\end{example}

\subsection{Graphings}
\label{graphings}

In this part we introduce infinite graphs that generalize finite bounded degree graphs as described in \cite{lovasz2012large}. 
Their main role is to act as limit objects for sequences of graphs with bounded degrees, similar to how graphons serve in the context of dense graphs. 

We begin with a more general term. Let $(\Omega,\mathcal{B})$ be a Borel $\sigma$-algebra. A graph $\mathbf{G}$, with $V(\mathbf{G})=\Omega$, is said to be a Borel graph if its edge set is a Borel set in $\mathcal{B}\times\mathcal{B}$. We consider only graphs with all degrees bounded by $D>0$. Lemma 18.2 in \cite{lovasz2012large} is a useful result that motivate the definition that follows.  

\begin{lemma}
   \label{lem:Borel}
    A graph $\mathbf{G}$ on a Borel space $(\Omega,\mathcal{B})$ is a Borel graph if and only if for every Borel set $B\in\mathcal{B}$, the neighborhood $N_{\mathbf{G}}(B)$ is Borel.
\end{lemma}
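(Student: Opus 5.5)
The plan is to prove the statement with the same two ingredients the paper has already assembled: the adjacency structure of Borel graphs of bounded degree, and the definition of a Borel graph. The statement is Lemma 18.2 of \cite{lovasz2012large}. I would use degree boundedness $D$ throughout, since that is the setting in which the lemma is stated. The two implications are treated separately.

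For the direction "Borel graph $\Rightarrow$ neighborhoods of Borel sets are Borel", fix $B\in\mathcal{B}$ and write
\[
N_{\mathbf{G}}(B)=\{x\in\Omega : \exists y\in B,\ (x,y)\in E\},
\]
the projection onto the first coordinate of the Borel set $E\cap(\Omega\times B)$. Projections of Borel sets are in general only analytic, so this is the step needing an extra argument. Here every section $E_x=\{y:(x,y)\in E\}$ has at most $D$ points, so the projection has countable (indeed finite) fibers. By the Lusin--Novikov uniformization theorem, a Borel set with countable sections in a product of standard Borel spaces is a countable union of Borel graphs of partial functions, and its projection is Borel. Applying this to $E\cap(\Omega\times B)$, with the coordinates swapped so that the sections are the finite sets $E_x\cap B$, gives that $N_{\mathbf{G}}(B)$ is Borel.

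For the converse, assume $N_{\mathbf{G}}(B)$ is Borel for every Borel $B$. Fix a countable separating family $\{U_m\}$ of Borel sets generating $\mathcal{B}$, which exists because $\Omega$ is standard Borel. The goal is to express $E$ by countable Boolean operations on the rectangles $N_{\mathbf{G}}(U_m)\times U_m$. Clearly $E\subset\bigcap_m\big[(\Omega\times U_m^c)\cup(N_{\mathbf{G}}(U_m)\times U_m)\big]$. However, equality fails in general, because $x\in N_{\mathbf{G}}(U_m)$ only says that \emph{some} neighbor lies in $U_m$, not that $y$ is that neighbor. To fix this I use finite degree and separation. A pair $(x,y)$ lies in $E$ if and only if for every $m$ with $y\in U_m$ we have $x\in N_{\mathbf{G}}(U_m)$. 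For the nontrivial direction, if $y\notin E_x$, then since $E_x$ is finite, finitely many $U_m$'s separate $y$ from each point of $E_x$. Closing the family under finite intersections, one $U_m$ contains $y$ and misses $E_x$, which gives $x\notin N_{\mathbf{G}}(U_m)$. Hence
\[
E=\bigcap_m\big[(\Omega\times U_m^c)\cup(N_{\mathbf{G}}(U_m)\times U_m)\big],
\]
which is Borel.

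The main obstacle is the first direction, namely the Borelness of a projection. The step I would lean on is uniformization for sets with countable sections. If the standing assumption of a standard Borel space were weakened, this is exactly where the argument could break.
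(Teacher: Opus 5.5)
Your proof is correct. There is no proof in the paper to compare it with: the lemma appears only in the background appendix on graphings, quoted from Lemma~18.2 of \cite{lovasz2012large}. The paper also leaves implicit the standing hypotheses that make it true, namely a standard Borel space and degrees bounded by $D$. Your argument is the standard one, and it uses each hypothesis exactly where it is needed.

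\emph{Forward direction.} $N_{\mathbf{G}}(B)$ is the projection onto the first coordinate of the Borel set $E\cap(\Omega\times B)$, whose sections over that coordinate are $E_x\cap B$ with at most $D$ points. So Lusin--Novikov applies directly, with no coordinate swap needed (and by symmetry of $E$ it would not matter anyway).

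\emph{Converse.} This direction uses only the finiteness of the sections $E_x$ and a countable separating family of Borel sets. So it holds on any countably separated space; standardness is genuinely used only in the forward direction.

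Two small points to tighten in the converse:
\begin{enumerate}
\item A separating family in the usual sense only guarantees, for $y\neq z$, some $U_m$ containing exactly one of them. Add complements (or take a countable base of a Polish topology) before closing under finite intersections, so that you really get a set containing $y$ and missing $z$.
\item Include $\Omega$ in the family, as the empty intersection. This covers the case $E_x=\emptyset$, where you still need some $U_m\ni y$ with $x\notin N_{\mathbf{G}}(U_m)$.
\end{enumerate}
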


Now, we are ready to introduce the notion of graphing (or measure preserving graphs). We first endow the measure space $(\Omega,\mathcal{B})$ with a probability measure $\mu$.

\begin{definition}
    \label{def:graphing}
    We say that a graph $\mathbf{G}$, is a graphing, if it is a Borel graph, and for every measurable sets $A,B$, we have
    \begin{equation}
        \label{eq:graphing}
        \int_A deg_B(x)\,d\mu(x) = \int_B deg_A(x)\,d\mu(x).
    \end{equation}
\end{definition}

In other words, the number of edges connecting $A$ to $B$ is the same as the number of edges connecting $B$ to $A$. To clarify, a graphing is a quadruple $\mathbf{G}=(\Omega,\mathcal{B},\mu,E)$, where $\Omega=V(\mathbf{G})$, $\mathcal{B}$ is a Borel $\sigma$-algebra on $\Omega$, $\mu$ is a probability measure and $E\in\mathcal{B}\times\mathcal{B}$ is the edge set satisfying the measure preserving condition \ref{eq:graphing}.  

The name “graphing” was introduced in \cite{adams1990trees} and it refers to the representation of the classes of an equivalence relation as the connected components of a Borel graph. However, it appears that the usage of the term has evolved to cover a broader definition, similar to that of graphons, which are used for dense graphs.

\begin{example}[\cite{lovasz2012large}, Example 18.15]
    If $D=1$ then every graphing $\mathbf{G}$ is the graph of an involution $\phi:S\to S$ for some set $S\subset V(\mathbf{G})$ (an involution is a map that, when applied twice, returns to the original element, meaning it is its own inverse). Since $S$ cannot include isolated vertices, we have $S=N_{\mathbf{G}}(V(\mathbf{G}))$, and it is measurable from lemma \ref{lem:Borel}. Furthermore, for every measurable set $A\subset S$ we have
    $$\mu(\phi^{-1}(A))=\int_{\phi^{-1}(A)} deg_A(x)\,d\mu(x) = \int_A deg_{\phi^{-1}(A)}(x)\,d\mu(x) = \mu(A),$$
    thus, $\phi$ is a measure preserving map.
\end{example}

\end{document}